\crefname{equation}{}{}
\crefname{enumi}{}{} 
\crefname{assumption}{Assumption}{Assumptions}
\Crefname{assumption}{Assumption}{Assumptions}
\setlist{topsep=0pt, leftmargin=*}
\newcommand{\ip}[2]{\left\langle #1, #2 \right\rangle}
\newcommand{\cC}{\mathcal{C}}
\newcommand{\cO}{\mathcal{O}}
\newcommand{\cN}{\mathcal{N}}
\newcommand{\cS}{\mathcal{S}}
\newcommand{\cL}{\mathcal{L}}
\newcommand{\cP}{\mathcal{P}}
\newcommand{\cI}{\mathcal{I}}
\newcommand{\cJ}{\mathcal{J}}
\newcommand{\cD}{\mathcal{D}}
\newcommand{\cB}{\mathcal{B}}
\newcommand{\PP}{\mathbb{P}}
\newcommand{\EE}{\mathbb{E}}
\newcommand{\RR}{\mathbb{R}}
\newcommand{\NN}{\mathbb{N}}
\newcommand{\bbone}{\mathbbm{1}}
\newcommand{\conv}{\mathrm{conv}}
\newcommand{\detr}{{\det}_{\rm r}}
\newcommand{\Leb}{\mathrm{Leb}}
\newtheorem{lemma}{Lemma}
\newtheorem{definition}{Definition}
\newtheorem{theorem}{Theorem}
\newtheorem{remark}{Remark}
\newtheorem{assumption}{Assumption}
\newtheorem{corollary}{Corollary}
\newtheorem{proposition}{Proposition}
\definecolor{darkgreen}{RGB}{0,100,0}  
\title{Data Uniformity Improves Training Efficiency and More, with a Convergence Framework Beyond the NTK Regime}
\author{Yuqing Wang\\
Department of Applied Mathematics and Statistics\\
Johns Hopkins University\\
\texttt{ywan1050@jh.edu}\\
\And
Shangding Gu\\
Department of EECS\\
UC Berkeley\\
\texttt{shangding.gu@berkeley.edu}\\
}
\begin{document}

\maketitle

% \vspace{-15pt}

\begin{abstract}
Data selection plays a crucial role in data-driven decision-making, including in large language models (LLMs), and is typically task-dependent. Properties such as data quality and diversity have been extensively studied and are known to enhance model performance. However, it remains unclear whether there exist other quantitative and general principles of data selection that can consistently improve performance, especially for complicated tasks.  In this paper, we demonstrate that selecting more uniformly distributed data can improve training efficiency while enhancing performance. Specifically, we establish that more uniform (less biased) distribution leads to a larger minimum pairwise distance between data points, denoted by $h_{\min}$, and prove that a smaller $h_{\min}$ can slow down the training dynamics of gradient descent (GD). Moreover, we theoretically show that the approximation error of neural networks decreases as $h_{\min}$ increases. Our analysis introduces a convergence framework for GD beyond the Neural Tangent Kernel (NTK) regime, applicable to a broad class of architectures, including transformers, without requiring Lipschitz smoothness. This framework further provides theoretical justification for the use of residual connection and function composition in deep neural architectures. In the end, we conduct comprehensive experiments for supervised fine-tuning across various settings, including different optimization strategies, model sizes, and training datasets. The results consistently demonstrate that selecting data by maximizing pairwise distance significantly accelerates training and achieves comparable or better performance in LLMs across diverse datasets. Code and Datasets are available at the link: \url{https://github.com/SafeRL-Lab/data-uniformity}.
\end{abstract}

\begin{figure}[h!]
    \centering
    \subcaptionbox{Training time comparison.}{
    \vspace{-5pt}
    \includegraphics[width=0.29\linewidth]{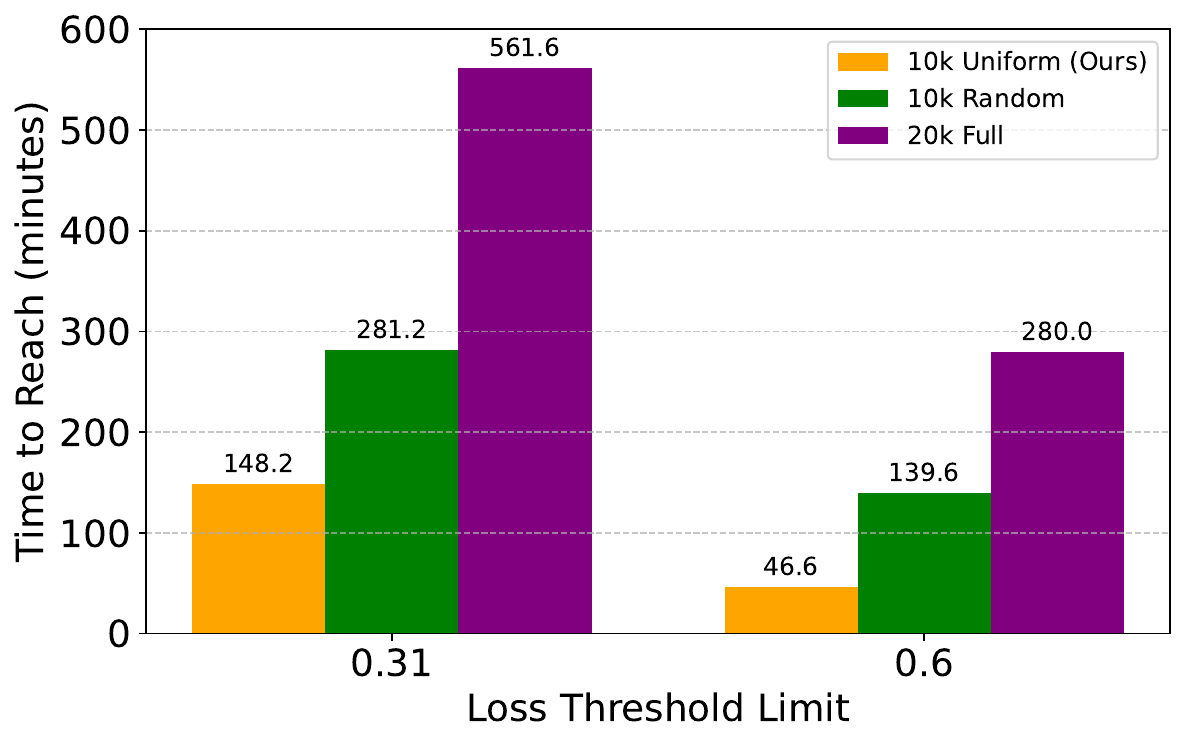}}
    \subcaptionbox{Performance comparison.}{
    \vspace{-5pt}
    \includegraphics[width=0.36\linewidth]{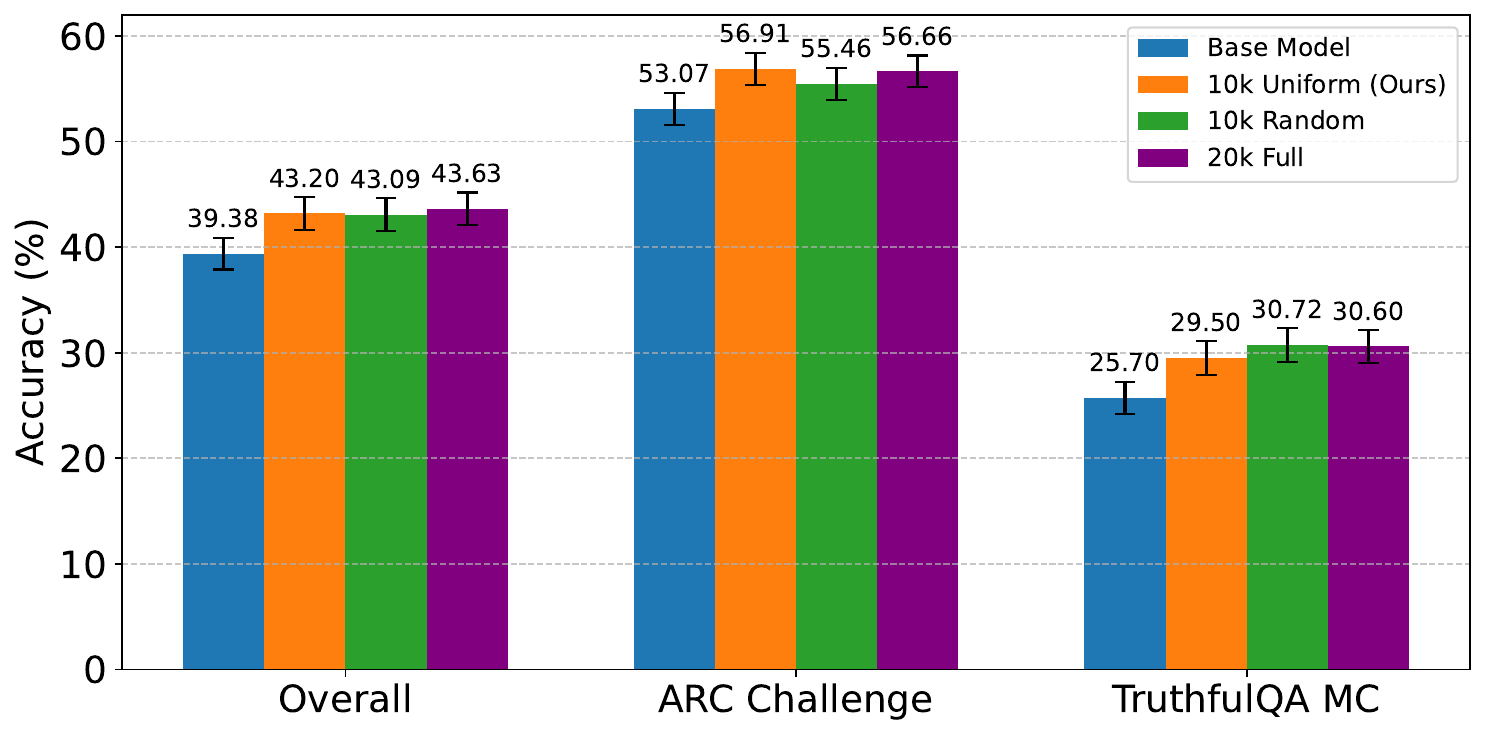}} 
    \subcaptionbox{Unsmoothened loss curves}{
    \includegraphics[width=0.28\linewidth]{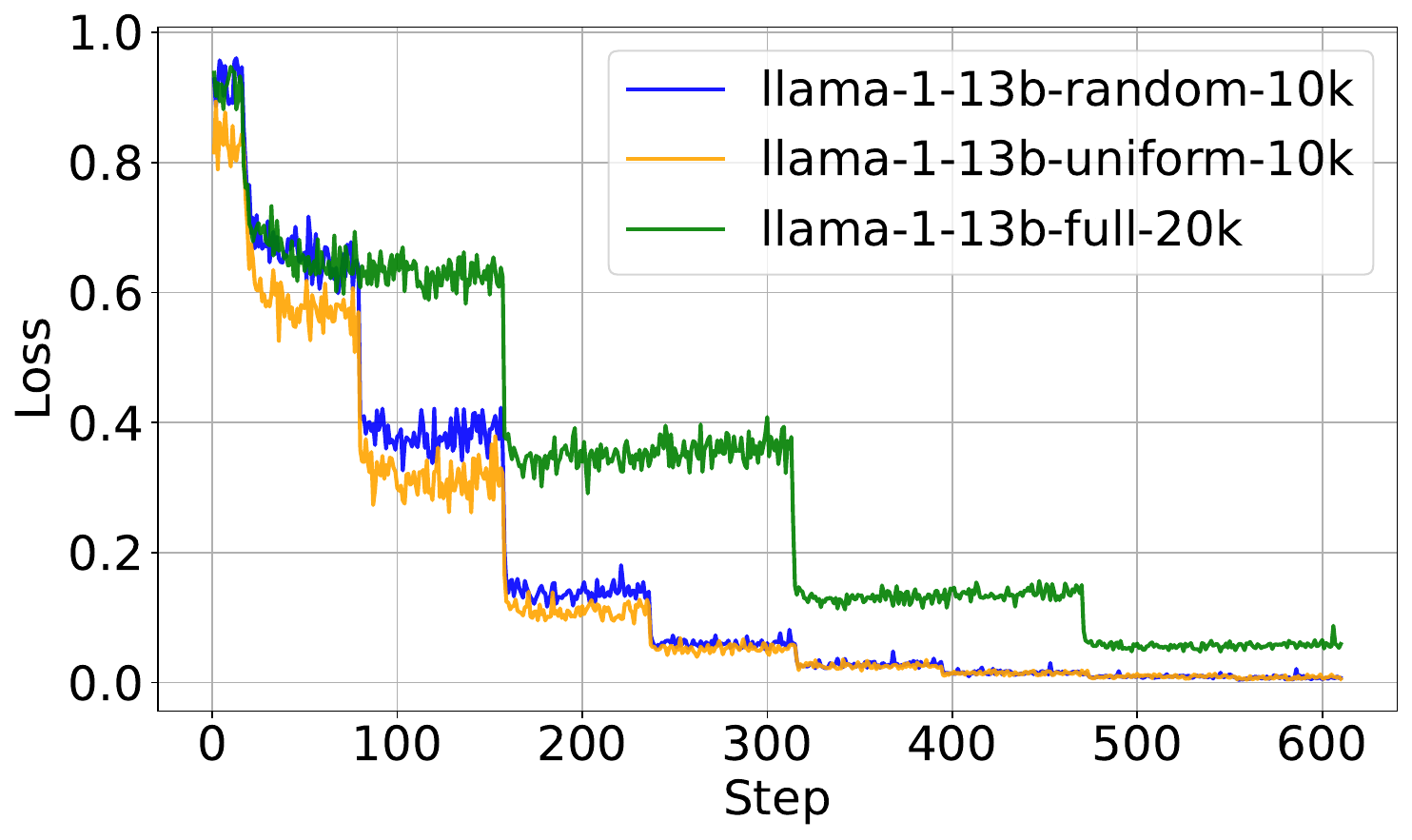}}
    \vspace{-7pt}
    \caption{Comparison of LLaMA-1-13B training efficiency and evaluation performance across different data selection strategies. 
(a) \textbf{Time to reach loss thresholds} (0.31 and 0.6), reported based on wall-clock time using a single A100 GPU. The y-axis shows the time (in minutes) required to reach each target loss for the first time. Uniform selection consistently leads to faster convergence, indicating improved training efficiency.
(b) \textbf{Evaluation performance on ARC Challenge and TruthfulQA MC} at matched loss checkpoints, showing that the 10k Uniform subset outperforms the 10k Random subset and matches the full 20k dataset. 
(c) \textbf{Training loss curves}, 10k uniform subset is faster than 10k random subset and full dataset.}
    \label{fig:training-time-on-Llama-1-13b-using-wizardlm-10k-abstract}
\end{figure}

\tableofcontents

\vspace{-4pt}
\section{Introduction}
\vspace{-10pt}
Data selection is fundamental to a lot of applications including large language models (LLMs) \citep{albalak2024a,zhao2023survey,chang2024survey}, such as TeaMs-RL \citep{guteams} and WizardLM \citep{xu2024wizardlm}. One main difficulty of data selection is that it is often tailored to specific tasks, which highlights the need for general and widely applicable rules to guide the data selection process. Existing rules include improving data quality and data diversity \citep{muennighoff2025s1, albalak2024a, guteams}, which have been widely studied and are recognized for their positive impact on model performance \citep[etc.]{guteams, ba2024does, gong2019diversity}. However, data selection is still a challenging problem that calls for more quantitative and broadly applicable principles.

Regarding the role of data, existing results from approximation theory suggest that uniform sampling exhibits near-optimal behavior under many circumstances. \citet{krieg2022recovery} showed that i.i.d. random sampling achieves optimal convergence rates, and \citet{krieg2024random} proved that uniform sampling is asymptotically as effective as optimal deterministic schemes for Sobolev functions. In the context of learning, \citet{dong2024sketchy} further demonstrated that uniform sampling yields near-optimal generalization in low-dimensional linear probing. These results collectively emphasize the efficiency of uniform sampling, despite its simplicity. Motivated by these results, we wonder whether selecting data to be more uniformly distributed can benefit training and lead to improved generalization performance for complicated tasks.

In this paper, we demonstrate that increased data uniformity accelerates training and achieves performance comparable to using a larger dataset. More precisely, we characterize the effect of data uniformity through the minimum distance between input data points, denoted by $h_{\min}$, both theoretically and experimentally:
\begin{itemize}
     \item Theoretically, we first establish the relationship between data uniformity and the minimum distance $h_{\min}$ (Theorem~\ref{thm:h_min_lower_bound_uniform_largest}): a more uniform (less biased) distribution results in larger $h_{\min}$. 
     \item On the optimization side, we first develop a general convergence framework for gradient descent (GD) beyond the neural tangent kernel (NTK) regime (Theorem~\ref{thm:training}; Remark~\ref{rmk:convergence_beyond_NTK}). This framework applies to a broad family of complicated architectures with residual connection and normalization, including deep transformers with feedforward and attention layers using softmax, tanh, GELU, SiLU, and other analytic activations (Definition~\ref{def:analytic}). It builds on tools from measure theory and differential topology, combined with polynomial generalized smoothness (Definition~\ref{def:poly_generalized_smoothness}) and local relaxed dissipativity (Definition~\ref{def:dissipative}), and is also applicable to general nonconvex optimization without Lipschitz smoothness. Based on this convergence analysis, we show that smaller $h_{\min}$ can slow down training (Corollary~\ref{cor:h_min_speed_of_convergence}), and provide theoretical insights into the benefits of residual connection and function composition in neural networks (Remark~\ref{rmk:insights_residual_connection_composition_normalization}). 
     \item On the approximation side, we prove that larger $h_{\min}$ and smaller local maximum distance $h_{\max_{d+1}}$ lead to smaller approximation error between the neural network and ground truth (Theorem~\ref{thm:approximation}). This is achieved by establishing a data-dependent Bramble-Hilbert lemma, combined with ideas in computational geometry.
    \item Experimentally, we demonstrate the benefits of selecting more uniformly distributed data by maximizing pairwise distances between data points in supervised fine-tuning. Specifically, we employ our approach into state-of-the-art (SOTA) data distillation and data selection baselines, including TeaMs-RL \citep{guteams}, WizardLM \citep{xu2024wizardlm}, Zcore \citep{griffin2024zero}, and LESS \citep{xia2024less}. Comparative experiments show that this uniformity-driven sampling strategy consistently leads to faster convergence and comparable or improved performance across both $\ell^2$-SGD and cross-entropy loss with Adam optimization (e.g., as shown in Figure \ref{fig:training-time-on-Llama-1-13b-using-wizardlm-10k-abstract}). Our results generalize across multiple data sources, optimization strategies, dataset sizes, and model scales, including LLaMA-1 7B and 13B models. 
\end{itemize}

\vspace{-5pt}
\section{Related work}
\label{sec:related_works}
\vspace{-5pt}

\textbf{Data Diversity.}  Data diversity has been explored across a wide range of domains for enhancing generalization, efficiency, and robustness. For example, \citet{muennighoff2025s1} investigate the role of data diversity in improving test-time scaling generalization. In WizardLM \citep{xu2024wizardlm}, diverse queries are obtained by sampling inputs randomly for distillation from larger LLMs. TeaMs-RL \citep{guteams} leverages reinforcement learning (RL) to teach LLMs for diverse instructional data generalization, improving downstream model generalization. Beyond language modeling, data diversity has shown benefits for generalization in supervised learning \citep{yu2022can}, software security \citep{nguyen2008security}, fault tolerance in software systems \citep{ammann2002data}, and medical image segmentation \citep{hofmanninger2020automatic}. Particularly, in RL, diverse data supports more efficient offline training \citep{nguyen2023sample} and improves exploration and policy learning \citep{eysenbach2018diversity}. Other domains, such as mobile infostation networks \citep{yuen2003exploiting} and privacy protection \citep{machanavajjhala2007diversity}, also leverage diversity for improved robustness. More broadly, \citet{gong2019diversity} demonstrate that data diversity enhances machine learning performance across various stages, such as model diversification and inference diversification.

\textbf{Convergence of Neural Networks.} In the overparameterized regime, the neural tangent kernel (NTK) framework, introduced by \citet{jacot2018neural}, has led to a series of convergence results showing that when the network width is sufficiently large---typically a high-order polynomial in the number of samples $N$, network depth $L$, and inverse of angle between data---(S)GD with infinitesimal learning rates can drive the training loss to zero exponentially \citep{allen2019convergence, du2019gradient, lee2019wide, zou2020gradient, zou2019improved, ji2019polylogarithmic, chen2020much, song2019quadratic, oymak2020toward}. However, in the NTK regime, networks essentially behave as linear models and do not exhibit meaningful feature learning. To address this limitation, recent work has moved towards analyzing feature learning, using either gradient flow or finite number of large steps \citet{yang2020feature,chen2022feature,chen2025global,ba2022high,allen2020towards, allen2022feature, cao2022benign, shi2021theoretical, telgarsky2022feature}. Another line of work adopts a mean-field perspective \citep{song2018mean, chizat2018global, rotskoff2018neural, wei2019regularization, chen2020generalized, sirignano2020mean, fang2021modeling}, dynamical mean-field theory perspective \citep{bordelon2022self,bordelon2023dynamics}, and distributional perspective \citet{han2025precise}. In contrast to these approaches, which typically assume fixed architectures and/or infinitesimal learning rates, our work introduces tools applicable to a broad class of networks and remains effective under larger learning rates, offering a theoretical foundation for convergence in more practical and complex settings.

\textbf{Residual Connections.} Residual connections have been widely studied for their theoretical and practical benefits in deep learning. \citet{hardt2016identity} showed that deep linear residual networks exhibit no spurious local optima and maintain strong expressivity, a result further supported by \citet{liu2019towards}, who demonstrated the absence of spurious minima in residual architectures. \citet{huang2020deep} highlighted the role of residual connections in preserving learnability across depth, while \citet{scholkemper2024residual} found that they help mitigate oversmoothing in graph neural networks. In the context of transformers, \citet{qin2025convergence} showed that residual connections improve the conditioning of output matrices in single-layer architectures with feedforward network and attention. In contrast to these works, our study analyzes general and more complex architectures and provides a theoretical explanation based on measure theory and differential topology for the intrinsic non-degeneracy of the layerwise mapping induced by residual connections throughout training.

\textbf{Approximation of Neural Networks.} A lot of literature has investigated the universal approximation properties of neural networks through the lens of Sobolev spaces. Early work by \citet{andoni2014learning} considered the learnability of a two-layer neural network with analytic activation functions. Building on classical approximation theory, \citet{lu2021deep} applied the Bramble–Hilbert theorem to characterize the expressivity of deep ReLU networks for smooth functions, deriving error bounds in the $L^\infty$ norm. Extending the analysis to Sobolev norms, \citet{guhring2020error} established approximation rates for ReLU networks in $W^{s,p}$. Similarly, \citet{de2021approximation} derived approximation error bounds in $W^{k,\infty}$ norm for tanh networks, while \citet{shen2022approximation} focused on convolutional architectures and their ability to approximate functions in Sobolev spaces. More recently, \citet{jiang2024approximation} analyzed the approximation capabilities of transformer architectures in the $L^\infty$ norm. Regarding the general $L^p$ norm for $1\le p\le \infty$, there are works studying, for example, transfromers \citet{yun2019transformers,kajitsuka2023transformers,kratsios2021universal,edelman2022inductive,luo2022your}, residual networks (ResNets) \citet{lin2018resnet,tabuada2022universal}, feedforward networks \citep[etc.]{hanin2017approximating,kidger2020universal,park2020minimum,cai2022achieve}.

% See more related work in Appendix~\ref{app:more_related_works}.

\vspace{-5pt}

\vspace{-2pt}
\section{Preliminary}
\label{sec:preliminary}
\vspace{-6pt}
We use $\|\cdot\|$ to denote $\ell^2$ norm for vectors and matrices, use $\|\cdot\|_p$ to denote $\ell^p$ norm for vectors and $L^p(\Omega)$ norm for functions, use $\|\cdot\|_{r,p,\Omega}$ to denote the Sobolev norm of $W^{r,p}(\Omega)$. For the function $f:\RR^m\to\RR^n$, We use $\nabla f\in\RR^{n\times m}$ to denote the Jacobian of the map $f$ w.r.t. all variables, and use $\nabla_xf$ to represent the Jacobian w.r.t. $x$. For a vector $\alpha$, we denote $|\alpha|=|(\alpha_1,\cdots,\alpha_d)|=\sum_{i=1}^d\alpha_i$; for a set $\Omega$, we use $|\Omega|$ to denote the area of the set. The closure and interior of a set $\Omega$ are denoted as $\bar{\Omega}$ and $\Omega^\mathrm{o}$. For two measures $\nu$ and $\mu$, we use $\nu\ll\mu$ to denote the absolute continuity of $\nu$ w.r.t. $\mu$. We denote $\Leb_d(\cdot)$ to be the Lebesgue measure of a set in $\RR^d$. We use $\conv\{v_1,\cdots,v_n\}$ to denote the convex hull over $v_1,\cdots,v_n$. We denote $B_x(r)$ to be the open ball of radius $r$ centered at $x$ under Euclidean distance. We use $\RR_{\ge 0}$ to denote all the non-negative real values.

We use normalization in building the neural networks~\eqref{eqn:NN} which is defined as  $$\tau_\epsilon(x)=\frac{x}{\sqrt{\|x\|^2+\epsilon^2}}\text{, for some }\epsilon>0.$$  We also introduce the Sobolev space $W^{m,p}(\Omega)$, which is defined as $W^{m,p}(\Omega)=\{\phi\in L^p(\Omega):D^\alpha\phi\in L^p(\Omega), \forall |\alpha|\le k\},$
    with Sobolev norm $\|\phi\|_{m,p}=\left(\sum_{|\alpha|\le m}\|D^\alpha \phi\|_p^p\right)^{1/p}$ for $1\le p<\infty$, and $\max_{|\alpha|\le m}\|D^\alpha \phi\|_\infty$ for $p=\infty$ 
    where $D^\alpha \phi=\frac{\partial^{|\alpha|}\phi}{\partial^{\alpha_1}x_1\cdots\partial^{\alpha_d}x_d}$.

Next, we introduce analyticity, polynomial generalized continuity and smoothness, and local relaxed dissipativity as key properties for proving the convergence of neural networks.

\vspace{-4pt}
\subsection{Analytic function}
\label{subsec:analytic}
\vspace{-4pt}
Analyticity is a useful property in both measure theory and differential topology. For a compact domain $\cD$, real-analytic functions are dense in the space of continuous function $C(\cD)$ under Whitney $C^0$-topology \citep{grauert1958levi}, i.e., with respect to uniform convergence on compact sets. Therefore, assuming analyticity is reasonable in many settings. It is defined as follows.
\vspace{-4pt}
\begin{definition}[Analytic function]
\label{def:analytic}
    A function  $f(x)=\big(
        f_1(x),\cdots, f_m(x)\big)^\top:U\to\RR^m$
    with an open subset $U\subseteq\RR^m$, is called real-analytic on U, if for each $x_0\in U$, the function $f_i(x)$ can be represented by a convergent power series in some neighborhood of $x_0$, for all $i=1,\cdots,m$.
\end{definition}
\vspace{-6pt}
Many activation functions and components of neural networks are real-analytic. For example:
\vspace{-2pt}
\begin{corollary}
\label{cor:analytic_with_NN}
    Any product, sum, and composition of softmax, tanh, sigmoid, GELU, SiLU, polynomial, $\tau_\epsilon$, and exponential functions, is real-analytic.
\end{corollary}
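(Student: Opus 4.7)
The plan is to reduce the statement to two ingredients: (a) the standard closure properties of the class of real-analytic functions, and (b) a check that each listed atomic function is real-analytic on its natural domain. The closure facts I would invoke are classical: if $f$ and $g$ are real-analytic on a common open set $U\subseteq\RR^m$, then so are $f+g$, $f\cdot g$, and any composition $f\circ g$ wherever defined; moreover $1/g$ is real-analytic on $\{g\neq 0\}$, and $\sqrt{\,\cdot\,}$ is real-analytic on $(0,\infty)$. These all follow from direct manipulation of convergent power series (or equivalently by extending to a holomorphic function on a complex neighborhood and invoking that holomorphic sums, products, quotients on non-vanishing sets, and compositions are holomorphic).

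Given (a), it remains to verify (b) for each building block. The exponential $x\mapsto e^x$ is entire, with globally convergent Taylor series. This immediately makes $\tanh(x)=(e^x-e^{-x})/(e^x+e^{-x})$ and $\mathrm{sigmoid}(x)=1/(1+e^{-x})$ real-analytic on $\RR$, since both denominators are strictly positive. Softmax in coordinate $i$ is $e^{x_i}/\sum_j e^{x_j}$, again a ratio of analytic functions with a strictly positive denominator. $\mathrm{SiLU}(x)=x\cdot\mathrm{sigmoid}(x)$ is a product of two analytic functions. For $\mathrm{GELU}$, either the exact form $x\,\Phi(x)$ with $\Phi(x)=\tfrac{1}{\sqrt{2\pi}}\int_{-\infty}^{x}e^{-t^2/2}\,dt$ works, since the antiderivative of an entire function is analytic by term-by-term integration of its power series, or the standard $\tanh$-based surrogate $0.5\,x(1+\tanh(\sqrt{2/\pi}(x+0.044715\,x^3)))$ works as a composition of polynomials and $\tanh$. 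Polynomials are trivially analytic. Finally, $\tau_\epsilon(x)=x/\sqrt{\|x\|^2+\epsilon^2}$ is analytic because $\|x\|^2+\epsilon^2\ge\epsilon^2>0$, so $\sqrt{\,\cdot\,}$ is applied on an open set where it is analytic, the argument is polynomial in $x$, and division by a strictly positive analytic scalar preserves analyticity of each component.

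Once each atomic function is shown to lie in the analytic class, the corollary follows by a finite induction on the syntactic construction of the expression: each new node -- sum, product, or composition -- stays in the class by (a). The only steps that require any care, rather than being purely mechanical, are $\mathrm{GELU}$ (one must commit to a precise definition and then use analyticity of the antiderivative of an entire function) and $\tau_\epsilon$ (one must explicitly invoke the strict positivity supplied by $\epsilon>0$ to apply $\sqrt{\,\cdot\,}$ on its domain of analyticity). Everything else is a routine appeal to closure under algebraic operations and composition, so I do not anticipate a serious obstacle.
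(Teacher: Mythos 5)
Your proposal is correct and follows essentially the same route as the paper: verify each atomic function is real-analytic (exponential entire; tanh, sigmoid, softmax as quotients with strictly positive denominators; GELU via the analytic antiderivative of the Gaussian; SiLU as a product; $\tau_\epsilon$ via the strictly positive denominator), then conclude by the standard closure of real-analytic functions under sums, products, and compositions, which is exactly the paper's Proposition~\ref{prop:analytic_function}. Your treatment of GELU and of the square root in $\tau_\epsilon$ is slightly more explicit than the paper's, but the argument is the same.
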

\vskip -0.1in
The above corollary implies that feedforward layers using tanh, GELU, or SiLU, as well as attention layers with softmax or linear activation, are real-analytic. Consequently, architectures such as transformers, residual networks, and feedforward networks built with real-analytic activations and normalizations are themselves real-analytic.
\vspace{-4pt}
\subsection{Polynomial generalized continuity and smoothness}
\label{subsec:poly_bound_conti_smooth}
\vspace{-4pt}

Classical non-convex optimization theory typically assumes Lipschitz smoothness of the objective function, a condition that is rarely satisfied in neural network training. One direction of the generalization of Lipschitz smoothness stems from the concept $(L_0, L_1)$-smoothness \citep{zhang2019gradient}. Building on this, \citet{li2023convex} introduced the generalized smoothness for a broader class of non-Lipschitz smooth functions. Motivated by these developments, we propose a new formulation in nonconvex optimization tailored to neural network architectures, based on the following concepts.
\begin{definition}[Poly-boundedness]
\label{def:poly_bound}
    A function $f(x):\Omega\subseteq\RR^n\to\RR^d$ is polynomially bounded if
    $$
        \| f(x)\|\le S(\|x_{1}\|,\cdots,\|x_{n_x}\|),\ \forall x\in \Omega,
    $$
    where $x_i\in\RR^{n_{i,1}\times n_{i,2}}$, and $\dim x=n=\sum_{i=1}^{n_x}n_{i,1}n_{i,2}$; $S(\cdot)$ is some polynomial whose coefficients are all positive.
\end{definition}

\begin{definition}[Poly-continuity]
\label{def:poly_generalized_continuous}
    A function $f(x):\Omega\subseteq\RR^n\to\RR^d$ satisfies polynomial generalized continuity if
    $$
        \| f(x)- f(x')\|\le S(\|x_{\max,1}\|,\cdots,\|x_{\max,d}\|)\|x-x'\|,\ ,\ \forall x,x'\in \Omega,
    $$
    where $\|x_{\max,i}\|=\max \{\|x_i\|,\|x_i'\|\}$, and $S(\cdot)$ is some polynomial whose coefficients are all positive.
\end{definition}

\begin{definition}[Poly-smoothness]
\label{def:poly_generalized_smoothness}
    A function $f(x):\Omega\subseteq\RR^n\to\RR^d$ in $C^1$ satisfies polynomial generalized smoothness if $\nabla f$ satisfies polynomial generalized continuity.
\end{definition}
\vspace{-4pt}
See Appendix~\ref{subapp:generalized_smoothness_vs_poly} for comparisons between poly-smoothness and generalized smoothness~\citep{li2023convex}. The three definitions are easily satisfied in typical neural network settings. Specifically:
\begin{corollary}
\label{cor:poly_smooth_with_NN}
    Any product, sum, and composition of polynomial, softmax, tanh, sigmoid, $\tau_\epsilon$, GELU, SiLU, satisfies polynomial boundedness for both the functions and their gradients, polynomial generalized continuity, and polynomial generalized smoothness.
\end{corollary}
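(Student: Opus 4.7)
The plan is to reduce the three required properties (poly-boundedness of the function, poly-continuity, and poly-smoothness) to a single uniform condition: the function and its first two derivatives are each polynomially bounded in the input norms. By the mean value theorem applied along segments, a poly-bound on $\|\nabla f\|$ yields poly-continuity of $f$ via $\|f(x) - f(x')\| \le \sup_{z \in [x,x']} \|\nabla f(z)\| \|x - x'\|$, where $\|z\|$ is controlled by $\max\{\|x\|, \|x'\|\}$; applying the same observation to $\nabla f$ in place of $f$ gives poly-smoothness from a poly-bound on $\|\nabla^2 f\|$. It then suffices to verify this reduced condition for every function in the class.

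The base step is to check the condition on each primitive. For polynomials it is immediate, since $\nabla p$ and $\nabla^2 p$ are themselves polynomials. For softmax, tanh, sigmoid, and $\tau_\epsilon$, the outputs lie in bounded sets, and direct computation shows that derivatives of every order are polynomials in the outputs (or are bounded by powers of $1/\epsilon$ in the case of $\tau_\epsilon$), so they are uniformly bounded. For GELU and SiLU, the function itself grows at most linearly while the gradient and Hessian remain bounded, because expressions of the form $x \sigma(x)(1 - \sigma(x))$ and their Gaussian analogues decay at infinity.

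The inductive step is closure under sum, product, and composition. Sums are trivial. For products, the Leibniz formulas $\nabla(fg) = (\nabla f) g + f (\nabla g)$ and $\nabla^2(fg) = (\nabla^2 f) g + 2 (\nabla f)(\nabla g) + f (\nabla^2 g)$ express each derivative as a sum of products of poly-bounded quantities, and the product of two polynomials with nonnegative coefficients is again such a polynomial, so the bounds compose. For composition $f \circ g$, the multivariate chain rule expresses derivatives of $f \circ g$ as polynomial combinations of derivatives of $f$ evaluated at $g(x)$ and derivatives of $g$ at $x$; inserting the bound $\|g(x)\| \le S_g(\|x\|)$ into $S_f$ gives $S_f(S_g(\|x\|))$, which is again a polynomial in $\|x\|$ with nonnegative coefficients because the composition of such polynomials is closed in this class.

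The main obstacle will be bookkeeping the multivariate structure of the poly-bound $S(\|x_1\|, \cdots, \|x_{n_x}\|)$: when functions of block-structured inputs are composed, as in transformer architectures whose layers operate on separate weight and activation blocks, one must track how each block's norm propagates through the chain rule rather than collapsing everything into an aggregate norm. The monotonicity and nonnegative-coefficient property of $S$ noted after Definition \ref{def:poly_generalized_smoothness} keeps this tractable: every block-substitution still yields a monotone polynomial bound in the original input norms, so the induction on expression complexity closes.
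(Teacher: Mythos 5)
Your proof is correct, but it takes a genuinely different route from the paper for the closure step. The paper's Proposition~\ref{prop:poly_generalized_properties} proves closure under sum, product, and composition directly from the Lipschitz-modulus estimates: for a product $fg$, it decomposes $\|f(x)g(x)-f(x')g(x')\|$ via the intermediate term $f(x)g(x')$, applies the triangle inequality, and invokes the component bounds on $f$, $g$, $\nabla f$, $\nabla g$ --- never touching second derivatives. You instead collapse all four target properties (poly-boundedness of $f$ and $\nabla f$, poly-continuity of $f$, poly-continuity of $\nabla f$) into the single invariant that $f$, $\nabla f$, $\nabla^2 f$ are poly-bounded, passing to the first two via the integral form of the mean value theorem along segments, and then verify closure of that invariant through the Leibniz rule and chain rule. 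For the base cases the two arguments already coincide: the paper's proof of the corollary also reads off that the primitives and their derivatives are bounded by constants (or, for polynomials, SiLU, and GELU, poly-bounded), which is precisely your reduced condition. What each route buys: the paper's direct argument only needs $C^1$ regularity with a poly-Lipschitz gradient, so it would extend to non-$C^2$ activations; your route requires $C^2$ but eliminates the separate Lipschitz-decomposition bookkeeping and verifies one condition instead of four. Since every function in the list is real-analytic, both hypotheses are met and both proofs are valid. One point you should make explicit (the paper leaves it implicit as well, e.g.\ in Lemma~\ref{lem:smoothness_inequalities}): the segment argument requires the line between $x$ and $x'$ to remain in the domain, i.e.\ convexity of $\Omega$. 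This is automatic here because every primitive is globally defined on $\RR^n$, but it is worth stating since the definitions allow arbitrary $\Omega$.
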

\vskip -0.06in
The corollary above implies that many neural networks, including transformers, residual networks, and feedforward networks employing the aforementioned functions, satisfy the three properties.
\vspace{-4pt}
\subsection{Local relaxed dissipative condition}
\label{subsec:dissipative}
\vspace{-4pt}
Besides poly-boundedness, continuity, and smoothness, additional conditions are needed to guarantee convergence. Inspired by some weak dissipative conditions from operator theory (for example, hypomonotonicity \citep{moudafi2004remark}), we define the following local relaxed dissipative condition:
    \begin{definition}[$(x,x^*,r,\rho,\epsilon)$-dissipativity]
    \label{def:dissipative}
    A function $f(x)\in C^1$ satisfies the $(x,x^*,r,\rho,\epsilon)$-dissipative condition near $x$ if there exists some stationary point $x^*$ and some constant $r> \|x-x^*\|$, s.t., $\forall y\in B_{x^*}(r)\backslash\{x:\|\nabla f(x)\|\le \epsilon\}$, we have $$
        \nabla f(y)^\top (y-x^*)\ge-\rho\|\nabla f(y)\|^2,$$
    where $\epsilon\ge0$, and $\rho\in\RR$ is a constant depending on $x,x^*,\epsilon$, but independent of $y$.
\end{definition}
\vskip -0.06in
The above condition characterizes the quality of the local landscape near a stationary point by comparing the gradient direction and the stationary point direction. It holds trivially with $\rho=\frac{1}{\epsilon}\max\|y-x^*\|=\frac{r}{\epsilon}$ by Cauchy-Schwartz inequality. Note that $\rho$ can be negative, and a function $f$ with more favorable dissipative properties leads to a smaller $\rho$. For example, if $f$ is locally convex near $x^*$ with some $r>0$, then $\rho=0$.

\vspace{-4pt}
\section{Theory}
\label{sec:theory}
\vspace{-4pt}

In this section, we consider the data $\{(x_i,y_i)\}_{i=1}^N$ where $x_i,y_i\in\RR^d$, and define $h_{ij}=\|x_i-x_j\|\text{, and }h_{\min}=\min_{i,j}h_{ij}.$ We mainly use $h_{\min}$ to characterize the effect of data distributions. More precisely, we show that an increase in $h_{\min}$ corresponds to a more uniform (less biased) distributions (Section~\ref{subsec:min_distance_vs_biased_distribution}). We then prove that a larger $h_{\min}$ leads to faster training (Section~\ref{subsec:optimization}) and smaller approximation error (Section~\ref{subsec:approximation}). Additionally, we present a convergence framework for a family of neural networks under GD beyond the NTK regime (Remark~\ref{rmk:convergence_beyond_NTK}). This framework provides theoretical support for the practical effectiveness of structures such as residual connection and function composition  (Remark~\ref{rmk:insights_residual_connection_composition_normalization}).

We assume that the data satisfy the following properties:
\begin{assumption}[Data]
\label{assump:data_distribution_absolutely_continuous_Lebesgue_ground_truth}
    Assume $x_1,\cdots,x_N\overset{i.i.d.}{\sim}\pi(x)dx$, where $\pi(x)\in L^\infty$ is supported on an open set $\Omega\subseteq\RR^d$ with $|\Omega|<\infty$, and $\pi\ll \Leb$ on $\Omega$. Assume the ground truth function $g(x)\in W^{r,p}(\Omega)$ for $1\le p<\infty$ and $r\ge 1$, and $y_i=g(x_i)\in\RR^d$.
\end{assumption}

In the above assumption, the input data are sampled i.i.d. from some density, which is absolutely continuous w.r.t. Lebesgue measure on its support, and lies in $L^\infty$ to exclude degenerate cases like point masses. For the ground truth function $g$, we assume that it is relatively smooth. 

Unlike analyses in the NTK regime \citep[etc.]{allen2019convergence, zou2020gradient, zou2019improved, chen2020much, oymak2020toward}, which typically assume normalized input data so that $h_{\min}$ reflects only angular separation between points, we do not assume normalized input, and therefore, $h_{\min}$ encodes both the angular separation and norm difference between input data.

\subsection{Minimum distance $h_{\min}$ vs more uniform distribution}
\label{subsec:min_distance_vs_biased_distribution}

In this section, we discuss the relationship between $h_{\min}$ and the sampling density function of input data $\pi(\cdot)$: less biased density results in larger $h_{\min}$, and consequently admits faster convergence (Theorem~\ref{thm:training}) and smaller approximation error (Theorem~\ref{thm:approximation}).

Before stating the result, we introduce the following notations. Let $\pi_{\max}=\|\pi(x)\|_\infty$. For some $0<\bar{\pi}_{\max}\le \pi_{\max}$, we define $\Omega_{\max}=\{x\in\Omega\,|\,\pi(x)\ge \bar{\pi}_{\max}\}$. Below is the main theorem showing the relationship between $h_{\min}$ and $\pi_{\max},\bar{\pi}_{\max}$.
\begin{theorem}
\label{thm:h_min_lower_bound_uniform_largest}
    Suppose Assumption~\ref{assump:data_distribution_absolutely_continuous_Lebesgue_ground_truth} holds. For any biased distribution such that there exists a ball $B\left(C\left(\frac{-\log\delta}{\bar{\pi}_{\max}(N-1)V_d}\right)^{1/d}\right)\subseteq \Omega_{\max}$, and $\frac{\pi_{\max}}{\bar{\pi}_{\max}}=\cO(1)$, 
we have with probability at least $1-2\delta$, 
    $$ \left(\frac{2\delta}{\pi_{\max} N(N-1)V_d}\right)^{1/d}\le h_{\min}\le C\left(\frac{-\log\delta}{\bar{\pi}_{\max}(N-1)V_d}\right)^{1/d},$$
 where $V_d=\frac{\pi^{d/2}}{\Gamma(d/2+1)}$ is the volumn of $d$-dimensional unit ball, and $C>0$ is some universal constant. 
\end{theorem}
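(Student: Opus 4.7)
The plan is to prove the two bounds separately via volume-based union-bound arguments, using $\pi\le \pi_{\max}$ for the lower bound and $\pi\ge \bar{\pi}_{\max}$ on $\Omega_{\max}$ for the upper bound.

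For the lower bound, I would start from the observation that for any fixed pair $i\neq j$ and any $r>0$, the i.i.d.\ assumption and $\pi\le\pi_{\max}$ give
\begin{align*}
\PP(\|x_i-x_j\|\le r)
=\int \pi(x)\Bigl(\int_{B_x(r)}\pi(y)\,dy\Bigr)dx
\le \pi_{\max}V_d r^d.
\end{align*}
A union bound over the $\binom{N}{2}$ pairs yields $\PP(h_{\min}\le r)\le \tfrac{N(N-1)}{2}\pi_{\max}V_d r^d$. Choosing $r$ so that the right side equals $\delta$ gives the stated lower bound $h_{\min}\ge (2\delta/(\pi_{\max}N(N-1)V_d))^{1/d}$ with probability at least $1-\delta$.

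For the upper bound, I would use a two-stage argument on the high-density ball. Let $B=B_{x_0}(R)\subseteq\Omega_{\max}$ with $R=C\bigl(-\log\delta/(\bar{\pi}_{\max}(N-1)V_d)\bigr)^{1/d}$, and set the inner ball $B'=B_{x_0}(R/2)$, chosen so that $B_x(R/2)\subseteq B$ for every $x\in B'$. Stage one: since $\pi\ge \bar{\pi}_{\max}$ on $B'\subseteq\Omega_{\max}$, we have $\PP(x_1\notin B')\le 1-\bar{\pi}_{\max}V_d(R/2)^d$, so
\begin{align*}
\PP(\text{no } x_k \in B')\le\bigl(1-\bar{\pi}_{\max}V_d(R/2)^d\bigr)^N\le\exp\bigl(-N\bar{\pi}_{\max}V_d R^d/2^d\bigr),
\end{align*}
which is $\le\delta$ once the universal constant $C$ is chosen large enough (e.g.\ $C\ge 2$). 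Stage two: condition on the event that some $x_k\in B'$ and consider the remaining $N-1$ samples; independence and the inclusion $B_{x_k}(R/2)\subseteq \Omega_{\max}$ give
\begin{align*}
\PP(\forall j\neq k:\ x_j\notin B_{x_k}(R/2)\mid x_k)
\le \exp\bigl(-(N-1)\bar{\pi}_{\max}V_d R^d/2^d\bigr)\le\delta,
\end{align*}
so some $x_j$ lies within $R/2<R$ of $x_k$. Combining the two failure events by a union bound gives $h_{\min}\le R$ with probability at least $1-2\delta$.

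The main obstacle I anticipate is purely bookkeeping: making sure the same radius $R$ simultaneously serves both as the size of the safe ball $B$ in $\Omega_{\max}$ and as the upper bound on $h_{\min}$. The trick is to work with the inner half-ball $B'$ so that a concentric ball of radius $R/2$ around any $x_k\in B'$ remains inside $\Omega_{\max}$ (otherwise, one cannot invoke the lower bound $\pi\ge\bar{\pi}_{\max}$ in stage two). Also, the hypothesis $\pi_{\max}/\bar{\pi}_{\max}=\cO(1)$ is what reconciles the two sides: the upper bound is governed by the high-density scale $\bar{\pi}_{\max}$ and the lower bound by $\pi_{\max}$, and under this ratio they are of matching order up to the universal constant $C$ and the $N$ vs.\ $N-1$ distinction, which is absorbed into $C$.
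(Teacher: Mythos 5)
Your lower bound is essentially the paper's argument (the paper conditions sequentially over the samples rather than union-bounding over pairs, but the computation and the resulting bound $\PP(h_{\min}\le r)\le\tfrac{N(N-1)}{2}\pi_{\max}V_dr^d$ are the same), and your overall strategy for the upper bound---forcing two samples into a small ball contained in $\Omega_{\max}$---is also the paper's. The genuine gap is in your stage two. You condition on the event that \emph{some} $x_k\in B'$, where the index $k$ is random and the event involves all $N$ samples, and then treat the ``remaining $N-1$ samples'' as i.i.d.\ draws from $\pi$; this is not valid. For instance, on the sub-event that exactly one sample lies in $B'$, the other samples are conditioned to avoid $B'$, so their conditional law is no longer $\pi$ and the bound $\bigl(1-\bar{\pi}_{\max}V_d(R/2)^d\bigr)^{N-1}$ does not follow. (Taking $k^*$ to be the first index landing in $B'$ does not rescue it either: only the samples after $k^*$ remain unconditioned, leaving $N-k^*$ rather than $N-1$ of them.)

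The standard repair---and what the paper effectively does by bounding the probability of ``at most one point in the radius-$R/2$ ball''---is a union bound over the index: $\PP(h_{\min}>R)\le \PP(\text{no sample in }B')+\sum_{k}\PP\bigl(x_k\in B',\ \forall j\ne k:\ x_j\notin B_{x_k}(R/2)\bigr)$, where for each \emph{fixed} $k$ one may legitimately condition on the value of $x_k$. This introduces an extra factor $N\,\pi_{\max}V_d(R/2)^d\asymp \frac{\pi_{\max}}{\bar{\pi}_{\max}}(C/2)^d\log(1/\delta)$ multiplying the exponential $e^{-(N-1)\bar{\pi}_{\max}V_d(R/2)^d}=\delta^{(C/2)^d}$, and this is exactly where the hypothesis $\pi_{\max}/\bar{\pi}_{\max}=\cO(1)$ is needed (together with enlarging the universal constant $C$)---not, as you suggest, merely to make the orders of the two bounds ``reconcile''; in the paper the ratio assumption is used solely to kill this one-point term. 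A smaller bookkeeping issue: your two-stage upper bound already spends $2\delta$, so combined with the lower bound you would only get confidence $1-3\delta$ rather than the stated $1-2\delta$; after the union-bound repair the two upper-bound terms can be made to sum to $\delta$ by taking $C$ large, which restores the theorem's accounting.
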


The above theorem provides both the lower and upper bounds on $h_{\min}$ in terms of $\pi_{\max}$ and $\bar{\pi}_{\max}$. The specific assumption on the biased distribution requires the density to remain high over a sufficiently large region, avoiding narrow peaks with near-zero support. When $\pi(\cdot)$ is biased, both $\pi_{\max}$ and $\bar{\pi}_{\max}$ can be large, leading to smaller upper and lower bounds for $h_{\min}$. Conversely, if $\pi(\cdot)$ becomes more ``flattened", then $\pi_{\max}$ decreases, and $\bar{\pi}_{\max}$ also tends to decrease when the flattening is significant. Thus, a more uniform (i.e., less biased) density yields a larger $h_{\min}$.

\subsection{Optimization: Convergence of neural networks beyond NTK regime}
\label{subsec:optimization}

In this section, we establish the convergence of a broad class of neural networks under GD beyond NTK regime (Theorem~\ref{thm:training}). Moreover, smaller $h_{\min}$ tends to slow down convergence (Corollary~\ref{cor:h_min_speed_of_convergence}).

\textbf{Neural networks, Loss, and Algorithm.} We consider a family of neural networks as follows:
\vspace{-0.01in}
\begin{align}
\label{eqn:NN}
    u_{0,i}&=x_i;\quad\notag\\
    u_{\ell+1,i}&=u_{\ell,i}+\bar{\varphi}_\ell(\theta_\ell;u_{\ell,i}),\forall\ell=0,\cdots,L-1;\quad\\    f(\theta;x_i)&=\bar{\varphi}_L(\theta_L;u_{L,i})\notag
\end{align}
\vskip -0.08in
where $\theta=(\theta_0,\cdots,\theta_L)$ is the collection of weights with the $\theta_i$ corresponding to the $i$th layer, and $\bar{\varphi}_{\ell}(\theta_\ell;u)=\varphi_{\ell}(\theta_\ell;\tau_\epsilon(u))$ with $\varphi_\ell(\theta_\ell;\cdot):\RR^d\to \RR^d$ and the normalization function $\tau_\epsilon$ (see Section~\ref{sec:preliminary}) for $\ell=0,\cdots,L$. For the rest of the paper, we sometimes omit the subscript of $\theta$ for simplicity. We use $\ell^2$ loss $$\cL(\theta;\{x_i,y_i\}_{i=1}^N)=\frac{1}{N}\sum_{i=1}^Nl(f(x_i),y_i),$$ where  $l(\theta;x_i,y_i)=\frac{1}{2}\|y_i-f(\theta;x_i)\|^2$. We consider minimizing $\cL(\theta)$ by gradient descent
    $$\theta^{k+1}=\theta^k-\eta\nabla_\theta\cL(\theta^k),$$
where $\eta>0$, and $\theta^k$ is the vectorization of all parameters $\theta$ at the $k$th iteration.

We then introduce the following assumptions before stating the main theorem.
\begin{assumption}[Real-analyticity]
\label{assump:analytic}
    Assume ${\varphi}_\ell(\theta;u)$ are real-analytic (Definition~\ref{def:analytic}) for all $\theta$, $u$, and $\ell=1,\cdots,L$, and ${\varphi}_0(\theta;u)$ is real-analytic for all $\theta$, $u$ in some bounded open neighborhood of $\bar{\Omega}$. 
\end{assumption}
\begin{assumption}[Poly-boundedness, continuity, smoothness]
    \label{assump:generalized_smoothness_detail}
    Assume for all $\ell=0,\cdots,L$, $\varphi_\ell(\theta;u)$ satisfies polynomial generalized continuity (Definition~\ref{def:poly_generalized_continuous}) and smoothness (Definition~\ref{def:poly_generalized_smoothness}) for both $\theta$ and $u$. In addition, $\varphi_\ell(\theta;u),\nabla_\theta\varphi_\ell(\theta;u),\nabla_u\varphi_\ell(\theta;u)$ are polynomially bounded (Definition~\ref{def:poly_bound}).
\end{assumption}
Assumption~\ref{assump:analytic} and~\ref{assump:generalized_smoothness_detail} hold true for many practical structures, including transformers, feedforward networks, and residual networks. See more discussions in Section~\ref{subsec:analytic} and~\ref{subsec:poly_bound_conti_smooth}.

\begin{assumption}[Nonlinear architecture]
\label{assump:architecture} 

Let $\tilde{f}_\ell(\theta;u_{\ell,i})=f(\theta;x_i)$. Assume there exists some $\bar{\ell}\in\{0,\cdots,L-1\}$, s.t., for any $\theta_{\bar{\ell}},\cdots,\theta_L$ except for a measure-zero set, there exists one tuple $(u_1,\cdots,u_N)$, s.t.,
$\begin{pmatrix}
        \nabla_{\theta_{\bar{\ell}}}\tilde{f}_{\bar{\ell}}(\theta;u_1)\\
        \vdots\\
        \nabla_{\theta_{\bar{\ell}}}\tilde{f}_{\bar{\ell}}(\theta;u_N)
    \end{pmatrix}$
has full row rank $Nd$, and $\dim \theta_{\bar{\ell}}>Nd$. 
\end{assumption}
The above assumption is mild, and guarantees nonlinearity of the architecture through Theorem~\ref{thm:zero_set_analytic_function} \citep{mityagin2015zero}. The $\tilde{f}_\ell(\theta;u_{\ell,i})$ is the composition of layers from the $(\ell+1)$th to the $L$th layer. It requires the row spans of the Jacobian of $\tilde{f}_{\bar{\ell}}(\theta;u_i)$'s do not intersect. The following is one example.
\begin{lemma}
\label{lem:example_exist_Nd_full_rank}
    The following structure satisfies Assumption~\ref{assump:architecture} with $\ell=L-1$: $\varphi_L(\theta;u)=\theta_L u$ and $\varphi_{L-1}(\theta;u)=\theta_{L-1,2}\sigma(\theta_{L-1,1}u)$, where $\sigma(\cdot)$ is GELU, $\theta_L\in\RR^{d\times d}$ $\theta_{L-1,1}\in\RR^{m_{L-1}\times d}$, $\theta_{L-1,2}\in\RR^{d\times m_{L-1}}$, and $m_{L-1}>Nd$.
\end{lemma}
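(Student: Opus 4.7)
The plan is to show that for generic $(\theta_{L-1},\theta_L)$ there exists $(u_1,\dots,u_N)$ for which the $\theta_{L-1,2}$-block of the stacked Jacobian alone has full row rank $Nd$; this suffices because the larger matrix inherits the rank, and $\dim\theta_{L-1,2}=dm_{L-1}>Nd^2\ge Nd$ supplies enough columns.

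First I would unwind the architecture: writing $z_i=\tau_\epsilon(u_i)$, $w_i=\sigma(\theta_{L-1,1}z_i)\in\RR^{m_{L-1}}$, and $s_i=u_i+\theta_{L-1,2}w_i$, one has $\tilde f_{L-1}(\theta;u_i)=\theta_L\tau_\epsilon(s_i)$, and the chain rule gives
\[
\frac{\partial(\tilde f_{L-1}(\theta;u_i))_p}{\partial(\theta_{L-1,2})_{bc}}=(M_i)_{pb}(w_i)_c,\qquad M_i:=\theta_L\,J_{\tau_\epsilon}(s_i),
\]
with $J_{\tau_\epsilon}(s)=(\|s\|^2+\epsilon^2)^{-1/2}\bigl(I-ss^\top/(\|s\|^2+\epsilon^2)\bigr)$. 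Stack the $N$ blocks of size $d\times dm_{L-1}$ into $J\in\RR^{Nd\times dm_{L-1}}$. A nontrivial left null-vector of $J$ corresponds to coefficients $c_1,\dots,c_N\in\RR^d$ satisfying $\sum_{i=1}^N(M_i^\top c_i)w_i^\top=0$ as a $d\times m_{L-1}$ matrix. Setting $V=[M_1^\top c_1,\dots,M_N^\top c_N]\in\RR^{d\times N}$ and letting $W\in\RR^{N\times m_{L-1}}$ have rows $w_i^\top$, this reads $VW=0$. If $w_1,\dots,w_N$ are linearly independent then $W$ has trivial left null space, forcing $V=0$; if additionally every $M_i$ is invertible then $c_i=0$. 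So it suffices to find $(u_i)$ for which (a) every $M_i$ is invertible and (b) $w_1,\dots,w_N$ are linearly independent in $\RR^{m_{L-1}}$.

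Condition (a) is immediate: the eigenvalues of $J_{\tau_\epsilon}(s)$ are $(\|s\|^2+\epsilon^2)^{-1/2}$ with multiplicity $d-1$ and $\epsilon^2(\|s\|^2+\epsilon^2)^{-3/2}$ with multiplicity $1$, so $J_{\tau_\epsilon}(s)$ is always invertible and (a) reduces to $\det\theta_L\neq 0$, which holds off a measure-zero set in $\theta_L$. For (b), I would first build one explicit witness: pick $\theta_{L-1,1}$ whose first $N$ rows are $\alpha_1 e_1,\dots,\alpha_N e_1$ for distinct positive scalars $\alpha_k$ (remaining rows arbitrary), and pick $z_i$ with first coordinate $\beta_i\in(0,1)$ and other coordinates $0$. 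Since $\sigma(0)=0$, the matrix $[w_1,\dots,w_N]$ has top $N\times N$ block $\bigl(\sigma(\alpha_k\beta_i)\bigr)_{k,i}$ and zero rows below, so linear independence reduces to nonsingularity of that block. The key claim is that $\{x\mapsto\sigma(\alpha_k x)\}_{k=1}^N$ is linearly independent as analytic functions on any open interval: GELU $\sigma(x)=x\Phi(x)$ is real-analytic on $\RR$ with infinitely many nonzero Taylor coefficients at $0$ (read off from the entire series for $\Phi$), so an identity $\sum_k c_k\sigma(\alpha_k x)\equiv 0$ would force $\sum_k c_k\alpha_k^n=0$ for infinitely many integers $n$, and a generalized-Vandermonde argument with distinct positive $\alpha_k$ yields $c_k=0$. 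Hence $\beta_i\in(0,1)$ can be chosen making the block nonsingular, and since $\tau_\epsilon:\RR^d\to B_0(1)$ is a diffeomorphism the prescribed $z_i$ lift to honest $u_i\in\RR^d$.

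To promote the explicit witness to a generic $\theta_{L-1,1}$, observe that the determinant of the $N\times N$ minor above is a real-analytic function of $(\theta_{L-1,1},z_1,\dots,z_N)$ that is nonzero at the witness, hence its zero set is a proper real-analytic subvariety of Lebesgue measure zero; by Fubini, for almost every $\theta_{L-1,1}$ (with $\theta_{L-1,2}$ unconstrained, since it did not enter (b)) there is some $(z_i)$ off the zero set. Combined with (a), this verifies Assumption~\ref{assump:architecture} on a full-measure subset of the parameter space. I expect the main obstacle to be the non-polynomiality/Vandermonde step in (b)—verifying that GELU's Taylor series at $0$ has infinitely many nonzero coefficients and then extracting linear independence of $\{\sigma(\alpha_k\cdot)\}_{k=1}^N$ by matching Taylor coefficients; the remaining steps are routine Kronecker-product rank bookkeeping combined with the analytic zero-set principle.
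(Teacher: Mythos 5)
Your proposal is correct and lands on the same structural reduction as the paper: only the $\theta_{L-1,2}$-columns of the stacked Jacobian are used, each block factoring as $M_i$ times $w_i^\top$ with $M_i=\theta_L J_{\tau_\epsilon}(s_i)$ invertible whenever $\det\theta_L\neq 0$, so everything hinges on making $w_1,\ldots,w_N=\sigma(\theta_{L-1,1}z_i)$ linearly independent. Where you differ is in how that independence is produced. The paper evaluates along a single ray, taking the $(L-1)$th inputs proportional to distinct small scalars $t_1<\cdots<t_N$, truncates GELU's Taylor expansion at order $N$ with an $\cO(t^{N+1})$ remainder, and multiplies two explicit Vandermonde determinants; this requires (i) a separate measure-zero argument that for generic $\theta_{L-1,1}$ some $u$ makes $\theta_{L-1,1}u$ have $N$ distinct nonzero entries, and (ii) the claim $\sigma^{(n)}(0)\neq 0$ for all $n\le N$. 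You instead exhibit one explicit witness (rows $\alpha_k e_1^\top$, inputs $\beta_i e_1$), prove linear independence of the dilates $x\mapsto\sigma(\alpha_k x)$ using only that GELU has infinitely many nonzero Taylor coefficients together with a dominant-growth (generalized Vandermonde) argument, and then upgrade the single witness to almost every $\theta_{L-1,1}$ via the analytic zero-set theorem and Fubini. Your variant is in fact more robust at exactly the delicate point: GELU's odd-order derivatives at $0$ of order $\ge 3$ vanish (only the linear and even-order Taylor coefficients are nonzero), so the paper's product formula $\prod_{n\le N}\sigma^{(n)}(0)/n!\cdot\det(\cdot)$ needs repair for $N\ge 3$, whereas your argument never relies on nonvanishing of any particular derivative. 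Two minor points: your remark that the rows of $[w_1,\ldots,w_N]$ below the top block are zero is false for truly arbitrary remaining rows of $\theta_{L-1,1}$, but it is also unnecessary, since nonsingularity of the top $N\times N$ block already gives column independence; and the facts that linearly independent analytic functions admit sample points $\beta_i$ with nonsingular evaluation matrix, and that $\sum_k c_k\alpha_k^n=0$ along an infinite set of exponents with distinct positive $\alpha_k$ forces $c=0$, are asserted rather than proved—both are standard, but worth a line each in a full write-up.
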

The above lemma considers an architecture in which the $(L-1)$th layer is a two-layer feedforward network, and the final layer is linear. It further shows that the width $m_{L-1}>Nd$ is sufficient to satisfy Assumption~\ref{assump:architecture}. Indeed, this lower bound of $m_{L-1}$ can potentially be improved in the proof of Lemma~\ref{lem:example_exist_Nd_full_rank}, since Assumption~\ref{assump:architecture} only requires $m_{L-1}>N/2$. See Appendix~\ref{subapp:proof_example_full_rank_Nd} for detailed proofs.

Below is the main theorem showing the convergence of neural networks~\eqref{eqn:NN} under GD.
\begin{theorem}
\label{thm:training}
    Suppose \cref{assump:data_distribution_absolutely_continuous_Lebesgue_ground_truth,assump:architecture,assump:generalized_smoothness_detail,assump:analytic} hold. Consider the initialization $\theta^0$ in $\RR^{\dim \theta}$ except for a measure-zero set. Let $R=\sqrt{\|\theta^{0}-\theta^{*}\|^2+\frac{4\rho+2}{\delta}\cL(\theta^0)}$ for some stationary point $\theta^*$. Let $\eta$ be in some dense set of $\left(0, \min\left\{\frac{2-\delta}{\mathsf{L}},1\right\}\right)$ for some small constant $0<\delta<2$, and    $\mathsf{L}=S\left(\cdots,\|\theta_i^*\|+R+\max_{ \theta\in B_{\theta^*}\left(R\right)}\|\nabla_{\theta_i}\cL(\theta)\|,\cdots\right)$, where $S(\cdot)$ is some polynomial whose coefficients are all positive. Assume $\cL(\theta)$ satisfies the $(\theta^0,\theta^*,R,\rho,\epsilon_\cL)$-dissipative condition in Definition~\ref{def:dissipative}.  Then, under some arbitrarily small adjustment on the scale of $\varphi_\ell$ for $\ell=0,\cdots,L-1$, with probability 1 over the joint distribution of the input data $x_1,\cdots,x_N$, 
GD converges to $\|\nabla\cL(\theta)\|\le \epsilon_\cL$, and
        $$\cL(\theta^k)\le \prod_{s=0}^{k-1}\left(1- \eta\left(1-\frac{\eta\, \mathsf{L}}{2}\right)\frac{\mu_{\mathrm{low},s,X}}{N}\right)\cL(\theta^{0}),\ \forall\,k\ge 1,\text{ and }\|\nabla\cL(\theta^k)\|> \epsilon_\cL,$$
    where $\mu_{\mathrm{low},s,X}>0$ depends on $\theta^s$ and $x_1,\cdots,x_N$. 
\end{theorem}
In the above theorem, the loss of GD is proved to decay until it converges to the neighborhood of a stationary point, under practical assumptions. A complete version of the above theorem can be found in Theorem~\ref{thm:complete_convergence_h_min_speed}. Please see Appendix~\ref{app:optimization} for detailed proofs.

The $\mathsf{L}$ is the Lipschitz smoothness constant along the GD trajectory, and is proved to be bounded instead of assumed. It depends on the poly-smoothness function $S(\cdot)$ of $\cL(\theta)$ (Lemma~\ref{lem:loss_function_generalized_smoothness}), the initial condition $\theta^0$, initial loss $\cL(\theta^0)$, maximum $\nabla\cL(\theta)$ in some bounded region, and some stationary point $\theta^*$ and its landscape nearby determined solely by $\theta^0$. The $\theta^*$ can be any stationary point that satisfies the dissipativity condition (Definition~\ref{def:dissipative}), and is not necessarily the limit of GD iterations. The radius $R$ of the bounded region for local relaxed dissipativity is determined by the initialization $\theta^0$ and does not assume any information about the subsequent GD iterations. The constant $\mu_{\mathrm{low},s,X}$ is the lower frame bound of the derivatives of $(f(x_1)^\top,\cdots,f(x_N)^\top)$ 
which is proven to form a frame with probability 1 for almost all $\theta$, and consequently $\mu_{\mathrm{low},s,X}>0$ (see more details in Appendix~\ref{subapp:lower_bound_gradient}).

The unusual density requirement on the learning rate $\eta$ arises from the parametric transversality theorem (Theorem~\ref{thm:parametric_transversality} \citep{hirsch2012differential}, Lemma~\ref{lem:full_rank_jacobian}). The excluded ``bad'' values of $\eta$ may cause convergence to a degenerate map with non-full-rank Jacobian, reducing expressivity by restricting the image to a lower-dimensional subspace. Fortunately, such cases can be avoided by a small perturbation of $\eta$, which suffices to yield a converging $\eta$\footnote{The set of $\eta$ in Theorem~\ref{thm:training} is residual, i.e., roughly a ``large" set in $\left(0, \min\left\{\frac{2-\delta}{\mathsf{L}},1\right\}\right)$, compared to its complement (see Appendix~\ref{subapp:measure_theory_differential_topology}).}. The small adjustment to the scale of $\varphi_\ell$ follows the same reasoning to prevent degeneracy, and only needs to be done once before GD iterations.

Based on the convergence of the neural network, we derive the following corollary, which highlights the relationship between data pairwise distance, particularly $h_{\min}$, and the convergence rate. 
\begin{corollary}
\label{cor:h_min_speed_of_convergence}
    Under the same assumptions as Theorem~\ref{thm:training}, fix some $x_i$. Let $X=(x_1,\cdots,x_N)$, and $\cD_{i,H}=\{j\,|\,h_{ij}=\|x_{i}-x_j\|\le H,\,\forall j\ne i\}$. 
    Then for any $k\ge 1$, there exists $r_{k,i,H}>0$ and $L_{k,i,H}>0$, s.t., when $\sqrt{\sum_{j\in\cD_{i,H}}h_{ij}^2}\le r_{k,i,H}$, we have
    $$\mu_{\rm low,s,X}\le L_{k,i,H}\sqrt{\sum_{j\in\cD_{i,H}}h_{ij}^2},\ \forall s\le k.$$
Specifically, if $H=h_{\min}$ and $\cD_{i,H}\ne \emptyset$, there exists $L_{k,i},r_{k,i}>0$, s.t., when $h_{\min}\le r_{k,i}$, we have
    $$\mu_{\rm low,s,X}\le L_{k,i} h_{\min},\ \forall s\le k. $$
\end{corollary}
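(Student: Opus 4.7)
The plan is to exploit the interpretation of $\mu_{\rm low,s,X}$ as the squared smallest singular value of the stacked Jacobian $J_s\in\RR^{Nd\times\dim\theta}$, whose $i$th row block is $\nabla_\theta f(\theta^s;x_i)$, and to show that $J_s$ is near-rank-deficient whenever several points are clustered around $x_i$. The core idea is a matrix perturbation argument: construct a nearby matrix $J_s'$ that is exactly rank-deficient, and bound $\sigma_{\min}(J_s)$ via $\|J_s - J_s'\|$ using Weyl's inequality for singular values.

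First, I would establish that the map $x \mapsto \nabla_\theta f(\theta^s; x)$ is Lipschitz continuous with a constant $\tilde L_{k,i,H}$ that is uniform in $s\le k$. This follows from Assumption~\ref{assump:generalized_smoothness_detail} (polynomial generalized smoothness and polynomial boundedness of $\varphi_\ell,\nabla_\theta\varphi_\ell,\nabla_u\varphi_\ell$) combined with boundedness of the iterates $\theta^s$ on $B_{\theta^*}(R)$ along the GD trajectory, which is a byproduct of Theorem~\ref{thm:training}. Unrolling the chain rule through the $L$ layers of the architecture~\eqref{eqn:NN} and using that $\Omega$ is bounded yields a polynomial upper bound on $\|\nabla_x\nabla_\theta f(\theta^s;x)\|$ over the support of the data, and hence the desired uniform Lipschitz constant in $x$.

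Next, I would construct $J_s'$ by replacing, for each $j\in\cD_{i,H}$, the row block $\nabla_\theta f(\theta^s;x_j)$ with $\nabla_\theta f(\theta^s;x_i)$. Since $J_s'$ then contains at least two identical row blocks, $\sigma_{\min}(J_s')=0$. Weyl's inequality gives $\sigma_{\min}(J_s)\le\|J_s-J_s'\|_{\rm op}\le\|J_s-J_s'\|_F$, and a row-wise computation together with the Lipschitz bound from the previous step yields
\begin{align*}
\|J_s-J_s'\|_F^2=\sum_{j\in\cD_{i,H}}\|\nabla_\theta f(\theta^s;x_j)-\nabla_\theta f(\theta^s;x_i)\|^2\le \tilde L_{k,i,H}^2\sum_{j\in\cD_{i,H}}h_{ij}^2,
\end{align*}
so $\mu_{\rm low,s,X}=\sigma_{\min}(J_s)^2\le\tilde L_{k,i,H}^2\sum_{j\in\cD_{i,H}}h_{ij}^2$. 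Choosing $r_{k,i,H}>0$ so that the hypothesis $\sqrt{\sum h_{ij}^2}\le r_{k,i,H}$ implies $\sum h_{ij}^2\le r_{k,i,H}\sqrt{\sum h_{ij}^2}$, and absorbing $\tilde L_{k,i,H}^2 r_{k,i,H}$ into a redefined constant $L_{k,i,H}$, produces the stated linear bound. The specialization to $H=h_{\min}$ is then immediate: every $j\in\cD_{i,h_{\min}}$ satisfies $h_{ij}=h_{\min}$ by minimality, so $\sqrt{\sum h_{ij}^2}=\sqrt{|\cD_{i,h_{\min}}|}\,h_{\min}$, and the finite cardinality factor gets absorbed into $L_{k,i}$.

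The main obstacle will be making the Lipschitz-in-$x$ constant $\tilde L_{k,i,H}$ genuinely uniform over $s\le k$: this requires bootstrapping from the proof of Theorem~\ref{thm:training} to control both $\theta^s$ and the intermediate representations $u_{\ell,i}^s$ simultaneously, then propagating the polynomial bounds on $\nabla_\theta\varphi_\ell$ and $\nabla_u\varphi_\ell$ layer by layer through the chain rule. A secondary subtlety is that the quadratic-in-$h$ estimate from the Frobenius calculation only matches the displayed linear bound once the radius hypothesis on $r_{k,i,H}$ is used to trade one factor of $\sqrt{\sum h_{ij}^2}$ for the constant $r_{k,i,H}$; this is precisely why the corollary is stated with a smallness hypothesis on the clustered distances rather than in fully global form.
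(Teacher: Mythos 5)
Your proof is correct and takes a genuinely different, more elementary route than the paper. The paper's proof (Lemma~\ref{lem:lower_frame_bound}) works directly with the frame operator $A(\theta,X)=J_sJ_s^\top$: it invokes Theorem~4.1 of \citet{kurdyka2008hyperbolic} on the local Lipschitz continuity of eigenvalues of real-analytic symmetric matrix functions, and then compares $\lambda_{\min}(A(\theta,X_1,X_2))$ against its value at the collapsed configuration $X_2'$ where all clustered points equal $x_i$, at which $A$ is degenerate. Your approach instead perturbs the stacked Jacobian $J_s$ itself and uses Weyl's inequality for singular values against the exactly rank-deficient $J_s'$ with repeated row blocks. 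Two things differ in what the arguments buy. First, Weyl is a bare-hands fact, so you avoid the fairly heavy analytic perturbation theorem; the price is that you must separately justify the Lipschitz-in-$x$ dependence of $\nabla_\theta f(\theta^s;\cdot)$ uniformly over $s\le k$ (which you correctly flag, and which does follow from Assumption~\ref{assump:generalized_smoothness_detail} together with the boundedness of $\theta^s$ established in Lemma~\ref{lem:bound_S_k_theta_k}), whereas the paper needs only the analyticity of $A$ in $X$ for fixed $\theta^s$. Second, because you square a Weyl bound on $\sigma_{\min}(J_s)$, you actually obtain the sharper estimate $\mu_{\rm low,s,X}\le \tilde L^2\sum_{j\in\cD_{i,H}}h_{ij}^2$, quadratic rather than linear in $\sqrt{\sum h_{ij}^2}$; the conversion to the stated linear form via the smallness hypothesis on $r_{k,i,H}$ is a legitimate weakening. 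Your reduction in the $H=h_{\min}$ case, using that minimality forces $h_{ij}=h_{\min}$ for all $j\in\cD_{i,h_{\min}}$ and absorbing $\sqrt{|\cD_{i,h_{\min}}|}$ into $L_{k,i}$, is also correct. (Both you and the paper implicitly require $\cD_{i,H}\ne\emptyset$ also in the first part of the statement; otherwise the LHS of the hypothesis is zero while $\mu_{\rm low,s,X}>0$, but this is an issue with the statement, not your argument.)
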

The above corollary shows that, for a fixed data point $x_i$, when nearby points become even closer to $x_i$, the training process potentially slows down, which is verified experimentally in Section~\ref{sec:experiments}. 
More precisely, there exists a subsequence of $\{\mu_{\rm low,s,X_t}\}_t$ depending on the data $X_t$, s.t., as the nearby distance $\sqrt{\sum_{j\in\cD_{i,H}}h_{ij}^2}$ decreases for some $X_t$, the value of $\mu_{\rm low,s,X_t}$ also monotonically decreases. Consequently, the value of $1- \eta\left(1-\frac{\eta\, \mathsf{L}}{2}\right)\frac{\mu_{\mathrm{low},s,X_t}}{N}$ increases, and thus leads to slower convergence. 
The dependence on $h_{\min}$ is a special case of this phenomenon and follows the same reasoning.

\begin{remark}[Convergence beyond the NTK regime]
\label{rmk:convergence_beyond_NTK}
Theorem~\ref{thm:training} provides a general framework of convergence for a family of neural networks beyond the NTK regime in the following two senses, both of which concern the local relaxed dissipativity and the learning rate:

(1) When the loss landscape is unfavorable around the initial condition (for example, in the worst case $\rho=\frac{R}{\epsilon}$ for some $\epsilon>0$, which holds for any landscape; see Section~\ref{subsec:dissipative}), it indicates insufficient overparameterization, meaning the width of the neural network is too small, and therefore lies outside the NTK regime. Moreover, a large $\rho$ leads to large $\mathsf{L}$, which may imply small learning rate $\eta$. However, different from the NTK regime, $\eta$ does not necessarily decrease as the width increases, since it depends mainly on the initialization, and with proper scaling, it can be independent of width.

(2) When the loss landscape is favorable near the initial condition, i.e., $\rho$ is small, this may imply a large width of the neural network. However, the learning rate $\eta$ in Theorem~\ref{thm:training} can be much larger than that required by the infinitesimal (gradient flow) regime assumed in the NTK analysis, due to a small $\rho$ and the independence of width as discussed in (1).

Apart from the local relaxed dissipativity condition, the only width requirement appears in Assumption~\ref{assump:architecture}. As an example of the Assumption~\ref{assump:architecture}, Lemma~\ref{lem:example_exist_Nd_full_rank} shows that it suffices for the width of the $(L-1)$th feedforward layer to be greater than $Nd$; no constraints are imposed on the widths of other layers. This lower bound $Nd$ on a single layer, which can potentially be improved (see more discussions below Lemma~\ref{lem:example_exist_Nd_full_rank}), is much milder than the NTK-type convergence conditions, which typically require the widths of all layers to scale as high-degree polynomials in $d, N, L$, etc. (see Section~\ref{sec:related_works}). We also remark that the NTK regime is covered by our analysis. More precisely, if the width is large enough, i.e., the neural network is in the NTK regime, our analysis then guarantees convergence to a neighborhood of a global minimum.
\end{remark}

\begin{remark}[Theoretical insights on architectures: residual connection, composition, and normalization]
\label{rmk:insights_residual_connection_composition_normalization}
    When constructing a neural network, it is important to ensure that the model is expressive enough. Theoretically, this corresponds to requiring the neural network mapping $f$ to be non-degenerate, i.e., its Jacobian $\nabla f$ should be full rank in some region. Otherwise, the mapping may collapse onto a lower-dimensional subspace, and thus lose full expressivity.

    In fact, non-degeneracy is a generic property among real-analytic functions. If a function is degenerate, we can typically restore non-degeneracy by adding a small identity component, i.e., by incorporating a residual connection, and then the modified mapping is non-degenerate (see Lemma~\ref{lem:full_rank_jacobian}). This helps explain the empirical success of residual architectures in practice.

    Moreover, neural networks are constructed as compositions of layer-wise transformations. If each layer has a Jacobian with full rank (e.g., ensured via residual connections), then the Jacobian of the overall composition remains full rank due to the chain rule: $\nabla (f\circ g)(x)=\nabla f(g(x))\nabla g(x)$. This shows that the compositional (iterative) structure of neural networks preserves non-degeneracy. In contrast, other combinations, such as summations of functions, do not necessarily preserve this property, even if each individual function is non-degenerate. For example, consider $f_1(x) = x^\top x$ and $f_2(x) = -x^\top x$; while each is non-degenerate, their sum $f_1 + f_2 = 0$ is degenerate.

    Regarding the normalization used in~\eqref{eqn:NN}, it differs from practical schemes such as batch normalization and layer normalization. Nonetheless, they share a common goal, which is to reduce the magnitude of neurons and restore desirable regularity properties in the network. Such normalization helps flatten the optimization landscape, thereby facilitating more stable and efficient training. See more discussions in \citet{wang2023good}.
\end{remark}
\vskip -0.05in
The proof sketch of Theorem~\ref{thm:training} is shown in Figure~\ref{fig:proof_sketch_convergence}.
The proof of Corollary~\ref{cor:h_min_speed_of_convergence} mainly relies on the local Lipschitz continuity of eigenvalues for real-analytic matrix functions (see Theorem 4.1 in \citet{kurdyka2008hyperbolic}). Detailed proofs are in Appendix~\ref{app:optimization}.
\vskip -0.1in
% \vspace{-10pt}
\begin{figure}[tb!]
\vspace{-15pt}
    \centering    \includegraphics[width=0.9\linewidth]{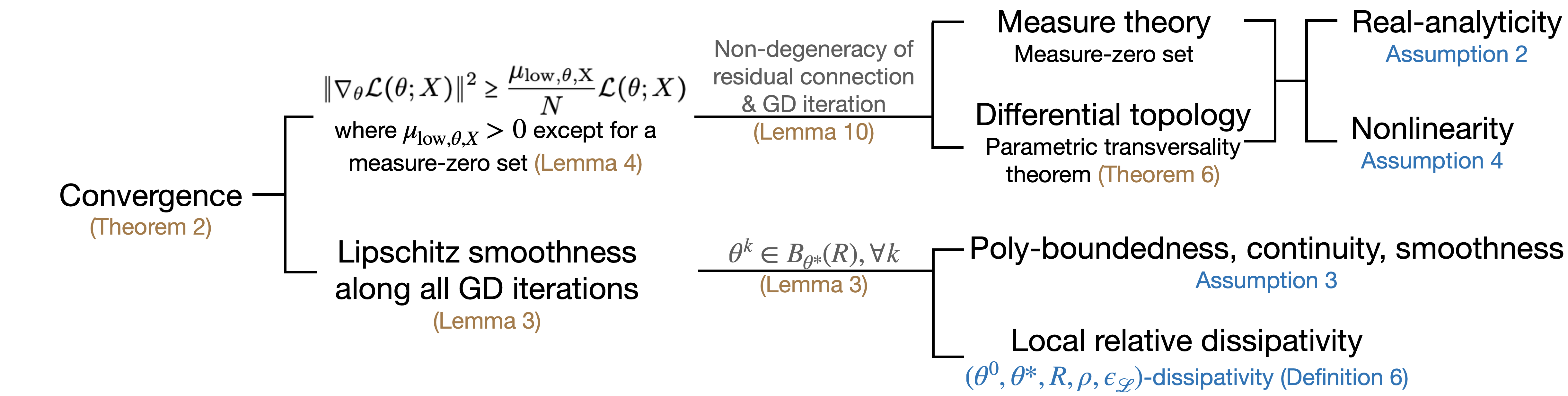}
    \vskip -0.1in
    \caption{Proof sketch of the convergence framework beyond the NTK regime. Assumptions are highlighted in blue, key lemmas in brown, and the statements or implication of lemmas in grey.}
    \label{fig:proof_sketch_convergence}
    % \vspace{-15pt}
\end{figure}

\vskip -0.05in
\subsection{Approximation: Data-dependent error in the interpolation regime}
\vskip -0.05in
\label{subsec:approximation}
In this section, we show that a larger $h_{\min}$ and smaller local maximal distances lead to smaller approximation error in the interpolation regime $\cI=\conv\{x_1,\cdots,x_N\}$. 

The following theorem characterizes the data-dependent approximation error.
\begin{theorem}
\label{thm:approximation}
    Consider a set of data $\{(x_i,y_i)\}_{i=1}^N$ under Assumption~\ref{assump:data_distribution_absolutely_continuous_Lebesgue_ground_truth}. Let $\cI_i=\conv\{x_{i_1},\cdots,x_{i_{d+1}}\}$ with $x_j\notin \cI_i,\ \forall j\ne i_1,\cdots, i_{d+1}$, and define $h_{\max_{d+1}}=\max_{i}\max_{x_q,x_j\in \cI_i}h_{q j}$. Assume $h_{\min}\le 1 \le h_{\max_{d+1}}$. Suppose there is a family of neural networks that can approximate polynomials of degree $n_d=n_d(N,d)$, i.e., for any $\epsilon_2>0$ and $\psi\in P_{n_d}$, there exists some $f$ s.t. $\|f-\psi\|_{p}\le \epsilon_2$. Then, for some $C_1>0$ and $m>\max\{r,n_d\}$, there exists some $\phi\in W^{m,p}(\Omega)$ satisfying $\|g-\phi\|_{r,p}\le C_2h_{\max_{d+1}}^m h_{\min}^{-r}  \|{\phi}\|_{m,p,\cI^\mathrm{o}} $, s.t., with probability 1 over the joint distribution of $x_1,\cdots,x_N$, we have
        $$\|f-g\|_{p,\cI^\mathrm{o}}
        \le  C_1  h_{\max_{d+1}}^m h_{\min}^{-r}  \|{\phi}\|_{m,p,\cI^\mathrm{o}}.$$
\end{theorem}
The above theorem differs from classical results in approximation theory. It fixes a specific approximation function and examines how the distances between data points affect its error. More precisely, the error bound is determined mainly by two quantities: a local maximum distance $h_{\max_{d+1}}$, and a minimum distance $h_{\min}$. It then indicates that using fewer but more uniformly distributed data points, i.e., increasing $h_{\min}$\footnote{
Larger $h_{\min}$ also contributes to smoother approximation as $r$ specifies the order of smoothness in $W^{r,p}(\Omega)$.}, can achieve similar approximation error as increasing the number of data points, i.e., decreasing $h_{\max_{d+1}}$. This is also validated in our experiments (see Section~\ref{sec:experiments}).

The proof mainly involves establishing a data-dependent Bramble-Hilbert lemma \citep{bramble1970estimation} for each $d-$simplex $\cI_i$ in the Delaunay triangulation. See a formal version of Theorem~\ref{thm:approximation} and detailed proofs in Appendix~\ref{app:approximation}.

\vspace{-5pt}
\section{Experiments}
\label{sec:experiments}

\begin{figure}[tb!]
    \centering
     % \vspace{-5pt}
    \subcaptionbox{20k Full data points.}{
    \includegraphics[width=0.32\linewidth]{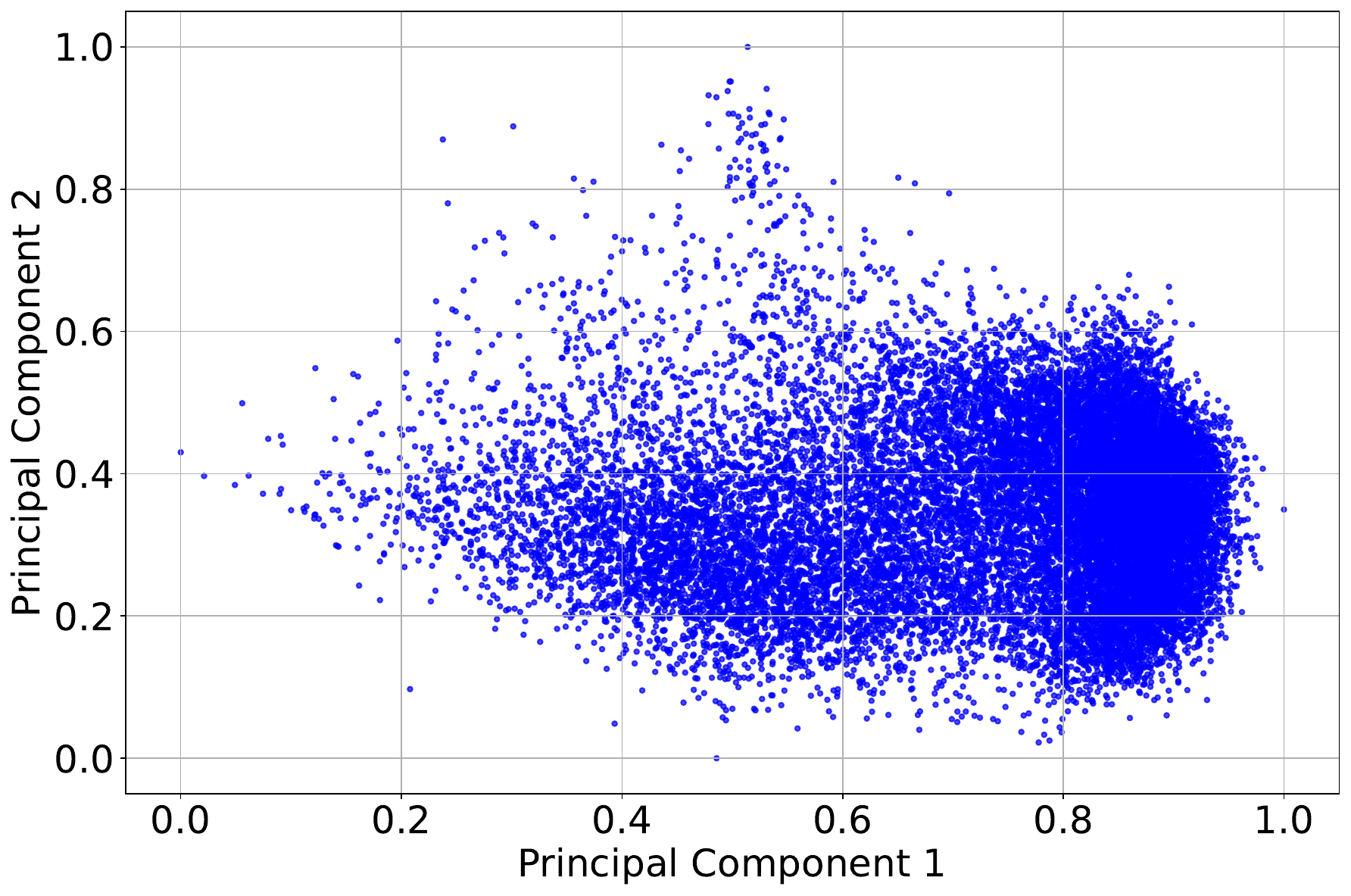}}
     \subcaptionbox{10k Uniform data points (Ours).}{
    \includegraphics[width=0.32\linewidth]{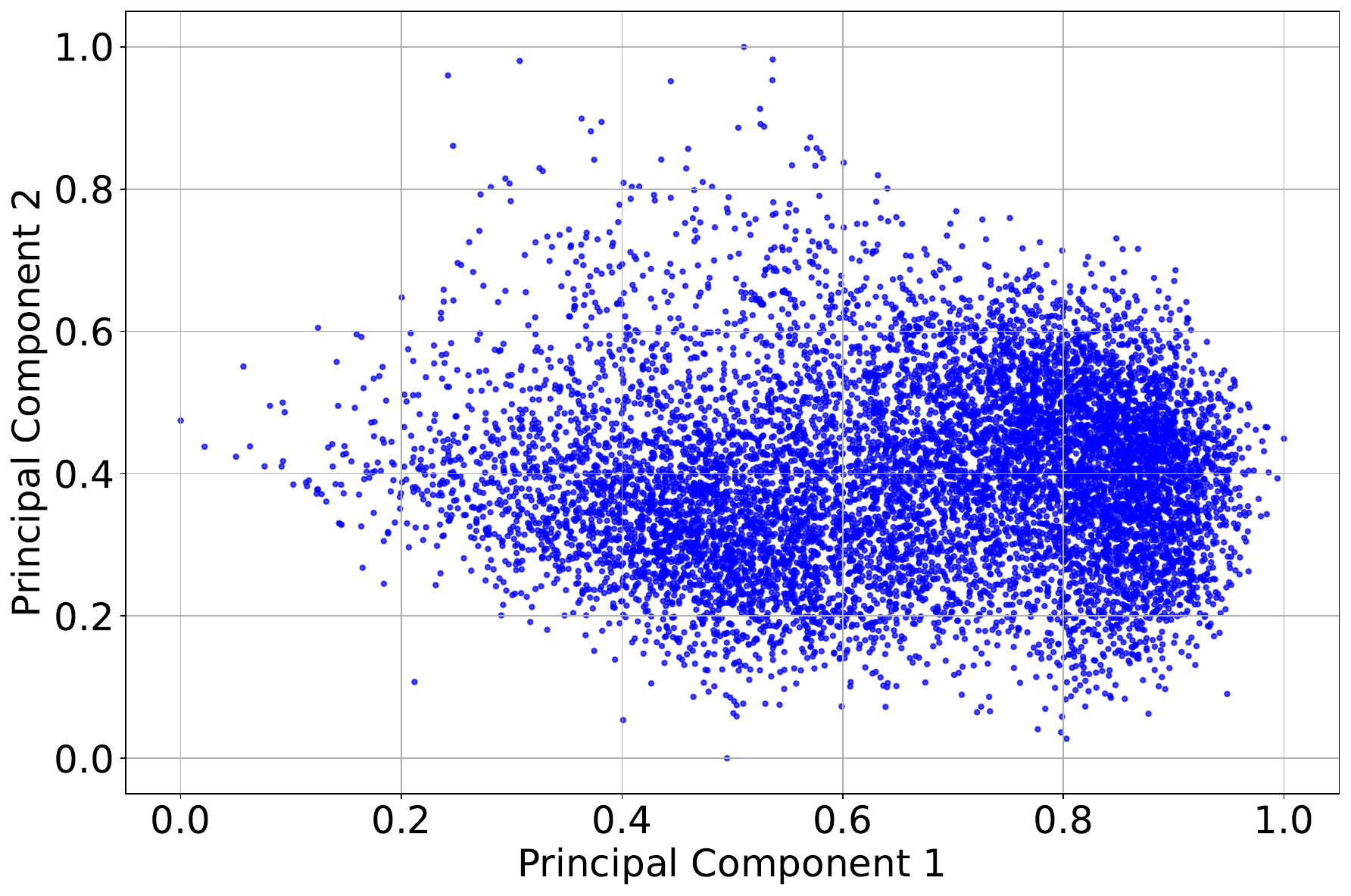}}
    \subcaptionbox{10k Random data points.}{
    \includegraphics[width=0.32\linewidth]{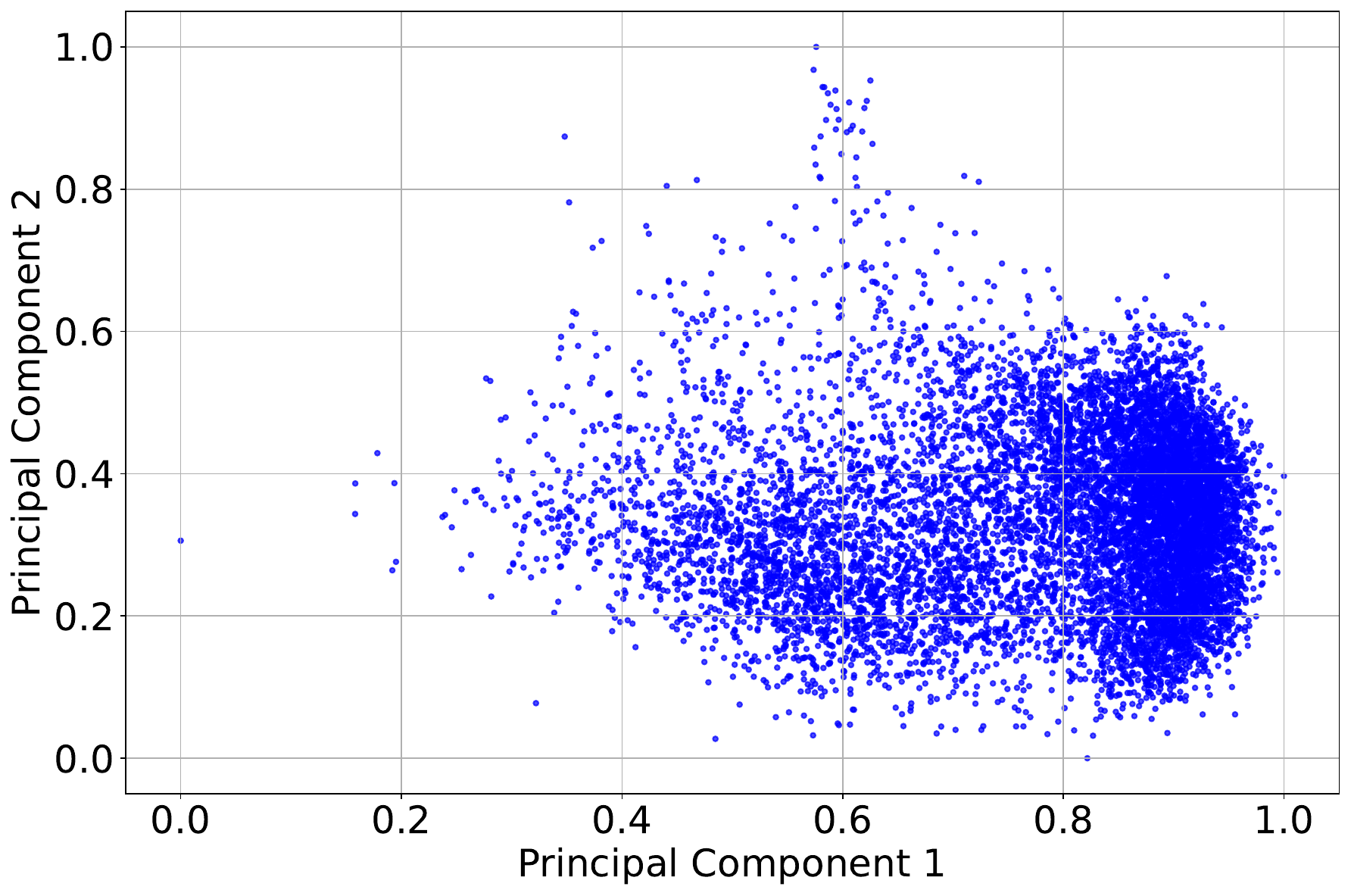}}
    \vspace{-8pt}
    \caption{Visualization of data point distributions for datasets selected using different methods from WizardLM \citep{xu2024wizardlm}.}
    \label{fig:20k-full-10k-diverse-10k-random-wizard}
\end{figure}
 % \vspace{-18pt}
\begin{figure}[tb!]
    \centering    
    % \vspace{-10pt}
     \subcaptionbox{Training time comparison.}{
    \includegraphics[width=0.3\linewidth]{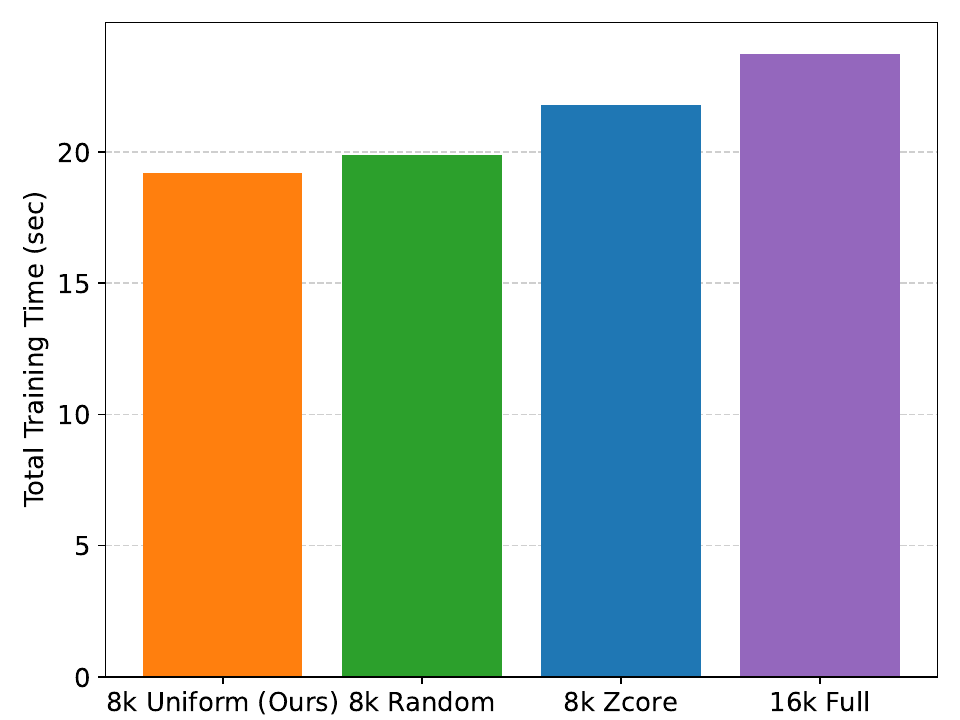}}
    \subcaptionbox{Training loss.}{
    \includegraphics[width=0.3\linewidth]{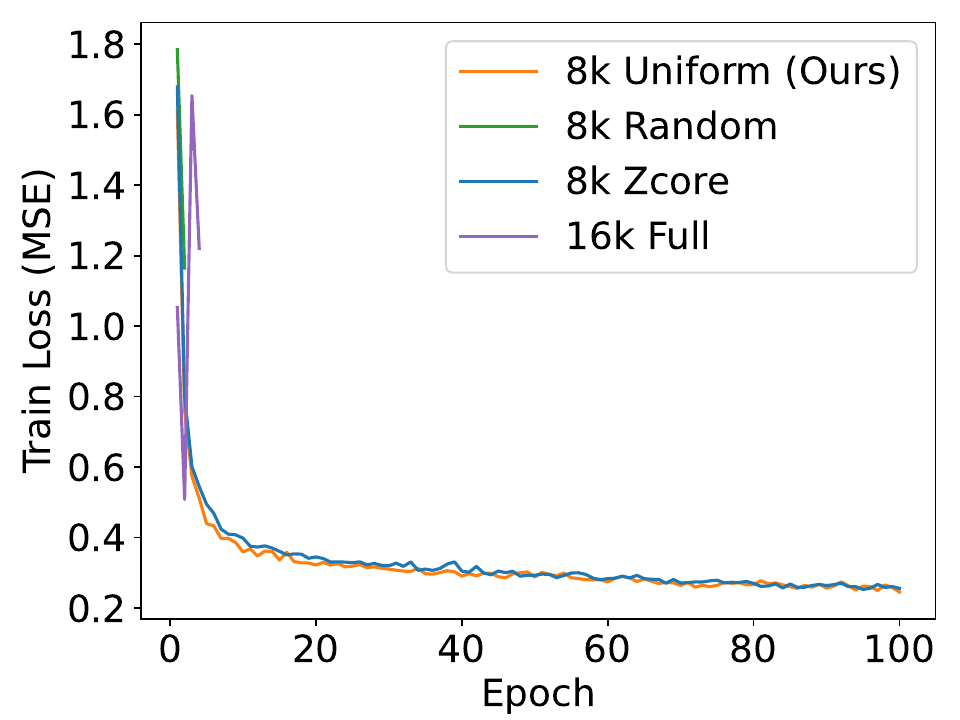}}
    \subcaptionbox{Validation loss.}{
    \includegraphics[width=0.3\linewidth]{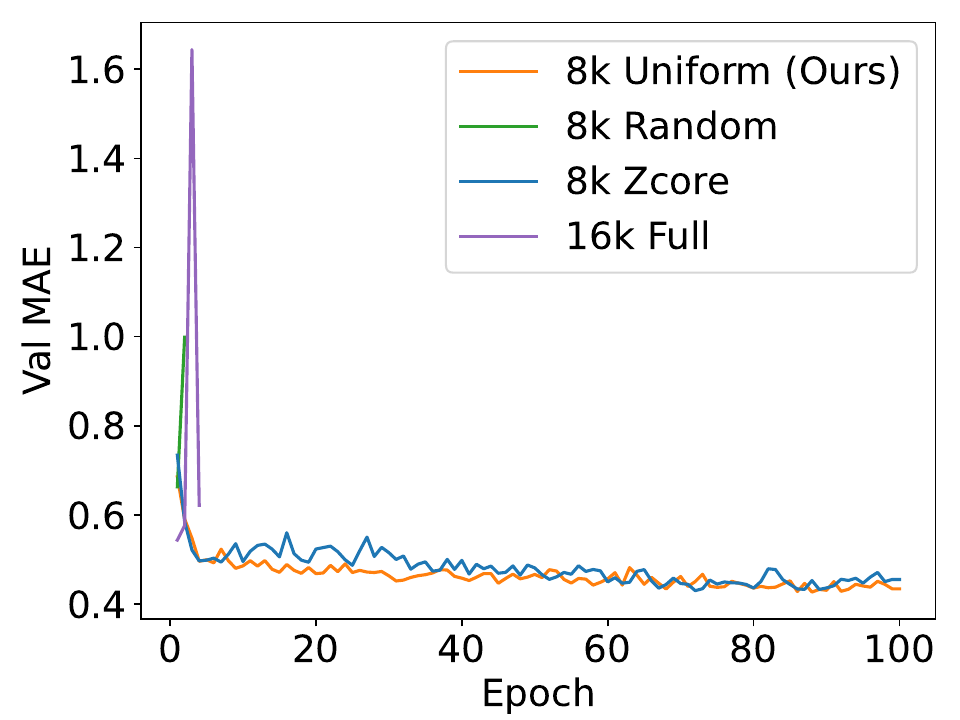}}
     \vspace{-8pt}
    \caption{L2-SGD training on the California Housing dataset \citep{pace1997sparse}. Training time is measured as the total time required to complete the same number of epochs (e.g., 100). Uniform subsets (ours) achieve faster, smoother, and more stable convergence with fewer samples, matching or surpassing Zcore \citep{griffin2024zero} and mitigating gradient instabilities seen in random and full datasets.}
    \label{fig:16k-full-8k-diverse-8k-random-house}
    \vspace{-15pt}
\end{figure}

\vspace{-9pt}
In our experiments, we introduce a data uniformity strategy that iteratively selects points maximizing the minimum pairwise distance $h_{\min}$ (Algorithm~\ref{alg:uniform_selection}), and visualize differences between uniform subsets, random subsets of equal size, and the full datasets. Comparison to random subsets isolates the effect of $h_{\min}$ while holding dataset size fixed. We then perform supervised fine-tuning with two circumstances: $\ell^2$ loss with SGD for theoretical validation and cross-entropy loss with Adam for practical evaluation. This design enables systematic assessment of performance and training efficiency across datasets, scales, and model sizes. We further deploy our method and compare with SOTA baselines, including TeaMs-RL \citep{guteams}, WizardLM \citep{xu2024wizardlm}, Zcore \citep{griffin2024zero}, and LESS \citep{xia2024less}. Results show that uniform subsets accelerate convergence (Corollary~\ref{cor:h_min_speed_of_convergence}) and maintain comparable or superior performance relative to both full datasets and strong baselines (Theorem~\ref{thm:approximation}). Full experimental settings appear in Appendix~\ref{appendix:experiment-settings}.

\begin{wrapfigure}{r}{0.44\textwidth}   
\vspace{-5pt}
\begin{minipage}{\linewidth}
\captionof{algorithm}{Data uniformity strategy}\label{alg:uniform_selection}
\vspace{-10pt}
\begin{algorithmic}[1]
  \Require Full dataset $\{x_i\}_{i=1}^N$, target size $\tilde N$
  \Ensure Subset $\{x^*,\{\tilde x_i\}_{i=1}^{\tilde N-1}\}$
  \State $i \gets 1$, and randomize $x^*$ from $\{x_i\}_{i=1}^N$
  \While{$i < \tilde N-1$}
    \State $\tilde x_i \gets \arg\max_{x\in\{x_i\}_{i=1}^N}\,\mathrm{dist}(x, x^*)$          \State // {maximize min distance}
    \State $i \gets i+1$
  \EndWhile
\end{algorithmic}
\end{minipage}
\vspace{-10pt}
\end{wrapfigure}

\vspace{-6pt}
\subsection{Data selection and data visualization}
\label{subsec:data-selection-data-vis}
\vspace{-7pt}

We propose a greedy distance-maximization strategy to build compact yet representative instruction-tuning datasets (Algorithm~\ref{alg:uniform_selection}). Each data point is encoded via Word2Vec embeddings, and points are iteratively chosen to maximize the minimum Euclidean or cosine distance, ensuring broad coverage. Applied to TeaMs-RL \citep{guteams}, WizardLM \citep{xu2024wizardlm}, Zcore \citep{griffin2024zero}, and LESS \citep{xia2024less}, we construct balanced subsets (e.g., 4.5k from 9k TeaMs-RL, 10k from 20k WizardLM) for fair comparison with random baselines. PCA visualizations \citep{mackiewicz1993principal} reveal that uniform subsets achieve broader, less redundant coverage than random or full datasets, highlighting their effectiveness in preserving semantic diversity with fewer samples (see Appendix~\ref{appendix:data-selection-visualization}).

\vspace{-6pt}
\subsection{$\ell^2$-SGD training experiments}
\label{subsec:l2-sgd-training-exp}
\vspace{-6pt}

We validate our theoretical insights under $\ell^2$ loss with SGD on the California Housing dataset \citep{pace1997sparse}, comparing our uniform subsets with random, Zcore \citep{griffin2024zero}, and full datasets. Zcore is an SOTA coreset selection method that leverages foundation model embeddings to compute importance scores, producing compact and representative subsets that have been shown to outperform several strong baselines \citep{griffin2024zero}. As shown in Figure~\ref{fig:16k-full-8k-diverse-8k-random-house}, random and full datasets exhibit divergence, uniform subsets achieve comparable or better performance than Zcore with shorter training time, measured as the total time to complete the same number of epochs (e.g., 100), while avoiding gradient instabilities. Consistent results across other datasets, model sizes, and baselines such as TeaMs-RL and WizardLM (Appendix~\ref{appendix-subsec:l2-sgd-training-exp}) confirm the efficiency of uniformity-based selection, supporting theory (Section~\ref{sec:theory}) and demonstrating improved efficiency and stability.

%%%%%%%%%%%%%%%%%%%%%%%%%%%%%

\vspace{-2pt}
\subsection{Cross entropy loss Adam training experiments}
\label{append-subsec:cross-entropy-loss-adam-training-exp}
\vspace{-6pt}

% Euclidean Distance: 
% \vspace{-10pt}
\begin{figure}[tb!]
    \centering
    \subcaptionbox{Training time comparison.}{
    \vspace{-5pt}
    \includegraphics[width=0.29\linewidth]{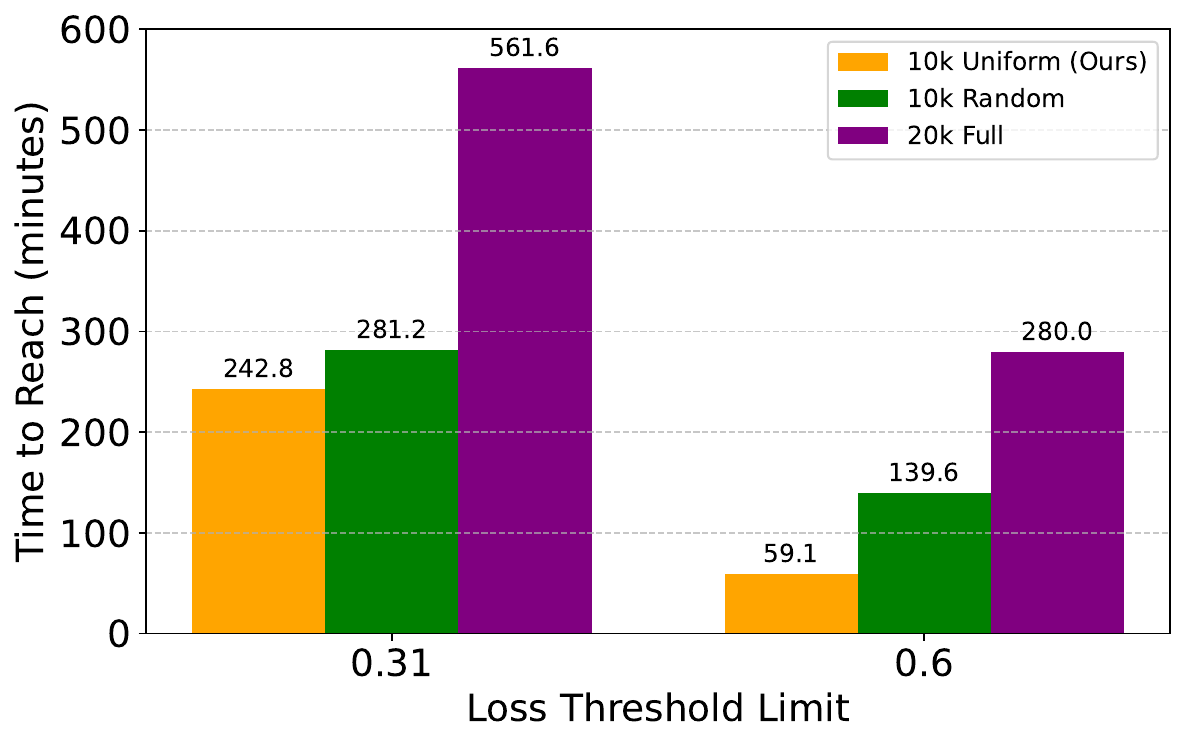}}
    \subcaptionbox{Performance comparison.}{
    \vspace{-5pt}
    \includegraphics[width=0.36\linewidth]{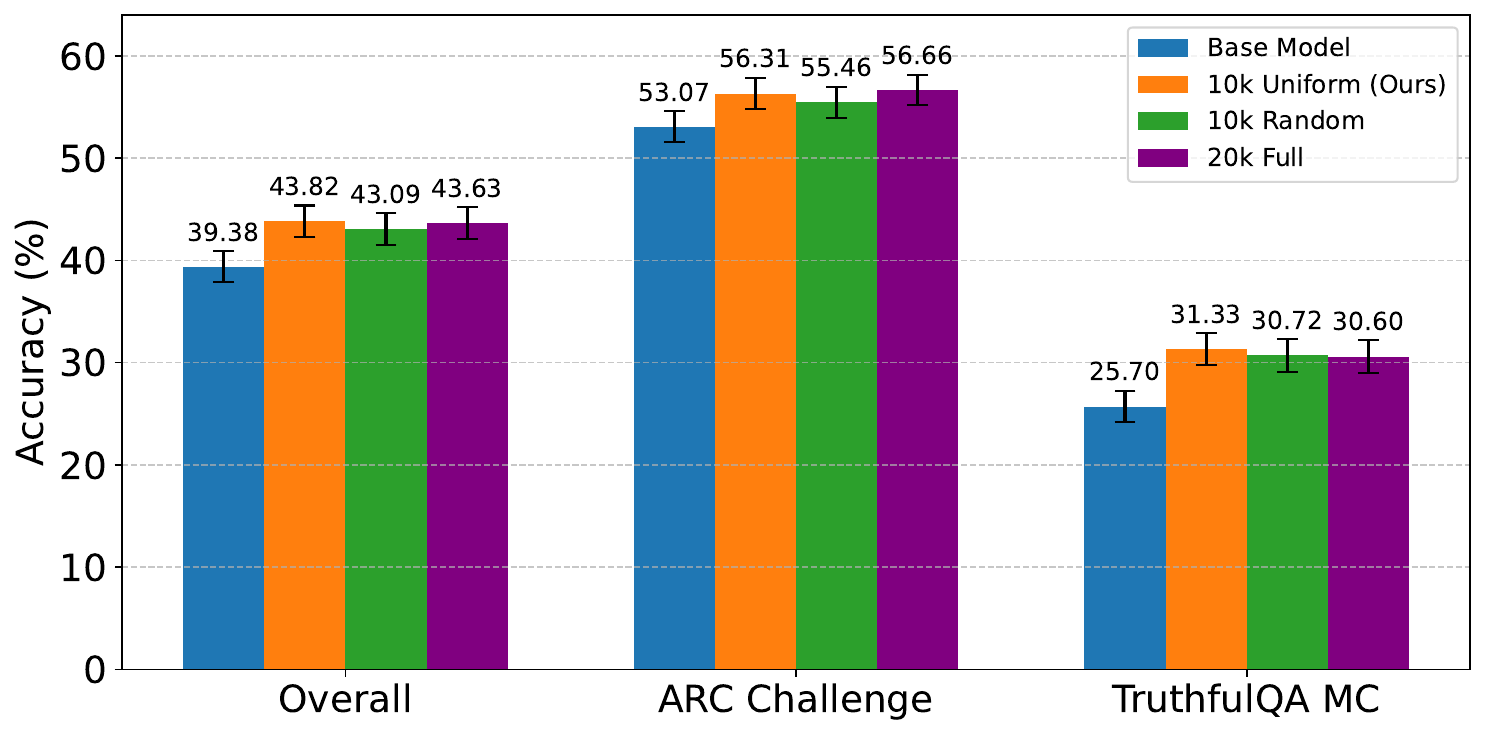}} 
    \subcaptionbox{Unsmoothened loss curves}{
    \includegraphics[width=0.28\linewidth]{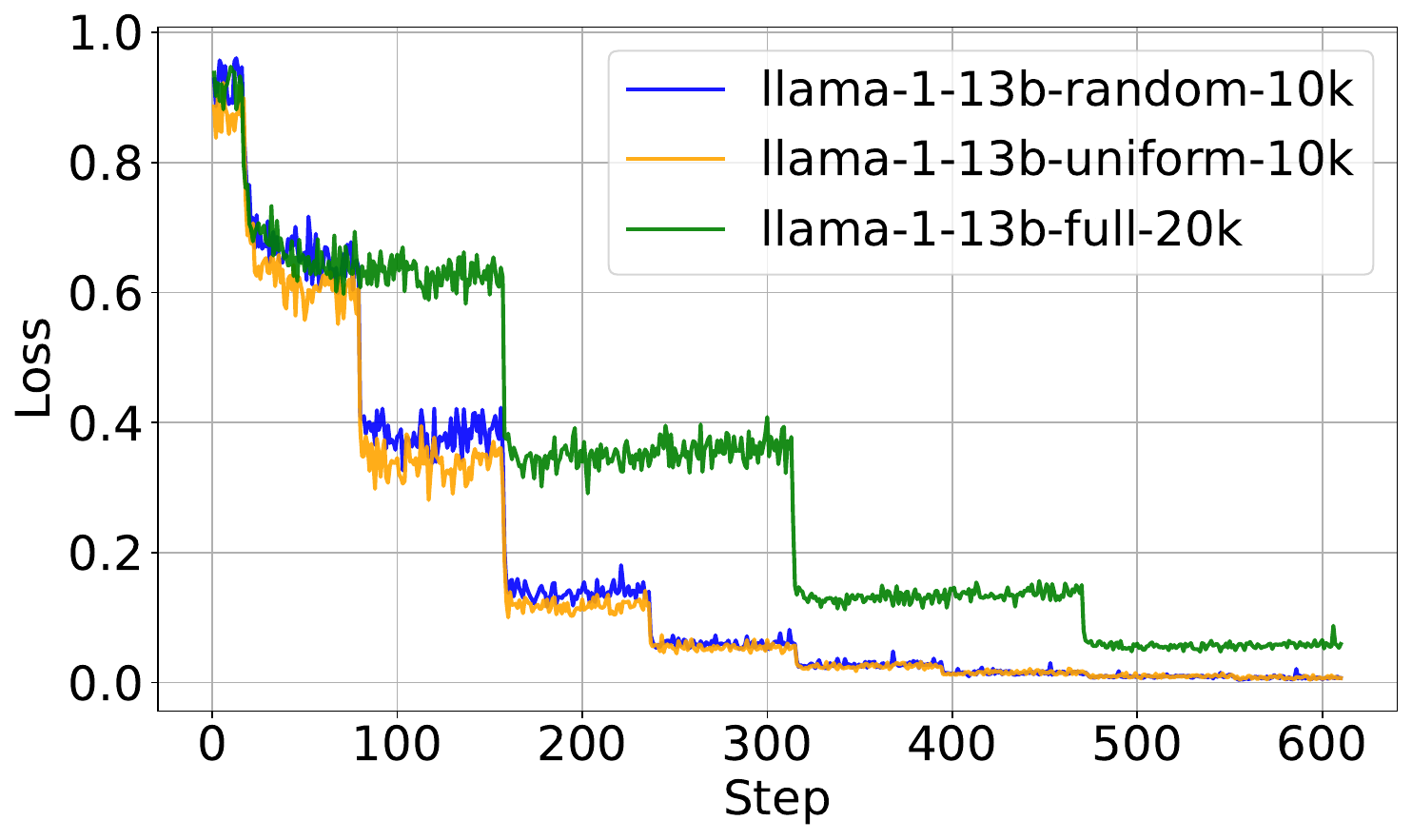}}
    \vspace{-7pt}
    \caption{Cross-entropy training with AdamW on the WizardLM dataset \citep{xu2024wizardlm} using LLaMA-1-13B \citep{touvron2023llama}. Uniform subsets (ours, via $\ell^2$ distance) (a) reach target loss thresholds in shorter run time, (b) achieve comparable or superior ARC Challenge and TruthfulQA MC accuracies, and (c) have faster loss decay, showing efficiency of uniformity-aware selection.}
    \label{fig:training-time-on-Llama-1-13b-using-wizardlm-10k-euclidean}
\end{figure}

\begin{figure}[tb!]
    \centering    
    \vspace{-10pt}
    \subcaptionbox{Training time comparison.}{
    \includegraphics[width=0.32\linewidth]{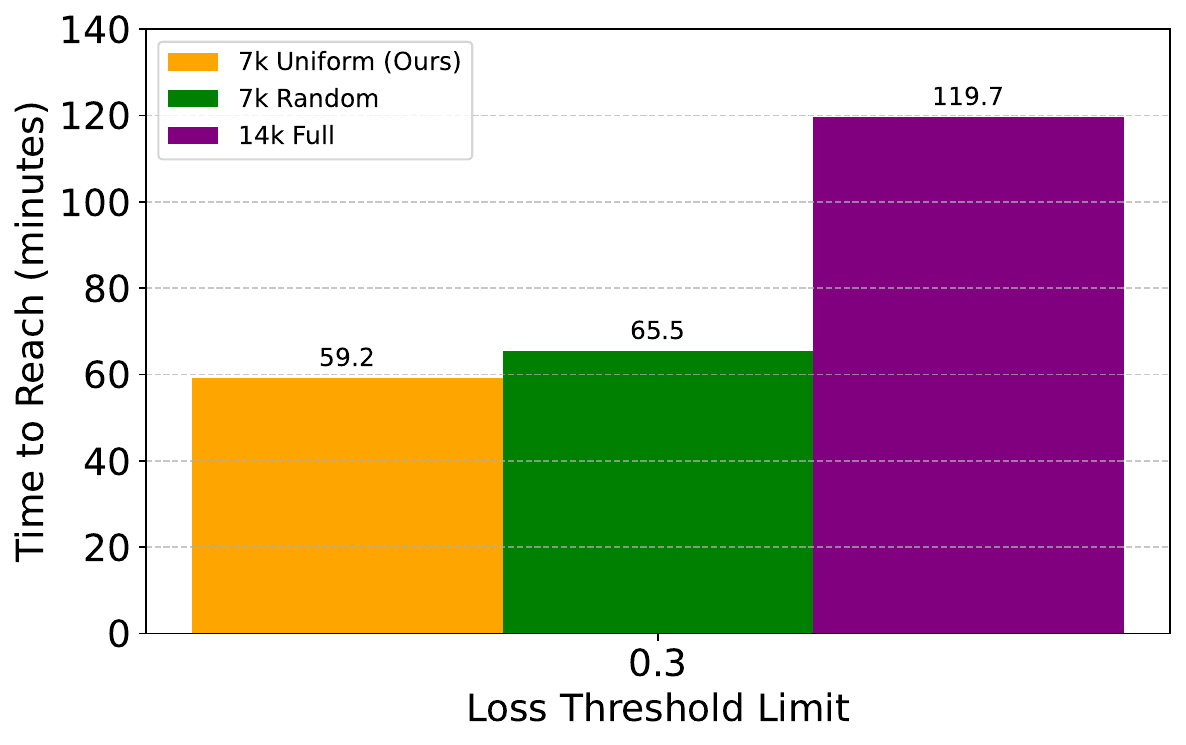}}    
     \subcaptionbox{Performance comparison.}{
    \includegraphics[width=0.323\linewidth]{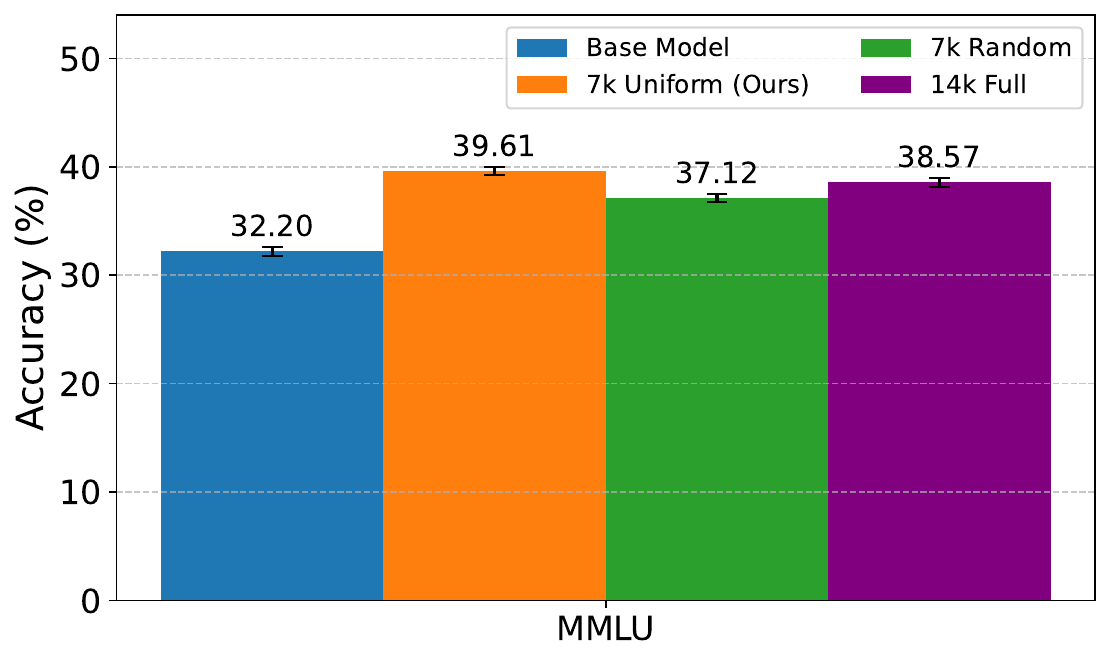}}
    \subcaptionbox{Training loss.}{
    \includegraphics[width=0.322\linewidth]{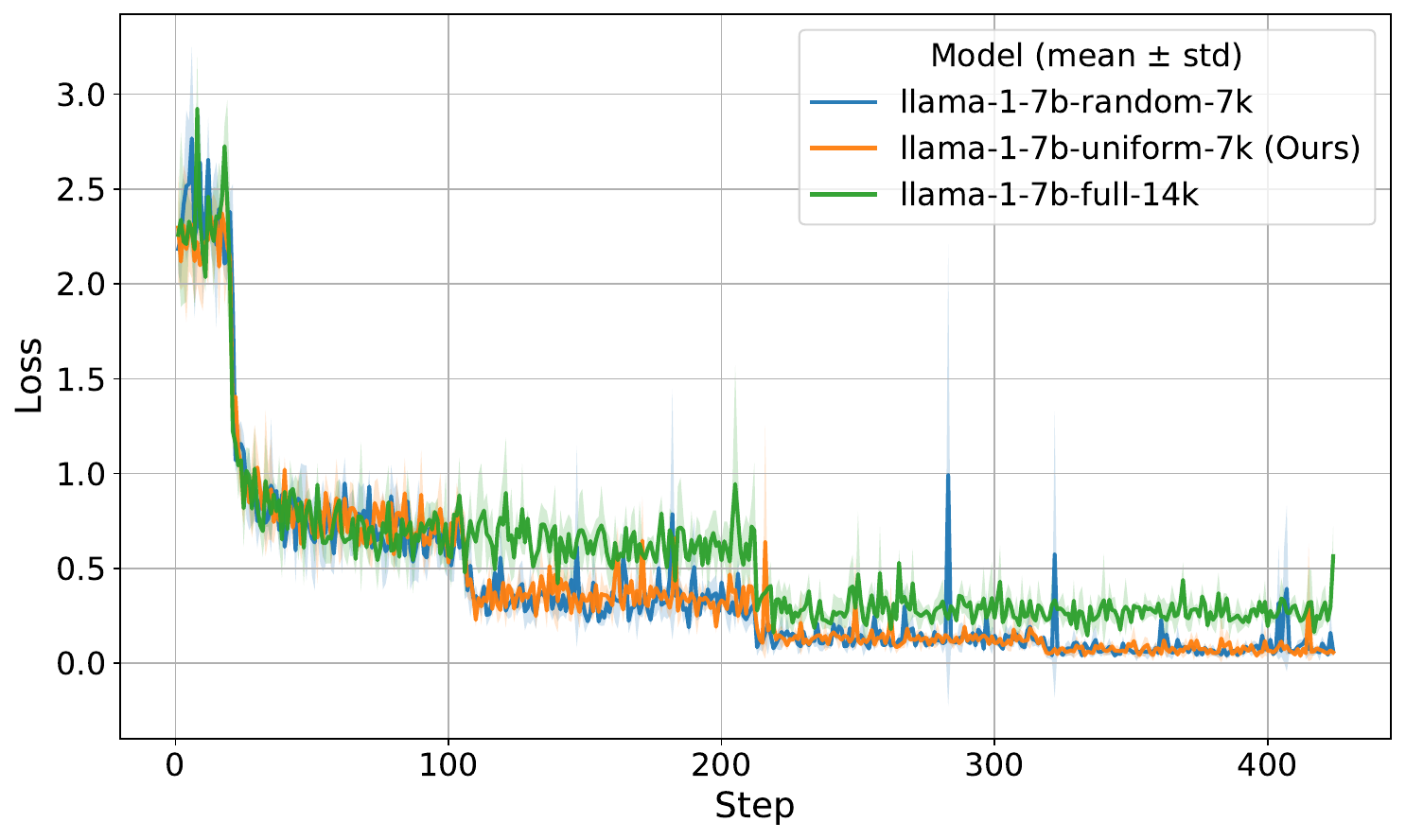}}    
    \caption{Cross-entropy training with AdamW on the LESS dataset \citep{xia2024less} using LLaMA-1-7B \citep{touvron2023llama}. Uniform subsets (ours, via cosine distance) (a) reach target loss thresholds faster, (b) achieve comparable or superior MMLU accuracy, and (c) show smoother, more stable convergence than random or full datasets, demonstrating efficiency of uniformity-aware selection.}
    \label{fig:14k-full-7k-less}
    \vspace{-18pt}
\end{figure}

Beyond $\ell^2$-SGD, we also conduct experiments using cross-entropy loss with Adam, the standard optimization setup for LLMs \citep{gunel2021supervised, mao2023cross}, and observe consistent improvements in the two settings: LLaMA-13B on the WizardLM dataset (Figure~\ref{fig:training-time-on-Llama-1-13b-using-wizardlm-10k-abstract} where data is selected using cosine distance, and~\ref{fig:training-time-on-Llama-1-13b-using-wizardlm-10k-euclidean} with Euclidean distance) and LLaMA-1-7B on the LESS dataset \citep{xia2024less} (Figure~\ref{fig:14k-full-7k-less}). 
The LESS dataset was originally constructed via gradient-based selection for MMLU \citep{xia2024less}, and we further reapply our method on this preselected data. Results show that uniformity-aware selection can further enhance the performance of LESS. Figure~\ref{fig:training-time-on-Llama-1-13b-using-wizardlm-10k-abstract}(a),~\ref{fig:training-time-on-Llama-1-13b-using-wizardlm-10k-euclidean}(a), and~\ref{fig:14k-full-7k-less}(a) demonstrates that the 10k and 7k uniform subsets reach the loss threshold more than 2$\times$ faster than the corresponding full datasets, and also faster than the corresponding random subsets of the same sizes, improving training efficiency. Figure~\ref{fig:training-time-on-Llama-1-13b-using-wizardlm-10k-abstract}(b),~\ref{fig:training-time-on-Llama-1-13b-using-wizardlm-10k-euclidean}(b), and~\ref{fig:14k-full-7k-less}(b) show that, at matched loss checkpoints (ensuring fair evaluation across methods), the uniform subset achieves accuracy comparable to or exceeding random and full datasets while requiring less training time. Additionally, Figure~\ref{fig:14k-full-7k-less} illustrates smoother and more stable convergence (less spikes) across seeds. Consistent findings are also observed on TeaMs-RL and WizardLM across different dataset scales, data selection distances (e.g., cosine distance), model sizes, and sources (see Appendix~\ref{appendix-subsec:cross-entropy-loss-adam-training-exp}).

\vspace{-5pt}
\section{Discussion and conclusion}
\vspace{-5pt}

We remark that using different distances in Algorithm~\ref{alg:uniform_selection} offer distinct benefits on WizardLM. The training speed of using cosine distance is faster than using Euclidean distance (Figure~\ref{fig:training-time-on-Llama-1-13b-using-wizardlm-10k-abstract}(a) vs Figure~\ref{fig:training-time-on-Llama-1-13b-using-wizardlm-10k-euclidean}(a)), while the generalization of using Euclidean distance is better than cosine distance (Figure~\ref{fig:training-time-on-Llama-1-13b-using-wizardlm-10k-euclidean}(b) vs (Figure~\ref{fig:training-time-on-Llama-1-13b-using-wizardlm-10k-abstract}(b)). We will leave it for future exploration. In sum, we demonstrate the importance of data uniformity, selecting more uniformly distributed data, through both theory and experiments. Theoretically, we show that higher uniformity, i.e., larger minimum pairwise distance $h_{\min}$, can accelerate training and reduce approximation error. We also develop a general convergence framework for complicated neural networks beyond the NTK regime, and it also highlights how residual connection and function composition preserve expressivity. Empirically, across optimizers ($\ell^2$-SGD, Adam), model sizes (LLaMA-1 7B, 13B), and datasets (TeaMs-RL, WizardLM, ZCore, LESS), uniform selection consistently accelerates convergence and achieves accuracy comparable to or better than full datasets with fewer training time, improving both training loss and downstream tasks such as ARC and TruthfulQA. These findings emphasize data uniformity as an effective strategy for efficient and scalable LLM training.

\section*{Acknowledgment}

We would like to thank Laixi Shi, Yusu Wang, and Yijun Dong for useful discussions and references. We also thank NVIDIA for generously providing the computational resources used in our experiments. This work was partially done when YW was at the Simons Institute for the Theory of Computing.

\bibliography{ref}

\begin{thebibliography}{98}
\providecommand{\natexlab}[1]{#1}
\providecommand{\url}[1]{\texttt{#1}}
\expandafter\ifx\csname urlstyle\endcsname\relax
  \providecommand{\doi}[1]{doi: #1}\else
  \providecommand{\doi}{doi: \begingroup \urlstyle{rm}\Url}\fi

\bibitem[Albalak et~al.(2024)Albalak, Elazar, Xie, Longpre, Lambert, Wang,
  Muennighoff, Hou, Pan, Jeong, Raffel, Chang, Hashimoto, and
  Wang]{albalak2024a}
Alon Albalak, Yanai Elazar, Sang~Michael Xie, Shayne Longpre, Nathan Lambert,
  Xinyi Wang, Niklas Muennighoff, Bairu Hou, Liangming Pan, Haewon Jeong, Colin
  Raffel, Shiyu Chang, Tatsunori Hashimoto, and William~Yang Wang.
\newblock A survey on data selection for language models.
\newblock \emph{Transactions on Machine Learning Research}, 2024.
\newblock ISSN 2835-8856.
\newblock URL \url{https://openreview.net/forum?id=XfHWcNTSHp}.
\newblock Survey Certification.

\bibitem[Allen-Zhu \& Li(2020)Allen-Zhu and Li]{allen2020towards}
Zeyuan Allen-Zhu and Yuanzhi Li.
\newblock Towards understanding ensemble, knowledge distillation and
  self-distillation in deep learning.
\newblock \emph{arXiv preprint arXiv:2012.09816}, 2020.

\bibitem[Allen-Zhu \& Li(2022)Allen-Zhu and Li]{allen2022feature}
Zeyuan Allen-Zhu and Yuanzhi Li.
\newblock Feature purification: How adversarial training performs robust deep
  learning.
\newblock In \emph{2021 IEEE 62nd Annual Symposium on Foundations of Computer
  Science (FOCS)}, pp.\  977--988. IEEE, 2022.

\bibitem[Allen-Zhu et~al.(2019)Allen-Zhu, Li, and Song]{allen2019convergence}
Zeyuan Allen-Zhu, Yuanzhi Li, and Zhao Song.
\newblock A convergence theory for deep learning via over-parameterization.
\newblock In \emph{International Conference on Machine Learning}, pp.\
  242--252. PMLR, 2019.

\bibitem[Ammann \& Knight(2002)Ammann and Knight]{ammann2002data}
Paul~Eric Ammann and John~C Knight.
\newblock Data diversity: An approach to software fault tolerance.
\newblock \emph{Ieee transactions on computers}, 37\penalty0 (4):\penalty0
  418--425, 2002.

\bibitem[Andoni et~al.(2014)Andoni, Panigrahy, Valiant, and
  Zhang]{andoni2014learning}
Alexandr Andoni, Rina Panigrahy, Gregory Valiant, and Li~Zhang.
\newblock Learning polynomials with neural networks.
\newblock In \emph{International conference on machine learning}, pp.\
  1908--1916. PMLR, 2014.

\bibitem[Ba et~al.(2022)Ba, Erdogdu, Suzuki, Wang, Wu, and Yang]{ba2022high}
Jimmy Ba, Murat~A Erdogdu, Taiji Suzuki, Zhichao Wang, Denny Wu, and Greg Yang.
\newblock High-dimensional asymptotics of feature learning: How one gradient
  step improves the representation.
\newblock \emph{Advances in Neural Information Processing Systems},
  35:\penalty0 37932--37946, 2022.

\bibitem[Ba et~al.(2024)Ba, Mancenido, and Pan]{ba2024does}
Yang Ba, Michelle~V Mancenido, and Rong Pan.
\newblock How does data diversity shape the weight landscape of neural
  networks?
\newblock \emph{arXiv preprint arXiv:2410.14602}, 2024.

\bibitem[Bordelon \& Pehlevan(2022)Bordelon and Pehlevan]{bordelon2022self}
Blake Bordelon and Cengiz Pehlevan.
\newblock Self-consistent dynamical field theory of kernel evolution in wide
  neural networks.
\newblock \emph{Advances in Neural Information Processing Systems},
  35:\penalty0 32240--32256, 2022.

\bibitem[Bordelon \& Pehlevan(2023)Bordelon and Pehlevan]{bordelon2023dynamics}
Blake Bordelon and Cengiz Pehlevan.
\newblock Dynamics of finite width kernel and prediction fluctuations in mean
  field neural networks.
\newblock \emph{Advances in Neural Information Processing Systems},
  36:\penalty0 9707--9750, 2023.

\bibitem[Bottou(2010)]{bottou2010large}
L{\'e}on Bottou.
\newblock Large-scale machine learning with stochastic gradient descent.
\newblock In \emph{Proceedings of COMPSTAT'2010: 19th International Conference
  on Computational StatisticsParis France, August 22-27, 2010 Keynote, Invited
  and Contributed Papers}, pp.\  177--186. Springer, 2010.

\bibitem[Bramble \& Hilbert(1970)Bramble and Hilbert]{bramble1970estimation}
James~H Bramble and SR~Hilbert.
\newblock Estimation of linear functionals on sobolev spaces with application
  to fourier transforms and spline interpolation.
\newblock \emph{SIAM Journal on Numerical Analysis}, 7\penalty0 (1):\penalty0
  112--124, 1970.

\bibitem[Cai(2022)]{cai2022achieve}
Yongqiang Cai.
\newblock Achieve the minimum width of neural networks for universal
  approximation.
\newblock \emph{arXiv preprint arXiv:2209.11395}, 2022.

\bibitem[Cao et~al.(2022)Cao, Chen, Belkin, and Gu]{cao2022benign}
Yuan Cao, Zixiang Chen, Mikhail Belkin, and Quanquan Gu.
\newblock Benign overfitting in two-layer convolutional neural networks.
\newblock \emph{arXiv preprint arXiv:2202.06526}, 2022.

\bibitem[Chang et~al.(2024)Chang, Wang, Wang, Wu, Yang, Zhu, Chen, Yi, Wang,
  Wang, et~al.]{chang2024survey}
Yupeng Chang, Xu~Wang, Jindong Wang, Yuan Wu, Linyi Yang, Kaijie Zhu, Hao Chen,
  Xiaoyuan Yi, Cunxiang Wang, Yidong Wang, et~al.
\newblock A survey on evaluation of large language models.
\newblock \emph{ACM transactions on intelligent systems and technology},
  15\penalty0 (3):\penalty0 1--45, 2024.

\bibitem[Chen et~al.(2022)Chen, Vanden-Eijnden, and Bruna]{chen2022feature}
Zhengdao Chen, Eric Vanden-Eijnden, and Joan Bruna.
\newblock On feature learning in neural networks with global convergence
  guarantees.
\newblock \emph{arXiv preprint arXiv:2204.10782}, 2022.

\bibitem[Chen et~al.(2020{\natexlab{a}})Chen, Cao, Gu, and
  Zhang]{chen2020generalized}
Zixiang Chen, Yuan Cao, Quanquan Gu, and Tong Zhang.
\newblock A generalized neural tangent kernel analysis for two-layer neural
  networks.
\newblock \emph{Advances in Neural Information Processing Systems},
  33:\penalty0 13363--13373, 2020{\natexlab{a}}.

\bibitem[Chen et~al.(2020{\natexlab{b}})Chen, Cao, Zou, and Gu]{chen2020much}
Zixiang Chen, Yuan Cao, Difan Zou, and Quanquan Gu.
\newblock How much over-parameterization is sufficient to learn deep relu
  networks?
\newblock In \emph{International Conference on Learning Representations},
  2020{\natexlab{b}}.

\bibitem[Chen et~al.(2025)Chen, Yang, Zhao, and Gu]{chen2025global}
Zixiang Chen, Greg Yang, Qingyue Zhao, and Quanquan Gu.
\newblock Global convergence and rich feature learning in $ l $-layer
  infinite-width neural networks under $\mu $p parametrization.
\newblock \emph{arXiv preprint arXiv:2503.09565}, 2025.

\bibitem[Chizat \& Bach(2018)Chizat and Bach]{chizat2018global}
Lenaic Chizat and Francis Bach.
\newblock On the global convergence of gradient descent for over-parameterized
  models using optimal transport.
\newblock \emph{Advances in neural information processing systems}, 31, 2018.

\bibitem[Clark et~al.(2018)Clark, Cowhey, Etzioni, Khot, Sabharwal, Schoenick,
  and Tafjord]{clark2018think}
Peter Clark, Isaac Cowhey, Oren Etzioni, Tushar Khot, Ashish Sabharwal, Carissa
  Schoenick, and Oyvind Tafjord.
\newblock Think you have solved question answering? try arc, the ai2 reasoning
  challenge.
\newblock \emph{arXiv preprint arXiv:1803.05457}, 2018.

\bibitem[Cortes et~al.(2012)Cortes, Mohri, and Rostamizadeh]{cortes2012l2}
Corinna Cortes, Mehryar Mohri, and Afshin Rostamizadeh.
\newblock L2 regularization for learning kernels.
\newblock \emph{arXiv preprint arXiv:1205.2653}, 2012.

\bibitem[De~Berg(2000)]{de2000computational}
Mark De~Berg.
\newblock \emph{Computational geometry: algorithms and applications}.
\newblock Springer Science \& Business Media, 2000.

\bibitem[De~Ryck et~al.(2021)De~Ryck, Lanthaler, and
  Mishra]{de2021approximation}
Tim De~Ryck, Samuel Lanthaler, and Siddhartha Mishra.
\newblock On the approximation of functions by tanh neural networks.
\newblock \emph{Neural Networks}, 143:\penalty0 732--750, 2021.

\bibitem[Dong et~al.(2024)Dong, Phan, Pan, and Lei]{dong2024sketchy}
Yijun Dong, Viet~Hoang Phan, Xiang Pan, and Qi~Lei.
\newblock Sketchy moment matching: Toward fast and provable data selection for
  finetuning.
\newblock \emph{Advances in Neural Information Processing Systems},
  37:\penalty0 43367--43402, 2024.

\bibitem[Du et~al.(2019)Du, Lee, Li, Wang, and Zhai]{du2019gradient}
Simon Du, Jason Lee, Haochuan Li, Liwei Wang, and Xiyu Zhai.
\newblock Gradient descent finds global minima of deep neural networks.
\newblock In \emph{International conference on machine learning}, pp.\
  1675--1685. PMLR, 2019.

\bibitem[Edelman et~al.(2022)Edelman, Goel, Kakade, and
  Zhang]{edelman2022inductive}
Benjamin~L Edelman, Surbhi Goel, Sham Kakade, and Cyril Zhang.
\newblock Inductive biases and variable creation in self-attention mechanisms.
\newblock In \emph{International Conference on Machine Learning}, pp.\
  5793--5831. PMLR, 2022.

\bibitem[Emmrich(1999)]{emmrich1999discrete}
Etienne Emmrich.
\newblock \emph{Discrete versions of Gronwall's lemma and their application to
  the numerical analysis of parabolic problems}.
\newblock Techn. Univ., 1999.

\bibitem[Eysenbach et~al.(2019)Eysenbach, Gupta, Ibarz, and
  Levine]{eysenbach2018diversity}
Benjamin Eysenbach, Abhishek Gupta, Julian Ibarz, and Sergey Levine.
\newblock Diversity is all you need: Learning skills without a reward function.
\newblock In \emph{International Conference on Learning Representations}, 2019.
\newblock URL \url{https://openreview.net/forum?id=SJx63jRqFm}.

\bibitem[Fang et~al.(2021)Fang, Lee, Yang, and Zhang]{fang2021modeling}
Cong Fang, Jason Lee, Pengkun Yang, and Tong Zhang.
\newblock Modeling from features: a mean-field framework for over-parameterized
  deep neural networks.
\newblock In \emph{Conference on learning theory}, pp.\  1887--1936. PMLR,
  2021.

\bibitem[Gao et~al.(2024)Gao, Tow, Abbasi, Biderman, Black, DiPofi, Foster,
  Golding, Hsu, Le~Noac'h, Li, McDonell, Muennighoff, Ociepa, Phang, Reynolds,
  Schoelkopf, Skowron, Sutawika, Tang, Thite, Wang, Wang, and Zou]{evalharness}
Leo Gao, Jonathan Tow, Baber Abbasi, Stella Biderman, Sid Black, Anthony
  DiPofi, Charles Foster, Laurence Golding, Jeffrey Hsu, Alain Le~Noac'h,
  Haonan Li, Kyle McDonell, Niklas Muennighoff, Chris Ociepa, Jason Phang,
  Laria Reynolds, Hailey Schoelkopf, Aviya Skowron, Lintang Sutawika, Eric
  Tang, Anish Thite, Ben Wang, Kevin Wang, and Andy Zou.
\newblock The language model evaluation harness, 07 2024.
\newblock URL \url{https://zenodo.org/records/12608602}.

\bibitem[Gong et~al.(2019)Gong, Zhong, and Hu]{gong2019diversity}
Zhiqiang Gong, Ping Zhong, and Weidong Hu.
\newblock Diversity in machine learning.
\newblock \emph{Ieee Access}, 7:\penalty0 64323--64350, 2019.

\bibitem[Grauert(1958)]{grauert1958levi}
Hans Grauert.
\newblock On levi's problem and the imbedding of real-analytic manifolds.
\newblock \emph{Annals of Mathematics}, 68\penalty0 (2):\penalty0 460--472,
  1958.

\bibitem[Griffin et~al.(2024)Griffin, Marks, and Corso]{griffin2024zero}
Brent~A Griffin, Jacob Marks, and Jason~J Corso.
\newblock Zero-shot coreset selection: Efficient pruning for unlabeled data.
\newblock \emph{arXiv preprint arXiv:2411.15349}, 2024.

\bibitem[Gu et~al.(2024)Gu, Knoll, and Jin]{guteams}
Shangding Gu, Alois Knoll, and Ming Jin.
\newblock Teams-rl: Teaching llms to generate better instruction datasets via
  reinforcement learning.
\newblock \emph{Transactions on Machine Learning Research}, 2024.

\bibitem[G{\"u}hring et~al.(2020)G{\"u}hring, Kutyniok, and
  Petersen]{guhring2020error}
Ingo G{\"u}hring, Gitta Kutyniok, and Philipp Petersen.
\newblock Error bounds for approximations with deep relu neural networks in w
  s, p norms.
\newblock \emph{Analysis and Applications}, 18\penalty0 (05):\penalty0
  803--859, 2020.

\bibitem[Gunel et~al.(2021)Gunel, Du, Conneau, and
  Stoyanov]{gunel2021supervised}
Beliz Gunel, Jingfei Du, Alexis Conneau, and Veselin Stoyanov.
\newblock Supervised contrastive learning for pre-trained language model
  fine-tuning.
\newblock In \emph{International Conference on Learning Representations}, 2021.
\newblock URL \url{https://openreview.net/forum?id=cu7IUiOhujH}.

\bibitem[Han \& Imaizumi(2025)Han and Imaizumi]{han2025precise}
Qiyang Han and Masaaki Imaizumi.
\newblock Precise gradient descent training dynamics for finite-width
  multi-layer neural networks.
\newblock \emph{arXiv preprint arXiv:2505.04898}, 2025.

\bibitem[Hanin \& Sellke(2017)Hanin and Sellke]{hanin2017approximating}
Boris Hanin and Mark Sellke.
\newblock Approximating continuous functions by relu nets of minimal width.
\newblock \emph{arXiv preprint arXiv:1710.11278}, 2017.

\bibitem[Hardt \& Ma(2016)Hardt and Ma]{hardt2016identity}
Moritz Hardt and Tengyu Ma.
\newblock Identity matters in deep learning.
\newblock \emph{arXiv preprint arXiv:1611.04231}, 2016.

\bibitem[Hirsch(2012)]{hirsch2012differential}
Morris~W Hirsch.
\newblock \emph{Differential topology}, volume~33.
\newblock Springer Science \& Business Media, 2012.

\bibitem[Hofmanninger et~al.(2020)Hofmanninger, Prayer, Pan, R{\"o}hrich,
  Prosch, and Langs]{hofmanninger2020automatic}
Johannes Hofmanninger, Forian Prayer, Jeanny Pan, Sebastian R{\"o}hrich, Helmut
  Prosch, and Georg Langs.
\newblock Automatic lung segmentation in routine imaging is primarily a data
  diversity problem, not a methodology problem.
\newblock \emph{European radiology experimental}, 4:\penalty0 1--13, 2020.

\bibitem[Huang et~al.(2020)Huang, Wang, Tao, and Zhao]{huang2020deep}
Kaixuan Huang, Yuqing Wang, Molei Tao, and Tuo Zhao.
\newblock Why do deep residual networks generalize better than deep feedforward
  networks?---a neural tangent kernel perspective.
\newblock \emph{Advances in neural information processing systems},
  33:\penalty0 2698--2709, 2020.

\bibitem[Jacot et~al.(2018)Jacot, Gabriel, and Hongler]{jacot2018neural}
Arthur Jacot, Franck Gabriel, and Cl{\'e}ment Hongler.
\newblock Neural tangent kernel: Convergence and generalization in neural
  networks.
\newblock \emph{Advances in neural information processing systems}, 31, 2018.

\bibitem[Ji \& Telgarsky(2019)Ji and Telgarsky]{ji2019polylogarithmic}
Ziwei Ji and Matus Telgarsky.
\newblock Polylogarithmic width suffices for gradient descent to achieve
  arbitrarily small test error with shallow relu networks.
\newblock In \emph{International Conference on Learning Representations}, 2019.

\bibitem[Jiang \& Li(2024)Jiang and Li]{jiang2024approximation}
Haotian Jiang and Qianxiao Li.
\newblock Approximation rate of the transformer architecture for sequence
  modeling.
\newblock \emph{Advances in Neural Information Processing Systems},
  37:\penalty0 68926--68955, 2024.

\bibitem[Kajitsuka \& Sato(2023)Kajitsuka and Sato]{kajitsuka2023transformers}
Tokio Kajitsuka and Issei Sato.
\newblock Are transformers with one layer self-attention using low-rank weight
  matrices universal approximators?
\newblock \emph{arXiv preprint arXiv:2307.14023}, 2023.

\bibitem[Kidger \& Lyons(2020)Kidger and Lyons]{kidger2020universal}
Patrick Kidger and Terry Lyons.
\newblock Universal approximation with deep narrow networks.
\newblock In \emph{Conference on learning theory}, pp.\  2306--2327. PMLR,
  2020.

\bibitem[Kingma \& Ba(2014)Kingma and Ba]{kingma2014adam}
Diederik~P Kingma and Jimmy Ba.
\newblock Adam: A method for stochastic optimization.
\newblock \emph{arXiv preprint arXiv:1412.6980}, 2014.

\bibitem[Kratsios et~al.(2021)Kratsios, Zamanlooy, Liu, and
  Dokmani{\'c}]{kratsios2021universal}
Anastasis Kratsios, Behnoosh Zamanlooy, Tianlin Liu, and Ivan Dokmani{\'c}.
\newblock Universal approximation under constraints is possible with
  transformers.
\newblock \emph{arXiv preprint arXiv:2110.03303}, 2021.

\bibitem[Krieg \& Sonnleitner(2024)Krieg and Sonnleitner]{krieg2024random}
David Krieg and Mathias Sonnleitner.
\newblock Random points are optimal for the approximation of sobolev functions.
\newblock \emph{IMA Journal of Numerical Analysis}, 44\penalty0 (3):\penalty0
  1346--1371, 2024.

\bibitem[Krieg et~al.(2022)Krieg, Novak, and Sonnleitner]{krieg2022recovery}
David Krieg, Erich Novak, and Mathias Sonnleitner.
\newblock Recovery of sobolev functions restricted to iid sampling.
\newblock \emph{Mathematics of Computation}, 91\penalty0 (338):\penalty0
  2715--2738, 2022.

\bibitem[Kurdyka \& Paunescu(2008)Kurdyka and Paunescu]{kurdyka2008hyperbolic}
Krzysztof Kurdyka and Laurentiu Paunescu.
\newblock Hyperbolic polynomials and multiparameter real-analytic perturbation
  theory.
\newblock 2008.

\bibitem[Lee et~al.(2019)Lee, Xiao, Schoenholz, Bahri, Novak, Sohl-Dickstein,
  and Pennington]{lee2019wide}
Jaehoon Lee, Lechao Xiao, Samuel Schoenholz, Yasaman Bahri, Roman Novak, Jascha
  Sohl-Dickstein, and Jeffrey Pennington.
\newblock Wide neural networks of any depth evolve as linear models under
  gradient descent.
\newblock \emph{Advances in neural information processing systems}, 32, 2019.

\bibitem[Li et~al.(2023)Li, Qian, Tian, Rakhlin, and Jadbabaie]{li2023convex}
Haochuan Li, Jian Qian, Yi~Tian, Alexander Rakhlin, and Ali Jadbabaie.
\newblock Convex and non-convex optimization under generalized smoothness.
\newblock \emph{Advances in Neural Information Processing Systems},
  36:\penalty0 40238--40271, 2023.

\bibitem[Lin \& Jegelka(2018)Lin and Jegelka]{lin2018resnet}
Hongzhou Lin and Stefanie Jegelka.
\newblock Resnet with one-neuron hidden layers is a universal approximator.
\newblock \emph{Advances in neural information processing systems}, 31, 2018.

\bibitem[Lin et~al.(2021)Lin, Hilton, and Evans]{lin2021truthfulqa}
Stephanie Lin, Jacob Hilton, and Owain Evans.
\newblock Truthfulqa: Measuring how models mimic human falsehoods.
\newblock \emph{arXiv preprint arXiv:2109.07958}, 2021.

\bibitem[Liu et~al.(2019)Liu, Chen, Zhou, Du, Zhou, and Zhao]{liu2019towards}
Tianyi Liu, Minshuo Chen, Mo~Zhou, Simon~S Du, Enlu Zhou, and Tuo Zhao.
\newblock Towards understanding the importance of shortcut connections in
  residual networks.
\newblock \emph{Advances in neural information processing systems}, 32, 2019.

\bibitem[Lu et~al.(2021)Lu, Shen, Yang, and Zhang]{lu2021deep}
Jianfeng Lu, Zuowei Shen, Haizhao Yang, and Shijun Zhang.
\newblock Deep network approximation for smooth functions.
\newblock \emph{SIAM Journal on Mathematical Analysis}, 53\penalty0
  (5):\penalty0 5465--5506, 2021.

\bibitem[Luo et~al.(2022)Luo, Li, Zheng, Liu, Wang, and He]{luo2022your}
Shengjie Luo, Shanda Li, Shuxin Zheng, Tie-Yan Liu, Liwei Wang, and Di~He.
\newblock Your transformer may not be as powerful as you expect.
\newblock \emph{Advances in Neural Information Processing Systems},
  35:\penalty0 4301--4315, 2022.

\bibitem[Machanavajjhala et~al.(2007)Machanavajjhala, Kifer, Gehrke, and
  Venkitasubramaniam]{machanavajjhala2007diversity}
Ashwin Machanavajjhala, Daniel Kifer, Johannes Gehrke, and Muthuramakrishnan
  Venkitasubramaniam.
\newblock l-diversity: Privacy beyond k-anonymity.
\newblock \emph{Acm transactions on knowledge discovery from data (tkdd)},
  1\penalty0 (1):\penalty0 3--es, 2007.

\bibitem[Ma{\'c}kiewicz \& Ratajczak(1993)Ma{\'c}kiewicz and
  Ratajczak]{mackiewicz1993principal}
Andrzej Ma{\'c}kiewicz and Waldemar Ratajczak.
\newblock Principal components analysis (pca).
\newblock \emph{Computers \& Geosciences}, 19\penalty0 (3):\penalty0 303--342,
  1993.

\bibitem[Mao et~al.(2023)Mao, Mohri, and Zhong]{mao2023cross}
Anqi Mao, Mehryar Mohri, and Yutao Zhong.
\newblock Cross-entropy loss functions: Theoretical analysis and applications.
\newblock In \emph{International conference on Machine learning}, pp.\
  23803--23828. PMLR, 2023.

\bibitem[Mityagin(2015)]{mityagin2015zero}
Boris Mityagin.
\newblock The zero set of a real analytic function.
\newblock \emph{arXiv preprint arXiv:1512.07276}, 2015.

\bibitem[Moudafi(2004)]{moudafi2004remark}
Abdellatif Moudafi.
\newblock A remark on the convergence of the tikhonov regularization without
  monotonicity.
\newblock \emph{MATHEMATICAL INEQUALITIES AND APPLICATIONS}, 7:\penalty0
  283--288, 2004.

\bibitem[Muennighoff et~al.(2025)Muennighoff, Yang, Shi, Li, Fei-Fei,
  Hajishirzi, Zettlemoyer, Liang, Cand{\`e}s, and Hashimoto]{muennighoff2025s1}
Niklas Muennighoff, Zitong Yang, Weijia Shi, Xiang~Lisa Li, Li~Fei-Fei,
  Hannaneh Hajishirzi, Luke Zettlemoyer, Percy Liang, Emmanuel Cand{\`e}s, and
  Tatsunori Hashimoto.
\newblock s1: Simple test-time scaling.
\newblock \emph{arXiv preprint arXiv:2501.19393}, 2025.

\bibitem[Nguyen-Tang \& Arora(2023)Nguyen-Tang and Arora]{nguyen2023sample}
Thanh Nguyen-Tang and Raman Arora.
\newblock On sample-efficient offline reinforcement learning: Data diversity,
  posterior sampling and beyond.
\newblock \emph{Advances in neural information processing systems},
  36:\penalty0 61115--61157, 2023.

\bibitem[Nguyen-Tuong et~al.(2008)Nguyen-Tuong, Evans, Knight, Cox, and
  Davidson]{nguyen2008security}
Anh Nguyen-Tuong, David Evans, John~C Knight, Benjamin Cox, and Jack~W
  Davidson.
\newblock Security through redundant data diversity.
\newblock In \emph{2008 IEEE International Conference on Dependable Systems and
  Networks With FTCS and DCC (DSN)}, pp.\  187--196. IEEE, 2008.

\bibitem[Oymak \& Soltanolkotabi(2020)Oymak and
  Soltanolkotabi]{oymak2020toward}
Samet Oymak and Mahdi Soltanolkotabi.
\newblock Toward moderate overparameterization: Global convergence guarantees
  for training shallow neural networks.
\newblock \emph{IEEE Journal on Selected Areas in Information Theory},
  1\penalty0 (1):\penalty0 84--105, 2020.

\bibitem[Pace \& Barry(1997)Pace and Barry]{pace1997sparse}
R~Kelley Pace and Ronald Barry.
\newblock Sparse spatial autoregressions.
\newblock \emph{Statistics \& Probability Letters}, 33\penalty0 (3):\penalty0
  291--297, 1997.

\bibitem[Park et~al.(2020)Park, Yun, Lee, and Shin]{park2020minimum}
Sejun Park, Chulhee Yun, Jaeho Lee, and Jinwoo Shin.
\newblock Minimum width for universal approximation.
\newblock \emph{arXiv preprint arXiv:2006.08859}, 2020.

\bibitem[Qin et~al.(2025)Qin, Zhou, and Zhu]{qin2025convergence}
Zhen Qin, Jinxin Zhou, and Zhihui Zhu.
\newblock On the convergence of gradient descent on learning transformers with
  residual connections.
\newblock \emph{arXiv preprint arXiv:2506.05249}, 2025.

\bibitem[Rader(1973)]{rader1973nice}
Trout Rader.
\newblock Nice demand functions.
\newblock \emph{Econometrica: Journal of the Econometric Society}, pp.\
  913--935, 1973.

\bibitem[Rotskoff \& Vanden-Eijnden(2018)Rotskoff and
  Vanden-Eijnden]{rotskoff2018neural}
Grant~M Rotskoff and Eric Vanden-Eijnden.
\newblock Neural networks as interacting particle systems: Asymptotic convexity
  of the loss landscape and universal scaling of the approximation error.
\newblock \emph{stat}, 1050:\penalty0 22, 2018.

\bibitem[Saniee(2008)]{saniee2008simple}
Kamron Saniee.
\newblock A simple expression for multivariate lagrange interpolation.
\newblock \emph{SIAM undergraduate research online}, 1\penalty0 (1):\penalty0
  1--9, 2008.

\bibitem[Scholkemper et~al.(2024)Scholkemper, Wu, Jadbabaie, and
  Schaub]{scholkemper2024residual}
Michael Scholkemper, Xinyi Wu, Ali Jadbabaie, and Michael~T Schaub.
\newblock Residual connections and normalization can provably prevent
  oversmoothing in gnns.
\newblock \emph{arXiv preprint arXiv:2406.02997}, 2024.

\bibitem[Shen et~al.(2022)Shen, Jiao, Lin, and Huang]{shen2022approximation}
Guohao Shen, Yuling Jiao, Yuanyuan Lin, and Jian Huang.
\newblock Approximation with cnns in sobolev space: with applications to
  classification.
\newblock \emph{Advances in neural information processing systems},
  35:\penalty0 2876--2888, 2022.

\bibitem[Shi et~al.(2021)Shi, Wei, and Liang]{shi2021theoretical}
Zhenmei Shi, Junyi Wei, and Yingyu Liang.
\newblock A theoretical analysis on feature learning in neural networks:
  Emergence from inputs and advantage over fixed features.
\newblock In \emph{International Conference on Learning Representations}, 2021.

\bibitem[Sirignano \& Spiliopoulos(2020)Sirignano and
  Spiliopoulos]{sirignano2020mean}
Justin Sirignano and Konstantinos Spiliopoulos.
\newblock Mean field analysis of neural networks: A law of large numbers.
\newblock \emph{SIAM Journal on Applied Mathematics}, 80\penalty0 (2):\penalty0
  725--752, 2020.

\bibitem[Song et~al.(2018)Song, Montanari, and Nguyen]{song2018mean}
Mei Song, Andrea Montanari, and P~Nguyen.
\newblock A mean field view of the landscape of two-layers neural networks.
\newblock \emph{Proceedings of the National Academy of Sciences}, 115\penalty0
  (33):\penalty0 E7665--E7671, 2018.

\bibitem[Song \& Yang(2019)Song and Yang]{song2019quadratic}
Zhao Song and Xin Yang.
\newblock Quadratic suffices for over-parametrization via matrix chernoff
  bound.
\newblock \emph{arXiv preprint arXiv:1906.03593}, 2019.

\bibitem[Tabuada \& Gharesifard(2022)Tabuada and
  Gharesifard]{tabuada2022universal}
Paulo Tabuada and Bahman Gharesifard.
\newblock Universal approximation power of deep residual neural networks
  through the lens of control.
\newblock \emph{IEEE Transactions on Automatic Control}, 68\penalty0
  (5):\penalty0 2715--2728, 2022.

\bibitem[Telgarsky(2022)]{telgarsky2022feature}
Matus Telgarsky.
\newblock Feature selection with gradient descent on two-layer networks in
  low-rotation regimes.
\newblock \emph{arXiv preprint arXiv:2208.02789}, 2022.

\bibitem[Touvron et~al.(2023)Touvron, Lavril, Izacard, Martinet, Lachaux,
  Lacroix, Rozi{\`e}re, Goyal, Hambro, Azhar, et~al.]{touvron2023llama}
Hugo Touvron, Thibaut Lavril, Gautier Izacard, Xavier Martinet, Marie-Anne
  Lachaux, Timoth{\'e}e Lacroix, Baptiste Rozi{\`e}re, Naman Goyal, Eric
  Hambro, Faisal Azhar, et~al.
\newblock Llama: Open and efficient foundation language models.
\newblock \emph{arXiv preprint arXiv:2302.13971}, 2023.

\bibitem[Wang et~al.(2022)Wang, Chen, Zhao, and Tao]{wang2022large}
Yuqing Wang, Minshuo Chen, Tuo Zhao, and Molei Tao.
\newblock Large learning rate tames homogeneity: Convergence and balancing
  effect.
\newblock In \emph{International Conference on Learning Representations}, 2022.
\newblock URL \url{https://openreview.net/forum?id=3tbDrs77LJ5}.

\bibitem[Wang et~al.(2023)Wang, Xu, Zhao, and Tao]{wang2023good}
Yuqing Wang, Zhenghao Xu, Tuo Zhao, and Molei Tao.
\newblock Good regularity creates large learning rate implicit biases: edge of
  stability, balancing, and catapult.
\newblock \emph{arXiv preprint arXiv:2310.17087}, 2023.

\bibitem[Watkins(1979)]{watkins1979generalization}
David~S Watkins.
\newblock A generalization of the bramble-hilbert lemma and applications to
  multivariate interpolation.
\newblock \emph{Journal of approximation theory}, 26\penalty0 (3):\penalty0
  219--231, 1979.

\bibitem[Wei et~al.(2019)Wei, Lee, Liu, and Ma]{wei2019regularization}
Colin Wei, Jason~D Lee, Qiang Liu, and Tengyu Ma.
\newblock Regularization matters: Generalization and optimization of neural
  nets vs their induced kernel.
\newblock \emph{Advances in Neural Information Processing Systems}, 32, 2019.

\bibitem[Xia et~al.(2024)Xia, Malladi, Gururangan, Arora, and
  Chen]{xia2024less}
Mengzhou Xia, Sadhika Malladi, Suchin Gururangan, Sanjeev Arora, and Danqi
  Chen.
\newblock Less: selecting influential data for targeted instruction tuning.
\newblock In \emph{Proceedings of the 41st International Conference on Machine
  Learning}, pp.\  54104--54132, 2024.

\bibitem[Xu et~al.(2024)Xu, Sun, Zheng, Geng, Zhao, Feng, Tao, Lin, and
  Jiang]{xu2024wizardlm}
Can Xu, Qingfeng Sun, Kai Zheng, Xiubo Geng, Pu~Zhao, Jiazhan Feng, Chongyang
  Tao, Qingwei Lin, and Daxin Jiang.
\newblock Wizard{LM}: Empowering large pre-trained language models to follow
  complex instructions.
\newblock In \emph{The Twelfth International Conference on Learning
  Representations}, 2024.
\newblock URL \url{https://openreview.net/forum?id=CfXh93NDgH}.

\bibitem[Yang \& Hu(2020)Yang and Hu]{yang2020feature}
Greg Yang and Edward~J Hu.
\newblock Feature learning in infinite-width neural networks.
\newblock \emph{arXiv preprint arXiv:2011.14522}, 2020.

\bibitem[Yu et~al.(2022)Yu, Khadivi, and Xu]{yu2022can}
Yu~Yu, Shahram Khadivi, and Jia Xu.
\newblock Can data diversity enhance learning generalization?
\newblock In \emph{Proceedings of the 29th international conference on
  computational linguistics}, pp.\  4933--4945, 2022.

\bibitem[Yuen et~al.(2003)Yuen, Yates, and Mau]{yuen2003exploiting}
Wing~Ho Yuen, Roy~D Yates, and S-C Mau.
\newblock Exploiting data diversity and multiuser diversity in noncooperative
  mobile infostation networks.
\newblock In \emph{IEEE INFOCOM 2003. Twenty-second Annual Joint Conference of
  the IEEE Computer and Communications Societies (IEEE Cat. No. 03CH37428)},
  volume~3, pp.\  2218--2228. IEEE, 2003.

\bibitem[Yun et~al.(2019)Yun, Bhojanapalli, Rawat, Reddi, and
  Kumar]{yun2019transformers}
Chulhee Yun, Srinadh Bhojanapalli, Ankit~Singh Rawat, Sashank~J Reddi, and
  Sanjiv Kumar.
\newblock Are transformers universal approximators of sequence-to-sequence
  functions?
\newblock \emph{arXiv preprint arXiv:1912.10077}, 2019.

\bibitem[Zhang et~al.(2019)Zhang, He, Sra, and Jadbabaie]{zhang2019gradient}
Jingzhao Zhang, Tianxing He, Suvrit Sra, and Ali Jadbabaie.
\newblock Why gradient clipping accelerates training: A theoretical
  justification for adaptivity.
\newblock \emph{arXiv preprint arXiv:1905.11881}, 2019.

\bibitem[Zhao et~al.(2023)Zhao, Zhou, Li, Tang, Wang, Hou, Min, Zhang, Zhang,
  Dong, et~al.]{zhao2023survey}
Wayne~Xin Zhao, Kun Zhou, Junyi Li, Tianyi Tang, Xiaolei Wang, Yupeng Hou,
  Yingqian Min, Beichen Zhang, Junjie Zhang, Zican Dong, et~al.
\newblock A survey of large language models.
\newblock \emph{arXiv preprint arXiv:2303.18223}, 1\penalty0 (2), 2023.

\bibitem[Zou \& Gu(2019)Zou and Gu]{zou2019improved}
Difan Zou and Quanquan Gu.
\newblock An improved analysis of training over-parameterized deep neural
  networks.
\newblock \emph{Advances in neural information processing systems}, 32, 2019.

\bibitem[Zou et~al.(2020)Zou, Cao, Zhou, and Gu]{zou2020gradient}
Difan Zou, Yuan Cao, Dongruo Zhou, and Quanquan Gu.
\newblock Gradient descent optimizes over-parameterized deep relu networks.
\newblock \emph{Machine Learning}, 109\penalty0 (3):\penalty0 467--492, 2020.

\end{thebibliography}
\bibliographystyle{iclr2026_conference}
\clearpage
\appendix
\section*{Appendix}

\section{Preliminary properties and assumption justification}
\label{app:preliminary_assumption}

\subsection{Analytic function}
\label{subapp:analytic_function}
Real-analyticity is preserved under many operations. Specifically:
\begin{proposition}
\label{prop:analytic_function}
    Real-analytic functions have the following properties: 
    \begin{enumerate}
        \item The sums, products, divisions where the denominators are not zero, and compositions of real-analytic functions are real-analytic.
        \item The derivative and integral of real-analytic functions are real-analytic.
        \item Real-analytic function is $C^\infty$.
    \end{enumerate}
\end{proposition}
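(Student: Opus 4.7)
The plan is to verify each claim by working directly with convergent power series expansions at an arbitrary point $x_0$ in the domain, since by Definition~\ref{def:analytic} real-analyticity is a local property. Throughout I would fix $f,g$ real-analytic on an open set $U$ and expand them as $f(x)=\sum_\alpha a_\alpha (x-x_0)^\alpha$ and $g(x)=\sum_\alpha b_\alpha (x-x_0)^\alpha$ on a common neighborhood of $x_0$.

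For part~1, the sum $f+g$ inherits a convergent power series by adding the two series term by term; since both converge absolutely on a common polydisk, so does the sum. For the product $fg$, I would form the formal Cauchy product $\sum_\gamma\bigl(\sum_{\alpha+\beta=\gamma}a_\alpha b_\beta\bigr)(x-x_0)^\gamma$ and verify absolute convergence on a slightly smaller polydisk via Fubini (reordering absolutely convergent double series). For the quotient $f/g$ when $g(x_0)\ne 0$, I would first handle $1/g$: writing $g(x)=g(x_0)(1-h(x))$ with $h$ real-analytic and $h(x_0)=0$, expand $(1-h)^{-1}=\sum_{k\ge 0}h^k$ as a geometric series and apply the product rule iteratively (plus a uniform bound) to get a convergent series for $1/g$ on a neighborhood of $x_0$; then $f/g=f\cdot(1/g)$ is real-analytic by the product case. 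For compositions $f\circ g$, I would substitute the series of $g$ into that of $f$ and prove convergence by the method of majorants: bound $|a_\alpha|,|b_\beta|$ by coefficients of a geometric majorant $M/(1-r^{-1}\|x-x_0\|)$, for which the composition can be computed explicitly and shown to converge on some neighborhood, so the original composed series converges absolutely there.

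For part~2, term-by-term differentiation of a convergent power series yields another power series with the same radius of convergence (by standard $\limsup$-root arguments applied to $|\alpha| a_\alpha$), and the differentiated series sums to $\nabla f$; this gives real-analyticity of every partial derivative. For integrals along a path within a convergence polydisk, I would integrate term by term, which again preserves the radius of convergence. Part~3 is then immediate: iterating part~2 shows that all partial derivatives of every order exist and are themselves real-analytic, hence continuous, so $f\in C^\infty$.

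The routine parts are sums, derivatives, and $C^\infty$; the main technical obstacle is the \emph{composition} claim, since one must rigorously justify interchanging summations after substituting one multivariate series into another. I plan to handle this via the majorant (Cauchy-type) method described above, which reduces the convergence question to a single scalar geometric-series calculation and then transfers back by coefficientwise domination. Division is a second, milder obstacle, handled as a special case of composition with the analytic function $t\mapsto 1/t$ at $t=g(x_0)\ne 0$, giving an alternative self-contained route once composition is established.
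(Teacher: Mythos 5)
The paper does not prove Proposition~\ref{prop:analytic_function}; it is stated as a standard background fact (from classical several-variable analysis) and then applied directly in the proof of Corollary~\ref{cor:analytic_with_NN}, so there is no in-paper argument to compare against. Your sketch is a correct outline of the standard textbook proof: Cauchy product plus Fubini for products, term-by-term manipulation with the same polydisk of convergence for derivatives and antiderivatives, iteration for $C^\infty$, and the method of majorants for composition, which is indeed the one step that genuinely requires care (justifying the rearrangement after substituting one multivariate series into another). One small remark: your ``direct'' route to $1/g$ via $(1-h)^{-1}=\sum_{k\ge0}h^k$ still hides a rearrangement-of-series step of the same kind as composition, so in practice it is cleaner to prove composition first and then treat division as composition with $t\mapsto 1/t$, exactly as you note at the end; this would avoid presenting the quotient case as an independently ``milder'' obstacle.
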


We then prove that a lot of neural network architectures are real-analytic.
\begin{proof}[Proof of Corollary~\ref{cor:analytic_with_NN}]
    Since exponential function admits Taylor series in some neighbourhood of any point in $\RR^{d}$, and the denominators of softmax and sigmoid functions are strictly positive at any point, by Proposition~\ref{prop:analytic_function}, we have that softmax, sigmoid, GELU, SiLU are analytic functions on $\RR^d$. 
    
    Similarly, polynomial functions admit Taylor series in some neighbourhood of any point in $\RR^{d}$, and the denominator of $\tau_\epsilon$ is strictly positive at any point, we have $\tau_\epsilon$ and polynomial functions are analytic. 
    
    Also, tanh can be Taylor expanded at any point and thus analytic. 
    
    By Proposition~\ref{prop:analytic_function}, any product, sum, compositions of the above functions are still analytic.
\end{proof}

\subsection{Polynomial boundedness, polynomial generalized continuity and smoothness}
\label{subapp:poly_bound_continuity_smooth}
Polynomial boundedness, polynomial generalized continuity and smoothness are preserved under the following operations:
\begin{proposition}
\label{prop:poly_generalized_properties}
    Assume $f$ and $g$ satisfies polynomial boundedness for both the functions and their gradients, polynomial generalized continuity, and polynomial generalized smoothness. Then 
    \begin{enumerate}
        \item $f(x)+g(x)$, where $f,g:\RR^d\to\RR^m$,
        \item $f(x)g(x)$, where $f\in\RR^{m\times d}$, and $g\in\RR^{d\times n}$,
        \item $(f\circ g)(x)$, where $f:\RR^d\to\RR^m$, and $g:\RR^n\to\RR^d$,
    \end{enumerate}
    satisfy the same properties under different polynomials.
\end{proposition}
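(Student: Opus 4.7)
My plan is to verify the nine claims (three operations $\times$ three properties) by direct algebraic manipulation, using the fact that polynomials with positive coefficients are closed under addition, multiplication, and composition with other polynomials evaluated at non-negative arguments. Throughout, let $S_f, S_g$ denote the various witnessing polynomials coming from the hypotheses on $f$ and $g$, and note that every argument fed into these polynomials (a norm $\|x_i\|$ or $\|x_{\max,i}\|$) is non-negative, so substituting one positive-coefficient polynomial into another still yields a positive-coefficient polynomial.

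First, I would dispense with poly-boundedness. For sums, the triangle inequality gives $\|f(x)+g(x)\|\le S_f+S_g$. For products of matrix-valued functions, submultiplicativity gives $\|f(x)g(x)\|\le S_fS_g$. For compositions, I apply poly-boundedness of $f$ evaluated at the blockwise components of $g(x)$, each of which is in turn bounded by a polynomial in $\|x_1\|,\dots,\|x_{n_x}\|$ by poly-boundedness of $g$; substitution then yields a positive-coefficient polynomial bound. The same three arguments, applied to $\nabla f$ and $\nabla g$ in place of $f$ and $g$, yield poly-boundedness of the Jacobians in the sum and product cases; for the composition, the chain rule $\nabla(f\circ g)(x)=\nabla f(g(x))\nabla g(x)$ reduces the question to the matrix product case already handled.

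Next, I would handle poly-continuity. The sum case is again the triangle inequality. For products, I use the standard decomposition
\[
\|f(x)g(x)-f(x')g(x')\|\le \|f(x)\|\,\|g(x)-g(x')\|+\|f(x)-f(x')\|\,\|g(x')\|,
\]
and combine poly-boundedness of $f,g$ with poly-continuity of $f,g$, all evaluated at $\|x_{\max,i}\|$, to produce the needed polynomial. For compositions, I chain the two continuity bounds:
\[
\|f(g(x))-f(g(x'))\|\le S_f^{\mathrm{cont}}\bigl(\|g(x)_{\max,\cdot}\|\bigr)\,\|g(x)-g(x')\|\le S_f^{\mathrm{cont}}\!\circ\! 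S_g^{\mathrm{bd}}\cdot S_g^{\mathrm{cont}}\,\|x-x'\|,
\]
where poly-boundedness of $g$ controls $\|g(x)_{\max,\cdot}\|$ by a polynomial in $\|x_{\max,i}\|$, and the outer composition of positive-coefficient polynomials is again a positive-coefficient polynomial in $\|x_{\max,i}\|$.

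Finally, poly-smoothness is just poly-continuity applied to gradients, so the three cases reduce to what is already proved once we expand the Jacobian using the product rule and chain rule: $\nabla(f+g)=\nabla f+\nabla g$, $\nabla(fg)$ splits as a sum of two products of Jacobian-valued functions with $f$ and $g$, and $\nabla(f\circ g)=(\nabla f\circ g)\,\nabla g$. Each such expression is a sum of products and compositions of objects that are both poly-bounded and poly-continuous by assumption, so the already-established closure properties of poly-continuity under $+$, $\times$, and $\circ$ finish the argument. I expect the only real obstacle to be the bookkeeping for matrix/tensor-valued products --- in particular, writing the product rule so that it respects the block decomposition $x=(x_1,\dots,x_{n_x})$ used in the definitions --- and ensuring at each step that the constructed polynomials genuinely have non-negative coefficients, which is preserved precisely because the only operations we ever perform on them are addition, multiplication, and substitution at non-negative arguments.
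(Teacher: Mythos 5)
Your proposal is correct and follows essentially the same route as the paper's proof: triangle inequality for sums, the standard add-and-subtract decomposition $\|fg-f'g'\|\le\|f\|\,\|g-g'\|+\|f-f'\|\,\|g'\|$ for products, substitution of boundedness polynomials for compositions, and reduction of poly-smoothness to poly-continuity of $\nabla(f+g)$, $\nabla(fg)$, and $\nabla(f\circ g)$ via the sum/product/chain rules. The only cosmetic difference is that you organize by property (boundedness, then continuity, then smoothness) while the paper organizes by operation, and the bookkeeping concern you flag about the block decomposition of the composition's inner argument is handled in the paper simply by bounding each block norm by $S_g(x_{\max})$ and substituting.
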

\begin{proof}
    Let  $S_{f,0},\ S_{g,0}$ be the corresponding polynomials in polynomial generalized continuity of $f$ and $g$, $S_{f,1},\ S_{g,1}$ be the ones in polynomial generalized smoothness, $S_{f},\ S_{g}$ be the corresponding polynomials in polynomial bound, and $S_{
    \nabla f},\ S_{\nabla g}$ be the corresponding polynomials in polynomial bound of their gradients. For notational simplicity, we use $S(x)$ to denote $S(\|x_{1}\|,\cdots,\|x_{d}\|)$, and $S(x_{\max})$ to denote $S(\|x_{\max,1}\|,\cdots,\|x_{\max,d}\|)$.

    Then
    \begin{align*}
        \|f(x)+g(x)-f(x')-g(x')\|\le \|f(x)-f(x')\|+\|g(x)-g(x')\|\le (S_{f,0}+S_{g,0})(x_{\max})\|x-x'\|.
    \end{align*}
    Similarly, $\nabla f+\nabla g$ follows the same idea with $(S_{f,1}+S_{g,1})(x_{\max})$. Moreover, $f+g$ is upper bounded by $S_f(x_{\max})+S_g(x_{\max})$, and $\nabla f+\nabla g$ is upper bounded by $S_{\nabla f}(x_{\max})+S_{\nabla g}(x_{\max})$.

    Also,
    \begin{align*}
        \|f(x)g(x)-f(x')g(x')\|&=\|f(x)g(x)-f(x)g(x')+f(x)g(x')-f(x')g(x')\|\\
        &\le \|f(x)\|\|g(x)-g(x')\|+\|f(x)-f(x')\|\|g(x')\|\\
        &\le (S_f(x)S_{g,0}(x_{\max})+S_{f,0}(x_{\max})S_g(x'))\|x-x'\|\\
        &\le (S_f(x_{\max})S_{g,0}(x_{\max})+S_{f,0}(x_{\max})S_g(x_{\max}))\|x-x'\|.
    \end{align*}
    Obviously, $(S_f(x_{\max})S_{g,0}(x_{\max})+S_{f,0}(x_{\max})S_g(x_{\max}))$ is a polynomial with positive coefficients, and therefore $fg$ satisfies polynomial generalized continuity. 
    
    Based on the above derivation, $\nabla (fg)=\nabla f\cdot g+f\nabla g$ is a sum of two product, and thus satisfies polynomial continuity with $(S_{\nabla f}S_{g,0}+S_{f,1}S_g+S_{\nabla g}S_{f,0}+S_{g,1}S_{f})$. 
    
    Also, $fg$ is upper bounded by $S_fS_g$, and  $\nabla (fg)=\nabla f\cdot g+f\nabla g$ is upper bounded by $S_{\nabla f}S_g+S_{\nabla g}S_f$.

    Next, since all the $S$ have positive coefficients, we have
    \begin{align*}
        \|f(g(x))-f(g(x'))\|&\le S_{f,0}(\cdots,\|g(x)_{\max,i}\|,\cdots)\|g(x)-g(x')\|\\
        &\le S_{f,0}(S_{g}(x_{\max}),\cdots,S_{g}(x_{\max}))S_{g,0}(x_{\max})\|x-x'\|.
    \end{align*}
    Also, $\nabla_x f(g(x))=\nabla_g f(g(x))\nabla g(x)$. Similar to the above derivations, we have $\nabla_x f(g(x))$ is polynomial continuous with $S_{f,1}(S_{g}(x_{\max}),\cdots,S_{g}(x_{\max}))S_{g,0}(x_{\max})S_{\nabla g}(x_{\max})+S_{\nabla f}(S_f(x_{\max}),\cdots,S_f(x_{\max}))S_{g,1}(x_{\max})$.

    In the end, it can be shown that $f(g(x))$ is upper bounded by $S_f(S_g(x_{\max}),\cdots,S_g(x_{\max}))$, and $\nabla f(g(x))$ is upper bounded by $S_{\nabla g}(x_{\max})S_{\nabla f}(S_g(x_{\max}),\cdots,S_g(x_{\max}))$.
\end{proof}

\begin{corollary}
\label{cor:poly_smooth_with_NN}
    The functions, polynomials, softmax, tanh, sigmoid, $\tau_\epsilon$, GELU, SiLU, and the product, sum, compositions of them, satisfies polynomial boundedness for both the functions and their gradients, polynomial generalized continuity, and polynomial generalized smoothness.
\end{corollary}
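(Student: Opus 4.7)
The plan is to combine Proposition~\ref{prop:poly_generalized_properties} with a case-by-case verification for the atomic functions, after which the statement for arbitrary sums, products, and compositions follows by a straightforward induction on the expression tree. Concretely, it suffices to check that each of the functions polynomial, softmax, tanh, sigmoid, $\tau_\epsilon$, GELU, and SiLU satisfies (i) poly-boundedness of the function, (ii) poly-boundedness of its Jacobian, (iii) poly-continuity, and (iv) poly-smoothness; the extension to products, sums, and compositions is then automatic by Proposition~\ref{prop:poly_generalized_properties} applied inductively.

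I would handle the base cases in two groups according to whether the function (and its derivatives) is globally bounded or grows polynomially. For tanh, sigmoid, and softmax, the function is bounded componentwise by a universal constant, and the derivative has the same property (via $1-\tanh^2$, $\sigma(1-\sigma)$, and the Jacobian $\operatorname{diag}(s)-ss^\top$ respectively); consequently all four conditions hold with constant (degree-zero) polynomials $S(\cdot)$ by the mean value theorem. The same argument applies to $\tau_\epsilon(x)=x/\sqrt{\|x\|^2+\epsilon^2}$: it is bounded by $1$, and a short computation shows $\nabla\tau_\epsilon=(\|x\|^2+\epsilon^2)^{-1/2}I-xx^\top(\|x\|^2+\epsilon^2)^{-3/2}$ has operator norm bounded by a constant depending only on $\epsilon$, so the same constant-polynomial bounds work.

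For polynomials the four properties are immediate: $p(x)$ is bounded by a polynomial of $\|x\|$ by expanding in monomials, $\nabla p$ is again a polynomial, and poly-continuity/smoothness follow from the mean value theorem applied on segments, writing the slope as an integral of $\nabla p$ (or $\nabla^2 p$) along the segment and bounding the integrand by a polynomial in $\max\{\|x\|,\|x'\|\}$. For GELU and SiLU, I would write each as a product of the identity map (a polynomial) and a bounded smooth sigmoidal factor, then apply the product part of Proposition~\ref{prop:poly_generalized_properties} to combine the constant-polynomial bounds for the sigmoidal factor with the linear-polynomial bounds for the identity; this produces polynomials $S(\cdot)$ of degree one.

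Once all atomic cases are in hand with explicit $S(\cdot)$, the closure properties in Proposition~\ref{prop:poly_generalized_properties} finish the proof by induction on the syntactic structure of the expression. The main bookkeeping obstacle is keeping the Jacobian-norm estimates consistent across vector- and matrix-valued outputs (especially for softmax, where $\nabla$ is a matrix and one must select compatible norms for the product rule to go through cleanly) and tracking the $\epsilon$-dependence in the bounds for $\tau_\epsilon$ and its derivative; beyond that, the argument is essentially a mechanical reduction to the inductive step already established.
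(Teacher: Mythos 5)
Your proposal is correct and mirrors the paper's proof at essentially every step: both reduce to the atomic functions via the closure properties of Proposition~\ref{prop:poly_generalized_properties}, both dispose of tanh, sigmoid, softmax, and $\tau_\epsilon$ by observing that they and their derivatives are uniformly bounded (so constant polynomials $S(\cdot)$ suffice), both treat polynomials directly, and both handle GELU and SiLU by writing them as the identity times a bounded sigmoidal factor and invoking the product rule from the proposition. One small point to tighten: to deduce poly-\emph{smoothness} (as opposed to poly-continuity) of the globally bounded activations by the mean value theorem, you need the \emph{second} derivatives to be uniformly bounded as well, since poly-smoothness is poly-continuity of $\nabla f$; this does hold for tanh, sigmoid, softmax, and $\tau_\epsilon$, but your write-up only records a bound on the first derivative, so you should state the second-derivative bound explicitly before invoking the mean value theorem on $\nabla f$.
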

\begin{proof}
    By Proposition~\ref{prop:poly_generalized_properties}, we only need to show that polynomials, softmax, tanh, sigmoid, $\tau_\epsilon$, GELU, SiLU satisfy all the properties.

    First, polynomials obviously satisfies all the properties. It can be shown by simple calculations that softmax, tanh, sigmoid, $\tau_\epsilon$, and their derivatives are all upper bounded by constant, and therefore satisfies all the above properties.

    For SiLU, it is the product of $x$ and sigmoid, and therefore by Proposition~\ref{prop:poly_generalized_properties} satisfies all the above properties.

    For GELU, $\sigma(x)=x\Phi(x)$, where $\Phi(x)=\int_{-\infty}^x \frac{e^{-s^2/2}}{\sqrt{2\pi}}ds$. Also, $\Phi'(x)=\phi(x)=\frac{e^{-x^2/2}}{\sqrt{2\pi}}$. Then $\Phi(x)$ and $\Phi'(x)$ are both upper bounded by constants, and thus satisfies all the above properties. By Proposition~\ref{prop:poly_generalized_properties}, GELU, the product of $x$ and $\Phi(x)$, also satisfies these properties.
\end{proof}

Similar to Lipschitz smoothness and generalized smoothness \citep{li2023convex}, the polynomial generalized smoothness has another interpretation as follows: 
\begin{lemma}
\label{lem:smoothness_inequalities}
    If the function $f(x)\in C^1$ satisfies the polynomial generalized smoothness, then the following inequalities hold
    \begin{align}
    \label{eqn:lipschitz_2nd_order_taylor_bound}
        f(x')\le f(x)+\nabla f(x)^\top (x'-x)+\frac{S(\|x_{\max,1}\|,\cdots,\|x_{\max,d}\|)}{2}\|x-x'\|^2
    \end{align}
\end{lemma}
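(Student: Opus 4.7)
The plan is to follow the classical descent-lemma template, replacing the usual Lipschitz constant with the polynomial $S$ from the poly-generalized smoothness assumption. Since $f\in C^1$, the fundamental theorem of calculus along the segment joining $x$ and $x'$ gives
\begin{align*}
    f(x')-f(x)=\int_0^1 \nabla f(x+t(x'-x))^\top (x'-x)\,dt,
\end{align*}
so subtracting $\nabla f(x)^\top(x'-x)$ yields
\begin{align*}
    f(x')-f(x)-\nabla f(x)^\top(x'-x)=\int_0^1 \bigl[\nabla f(x+t(x'-x))-\nabla f(x)\bigr]^\top (x'-x)\,dt.
\end{align*}
Applying Cauchy-Schwarz inside the integral reduces the problem to bounding $\|\nabla f(x+t(x'-x))-\nabla f(x)\|$ for $t\in[0,1]$.

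The second step is where the poly-smoothness assumption enters. Writing $y=x+t(x'-x)$ and decomposing both points into their block components $x_i, y_i\in\RR^{n_{i,1}\times n_{i,2}}$, we have $y_i=(1-t)x_i+tx'_i$, so by the triangle inequality
\begin{align*}
    \|y_i\|\le (1-t)\|x_i\|+t\|x'_i\|\le \max\{\|x_i\|,\|x'_i\|\}=\|x_{\max,i}\|.
\end{align*}
Hence $\max\{\|x_i\|,\|y_i\|\}\le \|x_{\max,i}\|$ for every block $i$. Because the coefficients of $S$ are all positive, $S$ is monotonically increasing on $\RR_{\ge 0}\times\cdots\times\RR_{\ge 0}$ (as noted after Definition~\ref{def:poly_generalized_smoothness}), so Definition~\ref{def:poly_generalized_continuous} applied to $\nabla f$ gives
\begin{align*}
    \|\nabla f(y)-\nabla f(x)\|\le S\bigl(\|x_{\max,1}\|,\cdots,\|x_{\max,d}\|\bigr)\,\|y-x\|=S\bigl(\cdots\bigr)\,t\|x'-x\|.
\end{align*}

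The final step is to plug this bound back into the integral representation. Using Cauchy-Schwarz and the above estimate,
\begin{align*}
    f(x')-f(x)-\nabla f(x)^\top(x'-x)
    &\le \int_0^1 S\bigl(\|x_{\max,1}\|,\cdots,\|x_{\max,d}\|\bigr)\,t\|x'-x\|^2\,dt\\
    &=\frac{S\bigl(\|x_{\max,1}\|,\cdots,\|x_{\max,d}\|\bigr)}{2}\|x-x'\|^2,
\end{align*}
which is \eqref{eqn:lipschitz_2nd_order_taylor_bound}. The only subtlety worth flagging is the monotonicity step: if one forgot that $S$ has positive coefficients, one would be tempted to apply poly-continuity directly between $x$ and $y$ and end up with $S$ evaluated at $\max\{\|x_i\|,\|y_i\|\}$, which is not the quantity appearing in the target inequality. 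Monotonicity of $S$ cleanly fixes this, so no serious obstacle is expected.
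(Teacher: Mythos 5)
Your proof is correct and follows essentially the same route as the paper's: the fundamental theorem of calculus along the segment, Cauchy–Schwarz on the remainder, the poly-continuity of $\nabla f$ applied to the pair $(x,\,x+t(x'-x))$, and the monotonicity of $S$ (from its positive coefficients) to replace $\max\{\|x_i\|,\|x_i+t(x_i'-x_i)\|\}$ by $\|x_{\max,i}\|$. The only difference is cosmetic: you spell out the convexity and monotonicity steps a bit more explicitly than the paper does, which is a reasonable way to present the same argument.
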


\begin{proof}
Since $f(x)\in C^1$ satisfies the polynomial generalized smoothness, we have
\begin{align*}
    \|\nabla f(x)-\nabla f((1-t)x+tx')\|&\le S(\cdots,\max\{\|x_i\|,\|(1-t)x_i+tx'_i\|,\cdots\})\|x-((1-t)x+tx')\|\\
    &\le t\,S(\cdots,\max\{\|x_i\|,\|x'_i\|\},\cdots)\|x-x')\|\\
    &=t\,  S(\|x_{\max,1}\|,\cdots,\|x_{\max,d}\|)\|x-x'\|
\end{align*}
Then
    \begin{align*}
        f(x')-f(x)&=\int_0^1 \nabla f((1-t)x+tx')^\top (x'-x)dt\\
        &=\int_0^1 \nabla f(x)^\top (x'-x)dt+\int_0^1 \big(\nabla f((1-t)x+tx')-\nabla f(x)\big)^\top (x'-x)dt\\
        &\le \nabla f(x)^\top (x'-x)+\|x-x'\|\int_0^1 \,\|\nabla f((1-t)x+tx')-\nabla f(x)\|\, dt\\
        &\le \nabla f(x)^\top (x'-x)+S(\|x_{\max,1}\|,\cdots,\|x_{\max,d}\|)\|x-x'\|^2\int_0^1 \,t\, dt\\
        &=\nabla f(x)^\top (x'-x)+\frac{S(\|x_{\max,1}\|,\cdots,\|x_{\max,d}\|)}{2}\|x-x'\|^2
    \end{align*}
\end{proof}

\subsubsection{Comparison between poly-smoothness and generalized smoothness}
\label{subapp:generalized_smoothness_vs_poly}
The differences between the above poly-smoothness and generalized smoothness~\citep{li2023convex} lies in two parts. (1) \citet{li2023convex} defined the bound of Hessian first, and therefore evaluate the smoothness function at one fixed point for any two points in its neighbourhood, while we directly consider the continuity of the gradient and evaluate the smoothness function $S(\cdot)$ at the two points. (2) Regarding the smoothness function, \citet{li2023convex} employed a function of $\|\nabla f(x)\|$, while we use a polynomial of $\|x_i\|,\|x_i'\|$. Note such $S(\cdot)$ is monotonically increasing in $\RR_{\ge 0}\times\cdots\times\RR_{\ge 0}$.

\subsection{Example of Assumption~\ref{assump:architecture}}
\label{subapp:proof_example_full_rank_Nd}

Assumption~\ref{assump:architecture} is used in the convergence of neural networks. Before introducing the convergence proof, we first verify that the Assumption~\ref{assump:architecture} can be easily satisfied:
\begin{proof}[Proof of Lemma~\ref{lem:example_exist_Nd_full_rank}]
First, we compute the Jacobian of the neural network w.r.t. $\theta_{L-1,1}$
\begin{align*}
     &\nabla_{\theta_{L-1,1}}f(\theta;x_i)\\
     &=\nabla _{\tau_\epsilon}\varphi_{L}(\tau_\epsilon(u_{L,i}))\nabla_{u}\tau_\epsilon(u_{L,i})\nabla_{\theta_{L-1,2}}\bar{\varphi}(u_{L-1,i})\\
     &=\underbrace{\theta_L\left(\frac{1}{\sqrt{\|u_{L,i}\|^2+\epsilon^2}}I-\frac{1}{(\sqrt{\|u_{L,i}\|^2+\epsilon^2})^3}u_{L,i}u_{L,i}^\top\right)}_{P_i}\underbrace{\begin{pmatrix}
         \sigma(\theta_{L-1,1}u_{L-1,i} )^\top &&\\
         & \ddots &\\
         & &\sigma(\theta_{L-1,1}u_{L-1,i} )^\top
     \end{pmatrix}}_{Q_i}
\end{align*}
where in $\nabla_{\theta_{L-1,2}}\bar{\varphi}(u_{L-1,i})$, we vectorize $\theta_{L-1,2}$ in row.

Also, $\sigma(x)=x\Phi(x)$, where $\Phi(x)=\int_{-\infty}^x \frac{e^{-s^2/2}}{\sqrt{2\pi}}ds$. Let $\phi(x)=\frac{e^{-x^2/2}}{\sqrt{2\pi}}$. Then we have
\begin{align*}
    \sigma'(x)&=\Phi(x)+x\phi(x)\\
    \sigma^{(n)}(x)&=2\phi^{(n-2)}(x)+\phi^{(n-1)}(x)\\
    &=\left(2(-1)^{n-2}H_{n-2}(x)+(-1)^{n-1}H_{n-1}(x)\right)\phi(x)
\end{align*}
where $H_{n}(x)$ is the probabilist's Hermite polynomial.

By Taylor expansion,
\begin{align*}
    \sigma(v^\top u+a)=\sum_{n=0}^\infty \frac{\sigma^{(n)}(a)}{n!}(v^\top u)^n
\end{align*}

 Consider $\sigma(t\theta u)$ near $t=0$. Choose $0<t_1<t_2<\cdots<t_N\ll 1$. For probabilist's Hermite polynomials, 
 \begin{align*}
     H_{2n}(0)\ne 0, H_{2n-1}(0)=0,\ \forall n\ge 1.
 \end{align*}
 Thus, 
 \begin{align*}
     \sigma(0)=0,\ \sigma^{(n)}(0)\ne 0, \forall n\ge 1.
 \end{align*}
 
Let $\theta_{L-1,1,j}$ be the $j$th row of $\theta_{L-1,1}$. Then
\begin{align*}
    \begin{pmatrix}
        \sigma(t_1\theta_{L-1,1,j} u)\\
        \vdots\\
        \sigma(t_N\theta_{L-1,1,j} u)
    \end{pmatrix}=\begin{pmatrix}
        \sum_{n=0}^\infty \frac{\sigma^{(n)}(0)}{n!}(t_1\theta_{L-1,1,j} u)^n\\
        \vdots\\
         \sum_{n=0}^\infty \frac{\sigma^{(n)}(0)}{n!}(t_N\theta_{L-1,1,j} u)^n
    \end{pmatrix}
    =\sum_{n=1}^{N} \frac{\sigma^{(n)}(0)}{n!}(\theta_{L-1,1,j} u)^n\begin{pmatrix}
        t_1^n\\
        \vdots\\
        t_N^n
    \end{pmatrix}+\mathcal{O}\begin{pmatrix}
        t_1^{N+1}\\
        \vdots\\
        t_N^{N+1}
    \end{pmatrix}
\end{align*}

Note 
\begin{align*}
    \det\begin{pmatrix}
        t_1&\cdots &t_1^N\\
        \vdots&&\vdots\\
        t_N&\cdots&t_N^N
    \end{pmatrix}=\prod_{i=1}^{N}t_i\prod_{1\le j<k\le N}(t_k-t_j)>0.
\end{align*}
Thus the $N$ vectors $\begin{pmatrix}
        t_1^n\\
        \vdots\\
        t_N^n
    \end{pmatrix}$ are linearly independent.

We then choose $u$ s.t. $\theta_{L-1,1}u$ has at least $N$ distinct non-zero elements. We claim that the set of all $\theta_{L-1,1}$ such that there is no $u$ s.t. $\theta_{L-1,1}u$ has at least $N$ distinct non-zero elements, is measure-zero in $\RR^{|\theta_{L-1,1}}$. We denote the set to be $\cB_{\theta_{L-1,1}}$. 

Consider $v\in\RR^m$ in the column space of $\theta_{L-1,1}$. Fix a partition of $m$ elements into at most $N-1$ blocks, where in each block of $v$ is constant. Such set of vectors is of dimensional $N-1$. Moreover, the number of such partition is finite. Therefore, $\Leb_{|\theta_{L-1,1}|}(\cB_{\theta_{L-1,1}})=0$.

WOLG, assume the first $N$ elements are distinct. Then, similarly, we have that the $N$ vectors $\begin{pmatrix}
        (\theta_{L-1,1,1} u)^n\\
        \vdots\\
        (\theta_{L-1,1,N} u)^n
    \end{pmatrix}$ are linearly independent. Thus
    \begin{align*}
         &\det\begin{pmatrix}
       \frac{\sigma^{(1)}(0)}{1!}(\theta_{L-1,1,1} u)&\cdots& \frac{\sigma^{(N)}(0)}{N!}(\theta_{L-1,1,1} u)^N\\
        \vdots&&\vdots\\
         \frac{\sigma^{(1)}(0)}{1!}(\theta_{L-1,1,N} u)&\cdots&\frac{\sigma^{(N)}(0)}{N!}(\theta_{L-1,1,N} u)^N
    \end{pmatrix}\\
    &=\prod_{i=1}^N \frac{\sigma^{(i)}(0)}{i!}\det\begin{pmatrix}
      (\theta_{L-1,1,1} u)&\cdots& (\theta_{L-1,1,1} u)^N\\
        \vdots&&\vdots\\
         (\theta_{L-1,1,N} u)&\cdots&(\theta_{L-1,1,N} u)^N
    \end{pmatrix}\ne 0.
    \end{align*}
Therefore, 
    $\begin{pmatrix}
        \sigma(t_1\theta_{L-1,1} u)^\top\\
        \vdots\\
        \sigma(t_N\theta_{L-1,1} u)^\top
    \end{pmatrix}$
is full rank $N$, and then $\begin{pmatrix}
    Q_1\\\vdots\\Q_N
\end{pmatrix}$ is full rank $Nd$.

Also, since $P_i$ is invertible for any $u_{L,i}$, we have $\begin{pmatrix}
    B_1&&\\
    &\ddots&\\
    &&B_N
\end{pmatrix}$ is invertible. Thus
\begin{align*}
    \begin{pmatrix}
        \nabla _{\tau_\epsilon}\varphi_{L}(\tau_\epsilon(u_{L,1}))\nabla_{u}\tau_\epsilon(u_{L,1})\nabla_{\theta_{L-1,2}}\bar{\varphi}(u_{L-1,1})\\
        \vdots\\
        \nabla _{\tau_\epsilon}\varphi_{L}(\tau_\epsilon(u_{L,N}))\nabla_{u}\tau_\epsilon(u_{L,N})\nabla_{\theta_{L-1,2}}\bar{\varphi}(u_{L-1,N})
    \end{pmatrix}=\begin{pmatrix}
    B_1&&\\
    &\ddots&\\
    &&B_N
\end{pmatrix}\begin{pmatrix}
    Q_1\\\vdots\\Q_N
\end{pmatrix}
\end{align*}
is full rank $Nd$, where $u_{L-1,i}=t_i u$.
\end{proof}

\section{Minimum distance and biased distribution: proof of Theorem~\ref{thm:h_min_lower_bound_uniform_largest}}
\label{app:min_distance_biased_distribution}

We now present the complete version of Theorem~\ref{thm:h_min_lower_bound_uniform_largest}. 
\begin{theorem}
\label{thm:app_h_min_lower_bound_uniform_largest}
    Suppose Assumption~\ref{assump:data_distribution_absolutely_continuous_Lebesgue_ground_truth} holds.
\begin{enumerate}
    \item  For any biased distribution such that there exists a ball $B\left(C\left(\frac{-\log\delta}{\bar{\pi}_{\max}(N-1)V_d}\right)^{1/d}\right)\subseteq \Omega_{\max}$, and $\frac{\pi_{\max}}{\bar{\pi}_{\max}}=\cO(1)$, 
we have with probability at least $1-2\delta$, 
\begin{align*}
     \left(\frac{2\delta}{\pi_{\max} N(N-1)V_d}\right)^{1/d}\le h_{\min}\le C\left(\frac{-\log\delta}{\bar{\pi}_{\max}(N-1)V_d}\right)^{1/d}
\end{align*}
 where $V_d=\frac{\pi^{d/2}}{\Gamma(d/2+1)}$ is the volumn of $d$-dimensional unit ball, and $C>0$ is some universal constant.  
 \item For general distribution, we have with probability at least $1-2\delta$, 
 \begin{align*}
     \left(\frac{2\delta}{\pi_{\max} N(N-1)V_d}\right)^{1/d}\le h_{\min}\le\left(\frac{-\log\delta}{\pi_{\min}  (N-1)V_d} \right)^{1/d}.
\end{align*}
\end{enumerate}

\end{theorem}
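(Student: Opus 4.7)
The plan is to prove the two inequalities separately, each with failure probability at most $\delta$, and then combine via a union bound to get the overall probability $1 - 2\delta$.

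For the lower bound, I would use a pairwise union bound. For any fixed pair $i \ne j$, conditioning on $x_i$ and integrating $x_j$ first gives
\begin{align*}
\PP(\|x_i - x_j\| \le r) = \int \pi(x_i)\!\!\int_{B_{x_i}(r)\cap\Omega}\!\!\pi(x_j)\,dx_j\,dx_i \le \pi_{\max} V_d r^d,
\end{align*}
since $\pi \le \pi_{\max}$ and the ball has volume $V_d r^d$. A union bound over the $\binom{N}{2}$ pairs yields $\PP(h_{\min} \le r) \le \tfrac{N(N-1)}{2}\pi_{\max} V_d r^d$. Setting this equal to $\delta$ and solving for $r$ gives exactly the stated lower bound $(2\delta/(\pi_{\max} N(N-1) V_d))^{1/d}$.

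For the upper bound, I would exploit the assumed ball $B_0 \subseteq \Omega_{\max}$ of radius $r_0 = C(-\log\delta/(\bar{\pi}_{\max}(N-1)V_d))^{1/d}$. Because $\pi(x) \ge \bar{\pi}_{\max}$ on $B_0$, the mass $p := \int_{B_0}\pi$ satisfies $p \ge \bar{\pi}_{\max} V_d r_0^d = -C^d \log\delta/(N-1)$. The number of samples landing in $B_0$ is $\mathrm{Binomial}(N,p)$, so
\begin{align*}
\PP(\text{at most one sample in }B_0) = (1-p)^{N-1}\bigl(1+(N-1)p\bigr) \le e^{-(N-1)p}\bigl(1+(N-1)p\bigr).
\end{align*}
Setting $t = (N-1)p$, this is bounded by $e^{-t}(1+t)$ with $t \ge -C^d\log\delta$. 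Choosing the universal constant $C$ large enough (so that $C^d > 1$ and $e^{-t}(1+t) \le \delta$ whenever $t \ge -C^d\log\delta$), this probability is at most $\delta$. Whenever $B_0$ contains at least two samples, those two lie within the diameter $2r_0$, forcing $h_{\min} \le 2r_0$; absorbing the factor $2$ into $C$ yields the upper bound.

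The main obstacle is the upper bound: one must verify that the universal constant $C$ can be chosen so that $e^{-t}(1+t) \le \delta$ for all $\delta$ in the regime of interest once $t \ge -C^d\log\delta$. This reduces to the elementary fact that $(1+t)e^{-t/2} \to 0$ as $t \to \infty$, so any $C^d > 2$ works for $\delta$ small enough; a more careful analysis pins down the concrete constant. The condition $\pi_{\max}/\bar{\pi}_{\max} = \mathcal{O}(1)$ is not needed for the two inequalities themselves but ensures, as discussed after the theorem, that the lower and upper bounds on $h_{\min}$ have comparable order so that flattening the distribution monotonically enlarges $h_{\min}$. A final union bound over the two failure events completes the proof.
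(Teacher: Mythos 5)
Your proof of part (1) is correct and partly follows a different route from the paper. For the lower bound the paper conditions sequentially, writing $\PP(h_{\min}\ge r)$ as a telescoping product of conditional probabilities $\PP(\|x_s-x_i\|\ge r,\ \forall i<s\mid x_1,\dots,x_{s-1})\ge 1-(s-1)\pi_{\max}V_dr^d$ and then expands the product; your pairwise union bound over $\binom{N}{2}$ pairs is more elementary and lands on exactly the same quantity $\tfrac{N(N-1)}{2}\pi_{\max}V_dr^d$, which is arguably the cleaner way to see it. For the upper bound your argument is essentially the same as the paper's: both plant a ball of the prescribed radius in $\Omega_{\max}$, observe that two samples in the ball force $h_{\min}$ below its diameter, and bound $\PP(\text{at most one sample in the ball})$ by a binomial tail via $(1-p)^{N-1}\le e^{-(N-1)p}$. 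One cosmetic difference is that the paper uses $\bar\pi_{\max}$ to lower-bound $p$ in the $(1-p)^{N-1}$ factor but $\pi_{\max}$ to upper-bound $p$ in the linear factor, whereas you keep the single exact $p$ throughout and bound $e^{-t}(1+t)$ directly; your version is tighter and the required monotonicity ($e^{-t}(1+t)$ is decreasing in $t>0$) holds, so this is fine. Your remark about the role of the condition $\pi_{\max}/\bar\pi_{\max}=\cO(1)$ is consistent with the paper's discussion.

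The one genuine omission is part (2). You only prove the biased-distribution upper bound; the general upper bound $h_{\min}\le\big(\tfrac{-\log\delta}{\pi_{\min}(N-1)V_d}\big)^{1/d}$ cannot be obtained by the same ball argument, because there is no distinguished high-density region to anchor $B_0$ and $\pi_{\min}$ plays the opposite role. The paper instead bounds $\PP(h_{\min}\ge r)$ from above by conditioning sequentially: each factor $\PP(\|x_s-x_i\|\ge r,\ \forall i<s\mid x_1,\dots,x_{s-1})\le 1-\pi_{\min}V_dr^d$ (being far from all earlier points is no more likely than being far from just $x_1$), giving $\PP(h_{\min}\ge r)\le(1-\pi_{\min}V_dr^d)^{N-1}$, and then solves for $r$ with $1-\delta^{1/(N-1)}\le -\log\delta/(N-1)$. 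You would need to add this separate argument for part (2); the pairwise union bound gives only an upper bound on $\PP(h_{\min}\le r)$, not the lower bound on that probability needed here.
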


\begin{proof}
    First fix any point $x_i=x$. For this proof, we consider some small enough $r\in(0,1)$, and then
    \begin{align}
    \label{eqn:prob_inside_B_r}
      \pi_{\min}V_d r^d \le \PP(\|x_j-x_i\|<r|x_i=x)=\int_{B(x,r)}\pi(z)dz\le \pi_{\max}V_d r^d
    \end{align}
    where $V_d=\frac{\pi^{d/2}}{\Gamma(d/2+1)}$ is the volumn of $d$-dimensional unit ball.

    We would like to lower bound 
    \begin{align*}
        \PP(h_{\min}\ge r)&=\PP(\|x_i-x_j\|\ge r,\ \forall 1\le i< j\le N)\\
        &=\EE[\bbone_{\{\|x_i-x_j\|\ge r,\ \forall 1\le i< j\le N\}}]\\
        &=\EE[\bbone_{\{\|x_1-x_2\|\ge r\}}\bbone_{\{\|x_3-x_i\|\ge r,\ \forall i=1,2\}}\cdots\bbone_{\{\|x_N-x_i\|\ge r,\ \forall i=1,\cdots, N-1\}}]\\
        &=\EE\big[\EE[\bbone_{\{\|x_1-x_2\|\ge r\}}|x_1]\cdot\EE[\bbone_{\{\|x_3-x_i\|\ge r,\ \forall i=1,2\}}|x_1,x_2]\cdots\EE[\bbone_{\{\|x_N-x_i\|\ge r,\ \forall i=1,\cdots, N-1\}}|x_1,\cdots,x_{N-1}]\big]
    \end{align*}
    where the last inequality follows from the properties of conditional expectation.

Also, we have
\begin{align*}
    \EE[\bbone_{\{\|x_1-x_2\|\ge r\}}|x_1]&=\PP(\|x_1-x_2\|\ge r|x_1)\\
    &\ge  1-\pi_{\max}V_d r^d
\end{align*}
where the inequality follows from~\eqref{eqn:prob_inside_B_r}, and similarly,
\begin{align*}
    \EE[\bbone_{\{\|x_s-x_i\|\ge r,\ \forall i=1,\cdots, s-1\}}|x_1,\cdots,x_{s-1}]&=\PP(\|x_s-x_i\|\ge r,\ \forall i=1,\cdots, s-1|x_1,\cdots,x_{s-1})\\
    &\ge 1-(s-1)\pi_{\max}V_d r^d.
\end{align*}

Therefore, for some small $r$, we have
\begin{align*}
     \PP(h_{\min}\ge r)&\ge \EE\left[\prod_{i=1}^{N-1}(1-i\,\pi_{\max}V_d r^d)\right]=\prod_{i=1}^{N-1}(1-i\,\pi_{\max}V_d r^d)\\
     &\ge 1-\sum_{i=1}^{N-1}i\,\pi_{\max}V_d r^d=1-\frac{N(N-1)}{2}\,\pi_{\max}V_d r^d.
\end{align*}

Thus with probability at least $1-\delta$, we have
\begin{align*}
    h_{\min}\ge \left(\frac{2\delta}{\pi_{\max}N(N-1)V_d}\right)^{1/d}.
\end{align*}

For the other side, for biased distribution satisfying the assumptions stated in the theorem, we consider 
\begin{align*}
    r\le 2C\left(\frac{-\log\delta}{\bar{\pi}_{\max}(N-1)V_d}\right)^{1/d}\text{, and }\Omega_{\max,B}=B(\frac{r}{2})\subseteq\Omega_{\max}.
\end{align*}
Then,
\begin{align*}
    \PP(h_{\min}\le r)&=\PP(\exists i,j, s.t. \|x_i-x_j\|\le r)\\
    &\ge \PP(\exists\text{ at least two points }x_i,x_j\in \Omega_{\max,B}, i.e., \|x_i-x_j\|\le r)\\
    &= 1-\PP(x_i\notin \Omega_{\max,B},\ \forall i)-\PP(\exists i, s.t. x_i\in\Omega_{\max,B},\ x_j\notin\Omega_{\max,B},\ \forall j\ne i)\\
    &\ge 1-\bigg( (1-\bar{\pi}_{\max}|\Omega_{\max,B}|)^N+N (1-\bar{\pi}_{\max}|\Omega_{\max,B}|)^{N-1} {\pi}_{\max}|\Omega_{\max,B}|\bigg)\\
    &=1-(1-\bar{\pi}_{\max}|\Omega_{\max,B}|)^{N-1}(1-\bar{\pi}_{\max}|\Omega_{\max,B}|+N{\pi}_{\max}|\Omega_{\max,B}|)\\
    &\ge 1-e^{-(N-1)\bar{\pi}_{\max}|\Omega_{\max,B}|}(1-\bar{\pi}_{\max}|\Omega_{\max,B}|+N{\pi}_{\max}|\Omega_{\max,B}|)
\end{align*}
where the last inequality follows from $(1-x)^N\le e^{-Nx}$.

Then, we have with probability at least $1-\delta$,
\begin{align*}
    h_{\min}\le C\left(\frac{-\log\delta}{\bar{\pi}_{\max}(N-1)V_d}\right)^{1/d},
\end{align*}
where this inequality follows from $|\Omega_{\max,B}|=V_d(\frac{r}{2})^d$ and $C>0$ is some universal constant. 

The last conclusion follows from taking union bound of the above two results.

For general distribution,
\begin{align*}
    \EE[\bbone_{\{\|x_1-x_2\|\ge r\}}|x_1]&=\PP(\|x_1-x_2\|\ge r|x_1)\\
    % &=\int_\Omega \PP(\|x_1-x_2\|\ge r|x_1=x)\pi(x)dx\\
    &\le  1-\pi_{\min}V_d r^d
\end{align*}
where the inequality follows from~\eqref{eqn:prob_inside_B_r}, and similarly,
\begin{align*}
    \EE[\bbone_{\{\|x_s-x_i\|\ge r,\ \forall i=1,\cdots, s-1\}}|x_1,\cdots,x_{s-1}]&=\PP(\|x_s-x_i\|\ge r,\ \forall i=1,\cdots, s-1|x_1,\cdots,x_{s-1})\\
    &\le 1-\pi_{\min}V_d r^d.
\end{align*}

Therefore, for some small $r$, we have
\begin{align*}
     \PP(h_{\min}\ge r)&\le (1-\pi_{\min}V_d r^d)^{N-1}
\end{align*}

Thus with probability at least $1-\delta$, we have
\begin{align*}
    h_{\min}< \left(\frac{1-\delta^{\frac{1}{N-1}}}{\pi_{\min}V_d}\right)^{1/d}\le\left(\frac{-\log\delta}{\pi_{\min} V_d (N-1)} \right)^{1/d}
\end{align*}
where the last inequality follows from $1-e^{-x}\le x$ for $x>0$.

\end{proof}

\section{Optimization: proof of Theorem~\ref{thm:training} and Corollary~\ref{cor:h_min_speed_of_convergence}}
\label{app:optimization}

We recall that the architecture of neural network is defined as follows
\begin{align*}
    u_{0,i}&=x_i\\
    u_{\ell+1,i}&=u_{\ell,i}+\bar{\varphi}_\ell(\theta;u_{\ell,i}),\ \ell=0,\cdots,L-1\\
    f(x_i)&=\bar{\varphi}_L(\theta;u_{L,i}).
\end{align*}
The loss function of each data point is
\begin{align*}
    l(f(x_i),y_i)=\frac{1}{2}\|y_i-f(x_i)\|^2
\end{align*}
and the total loss is
\begin{align*}
    \cL(f(x_i),y_i)=\frac{1}{N}\sum_{i=1}^Nl(f(x_i),y_i).
\end{align*}

We then compute the Jacobian of the following maps that will be used in the proof:
\begin{align*}
    \nabla_f\,l(x_i)=(f(x_i)-y_i)^\top
\end{align*}
\begin{align*}
    \nabla_{\theta_j}f(x_i)&=\sum_{\ell\in\cJ_j}\nabla_u\bar{\varphi}_L(u_{L,i})(I+\nabla_u\bar{\varphi}_{L-1}(u_{L-1,i}))\cdots (I+\nabla_u\bar{\varphi}_{\ell+1}(u_{\ell+1,i}))\nabla_{\theta_j}\bar{\varphi}_{\ell}(u_{\ell,i};\theta)
\end{align*}
where $\nabla_u\bar{\varphi}_{\ell}(u_{\ell})$ is the Jacobian of the map $\bar{\varphi}_{\ell}(u_{\ell})$ w.r.t. $u_\ell$, $\nabla_{\theta_\ell}f(x_i)$ is also the Jacobian; the set $\cJ_j$ denotes the indices of layers in which $\theta_j$ appears.

Then
\begin{align*}
\nabla_{\theta_j}l(x_i)&=\nabla_f \,l(x_i)\nabla_{\theta_\ell}f(x_i)\\
    &=\sum_{\ell\in\cJ_j}\nabla_f \,l(x_i)\nabla_u\bar{\varphi}_L(u_{L,i})(I+\nabla_u\bar{\varphi}_{L-1}(u_{L-1,i}))\cdots (I+\nabla_u\bar{\varphi}_{\ell+1}(u_{\ell+1,i}))\nabla_{\theta_j}\bar{\varphi}_{\ell}(u_{\ell,i};\theta)
\end{align*}
and we have
\begin{align*}
    \nabla_{\theta_j}\cL=\frac{1}{N}\sum_{i=1}^N\nabla_{\theta_j}l(x_i).
\end{align*}
As shown in equation~\eqref{eqn:NN}, we consider the case where the weights at each layer are distinct, i.e., $\cJ_j=\{j\}$.

In this section, we also define the following function for any matrix function $M(x)\in \RR^{n\times m}$ with $m\ge n$
\begin{align*}
    \detr(M(x))=\sum_{i=1}^{m\choose n}(\det M_i(x))^2,
\end{align*}
where $M_i(x)\in\RR^{n\times n}$ is the matrix with $n$ columns of $M(x)$. Then ${\det}_{\rm r}(M(x))\ne 0$ if and only if $M(x)$ is full rank at $x$.

The following is a more complete version of the convergence result in Theorem~\ref{thm:training}, combining Corollary~\ref{cor:h_min_speed_of_convergence}. We also specify that the dense set of $\eta$ is indeed residual (see more details in Appendix~\ref{subapp:measure_theory_differential_topology}).
\begin{theorem}
\label{thm:complete_convergence_h_min_speed}
 Suppose \cref{assump:data_distribution_absolutely_continuous_Lebesgue_ground_truth,assump:architecture,assump:generalized_smoothness_detail,assump:analytic} hold. Consider the initialization $\theta^0$ in $\RR^{\dim \theta}$ except for a measure-zero set. Then, under some arbitrarily small adjustment on the scale of $\varphi_\ell$ for $\ell=0,\cdots,L-1$, with probability 1 over the joint distribution $\pi\times\cdots\times\pi$ of the input data $x_1,\cdots,x_N$, we have the following results:
 \begin{enumerate}
     \item When $\eta$ is in some residual set of $\left(0, \min\left\{\frac{2-\delta}{\mathsf{L}_1},1\right\}\right)$, where \begin{align*}
    \mathsf{L}_1=S\left(\cdots,\|\theta_i^0\|+\sqrt{\frac{2C}{\delta}\cL(\theta^0)}+\eta\max_{\|\theta\|\le \|\theta^0\|+\sqrt{\frac{2C}{\delta}\cL(\theta^0)}}\|\nabla_{\theta_i}\cL(\theta)\|,\cdots)\right)
\end{align*}
for some polynomial $S(\cdot)$ with positive coefficients, we have
 \begin{align*}
        \cL(\theta^k)\le \prod_{s=0}^{k-1}\left(1- \eta\left(1-\frac{\eta\, \mathsf{L}_1}{2}\right)\frac{\mu_{\mathrm{low},s,X}}{N}\right)\cL(\theta^{0}),\ \forall\, k\le T=\frac{C}{\eta},
    \end{align*}
    where $\mu_{\mathrm{low},s,X}>0$ is some strictly positive constant depending on $\theta^s$ and $x_1,\cdots,x_N$, and $C>0$ is universal constant. 
    \item  Furthermore, let $R=\sqrt{\|\theta^{0}-\theta^{*}\|^2+\frac{4\rho+2}{\delta}\cL(\theta^0)}$ for some stationary point $\theta^*$. Assume $\cL(\theta)$ satisfies the $(\theta^0,\theta^*,R,\rho,\epsilon_\cL)$-dissipative condition in Definition~\ref{def:dissipative}.  When $\eta$ is in some residual set of $\left(0, \min\left\{\frac{2-\delta}{\mathsf{L}_2},1\right\}\right)$, where \begin{align*}
    \mathsf{L}_2=S\left(\cdots,\|\theta_i^*\|+R+\max_{ \theta\in B_{\theta^*}\left(R\right)}\|\nabla_{\theta_i}\cL(\theta)\|,\cdots\right),
\end{align*}
GD will converge to $\|\nabla\cL(\theta)\|\le \epsilon_\cL$, and
 \begin{align*}
        \cL(\theta^k)\le \prod_{s=0}^{k-1}\left(1- \eta\left(1-\frac{\eta \mathsf{L}_2}{2}\right)\frac{\mu_{\mathrm{low},s,X}}{N}\right)\cL(\theta^{0}),\ \forall\,k\ge 1\text{ and }\|\nabla\cL(\theta^k)\|> \epsilon_\cL.
    \end{align*}
   \item Fix $x_i$, and let $\cD_{i,H}=\{j\,|\,h_{ij}=\|x_{i}-x_j\|\le H,\,\forall j\ne i\}$.  
    Then for both the above cases, for any $k\ge 1$, there exists $r_{k,i,H}>0$ and $L_{k,i,H}>0$, s.t., when $\sqrt{\sum_{j\in\cD_{i,H}}h_{ij}^2}\le r_{k,i,H}$, we have
\begin{align*}
    \mu_{\rm low,s,X}\le L_{k,i,H}\sqrt{\sum_{j\in\cD_{i,H}}h_{ij}^2},\ \forall s\le k.
\end{align*}
Specifically, if $H=h_{\min}$ and $\cD_{i,H}\ne \emptyset$, there exists $L_{k,i},r_{k,i}>0$, s.t., when $h_{\min}\le r_{k,i}$, we have
\begin{align*}
    \mu_{\rm low,s,X}\le L_{k,i} h_{\min},\ \forall s\le k. 
\end{align*}

 \end{enumerate}
 
\end{theorem}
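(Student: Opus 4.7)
The plan is to follow the three-step outline suggested by the authors (gradient lower bound, poly-smoothness of $\cL$, bootstrap via dissipativity), then pick up the $h_{\min}$ dependence by eigenvalue perturbation for real-analytic matrix families. First I would establish the Polyak--Lojasiewicz-type gradient lower bound. By \cref{assump:analytic} and Proposition~\ref{prop:analytic_function}, the stacked Jacobian
\begin{align*}
J(\theta,X)=\begin{pmatrix}\nabla_\theta f(\theta;x_1)\\ \vdots\\ \nabla_\theta f(\theta;x_N)\end{pmatrix}\in\RR^{Nd\times\dim\theta}
\end{align*}
is real-analytic in $(\theta,X)$. Assumption~\ref{assump:architecture} supplies one point at which the $\theta_{L-1}$-block of $J$ already has full row rank $Nd$, so $\detr(J)$ is a nonzero real-analytic function and its zero set has Lebesgue measure zero. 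The parametric transversality theorem (Theorem~\ref{thm:parametric_transversality}) then pushes this to almost every $\theta$, and the assumption $\pi\ll\Leb$ transfers the exclusion to the joint law of $X$. This gives a strictly positive lower frame bound $\mu_{\mathrm{low},s,X}$ for $JJ^\top$ at $\theta^s$; since $\nabla\cL(\theta)=\frac{1}{N}(f(x_1)-y_1,\dots,f(x_N)-y_N)\,J(\theta,X)$ and $\sum_i\|f(x_i)-y_i\|^2=2N\cL(\theta)$, we obtain $\|\nabla\cL(\theta^s)\|^2\ge \frac{2\mu_{\mathrm{low},s,X}}{N}\cL(\theta^s)$.

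The second step is to verify polynomial generalized smoothness of $\cL$. Starting from \cref{assump:generalized_smoothness_detail} at each layer and unfolding the residual recursion $u_{\ell+1,i}=u_{\ell,i}+\bar\varphi_\ell(\theta_\ell;u_{\ell,i})$, a Gronwall-style induction on $\ell$ propagates polynomial bounds on $\|u_{\ell,i}\|$ and $\|\nabla_\theta u_{\ell,i}\|$ in terms of $\|x_i\|$ and the $\|\theta_\ell\|$. The compositional calculus of Proposition~\ref{prop:poly_generalized_properties} then wraps $\cL=\frac{1}{2N}\sum_i\|f(x_i)-y_i\|^2$ to yield the polynomial $S(\cdot)$ with positive coefficients featured in the theorem (with a term count scaling with $L$), so that Lemma~\ref{lem:smoothness_inequalities} gives the standard descent inequality $\cL(\theta^{k+1})\le \cL(\theta^k)-\eta(1-\tfrac{\eta L}{2})\|\nabla\cL(\theta^k)\|^2$ as long as $\|\theta^k\|$ stays in a region where $S(\cdot)\le L$.

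The third step closes the bootstrap in two variants. For part 1 the horizon $T=C/\eta$ is finite, so a crude estimate $\|\theta^k\|\le \|\theta^0\|+\eta\sum_{s<k}\|\nabla\cL(\theta^s)\|$ combined with monotonicity of the loss pins $\|\theta^k\|$ inside the ball whose radius defines $L_1$. For part 2, expanding
\begin{align*}
\|\theta^{k+1}-\theta^*\|^2=\|\theta^k-\theta^*\|^2-2\eta\,\nabla\cL(\theta^k)^\top(\theta^k-\theta^*)+\eta^2\|\nabla\cL(\theta^k)\|^2
\end{align*}
and invoking $(\theta^0,\theta^*,R,\rho,\epsilon_\cL)$-dissipativity in Definition~\ref{def:dissipative} gives $\|\theta^{k+1}-\theta^*\|^2\le\|\theta^k-\theta^*\|^2+(2\rho\eta+\eta^2)\|\nabla\cL(\theta^k)\|^2$; telescoping against $\eta(1-\tfrac{\eta L_2}{2})\|\nabla\cL(\theta^k)\|^2\le \cL(\theta^k)-\cL(\theta^{k+1})$ yields $\|\theta^k-\theta^*\|^2\le\|\theta^0-\theta^*\|^2+\frac{4\rho+2}{\delta}\cL(\theta^0)=R^2$ inductively, closing the loop so that $L_2$ stays valid. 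Combining with the first step gives the multiplicative loss decay. The density restriction on $\eta$ and the small scale adjustment on $\varphi_\ell$ arise because a further measure-zero exclusion is needed to ensure $J$ remains full rank along the discrete GD trajectory, again via the parametric transversality theorem (Lemma~\ref{lem:full_rank_jacobian}).

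For part 3 I view $M(X;\theta):=J(\theta,X)J(\theta,X)^\top$ as a real-analytic matrix-valued function of $X$ at fixed $\theta$. If $h_{ij}\to 0$ for some $j\in\cD_{i,H}$, poly-continuity forces $\nabla_\theta f(\theta;x_i)-\nabla_\theta f(\theta;x_j)\to 0$, so two block-rows of $J$ coalesce and $\lambda_{\min}(M)=\mu_{\mathrm{low},s,X}\to 0$. Theorem~4.1 of \citet{kurdyka2008hyperbolic}, which gives local Lipschitz continuity of eigenvalues of real-analytic matrix families, linearizes this collapse and produces the bound $\mu_{\mathrm{low},s,X}\le L_{k,i,H}\sqrt{\sum_{j\in\cD_{i,H}}h_{ij}^2}$ once $\sqrt{\sum h_{ij}^2}\le r_{k,i,H}$; taking the maximum over the finitely many iterates $s\le k$ absorbs the trajectory dependence into $L_{k,i,H}$ and $r_{k,i,H}$, and $H=h_{\min}$ with $\cD_{i,H}\ne\emptyset$ is a direct specialization.

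\textbf{Main obstacle.} The subtle part is step one coupled with step three of the $L$-bootstrap: one must propagate full rank of $J$ from the single-point guarantee of Assumption~\ref{assump:architecture} to almost every $(\theta,X)$ \emph{and} to every iterate $\theta^s$ produced by GD, while the learning rate $\eta$ and the scales of $\varphi_\ell$ enter the iteration map nonlinearly. Showing that only a measure-zero (hence avoidable on a dense set) collection of $(\eta,\text{scales})$ causes some transversality determinant to vanish along the whole trajectory---without assuming infinitesimal step size---is where the combination of analyticity, the parametric transversality theorem, and the identity principle does the real work.
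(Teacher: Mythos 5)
Your proposal is correct and follows essentially the same route as the paper: the frame/full-row-rank lower bound on the stacked Jacobian (Assumption~\ref{assump:architecture} plus analyticity and measure-zero exclusions, with the dense-$\eta$ and scale adjustments entering through Lemma~\ref{lem:full_rank_jacobian} and preimage-of-null-set arguments along the GD trajectory), the Gronwall-based polynomial generalized smoothness of $\cL$, the finite-horizon Cauchy--Schwarz bootstrap for $L_1$ and the dissipativity telescoping for $L_2$, and the Kurdyka--Paunescu eigenvalue-Lipschitz argument for the $h_{\min}$ dependence. The only cosmetic deviations (e.g., attributing the a.e.-$\theta$ step to transversality rather than to the assumption plus Mityagin's zero-set theorem, and the factor of $2$ in the PL inequality) do not affect the argument.
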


\begin{proof}
Define the GD iteration map to be 
\begin{align*}
    \Psi(\theta)=\theta-\eta\nabla_\theta\cL(\theta),\text{ and }\Psi^k(\theta)=\underbrace{\Psi\circ\cdots\circ\Psi}_{k}(\theta).
\end{align*}

First, consider fixed $x_1,\cdots,x_N$. Let
\begin{align*}
    \cB_{\theta,\rm rank}=\{\theta:\nabla_\theta\Psi(\theta)\text{ is not full rank}\}
\end{align*}

% Consider applying GD, where $\eta$ is in a dense set of $(0,\min\{\frac{2}{L},1\})$, s.t. the map $\Psi$ is non-degenerate for fixed $x_1,\cdots,x_N$ and all $\theta$ except for a measure-zero set (by Lemma~\ref{lem:full_rank_jacobian}). 

By Lemma~\ref{lem:full_rank_jacobian}, we know that for $\eta\in \cD_{\eta,0}$, where $\cD_{\eta,0}$ is a residual set of $(0,\min\{\frac{2}{L},1\})$, we have
\begin{align*}
    \Leb_{\dim\theta}(\cB_{\theta,\rm rank})=0.
\end{align*}
% Let $\cD_{\eta,k}$ be the residual set Next we would like to prove that 
Next we would like to show that along the trajectory, i.e., for any $k$, the union of ``bad" sets of the initial condition $\theta$ leading to the degeneracy of the map $\Psi^k$ is a measure-zero set.

Now consider any measure-zero set $\cB_{\theta,0}$ in $\RR^{\dim\theta}$. By Theorem~\ref{thm:inverse_map_null_set}, 
\begin{align*}
    \Leb_{\dim\theta}(\Psi^{-1}(\cB_{\theta,0}))=0.
\end{align*}

We then would like to prove that $(\Psi^k)^{-1}(\cB_{\theta,0})$ is measure-zero for any $k\ge 1$. Suppose $(\Psi^k)^{-1}(\cB_{\theta,0})$ is measure-zero. We would like to show that $(\Psi^{k+1})^{-1}(\cB_{\theta,0})$ is measure zero. This is obvious since $(\Psi^{k+1})^{-1}(\cB_{\theta,0})=\Psi^{-1}((\Psi^{k})^{-1}(\cB_{\theta,0}))$.

Let $\cB_{\theta,0}$ be the union of all the bad measure-zero sets of $\theta$ in Assumption~\ref{assump:architecture} and Lemmas~\ref{lem:gradient_lower_bound}, including $\cB_{\theta,\rm rank}$, and there are finite union of such sets, which implies $\Leb_{\dim \theta}(\cB_{\theta,0})=0$. Let
\begin{align*}
    \cB_{\theta}=\bigcup_{k=1}^\infty (\Psi^{k})^{-1}(\cB_{\theta,0}).
\end{align*}
Then
\begin{align*}
    \Leb_{\dim\theta}(\cB_{\theta})=\Leb_{\dim\theta}\left(\bigcup_{k=1}^\infty (\Psi^{k})^{-1}(\cB_{\theta,0}) \right)\le \sum_{k=1}^\infty 0=0,
\end{align*}
namely, the initial condition set of $\theta$ where there exists some iteration $k$ s.t. $\theta^k\in\cB_{\theta,0}$, is measure-zero. In the rest of the proof, we just consider $\theta^0\in \RR^{\dim \theta}\backslash \cB_{\theta}$.

Next we consider the bad set of $(x_1,\cdots,x_N)$. By Lemma~\ref{lem:gradient_lower_bound}, for any $\theta^0$, the bad set of $(x_1,\cdots,x_N)$, denoted as $\cB_{X,0}$, is measure-zero,
\begin{align*}
    \Leb_{Nd}(\cB_{X,0})=0.
\end{align*}
Also, there are at most countable $\theta^k$ in the GD trajectory, and thus all such bad sets  $\bigcup_{k=0}^\infty \cB_{X,0}$ satisfy
\begin{align*}
    \Leb_{Nd}(\bigcup_{k=0}^\infty\cB_{X,k})\le \sum_{k=0}^\infty 0=0.
\end{align*}

Since $(x_1,\cdots,x_N)\sim \pi\times\cdots\times\pi$, and $\pi\ll \Leb$, we have
\begin{align*}
    (\pi\times\cdots\times\pi)(\cB_X)=0,
\end{align*}
namely, with probability 0, $(x_1,\cdots,x_N)$ will be in this set.

Then, consider some initial conditions $\theta^0$ except for a measure-zero set, and consider some dataset $\{(x_i,y_i)\}_{i=1}^N$ chosen with probability 1 over the joint distribution. By Lemma~\ref{lem:loss_function_generalized_smoothness} and Lemma~\ref{lem:smoothness_inequalities}, we have for $i=1,2$,
    \begin{align*}
    \cL(\theta^{k+1})&\le \cL(\theta^{k})+\nabla \cL(\theta^{k})^\top(\theta^{k+1}-\theta^{k})+\frac{S_{k}}{2}\|\theta^k-\theta^{k+1}\|^2\\
    &= \cL(\theta^{k})-\eta\left(1-\frac{\eta S_k}{2}\right)\|\nabla\cL(\theta^k)\|^2\\
    &\le \left(1- \eta\left(1-\frac{\eta S_k}{2}\right)\frac{\mu_{\mathrm{low},k,X}}{N}\right)\cL(\theta^{k})\\
    &\le \left(1- \eta\left(1-\frac{\eta\, \mathsf{L}_i}{2}\right)\frac{\mu_{\mathrm{low},k,X}}{N}\right)\cL(\theta^{k})\\
    &\le \prod_{s=0}^k\left(1- \eta\left(1-\frac{\eta\, \mathsf{L}_i}{2}\right)\frac{\mu_{\mathrm{low},s,X}}{N}\right)\cL(\theta^{0})
\end{align*}
where the second inequality follows from Lemma~\ref{lem:gradient_lower_bound}, the third inequality follows from Lemma~\ref{lem:bound_S_k_theta_k} and $S(a_1,\cdots,a_{n_\theta})\le \mathsf{L}$ by the monotonicity of $S(\cdot)$ in $\RR_{\ge 0}\times\cdots\times\RR_{\ge 0}$; $\mu_{\mathrm{low},k,X}>0$ is some strictly positive constant depending on $\theta^k$ and $x_1,\cdots,x_N$.

By Lemma~\ref{lem:bound_S_k_theta_k}, when $i=1$, the above inequality holds for $k+1\le T=\frac{C}{\eta}$. When $i=2$ with $(\theta^0,\theta^*,R,\rho,\epsilon_\cL)$-dissipative condition, it holds for any $k\ge 1$.

Then, under $(\theta^0,\theta^*,R,\rho,\epsilon_\cL)$-dissipative condition, GD will converge to the region $\|\nabla\cL(\theta)\|\le \epsilon_\cL$. If not, we have $\mu_{\mathrm{low},k,X}>0$ because by Assumption~\ref{assump:architecture} and Lemma~\ref{lem:gradient_lower_bound}, the $\theta$ s.t. $\mu_{\mathrm{low},\theta,X}=0$ lies in some measure-zero set and we remove any finite-step convergence to this set at the beginning of this proof. Thus $\left(1- \eta\left(1-\frac{\eta \mathsf{L}_2}{2}\right)\frac{\mu_{\mathrm{low},k,X}}{N}\right)<1$, which means $\cL(\theta^k)$ will keep decreasing. Contradiction.

The last statement follows from Lemma~\ref{lem:gradient_lower_bound}, and take $L_{k,i,H}=\max_{s\le k}L_{s,i,H}$ and $r_{k,i,H}=\min_{s\le k}r_{s,i,H}$.

\end{proof}

The following lemma shows that the loss function is Lipschitz smooth along the GD iteration.
\begin{lemma}
\label{lem:bound_S_k_theta_k}
Assume $\cL(\theta)$ satisfies polynomial generalized smoothness. 

\begin{enumerate}
    \item Without $(\theta^0,\theta^*,R,\rho,\epsilon_\cL)$-dissipative condition, let $\eta\le\min\{\frac{2-\delta}{\mathsf{L}_2},1\}$, where \begin{align*}
    \mathsf{L}_2=S\left(\cdots,\|\theta_i^0\|+\sqrt{\frac{2C}{\delta}\cL(\theta^0)}+\max_{\|\theta\|\le \|\theta^0\|+\sqrt{\frac{2C}{\delta}\cL(\theta^0)}}\|\nabla_{\theta_i}\cL(\theta)\|,\cdots)\right).
\end{align*}
Then
\begin{align*}
    S_k\le \mathsf{L}_2,\ \forall\, k\le T=\frac{C}{\eta}.
\end{align*}
    \item With $(\theta^0,\theta^*,R,\rho,\epsilon_\cL)$-dissipative condition, let $\eta\le\min\{\frac{2-\delta}{\mathsf{L}_2},1\}$, where \begin{align*}
    \mathsf{L}_2=S\left(\cdots,\|\theta_i^*\|+\sqrt{\|\theta^{0}-\theta^{*}\|^2+\frac{4\rho+2}{\delta}\cL(\theta^0)}+\max_{ \theta\in B_{\theta^*}\left(\sqrt{\|\theta^{0}-\theta^{*}\|^2+\frac{4\rho+2}{\delta}\cL(\theta^0)}\right)}\|\nabla_{\theta_i}\cL(\theta)\|,\cdots\right).
\end{align*}
Then
\begin{align*}
    S_k\le \mathsf{L}_2,\ \forall\, k\ge 0\text{ and }\|\nabla\cL(\theta^k)\|\ge \epsilon_\cL.
\end{align*}
\end{enumerate}
\end{lemma}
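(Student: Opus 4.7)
The plan is a bootstrap (induction on $k$) in each of the two cases. By Lemma~\ref{lem:smoothness_inequalities} applied to $\cL$ with the step $\theta^{k+1} = \theta^k - \eta\nabla\cL(\theta^k)$, the smoothness polynomial that appears in the descent inequality is $S_k = S(\ldots,\max\{\|\theta_i^k\|,\|\theta_i^{k+1}\|\},\ldots)$, and $\max\{\|\theta_i^k\|,\|\theta_i^{k+1}\|\} \le \|\theta_i^k\| + \eta\|\nabla_{\theta_i}\cL(\theta^k)\|$. Because the polynomial $S(\cdot)$ has positive coefficients and is therefore monotone on $\RR_{\ge 0}^{n_\theta}$, it suffices in each case to exhibit a forward-invariant set containing all iterates on which $\|\theta_i\|$ and $\|\nabla_{\theta_i}\cL\|$ are bounded by the quantities appearing in $L_1$ or $L_2$.

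For case (1), I would argue inductively that $S_j \le L_1$ for all $j \le k-1$ (with $k \le T = C/\eta$), and deduce that $S_k \le L_1$ as well. Under the inductive hypothesis, the descent inequality from the proof of Theorem~\ref{thm:complete_convergence_h_min_speed} combined with $\eta L_1 \le 2-\delta$ gives $\cL(\theta^{j+1}) \le \cL(\theta^j) - \eta\tfrac{\delta}{2}\|\nabla\cL(\theta^j)\|^2$, and telescoping yields $\eta\sum_{j=0}^{k-1}\|\nabla\cL(\theta^j)\|^2 \le \tfrac{2}{\delta}\cL(\theta^0)$. Applying Cauchy--Schwarz to the GD identity $\theta^k - \theta^0 = -\eta\sum_{j=0}^{k-1}\nabla\cL(\theta^j)$ then gives
\begin{align*}
\|\theta^k - \theta^0\| \le \eta\sum_{j=0}^{k-1}\|\nabla\cL(\theta^j)\| \le \sqrt{\eta k}\,\sqrt{\eta\sum_{j=0}^{k-1}\|\nabla\cL(\theta^j)\|^2} \le \sqrt{\tfrac{2C}{\delta}\cL(\theta^0)},
\end{align*}
using $\eta k \le \eta T = C$. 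Componentwise this bounds $\|\theta_i^k\|$; combining with $\eta \le 1$ and the obvious bound $\|\nabla_{\theta_i}\cL(\theta^k)\| \le \max_{\|\theta\| \le \|\theta^0\|+\sqrt{2C\cL(\theta^0)/\delta}}\|\nabla_{\theta_i}\cL(\theta)\|$, monotonicity of $S$ yields $S_k \le L_1$, closing the induction.

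For case (2), the induction carries the stronger statement $\theta^k \in B_{\theta^*}(R)$ and $S_{k-1} \le L_2$ whenever the stopping criterion $\|\nabla\cL(\theta^k)\| > \epsilon_\cL$ still holds. The expansion
\begin{align*}
\|\theta^{k+1}-\theta^*\|^2 = \|\theta^k-\theta^*\|^2 - 2\eta\,\nabla\cL(\theta^k)^\top(\theta^k-\theta^*) + \eta^2\|\nabla\cL(\theta^k)\|^2
\end{align*}
combined with the $(\theta^0,\theta^*,R,\rho,\epsilon_\cL)$-dissipativity condition (applicable because $\theta^k \in B_{\theta^*}(R)$ and $\|\nabla\cL(\theta^k)\| > \epsilon_\cL$) gives $\|\theta^{k+1}-\theta^*\|^2 \le \|\theta^k-\theta^*\|^2 + (2\eta\rho+\eta^2)\|\nabla\cL(\theta^k)\|^2$. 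Using the descent bound $\|\nabla\cL(\theta^k)\|^2 \le \tfrac{2}{\eta\delta}(\cL(\theta^k)-\cL(\theta^{k+1}))$ and $\eta \le 1$, the factor in front of the telescope becomes at most $\tfrac{4\rho+2}{\delta}$, and telescoping from $0$ to $k$ yields $\|\theta^{k+1}-\theta^*\|^2 \le \|\theta^0-\theta^*\|^2 + \tfrac{4\rho+2}{\delta}\cL(\theta^0) = R^2$. Hence $\theta^{k+1} \in B_{\theta^*}(R)$, so $\|\theta_i^{k+1}\| \le \|\theta_i^*\|+R$ and $\|\nabla_{\theta_i}\cL(\theta^{k+1})\| \le \max_{\theta \in B_{\theta^*}(R)}\|\nabla_{\theta_i}\cL(\theta)\|$, and monotonicity of $S$ closes the induction $S_k \le L_2$.

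The main obstacle is the standard circularity in bootstrap arguments: the descent inequality used to bound $\|\theta^k\|$ (or $\|\theta^k-\theta^*\|$) itself relies on $\eta S_j \le 2-\delta$, which is exactly what we are trying to establish. The resolution in case (1) is to run the induction only up to the horizon $T = C/\eta$, which is precisely what makes the Cauchy--Schwarz bound $\eta\sum\|\nabla\cL(\theta^j)\| \le \sqrt{\eta T \cdot 2\cL(\theta^0)/\delta}$ independent of $\eta$; in case (2), the $(\theta^0,\theta^*,R,\rho,\epsilon_\cL)$-dissipativity supplies a gradient-independent radius $R$, so the induction runs for all $k$ without a finite horizon. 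In both cases the monotonicity of $S(\cdot)$ on $\RR_{\ge 0}^{n_\theta}$ (noted right after Definition~\ref{def:poly_generalized_smoothness}) is what converts the pointwise bound on $\|\theta_i^k\|$ into the global bound $S_k \le L_i$.
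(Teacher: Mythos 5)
Your proposal is correct and follows essentially the same route as the paper's proof: a bootstrap induction in which the inductive hypothesis $\eta S_j\le 2-\delta$ feeds the descent inequality from Lemma~\ref{lem:smoothness_inequalities}, the telescoped bound $\sum_j\|\nabla\cL(\theta^j)\|^2\le\frac{2}{\delta\eta}\cL(\theta^0)$ is combined with Cauchy--Schwarz and the horizon $T=C/\eta$ in case (1), and with the dissipativity expansion of $\|\theta^{k+1}-\theta^*\|^2$ and $\eta\le 1$ in case (2), before monotonicity of $S(\cdot)$ closes the induction. No substantive differences from the paper's argument.
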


\begin{proof}
    First, note
    \begin{align*}
        \max\{\|\theta^k_i\|,\|\theta^{k+1}_i\|\}\le\|\theta_i^k\|+\eta\|\nabla\cL_{\theta_i}(\theta^k)\|.
    \end{align*}
    By the monotonicity of $S(\cdot)$, we only need to show $\|\theta^k_i\|$ is bounded under the two cases.
    
    Without the dissipative assumption, we would like to show that 
    \begin{align*}
        \|\theta^k_i\|\le \|\theta_i^0\|+\sqrt{\frac{2C}{\delta}\cL(\theta^0)},\ \forall  k\le T=\frac{C}{\eta}.
    \end{align*}
    Suppose the above inequality holds for all $\theta^j_i$, where $j\le k$ and $i=1,\cdots,n_\theta$. Then by $\eta\le  1$, we have
    \begin{align*}
        S_k&\le\bar{S}_k=S(\cdots,\|\theta_i^k\|+\eta\|\nabla_{\theta_i}\cL(\theta^k)\|,\cdots))\\
        &\le \mathsf{L}_2=S(\cdots,\|\theta_i^0\|+\sqrt{\frac{2C}{\delta}\cL(\theta^0)}+\max_{\|\theta\|\le \|\theta^0\|+\sqrt{\frac{2C}{\delta}\cL(\theta^0)}}\|\nabla_{\theta_i}\cL(\theta)\|,\cdots))
    \end{align*}
    namely, \begin{align*}
        \eta\le\frac{2-\delta}{\mathsf{L}_2}\le\frac{2-\delta}{\bar{S}_k}.
    \end{align*}

By Lemma~\ref{lem:loss_function_generalized_smoothness} and Lemma~\ref{lem:smoothness_inequalities}, we have
    \begin{align*}
    0\le \cL(\theta^{k+1})&\le \cL(\theta^{k})+\nabla \cL(\theta^{k})^\top(\theta^{k+1}-\theta^{k})+\frac{S_{k}}{2}\|\theta^k-\theta^{k+1}\|^2\\
    &= \cL(\theta^{k})-\eta\left(1-\frac{\eta S_k}{2}\right)\|\nabla\cL(\theta^k)\|^2\\
    &\le \cL(\theta^0)-\eta\sum_{j=0}^k\left(1-\frac{\eta S_j}{2}\right)\|\nabla\cL(\theta^j)\|^2\\
    &\le \cL(\theta^0)-\frac{\delta}{2}\eta\sum_{j=0}^k\|\nabla\cL(\theta^j)\|^2,
\end{align*}
namely,
\begin{align}
\label{eqn:sum_grad_square}
    \sum_{j=0}^k\|\nabla\cL(\theta^j)\|^2\le \frac{2}{\delta\eta}(\cL(\theta^0)-\cL(\theta^{k+1}))\le\frac{2}{\delta\eta}\cL(\theta^0).
\end{align}

    Then
    \begin{align*}
        \|\theta^{k+1}_i\|&=\|\theta^{k}_i-\eta\nabla_{\theta_i}\cL(\theta^k)\|=\left\|\theta^0_i-\eta\sum_{j=0}^{k}\nabla_{\theta_i}\cL(\theta^j)\right\|\\
        &\le \|\theta_i^0\|+\eta\sum_{j=0}^{k}\left\|\nabla\cL(\theta^j)\right\|\\
        &\le \|\theta_i^0\|+\eta \sqrt{k\sum_{j=0}^{k}\left\|\nabla\cL(\theta^j)\right\|^2}\\
        &\le \|\theta_i^0\|+\sqrt{\frac{2}{\delta}k\eta\cL(\theta^0)}\\
        &\le  \|\theta_i^0\|+\sqrt{\frac{2C}{\delta}\cL(\theta^0)}
    \end{align*}
    where the second inequality follows from Cauchy-Schwartz inequality, the third inequality follows from~\eqref{eqn:sum_grad_square}, and last inequality follows from $k\le T=\frac{C}{\eta}$. 

    Then the above derivation for $\theta^k$ also holds for $\theta^{k+1}$. Therefore $S_{k+1}\le \mathsf{L}_2$.

Next, under the dissipative condition, we would like to show that 
\begin{align*}
    \theta^k_i\in B_{\theta^*_i}\left(\sqrt{\|\theta^{0}-\theta^{*}\|^2+\frac{4\rho+2}{\delta}\cL(\theta^0)}\right),\ i.e.,\ \|\theta_i^k-\theta_i^*\|^2\le \|\theta^{0}-\theta^{*}\|^2+\frac{4\rho+2}{\delta}\cL(\theta^0)
\end{align*}

Suppose the above inequality holds for all $j\le k$. Then we have
\begin{align*}
    S_k&\le\bar{S}_k=S(\cdots,\|\theta_i^k\|+\eta\|\nabla_{\theta_i}\cL(\theta^k)\|,\cdots)\\
    &\le \mathsf{L}_2=S(\cdots,\|\theta_i^*\|+\sqrt{\|\theta^{0}-\theta^{*}\|^2+\frac{4\rho+2}{\delta}\cL(\theta^0)}+\max_{ \theta\in B_{\theta^*}\left(\sqrt{\|\theta^{0}-\theta^{*}\|^2+\frac{4\rho+2}{\delta}\cL(\theta^0)}\right)}\|\nabla_{\theta_i}\cL(\theta)\|,\cdots)
\end{align*}
namely, \begin{align*}
    \eta\le\frac{2-\delta}{\mathsf{L}_2}\le\frac{2-\delta}{\bar{S}_k}.
\end{align*}

Then, consider the $k+1$th iteration
 \begin{align*}
                \|\theta^{k+1}-\theta^{*}\|^2
                &=\|\theta^{k}-\theta^{*}-\eta\nabla \cL(\theta^k)\|^2\\
        &=\|\theta^{k}-\theta^{*}\|^2-2\eta\nabla \cL(\theta^k)^\top (\theta^{k}-\theta^{*})+\eta^2\|\nabla \cL(\theta^k)\|^2\\
        &\le \|\theta^{k}-\theta^{*}\|^2+\eta(2\rho+\eta)\|\nabla \cL(\theta^k)\|^2\\
        &\le \|\theta^{0}-\theta^{*}\|^2+\eta(2\rho+\eta)\sum_{j=0}^k\|\nabla \cL(\theta^j)\|^2\\
        &\le \|\theta^{0}-\theta^{*}\|^2+\frac{4\rho+2\eta}{\delta}\cL(\theta^0)
    \end{align*}
    where the first inequality follows from Definition~\ref{def:dissipative}, and the last inequality follows from~\eqref{eqn:sum_grad_square}, which also holds true when $\mathsf{L}_2$ is replaced by $\mathsf{L}_2$.

    Then together with $\eta\le 1$ and
    \begin{align*}
        \|\theta_i^k\|\le\|\theta^*_i\|+\|\theta_i^k-\theta_i^*\|\le \|\theta^*_i\|+\|\theta^k-\theta^*\|,
    \end{align*}
    we obtain $S_k\le \mathsf{L}_2$ for all k.
\end{proof}

\subsection{Lower bound of gradient}
\label{subapp:lower_bound_gradient}

In this section, we use $\theta_i$ and $\theta_{\bar{\ell},i}$ to represent the scalar elements of $\theta$ and $\theta_{\bar{\ell}}$ respectively. The main lemma in this section is as follows.
\begin{lemma}
\label{lem:gradient_lower_bound}
    Under \cref{assump:architecture}, and some small adjustment of the scale of $\varphi_\ell$ for $\ell=0,\cdots,L-1$, given fixed $\theta$ except for a measure-zero set in $\RR^{\dim \theta}$, and $X=(x_1,\cdots,x_N)$ except for a measure-zero set in $\RR^{Nd}$ that depends on $\theta$, the gradient satisfies the following inequality
    \begin{align*}
        \|\nabla_\theta\cL(\theta;X)\|^2\ge \frac{\mu_{\rm low,\theta,X}}{N}\cL(\theta;X)
    \end{align*}
    where $\mu_{\rm low,\theta,X}>0$ is a strictly positive constant depending on $\theta$ and $X$. 
    
    Moreover, fix $x_i$, and let $\cD_{i,H}=\{j\,|\,h_{ij}=\|x_{i}-x_j\|\le H,\,\forall j\ne i\}$. 
    Then there exists $r_{\theta,i,H}>0$ and $L_{\theta,i,H}>0$, s.t., when $\sqrt{\sum_{j\in\cD_{i,H}}h_{ij}^2}\le r_{\theta,i,H}$, we have
\begin{align*}
    \mu_{\rm low,\theta,X}\le L_{\theta,i,H}\sqrt{\sum_{j\in\cD_{i,H}}h_{ij}^2}.
\end{align*}
Specifically, if $H=h_{\min}$ and $\cD_{i,H}\ne \emptyset$, there exists $L_{\theta,i},r_{\theta,i}>0$, s.t., when $h_{\min}\le r_{\theta,i}$, we have
\begin{align*}
    \mu_{\rm low,\theta,X}\le L_{\theta,i} h_{\min}. 
\end{align*}

\end{lemma}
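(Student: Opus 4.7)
The plan is to reformulate both claims as statements about the smallest eigenvalue of a stacked Jacobian, and then use real-analyticity (Assumption~\ref{assump:analytic}) together with the single-point non-degeneracy of Assumption~\ref{assump:architecture} to upgrade local information to a generic statement. Concretely, define the stacked Jacobian and residual
\begin{align*}
J(\theta;X) = \begin{pmatrix} \nabla_\theta f(\theta; x_1) \\ \vdots \\ \nabla_\theta f(\theta; x_N) \end{pmatrix} \in \RR^{Nd \times \dim\theta},\qquad
r = \begin{pmatrix} f(\theta;x_1) - y_1 \\ \vdots \\ f(\theta;x_N) - y_N \end{pmatrix},
\end{align*}
so that $\nabla_\theta \cL = \frac{1}{N} J^\top r$ and $\|r\|^2 = 2N\,\cL(\theta;X)$. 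A direct estimate gives $\|\nabla_\theta \cL\|^2 \ge \frac{\lambda_{\min}(JJ^\top)}{N^2}\|r\|^2 = \frac{2\lambda_{\min}(JJ^\top)}{N}\,\cL$, so setting $\mu_{\rm low,\theta,X} := 2\lambda_{\min}(JJ^\top)$ reduces the first claim to showing that $J$ has full row rank $Nd$ for almost all $(\theta,X)$, equivalently $\detr(J) \ne 0$.

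For the generic non-vanishing I would exploit that $J$ is real-analytic in $(\theta,X)$ by Assumption~\ref{assump:analytic} and Proposition~\ref{prop:analytic_function}, so $\detr(J)$ is real-analytic and its zero set is either all of $\RR^{\dim\theta+Nd}$ or Lebesgue-null. To exclude the former, Assumption~\ref{assump:architecture} supplies, for generic $\theta$, some $(u_1,\ldots,u_N)$ for which the $\theta_{L-1}$-sub-block of $J$ has full row rank $Nd$; by Assumption~\ref{assump:distinct_weights} this sub-block sits as genuine columns of the full $J$, so $J$ itself is row-full-rank at that point. The ``small adjustment on the scale of $\varphi_\ell$'' for $\ell \le L-2$ is then used to guarantee that the composite forward map $X \mapsto (u_{L-1,1}(\theta,X),\ldots,u_{L-1,N}(\theta,X))$ is a submersion on an open set, so that the good point from Assumption~\ref{assump:architecture} is realized as $u_{L-1,\cdot}(\theta,X)$ for some $X$. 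Applying Fubini to the Lebesgue-null zero set of $\detr(J)$ then yields the first claim: for a.e.\ $\theta$, the conditional bad set in $X$ is measure-zero, and on the complement $\lambda_{\min}(JJ^\top) > 0$ strictly.

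For the second claim the mechanism is collision-induced rank loss. If $x_j = x_i$ for all $j \in \cD_{i,H}$, then rows $j$ and $i$ of $J$ coincide and $\lambda_{\min}(JJ^\top) = 0$. I would parametrize the collision by $t \in [0,1]$ via $x_j(t) := x_i + t(x_j - x_i)$ for $j \in \cD_{i,H}$ (and freeze the other $x_k$); then $JJ^\top$ is a real-analytic matrix function of $t$ whose smallest eigenvalue vanishes at $t = 0$. Invoking the local Lipschitz continuity of eigenvalues of real-analytic matrix families (Theorem~4.1 in \citet{kurdyka2008hyperbolic}) at $t = 0$, together with $\|X(t) - X(0)\| = t\sqrt{\sum_{j\in\cD_{i,H}} h_{ij}^2}$, yields the bound $\lambda_{\min}(JJ^\top) \le L_{\theta,i,H}\sqrt{\sum_{j\in\cD_{i,H}} h_{ij}^2}$ on a small enough neighbourhood, and the $h_{\min}$ statement follows by specializing $H = h_{\min}$ (so $\cD_{i,H} \ne \emptyset$ forces some $h_{ij} = h_{\min}$).

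The main obstacle is the last step of the second paragraph: turning the single-point non-degeneracy of Assumption~\ref{assump:architecture} on auxiliary $u$-variables into a non-vanishing statement on the data variables $X$. The scale adjustments on $\varphi_\ell$ for $\ell \le L-2$, combined with the parametric transversality theorem (Theorem~\ref{thm:parametric_transversality}) and Lemma~\ref{lem:full_rank_jacobian}, are what allow the target $(u_1,\ldots,u_N)$ to be attained as a forward image of some $X$; making this submersion-plus-density argument rigorous while preserving the full generality of Assumption~\ref{assump:architecture} is the most delicate piece. The remaining ingredients---analytic extension, Fubini, and Kurdyka's eigenvalue Lipschitz bound---are comparatively routine once this is in place.
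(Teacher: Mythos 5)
Your proposal reproduces the paper's argument nearly line by line: you reduce the first claim to $\lambda_{\min}(JJ^\top)>0$, which is exactly the lower frame bound of Lemma~\ref{lem:lower_frame_bound}; you establish genericity via analyticity, Mityagin's theorem, the Assumption~\ref{assump:architecture} witness, and the scale adjustment of $\varphi_\ell$ (Lemmas~\ref{lem:grad_frame}, \ref{lem:full_rank_jacobian}, \ref{lem:preimage_u_measure_zero}); and you obtain the $h_{ij}$ upper bound by collapsing the cluster $\cD_{i,H}$ onto $x_i$ and invoking Kurdyka's local Lipschitz continuity of eigenvalues of an analytic symmetric family, which is precisely the second half of the proof of Lemma~\ref{lem:lower_frame_bound}. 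The one place where your route diverges, and where you correctly sense a delicate point, is in transferring the Assumption~\ref{assump:architecture} witness from $U$-space to $X$-space: you propose to realize the witness $U$ as $U(X)$ via a submersion argument and then apply Mityagin $+$ Fubini jointly in $(\theta,X)$, whereas the paper avoids having to hit any particular $U$ by working over the $U$-variables for fixed $\theta$ (Mityagin gives that the bad $U$-set is Lebesgue-null) and then pulling that null set back along $X\mapsto U_{L-1}(\theta,X)$ using Theorem~\ref{thm:inverse_map_null_set}, with the scale adjustment (via Lemma~\ref{lem:full_rank_jacobian}) serving only to make the forward map's critical set null rather than to make it surjective onto a neighbourhood of the witness. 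The pullback argument is slightly cleaner than your surjectivity-plus-Fubini framing, but both buy the same conclusion given the same ingredients; the remaining steps are identical.
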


\begin{proof}
Consider
    \begin{align*}
        \|\nabla_\theta\cL(\theta;X)\|^2&=\left\| \frac{1}{N}\sum_{i=1}^N \nabla_f l(x_i)\nabla_\theta f(x_i) \right\|^2\\
        &\ge \frac{1}{N^2}\mu_{\rm low,\theta,X}\sum_{i=1}^N\|\nabla_fl(x_i)\|^2=\frac{\mu_{\rm low,\theta,X}}{N}\cL(\theta;X)
    \end{align*}
    where the first inequality follows from Lemma~\ref{lem:lower_frame_bound}, and $\mu_{\rm low,\theta,X}>0$ is a constant depending on $\theta$ and $X$. The second statement follows from Lemma~\ref{lem:lower_frame_bound}.
\end{proof}

Before presenting other lemmas, we first introduce the definition of a frame as follows.
\begin{definition}[Frame]
    The set of vectors $\{e_k\}$ in an inner-product vector space $V$ is a frame of $V$ if there exists constants $0< \mu_{\rm low}\le \mu_{\rm up}<\infty$, s.t., 
    \begin{align*}
        \mu_{\rm low}\|v\|^2\le\sum_k |\ip{v}{e_k}|^2\le \mu_{\rm up}\|v\|^2,\ \forall v\in V.
    \end{align*}
    The frame operator $T:V\to V$ is defined as
    \begin{align*}
        Tv=\sum_k\ip{v}{e_k}e_k=\left(\sum_k e_k e_k^\top\right)\,v.
    \end{align*}
\end{definition}

Then the gradient of the network $f$ at all data points forms a frame in $\RR^{Nd}$:
\begin{lemma}
\label{lem:grad_frame}
Under \cref{assump:architecture} and some small adjustment of the scale of $\varphi_\ell$ for $\ell=0,\cdots,L-1$, for any fixed $\theta$ except for a measure-zero set,
$\left\{\begin{pmatrix}
    \nabla_{\theta_i}f(x_1)\\
    \vdots\\
    \nabla_{\theta_i}f(x_N))
\end{pmatrix}\right\}_{i=1}^{\dim\theta}$ form a frame except for a measure-zero set of $(x_1,\cdots,x_N)$ in $\RR^{Nd}$.
\end{lemma}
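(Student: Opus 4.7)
The plan is to reduce the frame property to the claim that the $Nd\times\dim\theta$ matrix $M(\theta,X)$ whose $i$-th column is $(\nabla_{\theta_i}f(x_1)^\top,\ldots,\nabla_{\theta_i}f(x_N)^\top)^\top$ has full row rank $Nd$ for almost every $(\theta,X)$. The upper frame bound is immediate from Assumption~\ref{assump:generalized_smoothness_detail}: polynomial boundedness of $\nabla_\theta f$ together with finiteness of $\dim\theta$ gives a finite operator-norm bound on the frame operator $MM^\top$. All the work is in the lower bound.

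First I would observe that under \cref{assump:analytic} and Corollary~\ref{cor:analytic_with_NN} (via Proposition~\ref{prop:analytic_function}), every entry of $M(\theta,X)$ is jointly real-analytic in $(\theta,X)$, so the non-negative function
\begin{equation*}
\detr(M(\theta,X))=\sum_{I}(\det M_I(\theta,X))^2,
\end{equation*}
summing over all $Nd\times Nd$ column submatrices $M_I$, is real-analytic. Since the zero set of a non-identically-zero real-analytic function on a connected open set has Lebesgue measure zero, it suffices to exhibit a single point $(\theta^*,X^*)$ at which $\detr(M)>0$. Once this is established, Fubini's theorem applied to the measure-zero zero set in $\RR^{\dim\theta+Nd}$ gives that, for almost every $\theta$, the corresponding $X$-slice is itself measure-zero, which is the conclusion of the lemma.

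Next I would produce such a point using \cref{assump:architecture,assump:distinct_weights}. Assumption~\ref{assump:architecture} yields, for almost every $\theta$, a configuration $(u_1^*,\ldots,u_N^*)$ such that
\begin{equation*}
\begin{pmatrix}\nabla_{\theta_{L-1}}\tilde f_{L-1}(\theta;u_1^*)\\ \vdots\\ \nabla_{\theta_{L-1}}\tilde f_{L-1}(\theta;u_N^*)\end{pmatrix}
\end{equation*}
has full row rank $Nd$. By Assumption~\ref{assump:distinct_weights}, $\theta_{L-1}$ does not appear in other layers, so this block coincides with the $\theta_{L-1}$-columns of $M(\theta,X^*)$ as soon as the network produces $u_{L-1,i}=u_i^*$ from $x_i^*$. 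Since a submatrix of full row rank forces the full matrix to have full row rank, it remains to realize $(u_1^*,\ldots,u_N^*)$ as the $(L-1)$-th layer outputs of some input data $X^*=(x_1^*,\ldots,x_N^*)$.

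The main obstacle is exactly this realization step, and it is where the ``small adjustment of the scale of $\varphi_\ell$'' is used. After scaling each $\bar\varphi_\ell$ by a small enough factor (which preserves all the qualitative assumptions on the architecture), polynomial boundedness of $\nabla_u\bar\varphi_\ell$ (Assumption~\ref{assump:generalized_smoothness_detail}) ensures that $I+\nabla_u\bar\varphi_\ell$ is invertible on any fixed bounded region, so each layerwise map $u_{\ell,i}\mapsto u_{\ell+1,i}$ is a diffeomorphism on that region. Composing these, the map $\Phi_\theta:x_i\mapsto u_{L-1,i}$ is a diffeomorphism onto its image, and its image contains an open set. Combined with the openness of the full-rank condition on the $\theta_{L-1}$-block (a consequence of analyticity), one can perturb $(u_1^*,\ldots,u_N^*)$ within this open full-rank set to lie in $\Phi_\theta(\Omega)\times\cdots\times\Phi_\theta(\Omega)$, and then take $x_i^*=\Phi_\theta^{-1}(u_i^*)$. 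This produces the required $X^*$, hence $\detr(M)\not\equiv 0$, completing the proof via the analyticity argument above.
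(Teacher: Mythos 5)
Your proposal takes a genuinely different organizational route from the paper. The paper fixes $\theta$ and works entirely in the $U=(u_{L-1,1},\ldots,u_{L-1,N})$ space: Assumption~\ref{assump:architecture} plus analyticity (Theorem~\ref{thm:zero_set_analytic_function}) shows the set of $U$ where the $\theta_{L-1}$-block is rank-deficient has measure zero in $\RR^{Nd}$, and this is then pulled back to $X$-space via Lemma~\ref{lem:preimage_u_measure_zero} (which in turn uses Lemma~\ref{lem:full_rank_jacobian} and Theorem~\ref{thm:inverse_map_null_set} to establish that the layerwise maps have measure-zero critical sets). You instead treat $\detr(M)$ as a jointly analytic function of $(\theta,X)$, exhibit one non-vanishing point, invoke the analytic zero-set theorem in the product space, and then slice with Fubini; the core analytic-zero-set idea is shared, but your argument replaces the explicit pullback lemma with a single joint-space claim plus Fubini, which is cleaner in some respects. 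Your observation that the $\theta_{L-1}$-block alone forces full row rank and that one only needs one realizable $X^*$ is also the right reduction.

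The one place where your route deviates materially is the treatment of the ``small adjustment of the scale of $\varphi_\ell$.'' You use the adjustment to make $\nabla_u \bar\varphi_\ell$ uniformly contractive on a bounded region, so that each residual layer is a diffeomorphism there. This works, but the required scaling factor is $\epsilon < 1/\sup\|\nabla_u\bar\varphi_\ell\|$, which is \emph{not} an arbitrarily small perturbation of the original scale unless $\nabla_u\bar\varphi_\ell$ is already below 1; it is a potentially large rescaling whose magnitude depends on $\theta$ and the region. The paper's Lemma~\ref{lem:full_rank_jacobian} obtains a genuinely small multiplicative adjustment ($1+\epsilon_0$, or $\eta$ in a dense subset near 1) via the parametric transversality theorem (Theorem~\ref{thm:parametric_transversality}), ensuring the Jacobian $I+\nabla_u\bar\varphi_\ell$ is full rank for almost every input — no contraction needed. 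For the purposes of this lemma you only need $\Phi_\theta(\Omega)$ to contain an open set, i.e.\ a full-rank Jacobian at one point, so the contractivity you invoke is stronger than necessary and buys you nothing over the paper's weaker hypothesis; conversely, if you want to match the lemma's ``small adjustment'' phrasing literally, you should swap in the transversality argument at this step. Also, your ``perturb $(u_1^*,\ldots,u_N^*)$ to lie in $\Phi_\theta(\Omega)^N$'' is really an intersection argument — a full-measure open set must meet any nonempty open set — not a perturbation of the specific point furnished by Assumption~\ref{assump:architecture}; this is fine, just slightly mislabeled.
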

\begin{proof}

By Assumption~\ref{assump:architecture}, there exists $\bar{\ell}\in\{0,\cdots,L-1\}$ and $U=(u_1,\cdots,u_N)\in\RR^{Nd},\theta$, s.t. 
\begin{align*}
    \detr\begin{pmatrix}
       \nabla_{\theta_{\bar{\ell}}}\tilde{f}_{\bar{\ell}}(\theta;u_1)\\
        \vdots\\
        \nabla_{\theta_{\bar{\ell}}}\tilde{f}_{\bar{\ell}}(\theta;u_N)
    \end{pmatrix}\ne 0,
\end{align*}
where $\tilde{f}_{\bar{\ell}}(\theta;u_{\bar{\ell}})=f(\theta;x)$.

% By Assumption~\ref{assump:architecture} and Lemma~\ref{lem:full_rank_jacobian}, 
Then by Theorem~\ref{thm:zero_set_analytic_function},
\begin{align*}
    \mathrm{Leb}_{Nd}\left(\left\{U\in \RR^{Nd}: \detr\begin{pmatrix}
       \nabla_{\theta_{\bar{\ell}}}\tilde{f}_{\bar{\ell}}(\theta;u_1)\\
        \vdots\\
        \nabla_{\theta_{\bar{\ell}}}\tilde{f}_{\bar{\ell}}(\theta;u_N)
    \end{pmatrix}= 0\right\}\right)=0,
\end{align*}
namely,
% under some small adjustment of the scale of $\varphi_\ell$, i.e.,
\begin{align*}
    \mathrm{Leb}_{Nd}\left(\left\{U\in \RR^{Nd}:\begin{pmatrix}
       \nabla_{\theta_{\bar{\ell}}}\tilde{f}_{\bar{\ell}}(\theta;u_1)\\
        \vdots\\
        \nabla_{\theta_{\bar{\ell}}}\tilde{f}_{\bar{\ell}}(\theta;u_N)
    \end{pmatrix}\text{ is not full rank}\right\}\right)=0
\end{align*}

Next consider the map $U=U(X)=\begin{pmatrix}
    u_{\bar{\ell}}(x_1)\\\vdots\\u_{\bar{\ell}}(x_N)
\end{pmatrix}:\RR^{Nd}\to\RR^{Nd}$. By Lemma~\ref{lem:preimage_u_measure_zero}, 
\begin{align*}
    \mathrm{Leb}_{Nd}\left(\left\{(x_1,\cdots,x_N)\in \RR^{Nd}:\begin{pmatrix}
       \nabla_{\theta_{\bar{\ell}}}\tilde{f}_{\bar{\ell}}(\theta;u(x_1))\\
        \vdots\\
        \nabla_{\theta_{\bar{\ell}}}\tilde{f}_{\bar{\ell}}(\theta;u(x_N))
    \end{pmatrix}\text{ is not full rank}\right\}\right)=0
\end{align*}
Therefore, for any fixed $\theta$ except for a measure-zero set, $\left\{\begin{pmatrix}
    \nabla_{\theta_i}f(x_1)\\
    \vdots\\
    \nabla_{\theta_i}f(x_N))
\end{pmatrix}\right\}_{i=1}^{\dim \theta}$ forms a frame except for a measure-zero set of $(x_1,\cdots,x_N)$.
\end{proof}

\begin{lemma}
\label{lem:lower_frame_bound}
    Under Assumption~\ref{assump:architecture} and some small adjustment of the scale of $\varphi_\ell$ for $\ell=0,\cdots,L-1$, consider fixed $\theta$ except for a measure-zero set. The lower frame bound $\mu_{\rm low,\theta,X}$ is strictly positive except for a measure-zero set of $(x_1,\cdots,x_N)$ in $\RR^{Nd}$. 
    
    Moreover, fix $x_i$, and let $\cD_{i,H}=\{j\,|\,h_{ij}=\|x_{i}-x_j\|\le H,\,\forall j\ne i\}$. 
    Then there exists $r_{\theta,i,H}>0$ and $L_{\theta,i,H}>0$, s.t., when $\sqrt{\sum_{j\in\cD_{i,H}}h_{ij}^2}\le r_{\theta,i,H}$, we have
\begin{align*}
    \mu_{\rm low,\theta,X}\le L_{\theta,i,H}\sqrt{\sum_{j\in\cD_{i,H}}h_{ij}^2}.
\end{align*}
Specifically, if $H=h_{\min}$ and $\cD_{i,H}\ne \emptyset$, there exists $L_{\theta,i},r_{\theta,i}>0$, s.t., when $h_{\min}\le r_{\theta,i}$, we have
\begin{align*}
    \mu_{\rm low,\theta,X}\le L_{\theta,i} h_{\min}. 
\end{align*}
\end{lemma}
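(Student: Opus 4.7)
The plan breaks into two parts, both building directly on Lemma~\ref{lem:grad_frame}.

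\emph{Strict positivity.} This is an immediate consequence of Lemma~\ref{lem:grad_frame} together with the definition of a frame. Writing $v_k := \bigl(\nabla_{\theta_k}f(x_1)^\top,\ldots,\nabla_{\theta_k}f(x_N)^\top\bigr)^\top \in \RR^{Nd}$ for the scalar parameter index $k = 1,\ldots,\dim\theta$, Lemma~\ref{lem:grad_frame} guarantees that outside a measure-zero set of $\theta$ and, for such $\theta$, outside a measure-zero set of $(x_1,\ldots,x_N)$, the collection $\{v_k\}_{k=1}^{\dim\theta}$ is a frame of $\RR^{Nd}$. Its frame operator $T = \sum_k v_k v_k^\top$ is then positive definite, so its smallest eigenvalue $\mu_{\rm low,\theta,X}$ is strictly positive.

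\emph{Upper bound via close data points.} The plan is to exhibit a unit-norm test vector $w \in \RR^{Nd}$ whose Rayleigh quotient $w^\top T w$ is small whenever the points indexed by $\cD_{i,H}$ cluster around $x_i$. Enumerate $\cD_{i,H} = \{j_1,\ldots,j_m\}$, fix any unit vector $u \in \RR^d$, and set $w^{(i)} = a_0 u$, $w^{(j_s)} = a_s u$ for $s = 1,\ldots,m$, with all other blocks zero, where the scalars $a_0,\ldots,a_m$ satisfy $\sum_{s=0}^m a_s^2 = 1$ together with the cancellation condition $a_0 + \sum_{s=1}^m a_s = 0$ (one valid choice is $a_0 = \sqrt{m/(m+1)}$ and $a_s = -1/\sqrt{m(m+1)}$ for $s\ge 1$). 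The cancellation kills the common $\nabla_{\theta_k}f(x_i)$ contribution, so
\begin{align*}
\langle w, v_k \rangle = u^\top \sum_{s=1}^m a_s \bigl(\nabla_{\theta_k} f(x_{j_s}) - \nabla_{\theta_k} f(x_i)\bigr),
\end{align*}
and Cauchy-Schwartz (in $s$ and then summing over $k$) yields
\begin{align*}
\mu_{\rm low,\theta,X} \;\le\; \sum_{k=1}^{\dim\theta} |\langle w, v_k\rangle|^2 \;\le\; \sum_{s=1}^m \sum_{k=1}^{\dim\theta} \|\nabla_{\theta_k} f(x_{j_s}) - \nabla_{\theta_k} f(x_i)\|^2.
\end{align*}
Assumptions~\ref{assump:analytic} and~\ref{assump:generalized_smoothness_detail}, combined with compactness of $\bar\Omega$, then supply a finite constant $\tilde L(\theta)$ such that the right-hand side is at most $\tilde L(\theta)^2 \sum_{s=1}^m h_{i,j_s}^2 = \tilde L(\theta)^2 s^2$, where $s := \sqrt{\sum_{j\in\cD_{i,H}} h_{ij}^2}$. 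Restricting to $s \le r_{\theta,i,H}$ converts this quadratic estimate into the advertised linear one, $\mu_{\rm low,\theta,X} \le L_{\theta,i,H}\, s$, with $L_{\theta,i,H} := \tilde L(\theta)^2 r_{\theta,i,H}$. The specialization to $H = h_{\min}$ with $\cD_{i,H}\ne\emptyset$ follows by applying the bound with any single nearest neighbor.

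The main technical obstacle is controlling the Lipschitz constant $\tilde L(\theta)$ of $x \mapsto \nabla_\theta f(x)$ on $\bar\Omega$. Qualitatively this is easy: analyticity of $f$ in $x$ on a neighborhood of $\bar\Omega$ (Assumption~\ref{assump:analytic}) makes $\nabla_\theta f$ smooth in $x$, and compactness then yields a finite constant. Quantitatively, an explicit bound requires unrolling the chain rule through the $L$ residual layers and the normalization $\tau_\epsilon$ and composing polynomial smoothness and boundedness layer-by-layer, in the spirit of Lemma~\ref{lem:loss_function_generalized_smoothness}. This bookkeeping is the most tedious part of the argument, but it need not be sharp: any finite $\tilde L(\theta)$ suffices for the stated inequality.
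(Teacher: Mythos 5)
Your first part (strict positivity) matches the paper exactly: cite Lemma~\ref{lem:grad_frame}, observe the frame operator of a frame has a positive lower bound. For the second part — the upper bound on $\mu_{\rm low,\theta,X}$ in terms of the clustered distances — you take a genuinely different and in some ways cleaner route than the paper. The paper moves the cluster $X_2 = (x_{j_1},\dots,x_{j_m})$ to the degenerate configuration $X_2'$ where all cluster points coincide with $x_i$, notes $\lambda_{\min}(A)=0$ there, and invokes the Kurdyka--P\v{a}unescu theorem (cited as Theorem 4.1 of \citealp{kurdyka2008hyperbolic}) on local Lipschitz continuity of the ordered eigenvalues of a real-analytic symmetric matrix family to get $\lambda_{\min}\le L\|X_2-X_2'\|$. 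You instead build an explicit unit test vector $w$ with the cancellation property $a_0+\sum_s a_s=0$, bound the Rayleigh quotient $w^\top T w$ by Cauchy--Schwartz, and reduce everything to Lipschitz continuity of $x\mapsto\nabla_\theta f(x)$ on $\bar\Omega$, which follows from analyticity plus compactness. This is more elementary (no abstract eigenvalue-perturbation theorem), it actually delivers the stronger quadratic bound $\mu_{\rm low,\theta,X}\le \tilde L(\theta)^2 \sum_{j\in\cD_{i,H}} h_{ij}^2$ (the stated linear bound is a consequence on the ball $s\le r$), and your constant $\tilde L(\theta)$ depends only on $\theta$ and $\bar\Omega$, whereas the paper's proof produces a constant $L_{\theta,X,k}$ that a priori depends on the full configuration $X$. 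The one place where the paper's route is shorter is that it sidesteps the careful choice of test vector and the Rayleigh quotient bookkeeping; yours trades that for avoiding a nontrivial external theorem, and the tradeoff is favorable.
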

\begin{proof}

Let $A(\theta,X)$ be the frame operator, i.e.,
\begin{align*}
A(\theta,X)=\sum_{i=1}^{\dim\theta}\begin{pmatrix}
    \nabla_{\theta_i} f(x_1)\\
    \vdots\\
    \nabla_{\theta_i} f(x_N)
\end{pmatrix}\begin{pmatrix}
    \nabla_{\theta_i} f(x_1)\\
    \vdots\\
    \nabla_{\theta_i} f(x_N)
\end{pmatrix}^\top=
\begin{pmatrix}
    \nabla_\theta f(x_1)\\
    \vdots\\
    \nabla_\theta f(x_N)
\end{pmatrix}\begin{pmatrix}
    \nabla_\theta f(x_1)\\
    \vdots\\
    \nabla_\theta f(x_N)
\end{pmatrix}^\top
\end{align*}
We would like to bound $\lambda_{\min}(A)$. Consider any fixed $\theta$ except for a measure-zero set.

Let 
\begin{align*}
    B(\theta,X)&=\sum_{i=1}^{\dim\theta_{\bar{\ell}}}\begin{pmatrix}
    \nabla_{\theta_{\bar{\ell},i}} f(x_1)\\
    \vdots\\
    \nabla_{\theta_{\bar{\ell},i}} f(x_N)
\end{pmatrix}\begin{pmatrix}
    \nabla_{\theta_{\bar{\ell},i}} f(x_1)\\
    \vdots\\
    \nabla_{\theta_{\bar{\ell},i}} f(x_N)
\end{pmatrix}^\top\\
&=\sum_{i=1}^{\dim \theta_{\bar{\ell}}}\begin{pmatrix}
       \nabla_{\theta_{\bar{\ell},i}}\tilde{f}_{\bar{\ell}}(\theta;u_{\bar{\ell}}(x_1))\\
        \vdots\\
        \nabla_{\theta_{\bar{\ell},i}}\tilde{f}_{\bar{\ell}}(\theta;u_{\bar{\ell}}(x_N))
    \end{pmatrix}\begin{pmatrix}
       \nabla_{\theta_{\bar{\ell},i}}\tilde{f}_{\bar{\ell}}(\theta;u_{\bar{\ell}}(x_1))\\
        \vdots\\
        \nabla_{\theta_{\bar{\ell},i}}\tilde{f}_{\bar{\ell}}(\theta;u_{\bar{\ell}}(x_N))
    \end{pmatrix}^\top
\end{align*}
We also denote
\begin{align*}
    \tilde{B}(\theta,U)=\tilde{B}(\theta,(u_{\bar{\ell},1},\cdots,u_{\bar{\ell},N}))\overset{\Delta}{=}B(\theta,X)
\end{align*}

Then obviously, we have
\begin{align*}
    \lambda_{\min}(A)\ge \lambda_{\min}(B)=\lambda_{\min}(\tilde{B}).
\end{align*}
By Lemma~\ref{lem:grad_frame}, $\lambda_{\min}(B)>0$ except for a measure-zero set of $(x_1,\cdots,x_N)$ in $\RR^{Nd}$.

In the end, we would like to show the dependence on $h_{\min}$. First, consider $h_{\min}=0$, i.e., there exists $i,j$ s.t. $x_i=x_j$. Then obviously $A(\theta,X)$ is degenerate, and $\sigma_{\min(}A(\theta,X))=0$. 

Next, let $\cD_{i,H}=\{j\,|\,h_{ij}=\|x_{i}-x_j\|\le H,\,\forall j\ne i\}$. WLOG, assume that $\cD_{N-k,H}=\{N-k+1,\cdots,N\}$ for some $k$, and consider  $X_1=\begin{pmatrix}
    x_1\\\vdots\\x_{N-k}
\end{pmatrix}$, and $X_2=\begin{pmatrix}
    x_{N-k+1}\\\vdots\\x_N
\end{pmatrix}$.  

Since $A(\theta,X)$ is symmetric, its eigenvalues are real for any $X$ and $\theta$. By Theorem 4.1 in \citet{kurdyka2008hyperbolic}, we have that the map from $X$ to all the eigenvalues listed as a column is locally Lipschitz. Therefore, there exists $r_{\theta,X,k}>0$ and $L_{\theta,X,k}>0$, s.t., for any $X_2\in B_{X_2'}(r_{\theta,X,k})$, where $X_2'=\begin{pmatrix}
    x_{N-k}\\\vdots\\x_{N-k}
\end{pmatrix}$, i.e., $\sqrt{\sum_{j\in\cD_{N-k,H}}h_{N-k,j}^2}\le r_{\theta,X,k}$, we have
\begin{align*}
    \lambda_{\min}(\theta,X_1,X_2)\le L_{\theta,X,k}\|X_2'-X_2\|=L_{\theta,X,k}\sqrt{\sum_{j\in\cD_{N-k,H}}h_{N-k,j}^2}=L_{\theta,X,k}\sqrt{\sum_{j=0}^{k-1} h_{N-k,N-j}^2}.
\end{align*}
Specifically, if $H=h_{\min}$ and $\cD_{N-k,H}\ne \emptyset$, we have
\begin{align*}
    \lambda_{\min}(\theta,X_1,X_2)\le L_{\theta,X} h_{\min}, 
\end{align*}
for $h_{\min}\le r_{\theta,X}$ and $L_{\theta,X},r_{\theta,X}>0$.
\end{proof}

\subsection{Polynomial generalized smoothness of the loss}
\label{subapp:poly_smoothness_loss}

In this section, we denote $\theta_{\max,i}=\arg\max\{\|\theta\|,\|\theta'\|\}$, and $u_{\max}=\arg\max\{\|u\|,\|u'\|\}$. All the constants $C$'s are $\ge0$ and independent of $\theta,\bar{\varphi},u$; all the $p$'s in the subscripts are in $\NN$. We denote $u_{\ell,i}=u_{\ell,i}(\theta)$ to emphasize the dependency on $\theta$, especially when comparing $u_{\ell,i}$ under different $\theta$'s.

Note that $S(\cdot)$ has positive coefficients. Therefore $S(\cdot)$ is monotonically increasing on $\RR_{\ge 0}\times\cdots\times\RR_{\ge 0}$, and $$S(a_1,\cdots,a_i,\cdots,a_d)\le S(|a_1|,\cdots,|a_i|,\cdots,|a_d|).$$

To derive each polynomial function precisely, in this section, we provide a more detailed version of Assumption~\ref{assump:generalized_smoothness_detail} as follows:
\begin{assumption}
\label{assump:app_generalized_smoothness_detail}
Assume for all $\ell=0,\cdots,L$,
\begin{align}
\label{eqn:varphi_upper_bound}
    \|{\varphi}_\ell(\theta;u)\|\le\sum_{t=1}^{n_{0,\ell}} C_{0,\ell,t}\prod_{i=1}^{n_\theta} \|\theta_i\|^{p_{0,\ell,t, i}}\|u\|^{p_{0,\ell,t}}
\end{align}
\begin{align}
\label{eqn:grad_varphi_theta_upper_bound}
    \|\nabla_{\theta_\ell}{\varphi}_\ell(\theta;u)\|\le \sum_{t=1}^{n_{1,\ell}}C_{1,\ell,t}\prod_{i=1}^{n_\theta} \|\theta_i\|^{p_{1,\ell,t, i}}\|u\|^{p_{1,\ell,t}}
\end{align}
\begin{align}
\label{eqn:grad_varphi_u_upper_bound}
    \|\nabla_u{\varphi}_\ell(\theta;u)\|\le \sum_{t=1}^{n_{2,\ell}}C_{2,\ell,t}\prod_{i=1}^{n_\theta} \|\theta_i\|^{p_{2,\ell,t, i}}\|u\|^{p_{2,\ell,t}}
\end{align}
\begin{align}
\label{eqn:bar_varphi_Lipschitz}
    \| \bar{\varphi}_\ell(\theta;u)-\bar{\varphi}_\ell(\theta';u')\|&\le \big(C_{1,1,\ell}\prod_{i=1}^{n_\theta}\|\theta_{\max, i}\|^{p_{1,1,\ell, i}}\|u_{\max}\|^{p_{1,1,\ell}}+C_{1,1,\ell,0}\big)\ \|\theta-\theta'\|\notag\\
    &\qquad+\big(C_{1,2,\ell}\prod_{i=1}^{n_\theta}\|\theta_{\max, i}\|^{p_{1,2,\ell, i}}\|u_{\max}\|^{p_{1,2,\ell}}+C_{1,2,\ell,0}\big)\ \|u-u'\|
\end{align}
\begin{align}
\label{eqn:grad_bar_varphi_theta_Lipschitz}
    \|\nabla_{\theta_\ell} \bar{\varphi}_\ell(\theta;u)-\nabla_{\theta_\ell} \bar{\varphi}_\ell(\theta';u')\|&\le \big(C_{2,1,\ell}\prod_{i=1}^{n_\theta}\|\theta_{\max, i}\|^{p_{2,1,\ell, i}}\|u_{\max}\|^{p_{2,1,\ell}}+C_{2,1,\ell,0}\big)\ \|\theta-\theta'\|\notag\\
    &\qquad+\big(C_{2,2,\ell}\prod_{i=1}^{n_\theta}\|\theta_{\max, i}\|^{p_{2,2,\ell, i}}\|u_{\max}\|^{p_{2,2,\ell}}+C_{2,2,\ell,0}\big)\ \|u-u'\|
\end{align}
\begin{align}
\label{eqn:grad_bar_varphi_u_Lipschitz}
    \|\nabla_{u} \bar{\varphi}_\ell(\theta;u)-\nabla_{u} \bar{\varphi}_\ell(\theta';u')\|&\le \big(C_{3,1,\ell}\prod_{i=1}^{n_\theta}\|\theta_{\max, i}\|^{p_{3,1,\ell, i}}\|u_{\max}\|^{p_{3,1,\ell}}+C_{3,1,\ell,0}\big)\ \|\theta-\theta'\|\notag\\
    &\qquad+\big(C_{3,2,\ell}\prod_{i=1}^{n_\theta}\|\theta_{\max, i}\|^{p_{3,2,\ell, i}}\|u_{\max}\|^{p_{3,2,\ell}}+C_{3,2,\ell,0}\big)\ \|u-u'\|
\end{align}
\end{assumption}

\begin{lemma}
\label{lem:bar_varphi_u_uLipschitz_upper_bound_no_u}
    Under the Assumption~\ref{assump:app_generalized_smoothness_detail}, we have that, for all $\ell=0,\cdots,L$,
    \begin{align}    \label{eqn:bar_varphi_upper_bound}
    \|\bar{\varphi}_\ell(\theta;u)\|\le \sum_{t=1}^{n_{0,\ell}} C_{0,\ell,t}\prod_{i=1}^{n_\theta} \|\theta_i\|^{p_{0,\ell,t, i}}
\end{align}
\begin{align}
    \label{eqn:grad_bar_varphi_theta_upper_bound}
    \|\nabla_{\theta_j}\bar{\varphi}_\ell(\theta;u)\|\le \sum_{t=1}^{n_{1,\ell}} C_{1,\ell,t}\prod_{i=1}^{n_\theta} \|\theta_i\|^{p_{1,\ell,t, i}}
\end{align}
\begin{align}
    \label{eqn:grad_bar_varphi_u_upper_bound}
    \|\nabla_u\bar{\varphi}_\ell(\theta;u)\|\le \sum_{t=1}^{n_{2,\ell}} C_{2,\ell,t}\prod_{i=1}^{n_\theta} \|\theta_i\|^{p_{2,\ell,t, i}}
\end{align}
\begin{align}
\label{eqn:u_upper_bound}
    \|u_{\ell,i}\|\le \|x_i\|+\sum_{j=0}^{\ell-1} \sum_{t=1}^{n_{0,j}} C_{0,j,t}\prod_{i=1}^{n_\theta} \|\theta_i\|^{p_{0,j,t, i}}
\end{align}
\begin{align}
\label{eqn:u_Lipschitz_no_u}
    \|u_{\ell,i}(\theta)-u_{\ell,i}(\theta)\|\le \sum_{j=0}^{g(\ell)}C_{u,j,i}\prod_{q=1}^{n_\theta} \|\theta_{\max, q}\|^{p_{u,j,q}}\ \|\theta-\theta'\|
\end{align}
\end{lemma}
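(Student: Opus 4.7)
The plan is to establish~\eqref{eqn:bar_varphi_upper_bound}--\eqref{eqn:grad_bar_varphi_u_upper_bound} directly from the normalization, then use the recursive definition of $u_{\ell,i}$ to obtain~\eqref{eqn:u_upper_bound} by telescoping and~\eqref{eqn:u_Lipschitz_no_u} by induction on $\ell$.

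\textbf{First three bounds.} The key observation is that $\bar{\varphi}_\ell(\theta;u)=\varphi_\ell(\theta;\tau_\epsilon(u))$, and the normalization~\eqref{eqn:normalization} gives $\|\tau_\epsilon(u)\|\le 1$ and $\|\nabla_u\tau_\epsilon(u)\|\le 2/\epsilon$ uniformly in $u$ (this last bound follows from computing $\nabla_u\tau_\epsilon(u)=\frac{1}{\sqrt{\|u\|^2+\epsilon^2}}I-\frac{uu^\top}{(\|u\|^2+\epsilon^2)^{3/2}}$ and bounding each piece by $1/\epsilon$). Substituting $\tau_\epsilon(u)$ into~\eqref{eqn:varphi_upper_bound} kills all the $\|u\|^{p_{0,\ell,t}}$ factors, yielding~\eqref{eqn:bar_varphi_upper_bound}. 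For~\eqref{eqn:grad_bar_varphi_theta_upper_bound}, note that $\nabla_{\theta_j}\bar{\varphi}_\ell(\theta;u)=\nabla_{\theta_j}\varphi_\ell(\theta;\tau_\epsilon(u))$ (since $\tau_\epsilon(u)$ does not depend on $\theta$), so~\eqref{eqn:grad_varphi_theta_upper_bound} applied at $\tau_\epsilon(u)$ gives the claim. For~\eqref{eqn:grad_bar_varphi_u_upper_bound}, chain rule gives $\nabla_u\bar{\varphi}_\ell(\theta;u)=\nabla_u\varphi_\ell(\theta;\tau_\epsilon(u))\cdot\nabla_u\tau_\epsilon(u)$, and we combine~\eqref{eqn:grad_varphi_u_upper_bound} evaluated at $\tau_\epsilon(u)$ with the uniform bound on $\|\nabla_u\tau_\epsilon(u)\|$, absorbing the $2/\epsilon$ factor into the constants.

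\textbf{Bound on $\|u_{\ell,i}\|$.} Using the recursion $u_{\ell+1,i}=u_{\ell,i}+\bar{\varphi}_\ell(\theta;u_{\ell,i})$ and $u_{0,i}=x_i$, telescoping gives
\begin{align*}
\|u_{\ell,i}\|\le \|x_i\|+\sum_{j=0}^{\ell-1}\|\bar{\varphi}_j(\theta;u_{j,i})\|,
\end{align*}
and applying~\eqref{eqn:bar_varphi_upper_bound} term by term yields~\eqref{eqn:u_upper_bound}. No Gronwall argument is needed here precisely because normalization has already removed any $\|u\|$-dependence from the bound on $\bar{\varphi}_\ell$.

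\textbf{Lipschitz bound~\eqref{eqn:u_Lipschitz_no_u} by induction on $\ell$.} For $\ell=0$ it is trivial since $u_{0,i}=x_i$ does not depend on $\theta$. For the inductive step, write
\begin{align*}
u_{\ell+1,i}(\theta)-u_{\ell+1,i}(\theta')&=\bigl(u_{\ell,i}(\theta)-u_{\ell,i}(\theta')\bigr)+\bar{\varphi}_\ell(\theta;u_{\ell,i}(\theta))-\bar{\varphi}_\ell(\theta';u_{\ell,i}(\theta')),
\end{align*}
apply the triangle inequality, bound the last difference using~\eqref{eqn:bar_varphi_Lipschitz}, and then replace $\|u_{\ell,\max,i}\|$ by its polynomial-in-$\|\theta_{\max,q}\|$ bound from~\eqref{eqn:u_upper_bound}. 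This produces an inequality of the form
\begin{align*}
\|u_{\ell+1,i}(\theta)-u_{\ell+1,i}(\theta')\|\le \bigl(1+P_\ell(\|\theta_{\max}\|)\bigr)\|u_{\ell,i}(\theta)-u_{\ell,i}(\theta')\|+Q_\ell(\|\theta_{\max}\|)\|\theta-\theta'\|,
\end{align*}
where $P_\ell,Q_\ell$ are polynomials in $\|\theta_{\max,q}\|$ with positive coefficients. Invoking the inductive hypothesis on the first term gives a polynomial times $\|\theta-\theta'\|$ of the required form, with degree at most $g(\ell+1):=g(\ell)+\max\{\deg P_\ell,\deg Q_\ell\}$.

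\textbf{Main obstacle.} The only nonroutine issue is ensuring the inductive polynomial bound stays in the structural form $\sum_{j}C_{u,j,i}\prod_q\|\theta_{\max,q}\|^{p_{u,j,q}}\|\theta-\theta'\|$. This requires that products of monomials in $\|\theta_{\max,q}\|$ remain monomials (obvious) and that the $\|x_i\|$ contribution from~\eqref{eqn:u_upper_bound} is safely absorbed into the $i$-dependent constants $C_{u,j,i}$ rather than appearing as a separate variable; this is why the constants carry the index $i$. Tracking these coefficients carefully through the recursion, while only incrementing the polynomial degree by a $\ell$-dependent amount, is the one bookkeeping step that deserves care; everything else reduces to applying Assumption~\ref{assump:app_generalized_smoothness_detail} with $\tau_\epsilon$ substituted for $u$.
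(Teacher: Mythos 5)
Your proposal is correct and follows the same overall structure as the paper's proof: the first three bounds come from $\|\tau_\epsilon(u)\|\le 1$ and the uniform bound on $\|\nabla_u\tau_\epsilon(u)\|$, the bound on $\|u_{\ell,i}\|$ comes from telescoping the residual recursion, and the Lipschitz bound~\eqref{eqn:u_Lipschitz_no_u} is obtained from a recursive inequality on $a_\ell:=\|u_{\ell,i}(\theta)-u_{\ell,i}(\theta')\|$. The one place you diverge is the last step: the paper expands $u_\ell(\theta)-u_\ell(\theta')$ cumulatively as $\sum_{j<\ell}(\bar\varphi_j(\theta;u_j(\theta))-\bar\varphi_j(\theta';u_j(\theta')))$, obtains $a_\ell\le b_\ell+\sum_{j<\ell}\lambda_j a_j$, and invokes a discrete Gronwall inequality (Proposition 4.1 of \citet{emmrich1999discrete}) to resolve it; you instead use the one-step incremental form $a_{\ell+1}\le(1+\lambda_\ell)a_\ell+c_\ell\|\theta-\theta'\|$ and unroll it directly by induction on $\ell$. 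Both routes are valid and produce a polynomial-in-$\|\theta_{\max,q}\|$ bound of the required form; the incremental unrolling is a bit more elementary and self-contained (no external Gronwall lemma needed), while the paper's cumulative expansion fits more naturally with the telescoped representation $u_\ell=x_i+\sum_{j<\ell}\bar\varphi_j$ already used for~\eqref{eqn:u_upper_bound}. Your observation that normalization removes the need for a Gronwall argument in~\eqref{eqn:u_upper_bound} (but not in~\eqref{eqn:u_Lipschitz_no_u}, where the inner $u_{j,i}$-dependence in the Lipschitz bound persists) is exactly right and matches the paper.
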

\begin{proof}

    By the definition of $\epsilon-$normalization, we have
    \begin{align*}
        \|\tau_\epsilon(u)\|=\left\|\frac{u}{\sqrt{\|u\|^2+\epsilon^2}}\right\|\le 1.
    \end{align*}
    Then we have
    \begin{align*}
    \|\bar{\varphi}_\ell(\theta;u)\|=\|\varphi(\theta;\tau_\epsilon(u))\|\le \sum_{t=1}^{n_{0,\ell}} C_{0,\ell,t}\prod_{i=1}^{n_\theta} \|\theta_i\|^{p_{0,\ell,t, i}}\|\tau_\epsilon(u)\|^{p_{0,\ell,t}}\le \sum_{t=1}^{n_{0,\ell}} C_{0,\ell,t}\prod_{i=1}^{n_\theta} \|\theta_i\|^{p_{0,\ell,t, i}}.
    \end{align*}
    The inequality of $\nabla_{\theta_j}\bar{\varphi}_\ell(\theta;u)$ follows the same idea. For $\nabla_{u}\bar{\varphi}_\ell(\theta;u)$, first consider
        \begin{align*}
        \nabla \tau_\epsilon(u)=\frac{1}{\sqrt{\|u\|^2+\epsilon^2}}I-\frac{1}{(\sqrt{\|u\|^2+\epsilon^2})^3}uu^\top
    \end{align*}
    Then by Wely's theorem,
    \begin{align*}
        \|\nabla\tau_\epsilon(u)\|\le \frac{1}{\sqrt{\|u\|^2+\epsilon^2}}+\frac{\|u\|^2}{(\sqrt{\|u\|^2+\epsilon^2})^3}\le \frac{1}{\epsilon}
    \end{align*}
    where the equality of the last inequality is achieved at $\|u\|=0$.
    Then
    \begin{align*}
        \|\nabla_{u}\bar{\varphi}_\ell(\theta;u)\|&=\|\nabla_{\tau_\epsilon(u)}{\varphi}_\ell(\theta;\tau_\epsilon(u))\nabla\tau_\epsilon(u)\|\\
        &\le \|\nabla_{\tau_\epsilon(u)}{\varphi}_\ell(\theta;\tau_\epsilon(u))\| \|\nabla\tau_\epsilon(u)\|\\
        &\le \sum_{t=1}^{n_{2,\ell}} C_{2,\ell,t}\prod_{i=1}^{n_\theta} \|\theta_i\|^{p_{2,\ell,t, i}}
    \end{align*}
    where $C_{2,\ell,t}$ depends on $\epsilon$ and the last inequality follows the same idea as the previous two inequalities.

    For the upper bound of $u_{\ell,i}$, note that 
    \begin{align*}
        u_{\ell+1,i}&=u_{\ell,i}+\bar{\varphi}_\ell(\theta;u_{\ell,i})=\cdots=u_{0,i}+\sum_{j=0}^\ell \bar{\varphi}_j(\theta;u_{j,i}).
    \end{align*}
    Then,
    \begin{align*}
        \|u_{\ell,i}\|\le \|x_i\|+\sum_{j=0}^{\ell-1}\|\bar{\varphi}_j(\theta;u_{j,i})\|\le \|x_i\|+\sum_{j=0}^{\ell-1} \sum_{t=1}^{n_{0,j}} C_{0,j,t}\prod_{i=1}^{n_\theta} \|\theta_i\|^{p_{0,j,t, i}}
    \end{align*}
    where the second inequality follows from~\eqref{eqn:bar_varphi_upper_bound}.

    Also,
    \begin{align*}
        \|u_{\ell,i}(\theta)-u_{\ell,i}(\theta')\|&=\|x_0+\sum_{j=0}^{\ell-1} \bar{\varphi}_j(\theta;u_{j,i}(\theta))-x_0-\sum_{j=0}^{\ell-1} \bar{\varphi}_j(\theta';u_{j,i}(\theta'))\|\\
        &\le \sum_{j=0}^{\ell-1}\|\bar{\varphi}_j(\theta;u_{j,i}(\theta))-\bar{\varphi}_j(\theta';u_{j,i}(\theta'))\|\\
        &\le \sum_{j=0}^{\ell-1} \big(C_{1,1,j}\prod_{q=1}^{n_\theta}\|\theta_{\max, q}\|^{p_{1,1,j, q}}\|u_{\max}\|^{p_{1,1,j}}+C_{1,1,j,0}\big)\ \|\theta-\theta'\|\notag\\
    &\qquad+\sum_{j=0}^{\ell-1}\big(C_{1,2,j}\prod_{q=1}^{n_\theta}\|\theta_{\max, q}\|^{p_{1,2,j, q}}\|u_{\max}\|^{p_{1,2,j}}+C_{1,2,j,0}\big)\ \|u_{j,i}(\theta)-u_{j,i}(\theta')\|
    \end{align*}
    where the first inequality follows from~\eqref{eqn:bar_varphi_upper_bound}.

    Then we denote the above inequality as follows
    \begin{align*}
        \underbrace{\|u_{\ell,i}(\theta)-u_{\ell,i}(\theta')\|}_{a_\ell}&\le \underbrace{\sum_{j=0}^{\ell-1} \big(C_{1,1,j}\prod_{q=1}^{n_\theta}\|\theta_{\max, q}\|^{p_{1,1,j, q}}\|u_{\max}\|^{p_{1,1,j}}+C_{1,1,j,0}\big)\ \|\theta-\theta'\|}_{b_\ell}\notag\\
    &\qquad+\sum_{j=0}^{\ell-1}\underbrace{\big(C_{1,2,j}\prod_{q=1}^{n_\theta}\|\theta_{\max, q}\|^{p_{1,2,j, q}}\|u_{\max}\|^{p_{1,2,j}}+C_{1,2,j,0}\big)}_{\lambda_j}\ \underbrace{\|u_{j,i}(\theta)-u_{j,i}(\theta')\|}_{a_j}.
    \end{align*}
    Since $a_0=\|x_i-x_i\|=0$, define $b_0=0\ge a_0$, and then by discrete Gronwall's inequality (Proposition 4.1 in \citet{emmrich1999discrete} with $\theta=0,\tau_j=1$), we have
    \begin{align*}
        a_\ell\le b_\ell+\sum_{j=0}^{\ell-1}\lambda_jb_j\prod_{s=j+1}^{\ell-1}(1+\lambda_s),
    \end{align*}
    namely, 
    \begin{align*}
        &\|u_{\ell,i}(\theta)-u_{\ell,i}(\theta')\|\\&\le \bigg(\sum_{j=0}^{\ell-1} \big(C_{1,1,j}\prod_{q=1}^{n_\theta}\|\theta_{\max, q}\|^{p_{1,1,j, q}}\|u_{\max,j,i}\|^{p_{1,1,j}}+C_{1,1,j,0}\big)\\
        &\qquad+\sum_{j=1}^{\ell-1}\big(C_{1,2,j}\prod_{q=1}^{n_\theta}\|\theta_{\max, q}\|^{p_{1,2,j, q}}\|u_{\max,j,i}\|^{p_{1,2,j}}+C_{1,2,j,0}\big)\\
        &\qquad\times\big(\sum_{r=0}^{j-1} C_{1,1,r}\prod_{q=1}^{n_\theta}\|\theta_{\max, q}\|^{p_{1,1,r, q}}\|u_{\max,r,i}\|^{p_{1,1,r}}+C_{1,1,r,0}\big)\\
        &\qquad\times \prod_{s=j+1}^{\ell-1}\big( 1+C_{1,2,s}\prod_{q=1}^{n_\theta}\|\theta_{\max, q}\|^{p_{1,2,s, q}}\|u_{\max,s,i}\|^{p_{1,2,s}}+C_{1,2,s,0}\big)\bigg)\ \|\theta-\theta'\|\\
        &\le \bigg(\sum_{j=0}^{\ell-1} \big(C_{1,1,j}\prod_{q=1}^{n_\theta}\|\theta_{\max, q}\|^{p_{1,1,j, q}}\big(\sum_{s=0}^{j-1} C_{0,s}\prod_{q=1}^{n_\theta}\|\theta_{\max, q}\|^{p_{0,s, q}}+\|x_i\|\big)^{p_{1,1,s}}+C_{1,1,s,0}\big)\\
        &\qquad+\sum_{j=1}^{\ell-1}\big(C_{1,2,j}\prod_{q=1}^{n_\theta}\|\theta_{\max, q}\|^{p_{1,2,j, q}}\big(\sum_{s=0}^{j-1} C_{0,s}\prod_{q=1}^{n_\theta}\|\theta_{\max, q}\|^{p_{0,s, q}}+\|x_i\|\big)^{p_{1,2,j}}+C_{1,2,j,0}\big)\\
        &\qquad\times\big(\sum_{r=0}^{j-1} C_{1,1,r}\prod_{q=1}^{n_\theta}\|\theta_{\max, q}\|^{p_{1,1,r, q}}\big(\sum_{s=0}^{r-1} C_{0,s}\prod_{q=1}^{n_\theta}\|\theta_{\max, q}\|^{p_{0,s, q}}+\|x_i\|\big)^{p_{1,1,r}}+C_{1,1,r,0}\big)\\
        &\qquad\times \prod_{s=j+1}^{\ell-1}\big( 1+C_{1,2,s}\prod_{q=1}^{n_\theta}\|\theta_{\max, q}\|^{p_{1,2,s, q}}\big(\sum_{t=0}^{s-1} C_{0,t}\prod_{q=1}^{n_\theta}\|\theta_{\max, q}\|^{p_{0,t, q}}+\|x_i\|\big)^{p_{1,2,s}}+C_{1,2,s,0}\big)\bigg)\ \|\theta-\theta'\|\\  &= \sum_{j=0}^{g_3(\ell)}C_{u,j,i}\prod_{q=1}^{n_\theta} \|\theta_{\max, q}\|^{p_{u,j,q}}\ \|\theta-\theta'\|
    \end{align*}
    where the second inequlaity follows from~\eqref{eqn:u_upper_bound}, $g_3(\ell)$ is some polynomial of $\ell$, and $C_{u,j,i}$ is some constant depending on $\|x_i\|$.
\end{proof}

\begin{corollary}
\label{cor:bar_varphi_and_grad_Lipschitz_no_u}
    Under the same assumptions as Lemma~\ref{lem:bar_varphi_u_uLipschitz_upper_bound_no_u}, we have that, for all $\ell=0,\cdots,L$,
\begin{align}
\label{eqn:bar_varphi_Lipschitz_no_u}
    \| \bar{\varphi}_\ell(\theta;u_{\ell,i}(\theta))-\bar{\varphi}_\ell(\theta';u_{\ell,i}(\theta'))\|\le\sum_{j=0}^{\tilde{g}_3(\ell)}C_{\bar{\varphi},\ell,j,i}\prod_{q=1}^{n_\theta} \|\theta_{\max, q}\|^{p_{\bar{\varphi},\ell,j,q}}\ \|\theta-\theta'\|
\end{align}
\begin{align}
\label{eqn:grad_bar_varphi_theta_Lipschitz_no_u}
    \|\nabla_{\theta_\ell} \bar{\varphi}_\ell(\theta;u_{\ell,i}(\theta))-\nabla_{\theta_\ell} \bar{\varphi}_\ell(\theta';u_{\ell,i}(\theta'))\|\le\sum_{j=0}^{\tilde{g}_3(\ell)}C_{\nabla_\theta\bar{\varphi},\ell,j,i}\prod_{q=1}^{n_\theta} \|\theta_{\max, q}\|^{p_{\nabla_\theta\bar{\varphi},\ell,j,q}}\ \|\theta-\theta'\|
\end{align}
\begin{align}
\label{eqn:grad_bar_varphi_u_Lipschitz_no_u}
    \|\nabla_{u} \bar{\varphi}_\ell(\theta;u_{\ell,i}(\theta))-\nabla_{u} \bar{\varphi}_\ell(\theta';u_{\ell,i}(\theta'))\|\le\sum_{j=0}^{\tilde{g}_3(\ell)}C_{\nabla_u\bar{\varphi},\ell,j,i}\prod_{q=1}^{n_\theta} \|\theta_{\max, q}\|^{p_{\nabla_u\bar{\varphi},\ell,j,q}}\ \|\theta-\theta'\|
\end{align}
where $\tilde{g}_3(\ell)$ is some polynomial of $\ell$, and the $C$'s depends on $\|x_i\|$.
\end{corollary}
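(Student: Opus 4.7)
The plan is to reduce all three inequalities in Corollary~\ref{cor:bar_varphi_and_grad_Lipschitz_no_u} to direct substitutions into the corresponding polynomial generalized continuity bounds provided by Assumption~\ref{assump:app_generalized_smoothness_detail}, using the estimates already proved in Lemma~\ref{lem:bar_varphi_u_uLipschitz_upper_bound_no_u}. Concretely, for the first inequality~\eqref{eqn:bar_varphi_Lipschitz_no_u}, I would start from~\eqref{eqn:bar_varphi_Lipschitz} and specialize $u = u_{\ell,i}(\theta)$, $u' = u_{\ell,i}(\theta')$, which is permitted since~\eqref{eqn:bar_varphi_Lipschitz} holds for arbitrary $\theta,\theta',u,u'$. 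The same strategy applies to the two gradient statements~\eqref{eqn:grad_bar_varphi_theta_Lipschitz_no_u} and~\eqref{eqn:grad_bar_varphi_u_Lipschitz_no_u} by starting from~\eqref{eqn:grad_bar_varphi_theta_Lipschitz} and~\eqref{eqn:grad_bar_varphi_u_Lipschitz} respectively.

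After the substitution, two quantities in the inherited bound still depend on $u$ and must be eliminated in favor of $\|\theta\|$ and $\|\theta-\theta'\|$. The first is $\|u_{\max}\| = \max\{\|u_{\ell,i}(\theta)\|, \|u_{\ell,i}(\theta')\|\}$, which I would control by applying~\eqref{eqn:u_upper_bound} to each argument and taking the maximum; since the upper bound in~\eqref{eqn:u_upper_bound} is monotone in each $\|\theta_q\|$ and has nonnegative coefficients, replacing $\|\theta_q\|$ by $\|\theta_{\max,q}\|$ yields a bound of the form $\sum_j C_j \prod_q \|\theta_{\max,q}\|^{p_{j,q}}$ (plus terms involving $\|x_i\|$, which get absorbed into the constants). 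The second is $\|u - u'\| = \|u_{\ell,i}(\theta) - u_{\ell,i}(\theta')\|$, which I would bound using~\eqref{eqn:u_Lipschitz_no_u}, a polynomial in $\|\theta_{\max,q}\|$ times $\|\theta-\theta'\|$.

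Plugging these into the Assumption~\ref{assump:app_generalized_smoothness_detail} bounds produces, in each case, a sum of products of $\|\theta_{\max,q}\|^{\text{positive integer}}$ multiplied by $\|\theta-\theta'\|$, matching the claimed form $\sum_{j=0}^{\tilde{g}_3(\ell)} C \prod_q \|\theta_{\max,q}\|^{p} \|\theta-\theta'\|$. The polynomial degree counter $\tilde{g}_3(\ell)$ arises from two sources: the $\ell$-dependent number of summands in~\eqref{eqn:u_upper_bound} (which gets raised to a fixed power $p$ from~\eqref{eqn:bar_varphi_Lipschitz}, producing $\mathcal{O}(\ell^p)$ terms) and the $g_3(\ell)$ terms already in~\eqref{eqn:u_Lipschitz_no_u}; both are polynomial in $\ell$, so their sum is too.

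The main obstacle is not conceptual but purely combinatorial: carefully tracking how the degree $\tilde{g}_3(\ell)$ and the constants $C_{\bar{\varphi},\ell,j,i}, C_{\nabla_\theta\bar{\varphi},\ell,j,i}, C_{\nabla_u\bar{\varphi},\ell,j,i}$ (which depend on $\|x_i\|$ through~\eqref{eqn:u_upper_bound}) arise from expanding products of polynomial bounds. In particular, one must confirm that raising $\|u_{\max}\|$ to an integer power $p_{1,1,\ell}$ and similar preserves the polynomial form with positive coefficients and nonnegative integer exponents; this follows from the multinomial theorem applied to~\eqref{eqn:u_upper_bound}. No new analytic ideas are required beyond what Lemma~\ref{lem:bar_varphi_u_uLipschitz_upper_bound_no_u} has already established, so I would present the three cases in parallel rather than separately.
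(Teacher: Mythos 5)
Your proposal matches the paper's proof essentially step for step: both start from the polynomial generalized continuity bounds~\eqref{eqn:bar_varphi_Lipschitz}, \eqref{eqn:grad_bar_varphi_theta_Lipschitz}, \eqref{eqn:grad_bar_varphi_u_Lipschitz} specialized to $u=u_{\ell,i}(\theta)$, $u'=u_{\ell,i}(\theta')$, then eliminate the $u$-dependence by substituting~\eqref{eqn:u_upper_bound} for $\|u_{\max}\|$ and~\eqref{eqn:u_Lipschitz_no_u} for $\|u_{\ell,i}(\theta)-u_{\ell,i}(\theta')\|$, and finally expand to a polynomial in $\|\theta_{\max,q}\|$ with coefficients depending on $\|x_i\|$. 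Your explicit remark about monotonicity and the multinomial expansion makes the bookkeeping more transparent, but it is the same argument.
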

\begin{proof}
For the first inequality,
    \begin{align*}
    &\| \bar{\varphi}_\ell(\theta;u_{\ell,i}(\theta))-\bar{\varphi}_\ell(\theta';u_{\ell,i}(\theta'))\|\\
    &\le \big(C_{1,1,\ell}\prod_{q=1}^{n_\theta}\|\theta_{\max, q}\|^{p_{1,1,\ell, q}}\|u_{\max}\|^{p_{1,1,\ell}}+C_{1,1,\ell,0}\big)\ \|\theta-\theta'\|\notag\\
    &\qquad+\big(C_{1,2,\ell}\prod_{q=1}^{n_\theta}\|\theta_{\max, q}\|^{p_{1,2,\ell, q}}\|u_{\max}\|^{p_{1,2,\ell}}+C_{1,2,\ell,0}\big)\ \|u_{\ell,i}(\theta)-u_{\ell,i}(\theta')\|\\
    &\le \big(C_{1,1,\ell}\prod_{q=1}^{n_\theta}\|\theta_{\max, q}\|^{p_{1,1,\ell, q}}\big(\sum_{s=0}^{j-1} C_{0,s}\prod_{q=1}^{n_\theta}\|\theta_{\max, q}\|^{p_{0,s, q}}+\|x_i\|\big)^{p_{1,1,\ell}}+C_{1,1,\ell,0}\big)\ \|\theta-\theta'\|\notag\\
    &\qquad+\big(C_{1,2,\ell}\prod_{q=1}^{n_\theta}\|\theta_{\max, q}\|^{p_{1,2,\ell, q}}\big(\sum_{s=0}^{j-1} C_{0,s}\prod_{q=1}^{n_\theta}\|\theta_{\max, q}\|^{p_{0,s, q}}+\|x_i\|\big)^{p_{1,2,\ell}}+C_{1,2,\ell,0}\big)\\
    &\qquad\qquad\times\sum_{j=0}^{g_3(\ell)}C_{u,j,i}\prod_{q=1}^{n_\theta} \|\theta_{\max, q}\|^{p_{u,j,q}}\ \|\theta-\theta'\|\\
    &=\sum_{j=0}^{\tilde{g}_3(\ell)}C_{\bar{\varphi},\ell,j,i}\prod_{q=1}^{n_\theta} \|\theta_{\max, q}\|^{p_{\bar{\varphi},\ell,j,q}}\ \|\theta-\theta'\|
\end{align*}
where the second inequality follows from~\eqref{eqn:u_upper_bound} and~\eqref{eqn:u_Lipschitz_no_u}, $\tilde{g}_3(\ell)$ is some polynomial of $\ell$, $C_{\bar{\varphi},\ell,j,i}$ depends on $\|x_i\|$. The other two inequalities follow the same idea.
\end{proof}

% \begin{lemma}
%     there exists some $c>0$, s.t., $\|\theta\|\ge c$, $c=\min\{\|\theta^0\|,\cdots,\|\theta^{K_0}\|,c_1\}$ (or just use any fixed $K_0$, b/c otherwise it assume convergence)
% \end{lemma}

% \begin{corollary}
%     bounded by $\theta_1$ instead of $\theta_{\max}$
% \end{corollary}

\begin{lemma}
\label{lem:loss_function_generalized_smoothness}
    Under Assumption~\ref{assump:app_generalized_smoothness_detail}, $l(\theta;x_i)$ satisfies generalized smoothness
    \begin{align}
       \|\nabla_\theta l(\theta;x_i)-\nabla_\theta l(\theta';x_i)\|\le \sum_{j=1}^{n_\theta}\sum_{\ell\in\cJ_j} \sum_{j=0}^{{g}_4(\ell)}C_{l,\ell,j,i}\prod_{q=1}^{n_\theta} \|\theta_{\max, q}\|^{p_{l,\ell,j,q}}\ \|\theta-\theta'\|
    \end{align}
    where $g_4(\ell)$ is some polynomial of $\ell$. Consequently, 
    \begin{align}
         \|\nabla_\theta\cL(\theta)-\nabla_\theta\cL(\theta')\|\le S(\|\theta_{\max,1}\|,\cdots,\|\theta_{\max,n_\theta}\|) \ \|\theta-\theta'\|
    \end{align}
    where $S(\|\theta_{\max,1}\|,\cdots,\|\theta_{\max,n_\theta}\|)=\frac{1}{N}\sum_{i=1}^N\sum_{j=1}^{n_\theta}\sum_{\ell\in\cJ_j} \sum_{j=0}^{{g}_4(\ell)}C_{l,\ell,j,i}\prod_{q=1}^{n_\theta} \|\theta_{\max, q}\|^{p_{l,\ell,j,q}}$.
\end{lemma}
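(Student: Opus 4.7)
\textbf{Proof plan for Lemma~\ref{lem:loss_function_generalized_smoothness}.} The plan is to bound $\|\nabla_\theta l(\theta;x_i)-\nabla_\theta l(\theta';x_i)\|$ layer-by-layer, then coordinate-by-coordinate, by applying a standard product-telescoping identity to the explicit chain-rule expansion of $\nabla_{\theta_j}l$ given earlier in the excerpt. For each $j\in\{1,\dots,n_\theta\}$ and each $\ell\in\cJ_j$, write
\[
\nabla_{\theta_j}l(\theta;x_i)
=\underbrace{\nabla_f l(x_i)}_{A_0(\theta)}\,\underbrace{\nabla_u\bar{\varphi}_L(u_{L,i})}_{A_1(\theta)}\,\underbrace{(I+\nabla_u\bar{\varphi}_{L-1}(u_{L-1,i}))}_{A_2(\theta)}\cdots\underbrace{\nabla_{\theta_j}\bar{\varphi}_\ell(\theta;u_{\ell,i})}_{A_{L-\ell+1}(\theta)}
\]
and use the telescoping identity
\[
\prod_k A_k(\theta)-\prod_k A_k(\theta')=\sum_{k}A_0(\theta)\cdots A_{k-1}(\theta)\bigl(A_k(\theta)-A_k(\theta')\bigr)A_{k+1}(\theta')\cdots A_{L-\ell+1}(\theta').
\]

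First I would establish pointwise polynomial bounds on each factor $A_k(\theta)$. Inequalities \eqref{eqn:grad_bar_varphi_u_upper_bound} and \eqref{eqn:grad_bar_varphi_theta_upper_bound} from Lemma~\ref{lem:bar_varphi_u_uLipschitz_upper_bound_no_u} already give such bounds (with the $u$-dependence absorbed via $\epsilon$-normalization), and $\|A_0(\theta)\|=\|f(\theta;x_i)-y_i\|\le \|y_i\|+\sum_{t}C_{0,L,t}\prod_q\|\theta_q\|^{p_{0,L,t,q}}$ follows from \eqref{eqn:bar_varphi_upper_bound} at $\ell=L$. Each of these bounds has the form $\sum_j C\prod_q\|\theta_q\|^{p_q}$, which is monotone in each $\|\theta_q\|$, so evaluating at $\theta_{\max}$ is an upper bound for both $\theta$ and $\theta'$.

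Next I would bound each difference $\|A_k(\theta)-A_k(\theta')\|$ by the same monomial form in $\|\theta_{\max,q}\|$ times $\|\theta-\theta'\|$. For $k\ge 1$ this is exactly the content of Corollary~\ref{cor:bar_varphi_and_grad_Lipschitz_no_u}, namely \eqref{eqn:grad_bar_varphi_theta_Lipschitz_no_u} for $A_{L-\ell+1}$ and \eqref{eqn:grad_bar_varphi_u_Lipschitz_no_u} (together with the identity Jacobian) for the intermediate factors; for $k=0$ I would write $\|A_0(\theta)-A_0(\theta')\|=\|f(\theta;x_i)-f(\theta';x_i)\|$ and invoke \eqref{eqn:bar_varphi_Lipschitz_no_u} at $\ell=L$. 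Substituting these into the telescoping identity and multiplying out, the product of bounds in each summand is again a finite sum of monomials $C\prod_q\|\theta_{\max,q}\|^{p_q}$, so the per-$(j,\ell)$ contribution to $\|\nabla_\theta l(\theta;x_i)-\nabla_\theta l(\theta';x_i)\|$ is $\sum_{j=0}^{g_4(\ell)}C_{l,\ell,j,i}\prod_q\|\theta_{\max,q}\|^{p_{l,\ell,j,q}}\cdot\|\theta-\theta'\|$, where $g_4(\ell)$ grows polynomially in $\ell$ because the telescoping sum has $L-\ell+1=\cO(\ell)$ summands and each Lipschitz bound contributes $\tilde{g}_3(\ell)$ monomials. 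Summing over $j\in\{1,\dots,n_\theta\}$ and $\ell\in\cJ_j$ produces the stated bound on $\|\nabla l(\theta;x_i)-\nabla l(\theta';x_i)\|$, and finally averaging over $i=1,\dots,N$ and collecting all the $C$'s and exponents into a single polynomial yields the claimed polynomial generalized smoothness of $\cL$.

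The main obstacle will be purely bookkeeping: verifying that every intermediate product of a polynomial bound with a polynomial Lipschitz constant stays in the prescribed monomial template $C\prod_q\|\theta_{\max,q}\|^{p_q}$, and tracking how the degree bound $g_4(\ell)$ grows as more layers are chained. There is no analytic subtlety beyond using monotonicity of $S$ on $\RR_{\ge 0}^{n_\theta}$ to upper bound factors evaluated at $\theta$ or $\theta'$ by factors evaluated at $\theta_{\max}$; the proof is essentially a careful expansion of a Leibniz/chain-rule expression using Corollary~\ref{cor:bar_varphi_and_grad_Lipschitz_no_u}.
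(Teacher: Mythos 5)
Your proposal matches the paper's proof essentially step for step: the paper also expands the difference of the chain-rule products via a telescoping decomposition, bounds each surviving factor using the polynomial bounds in Lemma~\ref{lem:bar_varphi_u_uLipschitz_upper_bound_no_u} (including $\|\nabla_f l\|=\|\bar{\varphi}_L(\theta;u_{L,i})-y_i\|\le\|\bar{\varphi}_L(\theta;u_{L,i})\|+\|y_i\|$), bounds each difference factor using Corollary~\ref{cor:bar_varphi_and_grad_Lipschitz_no_u}, and then sums over $\ell\in\cJ_j$, over $j$, and averages over $i$. The only minor inaccuracy in your sketch is that the telescoping sum has $L-\ell+2=\cO(L)$ (not $\cO(\ell)$) summands, but this does not affect the conclusion that $g_4(\ell)$ is polynomial.
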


\begin{proof}
From the calculations at the beginning of this section, we know
\begin{align*}
    \nabla_{\theta_j}l(\theta;x_i)&=\nabla_f \,l(\theta;x_i)\nabla_{\theta_\ell}f(x_i)\\
    &=\sum_{\ell\in\cJ_j}\nabla_f \,l(\theta;x_i)\nabla_u\bar{\varphi}_L(\theta;u_{L,i})(I+\nabla_u\bar{\varphi}_{L-1}(\theta;u_{L-1,i}))\cdots (I+\nabla_u\bar{\varphi}_{\ell+1}(\theta;u_{\ell+1,i}))\nabla_{\theta_j}\bar{\varphi}_{\ell}(\theta;u_{\ell,i})
\end{align*}
Then
\begin{align*}
    &\| \nabla_{\theta_j}l(\theta;x_i)- \nabla_{\theta_j}l(\theta';x_i)\|\\
    &\le \sum_{\ell\in\cJ_j}\Big\|\nabla_f \,l(\theta;x_i)\nabla_u\bar{\varphi}_L(\theta;u_{L,i})(I+\nabla_u\bar{\varphi}_{L-1}(\theta;u_{L-1,i}))\cdots (I+\nabla_u\bar{\varphi}_{\ell+1}(\theta;u_{\ell+1,i}))\nabla_{\theta_j}\bar{\varphi}_{\ell}(\theta;u_{\ell,i})\\
   &\qquad -\nabla_f \,l(\theta';x_i)\nabla_u\bar{\varphi}_L(\theta';u_{L,i})(I+\nabla_u\bar{\varphi}_{L-1}(\theta';u_{L-1,i}))\cdots (I+\nabla_u\bar{\varphi}_{\ell+1}(\theta';u_{\ell+1,i}))\nabla_{\theta_j}\bar{\varphi}_{\ell}(\theta';u_{\ell,i})\Big\|\\
   &= \sum_{\ell\in\cJ_j}\Big\|(\bar{\varphi}_L(\theta;u_{L,i})-y_i)^\top\nabla_u\bar{\varphi}_L(\theta;u_{L,i})(I+\nabla_u\bar{\varphi}_{L-1}(\theta;u_{L-1,i}))\cdots (I+\nabla_u\bar{\varphi}_{\ell+1}(\theta;u_{\ell+1,i}))\nabla_{\theta_j}\bar{\varphi}_{\ell}(\theta;u_{\ell,i})\\
   &\qquad -(\bar{\varphi}_L(\theta';u_{L,i})-y_i)^\top\nabla_u\bar{\varphi}_L(\theta';u_{L,i})(I+\nabla_u\bar{\varphi}_{L-1}(\theta';u_{L-1,i}))\cdots (I+\nabla_u\bar{\varphi}_{\ell+1}(\theta';u_{\ell+1,i}))\nabla_{\theta_j}\bar{\varphi}_{\ell}(\theta';u_{\ell,i})\Big\|\\
   &\le \sum_{\ell\in\cJ_j} (\|\bar{\varphi}_L(\theta;u_{L,i})\|+\|y_i\|)\|\nabla_u\bar{\varphi}_L(\theta;u_{L,i})\| (1+\|\nabla_u\bar{\varphi}_{L-1}(\theta;u_{L-1,i})\|)\cdots\\
   &\qquad\qquad\times(1+\|\nabla_u\bar{\varphi}_{\ell+1}(\theta;u_{\ell+1,i}))\|)\|\nabla_{\theta_j}\bar{\varphi}_{\ell}(\theta;u_{\ell,i})-\nabla_{\theta_j}\bar{\varphi}_{\ell}(\theta';u_{\ell,i})\| \\
   &\qquad+\cdots+ (\|\bar{\varphi}_L(\theta;u_{L,i})\|+\|y_i\|)\|\nabla_u\bar{\varphi}_L(\theta;u_{L,i})\| (1+\|\nabla_u\bar{\varphi}_{L-1}(\theta;u_{L-1,i})\|)\cdots\\
&\qquad\qquad\times\|\nabla_u\bar{\varphi}_{s}(\theta;u_{s,i})-\nabla_u\bar{\varphi}_{s}(\theta';u_{s,i})\|\cdots (1+\|\nabla_u\bar{\varphi}_{\ell+1}(\theta';u_{\ell+1,i}))\|\nabla_{\theta_j}\bar{\varphi}_{\ell}(\theta';u_{\ell,i})\|\\
&\qquad+\cdots+\|\bar{\varphi}_L(\theta;u_{L,i})-\bar{\varphi}_L(\theta';u_{L,i})\| \|\nabla_u\bar{\varphi}_L(\theta';u_{L,i})\|(1+\|\nabla_u\bar{\varphi}_{L-1}(\theta';u_{L-1,i})\|)\cdots \\
&\qquad\qquad\times(1+\|\nabla_u\bar{\varphi}_{\ell+1}(\theta';u_{\ell+1,i})\|)\|\nabla_{\theta_j}\bar{\varphi}_{\ell}(\theta';u_{\ell,i})\|\\
&\le \sum_{\ell\in\cJ_j} \sum_{j=0}^{{g}_4(\ell)}C_{l,\ell,j,i}\prod_{q=1}^{n_\theta} \|\theta_{\max, q}\|^{p_{l,\ell,j,q}}\ \|\theta-\theta'\|
\end{align*}
where the second inequality follows from triangular inequality, and the last inequality follows from Lemma~\ref{lem:bar_varphi_u_uLipschitz_upper_bound_no_u} and Corollary~\ref{cor:bar_varphi_and_grad_Lipschitz_no_u}; $C_{l,\ell,j,i}$ is a constant that depends on $\|x_i\|\text{ and }\|y_i\|$, and $g_4(\ell)$ is a polynomial of $\ell$.

Then 
\begin{align*}
        \|\nabla_\theta l(\theta;x_i)-\nabla_\theta l(\theta';x_i)\|&\le \sum_{j=1}^{n_\theta}\| \nabla_{\theta_j}l(\theta;x_i)- \nabla_{\theta_j}l(\theta';x_i)\|\\
        &\le \sum_{j=1}^{n_\theta}\sum_{\ell\in\cJ_j} \sum_{j=0}^{{g}_4(\ell)}C_{l,\ell,j,i}\prod_{q=1}^{n_\theta} \|\theta_{\max, q}\|^{p_{l,\ell,j,q}}\ \|\theta-\theta'\|
    \end{align*}
    and
     \begin{align*}
         \|\nabla_\theta\cL(\theta)-\nabla_\theta\cL(\theta')\|&\le\frac{1}{N}\sum_{i=1}^N \|\nabla_\theta l(\theta;x_i)-\nabla_\theta l(\theta';x_i)\|\\
         &\le \frac{1}{N}\sum_{i=1}^N\sum_{j=1}^{n_\theta}\sum_{\ell\in\cJ_j} \sum_{j=0}^{{g}_4(\ell)}C_{l,\ell,j,i}\prod_{q=1}^{n_\theta} \|\theta_{\max, q}\|^{p_{l,\ell,j,q}}\ \|\theta-\theta'\|
    \end{align*}

\end{proof}

\subsection{Additional lemmas in measure theory and differential topology}
\label{subapp:measure_theory_differential_topology}

In this section, we first introduce some basic concepts and theorems in measure theory and differential topology, and then state the supplementary lemmas of the proof.

\subsubsection{Basic definitions and theorems}
We first introduce the parametric transversality theorem (Theorem~\ref{thm:parametric_transversality}), together with the necessary concepts including residual set and transversality. We then state other useful theorems that will be applied in the lemmas of the following section

We first introduce the notions of nowhere dense set and meager set in order to define the residual set.
\begin{definition}[Nowhere dense set]
    A set is called nowhere dense if it is not dense in any nonempty open subset, or equivalently, the interior of its closure is empty.
\end{definition}

\begin{definition}[Meager set]
    A set is called meager if it is a countable union of nowhere dense sets.
\end{definition}

\begin{definition}[Residual set]
    A set is called residual if its complement is a countable union of nowhere dense sets, or equivalently, its complement is meager.
\end{definition}

\begin{proposition}
\label{prop:meager_residual}
The meager set and residual set have the following properties:
\begin{enumerate}
    \item All countable unions of meager sets are meager.
    \item Consider a continuous function $f:(a,\infty)\to\RR$ for some $a>0$. Then if $X\subseteq (a,\infty)$ is residual in $(a,\infty)$, then $f(X)$ is residual in $(0,1/a)$.
\end{enumerate}
     
\end{proposition}

We then introduce transversality of a map. 

\begin{definition}(Transversality)
    A map $f:M\to N$ is transversal to a submanifold $A\subseteq N$ if for any $x\in f^{-1}(A)$, we have ${\rm Im}(Df_x)+T_{f(x)}A=T_{f(x)}N$, i.e., the tangent space of $N$ at $f(x)$ is spanned by the image of the differential of $f$ at $x$ and the tangent space of $A$ at $f(x)$.
\end{definition}
In this paper, we only consider the case where the submanifold $A=\{0\}$. Then, transversality is equivalent to non-degeneracy of the map.

\begin{theorem}[Parametric transversality theorem \citep{hirsch2012differential}]
\label{thm:parametric_transversality}
    Let $V,M,N$ be $C^r$ manifold without boundary and $A\subset N$ a $C^r$ submanifold. Let $F:V\to C^r(M,N)$ satisfy the following conditions:
    \begin{itemize}
        \item $F^{\rm ev}:V\times M\to N,\ (v,x)\mapsto F_v(x)$, is $C^r$;
        \item $F^{\rm ev}$ is transverse to $A$;
        \item $r>\max\{0,{\rm dim}\ N+{\rm dim}\ A-{\rm dim}\ M\}$.
    \end{itemize}
    Then the set
   $\{v\in V:F_v\text{ is transverse to } A\}$
    is residual and therefore dense.
\end{theorem}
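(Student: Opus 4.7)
The plan is to reduce the claim to Sard's theorem by way of the preimage theorem. Since $F^{\rm ev}: V \times M \to N$ is $C^r$ and transverse to $A$, the preimage theorem will produce $W := (F^{\rm ev})^{-1}(A)$ as a $C^r$ submanifold of $V \times M$ of codimension $\dim N - \dim A$, with tangent space at any $(v,x) \in W$ given by $T_{(v,x)} W = \{(a,b) \in T_v V \oplus T_x M : dF^{\rm ev}_{(v,x)}(a,b) \in T_{F_v(x)} A\}$.

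I would then introduce the $C^r$ projection $\pi: W \to V$ obtained by restricting the natural projection $V \times M \to V$, and apply Sard's theorem to $\pi$. The stated dimensional hypothesis on $r$ is exactly what Sard requires for this particular map (controlling the difference of source and target dimensions of $\pi$), so its critical-value set has measure zero in any chart, and the set $R \subset V$ of regular values of $\pi$ is residual. Baire's theorem on the second-countable manifold $V$ then upgrades residual to dense.

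The heart of the argument will be to check that any $v \in R$ satisfies the conclusion, i.e.\ $F_v$ is transverse to $A$. For any $x \in F_v^{-1}(A)$, regularity of $\pi$ at $(v,x)$ means that every $a \in T_v V$ is the first component of some $(a,b) \in T_{(v,x)} W$. A short linear-algebra step combines this with the assumed surjectivity of $dF^{\rm ev}_{(v,x)}$ modulo $T_{F_v(x)} A$: given any target $w \in T_{F_v(x)} N$, first write $w = dF^{\rm ev}_{(v,x)}(a_0,b_0) + t$ with $t \in T_{F_v(x)} A$, then use regularity of $\pi$ to find $b_1$ with $dF^{\rm ev}_{(v,x)}(a_0,b_1) \in T_{F_v(x)} A$; the residue $w - dF^{\rm ev}_{(v,x)}(a_0,b_1) - t$ simplifies to $dF_v(b_0 - b_1)$, so $w \in dF_v(T_x M) + T_{F_v(x)} A$, as required.

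The main obstacle will be precisely this final diagram chase --- purely linear-algebraic, but requiring careful bookkeeping to ``absorb" the parameter direction of $dF^{\rm ev}$ into a correction lying in $T_{F_v(x)} A$ so that only the $b$-direction survives. Every other ingredient (preimage/regular value theorem, Sard's theorem with the precise smoothness count, Baire category) is standard differential-topology machinery and can be cited from \citet{hirsch2012differential}; the novelty in the proof lies entirely in recognizing that regularity of the projection $\pi$ translates exactly into pointwise transversality of the slice map $F_v$.
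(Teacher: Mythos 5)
This statement is quoted, not proved, in the paper: it is imported verbatim from the cited reference \citep{hirsch2012differential}, so there is no in-paper proof to compare against. Your proposal is exactly the standard argument behind that reference's proof --- apply the preimage theorem to $F^{\rm ev}$ to get the $C^r$ manifold $W=(F^{\rm ev})^{-1}(A)$, invoke Sard for the projection $\pi:W\to V$, and run the linear-algebra diagram chase showing that regular values of $\pi$ are precisely parameters $v$ with $F_v$ transverse to $A$ --- and that chase as you sketch it is correct. One small caveat: Sard applied to $\pi$ needs $r>\max\{0,\dim W-\dim V\}=\max\{0,\dim M+\dim A-\dim N\}$ (i.e.\ $\dim M$ minus the codimension of $A$), which is the form of the hypothesis in Hirsch; the inequality as transcribed in the statement above has $M$ and $N$ swapped, so your assertion that the stated bound is ``exactly what Sard requires'' is really an assertion about the correctly stated bound, and you should use that corrected form in the write-up.
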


In addition to the parametric transversality theorem, we also employ the following useful results.
\begin{theorem}[\citet{mityagin2015zero}]
\label{thm:zero_set_analytic_function}
    Let $f(x)$ be a real analytic function on (a connected open domain $U$ of) $\RR^d$. If $f$ is not identically zero, then its zero set
    \begin{align*}
        f^{-1}(0)=\{x\in U:f(x)=0\}
    \end{align*}
 has a zero measure, i.e., $\mathrm{Leb}(f^{-1}(0))=0$.
\end{theorem}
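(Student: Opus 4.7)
The plan is to prove this by induction on the dimension $d$, using Fubini's theorem to reduce the $d$-dimensional case to the $(d-1)$-dimensional case through sectioning along the last coordinate.

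\textbf{Base case ($d = 1$).} If $f$ is real-analytic on a connected open $U \subseteq \RR$ and $f \not\equiv 0$, then $f^{-1}(0)$ has no accumulation point in $U$: such a point would force all Taylor coefficients at that point to vanish, hence by the identity principle $f$ would vanish on the connected $U$. Thus $f^{-1}(0)$ is countable, so $\Leb_1(f^{-1}(0)) = 0$.

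\textbf{Inductive step.} Fix $p \in U$ and a small product polydisc $V = V' \times V_d \subseteq U$ centered at $p$. Expand
\begin{align*}
f(p + h) = \sum_{k = 0}^{\infty} a_k(h')\, h_d^k, \qquad h = (h', h_d),
\end{align*}
with each $a_k$ real-analytic on $V'$. Since $f \not\equiv 0$ on the connected open set $U$, the identity principle rules out $f$ vanishing on any nonempty open subset, so at least one $a_k$ is not identically zero on $V'$; let $k_0$ be the smallest such index. By the inductive hypothesis, $A = \{h' \in V' : a_{k_0}(h') = 0\}$ satisfies $\Leb_{d-1}(A) = 0$. For every $h' \in V' \setminus A$, the slice $g_{h'}(h_d) := f(p + (h', h_d))$ is analytic in $h_d$ with nonzero $k_0$-th Taylor coefficient at $0$, so by the base case $\Leb_1(g_{h'}^{-1}(0)) = 0$. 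Fubini's theorem then gives
\begin{align*}
\Leb_d(f^{-1}(0) \cap V) = \int_{V' \setminus A} 0 \, dh' + \int_A \Leb_1(V_d)\, dh' = 0.
\end{align*}

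Since $U$ is second countable, it can be covered by countably many such polydiscs, and countable subadditivity then yields $\Leb_d(f^{-1}(0)) = 0$. The main subtlety is to invoke the identity principle correctly on the connected set $U$: non-vanishing of $f$ on $U$ must translate into the existence of a nonzero Taylor coefficient at every point $p$, which is precisely what analyticity together with connectedness provides. Once this is secured, the Fubini reduction is routine and the induction closes.
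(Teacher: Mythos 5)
Your proof is correct. The paper does not prove this statement at all—it is imported as a known result from \citet{mityagin2015zero}—and your induction on the dimension, combined with the identity principle for the base case and the Fubini slicing argument with analytic coefficients $a_k(h')$ for the inductive step, is exactly the standard argument given in that reference, so it matches the intended proof.
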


\begin{theorem}[Theorem G.3 in \citet{wang2022large}]
\label{thm:inverse_map_null_set}Let $f:\RR^d\to\RR^d$ and $f\in\cC^1$. If the set of critical points of $f$ is a null-set, i,e.,
\begin{align*}
\cL(\{x\in\RR^d:\nabla f(x)\ \mathrm{is\ not\ invertible} \})=0,
\end{align*}
then $\cL(f^{-1}(B))=0$ for any null-set B.
\end{theorem}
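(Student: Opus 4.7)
The plan is to decompose the domain into its critical set and regular set, and handle each piece separately via the inverse function theorem combined with the Lusin $N$ property of $C^1$ maps. Let $C = \{x \in \RR^d : \nabla f(x) \text{ is not invertible}\}$ and $U = \RR^d \setminus C$. The hypothesis gives $\cL(C) = 0$, so $\cL(f^{-1}(B) \cap C) \le \cL(C) = 0$ automatically, and the task reduces to bounding $\cL(f^{-1}(B) \cap U)$.

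On $U$, every point is a regular point of $f$, so the inverse function theorem supplies an open neighborhood $V_x \subseteq U$ on which $f|_{V_x}$ is a $C^1$-diffeomorphism onto an open set $W_x = f(V_x)$. Since $U$ is an open subset of $\RR^d$, it is second countable and hence Lindel\"of, so the cover $\{V_x\}_{x \in U}$ admits a countable subcover $\{V_i\}_{i \in \NN}$, with corresponding images $W_i = f(V_i)$ and local inverses $g_i = (f|_{V_i})^{-1}: W_i \to V_i$, each of which is $C^1$ on an open set.

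The core fact I would invoke is that $C^1$ maps between open subsets of $\RR^d$ satisfy the Lusin $N$ property, i.e., they send Lebesgue null sets to Lebesgue null sets. One proves this by exhausting the domain by countably many compact sets on which $\nabla g_i$ is bounded, so that $g_i$ restricted to each such piece is Lipschitz; Lipschitz maps on bounded sets are absolutely continuous with respect to Lebesgue measure and thus preserve null sets. Applied to each $g_i$ and the null set $B \cap W_i$, this gives $\cL\bigl(g_i(B \cap W_i)\bigr) = \cL\bigl(f^{-1}(B) \cap V_i\bigr) = 0$. Countable subadditivity then yields $\cL(f^{-1}(B) \cap U) \le \sum_{i} \cL(f^{-1}(B) \cap V_i) = 0$, and combining with the critical-set bound completes the argument.

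The main technical point is the Lusin $N$ property itself, which is standard but requires the compact-exhaustion trick rather than following from $C^1$-regularity in a single step, since $g_i$ need not be globally Lipschitz on $W_i$. Once that ingredient is granted, the rest is essentially bookkeeping: splitting along the critical set, applying the inverse function theorem pointwise on its complement, and passing to a countable subcover. A minor sanity check is that $f^{-1}(B) \cap V_i = g_i(B \cap W_i)$, which uses exactly the fact that $f|_{V_i}$ is a bijection onto $W_i$.
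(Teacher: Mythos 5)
The paper does not give its own proof of this statement; it is imported verbatim as Theorem G.3 of \citet{wang2022large}, so there is no in-paper argument to compare against. Evaluating your proposal on its own merits: it is correct. You split $\RR^d$ into the critical set $C$ and its complement $U$, dispatch $C$ immediately via $\cL(f^{-1}(B)\cap C)\le\cL(C)=0$, cover $U$ by countably many open sets $V_i$ on which $f$ restricts to a $C^1$-diffeomorphism (inverse function theorem, then Lindel\"of to extract a countable subcover), and apply the Lusin $N$ property of $C^1$ maps to each local inverse $g_i=(f|_{V_i})^{-1}$. The identity $f^{-1}(B)\cap V_i=g_i(B\cap W_i)$ is exactly the bijectivity of $f|_{V_i}$ onto $W_i$, and countable subadditivity finishes. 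The Lusin $N$ step is the genuine content and your sketch of it is sound: exhaust $W_i$ by compact convex sets (e.g.\ closed balls contained in $W_i$), on each of which $\nabla g_i$ is bounded so $g_i$ is Lipschitz with some constant $M$; Lipschitz maps in $\RR^d$ satisfy $\cL(g(A))\le M^d\,\cL(A)$, hence send null sets to null sets. The one thing worth making explicit is the convexity of the exhausting compacta, which is what lets you pass from a gradient bound to a Lipschitz bound directly; using arbitrary compacta one would instead need to cover each by finitely many balls. This is a routine refinement and does not affect the validity of your outline.
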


\begin{theorem}[Lemma 3 in \citet{rader1973nice}]
\label{thm:image_measure_zero}
    Consider a function $f:U\subseteq\RR^d\to\RR^d$ that is pointwise Lipschitz, i.e., for any $x\in U$, there exists a neighbourhood $U_x$ of $x$ and a constant $L_x$, s.t.
    \begin{align*}
        \|f(x)-f(y)\|\le L_x\|x-y\|,\ \forall y\in U_x.
    \end{align*}
    Then its image of a measure-zero set is still measure zero. 
\end{theorem}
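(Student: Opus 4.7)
The plan is to cover $E$ by a countable union of sets on which the pointwise Lipschitz constant and radius are both controlled, and then exploit the one-sided Lipschitz bound by always placing the \emph{center} of each covering ball at a point where the bound applies.

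Concretely, I would stratify the domain as $U=\bigcup_{n\ge 1} U_n$, where
\begin{align*}
U_n=\{x\in U: L_x\le n\text{ and the Lipschitz-from-}x\text{ property holds on }B_x(1/n)\subseteq U\}.
\end{align*}
Let $E\subseteq U$ be a null set; then $E=\bigcup_n (E\cap U_n)$, so it suffices to prove $\Leb_d(f(E\cap U_n))=0$ for each fixed $n$. Fix $\epsilon>0$. Since $E\cap U_n$ has outer measure zero, cover it by open balls $B(z_i,r_i)$ with $r_i<1/(2n)$ and $\sum_i r_i^d<\epsilon$.

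Now comes the step that distinguishes pointwise from local Lipschitzness: for each ball that actually meets $E\cap U_n$, \emph{choose a center} $x_i\in E\cap U_n\cap B(z_i,r_i)$, and enlarge the ball to $B(x_i,2r_i)\supseteq B(z_i,r_i)$. Since $2r_i<1/n$, the ball $B(x_i,2r_i)$ lies inside the pointwise-Lipschitz neighborhood of $x_i$, so for every $y\in B(x_i,2r_i)$ we have
\begin{align*}
\|f(y)-f(x_i)\|\le L_{x_i}\|y-x_i\|\le 2nr_i,
\end{align*}
which uses only the one-sided bound anchored at $x_i$. Hence $f(B(z_i,r_i))\subseteq f(B(x_i,2r_i))\subseteq B(f(x_i),2nr_i)$. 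Summing,
\begin{align*}
\Leb_d(f(E\cap U_n))\le \sum_i V_d(2nr_i)^d=(2n)^dV_d\sum_i r_i^d<(2n)^dV_d\,\epsilon.
\end{align*}
Letting $\epsilon\to 0$ gives $\Leb_d(f(E\cap U_n))=0$, and a countable union over $n$ yields $\Leb_d(f(E))=0$.

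The main obstacle, and the subtle point I would emphasize, is exactly the one that derailed the earlier attempt: a Vitali-type covering of $E$ by balls $B(z_i,r_i)$ gives no Lipschitz control on $f$ over such a ball unless its center happens to sit at a point where the pointwise bound applies. The fix is to always re-center each covering ball at a point $x_i\in E\cap U_n$, at the cost of a harmless factor of $2^d$ in the volume, which is why the stratification by both $L_x$ and the radius $1/n$ (and not merely by $L_x$) is essential. A minor technical check is that one may assume $x_i$ exists whenever the ball is kept---balls missing $E\cap U_n$ contribute nothing to the image---so no measurability of $U_n$ itself is needed.
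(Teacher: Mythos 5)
Your proof is correct. The paper does not prove this statement at all---it is imported as Lemma~3 of \citet{rader1973nice} and then used as a black box---so there is no internal proof to compare against; your argument is essentially the standard proof behind that lemma. The two ingredients you emphasize are exactly the right ones: stratifying $U=\bigcup_n U_n$ by \emph{both} the bound $L_x\le n$ and the radius $1/n$ of the ball on which the pointwise estimate is valid, and re-centering each covering ball at a point $x_i\in E\cap U_n$ so that the one-sided bound anchored at $x_i$ controls $f$ on the (doubled) ball; the outer-measure bookkeeping, including the observation that no measurability of $U_n$ is needed, is handled correctly, and the countable union over $n$ finishes the job. Two cosmetic points. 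First, the containment $f(B(z_i,r_i))\subseteq B(f(x_i),2nr_i)$ should be read as a statement about $f\bigl(E\cap U_n\cap B(z_i,r_i)\bigr)$ (or $f\bigl(U\cap B(z_i,r_i)\bigr)$), since $f$ is only defined on $U$; this is all the argument requires. Second, you implicitly use that a set of outer measure zero can be covered by balls of radius less than $1/(2n)$ with $\sum_i r_i^d$ arbitrarily small; this is standard (cover by an open set of small measure, decompose into small dyadic cubes, circumscribe balls, losing only a dimensional constant), but a sentence acknowledging it would make the write-up airtight.
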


\begin{theorem}[Preimage Theorem]
\label{thm:preimage_theorem}
    If $y$ is a regular value of a smooth map $f : X \to Y$, then the preimage
$f^{-1}(y)$ is a submanifold of $X$, with $\dim f^{-1}(y) = \dim X-\dim Y $.
\end{theorem}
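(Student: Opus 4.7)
The plan is to prove the statement by invoking the Submersion Theorem (local normal form for submersions), which is itself a direct consequence of the classical Implicit Function Theorem. The key idea is that at each point of $f^{-1}(y)$, the surjectivity of $df$ lets us choose adapted coordinates in which $f$ becomes a coordinate projection and $f^{-1}(y)$ becomes a coordinate slice; a manifold structure on $f^{-1}(y)$ then drops out essentially for free.

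First, I would fix an arbitrary $x_0 \in f^{-1}(y)$ and set $n = \dim X$, $m = \dim Y$. Because $y$ is a regular value, $df_{x_0}:T_{x_0}X \to T_yY$ is surjective, so $n \ge m$ and in any smooth charts the Jacobian of $f$ at $x_0$ has full row rank $m$. After permuting coordinates so that the first $m$ columns form an invertible $m \times m$ block, I would apply the Implicit Function Theorem to solve these $m$ coordinates as smooth functions of the remaining $n-m$ coordinates. This produces a chart $\phi:U \to \RR^n$ near $x_0$ with $\phi(x_0)=0$ and a chart $\psi:V \to \RR^m$ near $y$ with $\psi(y)=0$ in which
\begin{equation*}
\psi \circ f \circ \phi^{-1}(u_1,\dots,u_n) = (u_1,\dots,u_m).
\end{equation*}
In these adapted coordinates, $f^{-1}(y) \cap U$ is exactly the coordinate slice $\{u_1=\cdots=u_m=0\}$, which is an $(n-m)$-dimensional smooth submanifold of $\RR^n$.

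Next I would assemble these slice charts into a smooth atlas on $f^{-1}(y)$. For any two such slice charts around points $x_0, x_0' \in f^{-1}(y)$, their transition map is simply the restriction of a smooth transition map between two charts of $X$, hence smooth. This shows $f^{-1}(y)$ carries a smooth manifold structure of dimension $n-m$ and that the inclusion into $X$ is a smooth embedding, so $f^{-1}(y)$ is a regular submanifold of $X$ with $\dim f^{-1}(y) = \dim X - \dim Y$, as claimed.

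The only genuine obstacle (for what is ultimately a textbook result) is establishing the local normal form itself; this requires not just surjectivity at $x_0$ but constant rank $m$ in a neighborhood. Both follow automatically: the rank function is lower semicontinuous, so $\operatorname{rank} df \ge m$ near $x_0$, while it is also bounded above by $m=\dim Y$, hence constant equal to $m$ on a neighborhood, which is exactly the hypothesis needed for the Rank / Submersion Theorem. Everything else reduces to bookkeeping about coordinate slices and smooth chart transitions.
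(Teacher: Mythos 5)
Your proof is correct and is the standard textbook argument via the Implicit Function Theorem / Submersion Theorem. The paper itself does not prove this statement; it simply records it (in Appendix G) as a known classical result from differential topology to be invoked later, so there is no alternative argument in the paper to compare against. Your handling of the one subtle point --- that surjectivity of $df$ at a single point of the fiber propagates to constant rank $m$ on a neighborhood, by lower semicontinuity of rank combined with the upper bound $\operatorname{rank} df \le \dim Y = m$ --- is exactly what is needed to apply the local normal form, and the rest (slice charts and smoothness of transitions by restriction) is routine and correctly stated.
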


\subsubsection{Supplementary lemmas of the proof}

We then show the supplementary lemmas for the convergence of neural network~\eqref{eqn:NN} by applying the above theorems.
\begin{lemma}
\label{lem:preimage_u_measure_zero}
    Fix some $\theta$. Under Assumption~\ref{assump:analytic},
    \begin{enumerate}
        \item Let $u_{\ell}=u_{\ell}(x):\RR^d\to \RR^d$. The preimage of $u_\ell$ for any measure-zero set in $\RR^{d}$ is a measure-zero set in $\RR^{d}$, for all $\ell=0,\cdots,L-1$.
        \item Let $U_{\ell}(x_1,\cdots,x_n)=\begin{pmatrix}
        u_{\ell}(x_1)\\\vdots u_{\ell}(x_N)
    \end{pmatrix}:\RR^{Nd}\to\RR^{Nd}$. The preimage of $U_\ell$ for any measure-zero set in $\RR^{Nd}$ is a measure-zero set in $\RR^{Nd}$, for all $\ell=0,\cdots,L-1$.
    \end{enumerate}
\end{lemma}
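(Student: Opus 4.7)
The plan is to reduce both statements to Theorem~\ref{thm:inverse_map_null_set}, which only requires the Jacobian of the map to be invertible outside a null set. Since the Jacobian determinant will turn out to be real-analytic (by Assumption~\ref{assump:analytic} together with Proposition~\ref{prop:analytic_function}), Theorem~\ref{thm:zero_set_analytic_function} then reduces everything to showing that this determinant is not identically zero on its domain.

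First I would handle statement 1 by induction on $\ell$. The base case $\ell = 0$ is trivial since $u_0(x) = x$ has $\nabla u_0 \equiv I$. For the inductive step, the chain rule gives
\begin{align*}
\nabla u_{\ell+1}(x) = \bigl(I + \nabla_u \bar{\varphi}_\ell(\theta; u_\ell(x))\bigr)\, \nabla u_\ell(x),
\end{align*}
so $\det \nabla u_{\ell+1}(x) = \det\bigl(I + \nabla_u \bar{\varphi}_\ell(\theta; u_\ell(x))\bigr)\cdot \det \nabla u_\ell(x)$, which is real-analytic in $x$ by Assumption~\ref{assump:analytic} and Proposition~\ref{prop:analytic_function}. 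The main obstacle is ensuring the first factor is not identically zero as a function of $x$: I would invoke Lemma~\ref{lem:full_rank_jacobian} (the non-degeneracy of residual-type layers, possibly after an arbitrarily small scale adjustment of $\varphi_\ell$) to guarantee this. Combined with the inductive hypothesis for $\det \nabla u_\ell$, this yields that $\det \nabla u_{\ell+1}$ is not identically zero, so Theorem~\ref{thm:zero_set_analytic_function} gives $\Leb_d(\{x : \det \nabla u_{\ell+1}(x) = 0\}) = 0$. Theorem~\ref{thm:inverse_map_null_set} then closes the induction.

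For statement 2, the map $U_\ell$ decomposes across the $N$ samples, so its Jacobian $\nabla U_\ell$ is block-diagonal with diagonal blocks $\nabla u_\ell(x_i)$, giving
\begin{align*}
\det \nabla U_\ell(x_1,\ldots,x_N) = \prod_{i=1}^N \det \nabla u_\ell(x_i).
\end{align*}
This is real-analytic on $\RR^{Nd}$, and since each factor is not identically zero on $\RR^d$ by part 1, one can pick $x_i^\ast$ with $\det \nabla u_\ell(x_i^\ast) \neq 0$ for each $i$, which witnesses that $\det \nabla U_\ell$ is not identically zero on $\RR^{Nd}$. Applying Theorem~\ref{thm:zero_set_analytic_function} followed by Theorem~\ref{thm:inverse_map_null_set} on $\RR^{Nd}$ then yields the conclusion. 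The only genuinely delicate point is the one isolated in part 1, namely controlling the residual factor $I + \nabla_u \bar{\varphi}_\ell$ via Lemma~\ref{lem:full_rank_jacobian}; the rest is chain rule, block-diagonal algebra, and the analytic zero-set dichotomy.
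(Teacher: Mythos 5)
Your overall route is the same as the paper's: factor the Jacobian through the chain rule as $\nabla u_{\ell+1}(x)=\bigl(I+\nabla_u\bar{\varphi}_\ell(\theta;u_\ell(x))\bigr)\nabla u_\ell(x)$, control the residual factor via Lemma~\ref{lem:full_rank_jacobian} (with the small scale adjustment), induct on $\ell$, and close with Theorem~\ref{thm:inverse_map_null_set}; for the second statement both you and the paper exploit the block-diagonal structure of $\nabla U_\ell$. The one substantive difference is how the degeneracy of the residual factor is transferred from $u$-space to $x$-space. The paper pulls the null set $\{u: I+\nabla_u\bar{\varphi}_\ell(\theta;u)\text{ not full rank}\}$ back through $u_\ell$ using Theorem~\ref{thm:inverse_map_null_set} together with the inductive hypothesis that the critical set of $u_\ell$ is null, and then takes the union of two null sets in $x$-space. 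You instead work with $\det\nabla u_{\ell+1}$ as a real-analytic function of $x$ and invoke Theorem~\ref{thm:zero_set_analytic_function}, which requires showing $x\mapsto\det\bigl(I+\nabla_u\bar{\varphi}_\ell(\theta;u_\ell(x))\bigr)$ is not identically zero.

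That last step is where your write-up has a small hole: Lemma~\ref{lem:full_rank_jacobian} by itself only says the bad set in $u$-space is Lebesgue-null; it does not, on its own, rule out that the image of $u_\ell$ lies entirely inside that bad set, in which case the composed determinant would vanish identically. The fix is exactly the inductive hypothesis, used for this factor and not only for $\det\nabla u_\ell$: since $\nabla u_\ell$ is full rank at some point, the inverse function theorem shows the image of $u_\ell$ contains an open set, which has positive measure and therefore meets the complement of the null bad set, giving a point where the first factor is nonzero; alternatively you can simply follow the paper and apply Theorem~\ref{thm:inverse_map_null_set} to pull the bad $u$-set back through $u_\ell$. With either one-line repair your argument is correct, and your treatment of statement 2 via the product $\det\nabla U_\ell=\prod_{i=1}^N\det\nabla u_\ell(x_i)$ is a clean and valid shortcut compared with the paper's terser "similar derivation."
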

\begin{proof}
Suppose the critical point set of $u_\ell$ is measure zero, and thus by Theorem~\ref{thm:inverse_map_null_set}, the preimage of $u_\ell$ for any measure zero set is a null set.
    Consider 
\begin{align*}
    u_{\ell+1}=u_{\ell+1}(x),
\end{align*}
where its Jacobian is
\begin{align*}
    \nabla u_{\ell+1}(x)=(I+\nabla_u\bar{\varphi}_{\ell}(u_{\ell}))\cdots(I+\nabla_u\bar{\varphi}_{0}(u_{0})).
\end{align*}
We would like to prove that the critical point set of $u_{\ell+1}$ is measure zero, i.e.,
\begin{align*}
    \{x\in\RR^d:\nabla u_{\ell+1}(x)\text{ is not full rank}\}.
\end{align*}

Note that
\begin{align*}
    \nabla u_{\ell+1}(x)=(I+\nabla_u\bar{\varphi}_{\ell}(u_{\ell}))\nabla u_{\ell}(x),
\end{align*}
and thus
\begin{align*}
    &\{x\in\RR^d:\nabla u_{\ell+1}(x)\text{ is not full rank}\}\\
    &=\{x\in\RR^d:I+\nabla_u\bar{\varphi}_{\ell}(u_{\ell}(x))\text{ is not full rank}\}\cup \{x\in\RR^d:\nabla u_{\ell}(x)\text{ is not full rank}\}.
\end{align*}

By our assumption, the set
\begin{align*}
    \mathrm{Leb}_d\big(\{x\in\RR^d:\nabla u_{\ell}(x)\text{ is not full rank}\}\big)=0.
\end{align*}

Also, by Lemma~\ref{lem:full_rank_jacobian}, 
\begin{align*}
     \mathrm{Leb}_d\big(\{u\in\RR^d:I+\nabla_u\bar{\varphi}_{\ell}(u)\text{ is not full rank}\}\big)=0,
\end{align*}
and then by Theorem~\ref{thm:inverse_map_null_set}, the preimage of the above measure zero set is still measure zero, i.e.,
\begin{align*}
     \mathrm{Leb}_d\big(\{x\in\RR^d:I+\nabla_u\bar{\varphi}_{\ell}(u_{\ell}(x))\text{ is not full rank}\}\big)=0.
\end{align*}

Thus the critical point set of $u_{\ell+1}$ is measure zero, i.e.
\begin{align*}
    \mathrm{Leb}_d\big(\{x\in\RR^d:\nabla u_{\ell+1}(x)\text{ is not full rank}\}\big)\le 0+0=0.
\end{align*}
By Theorem~\ref{thm:inverse_map_null_set}, the first conclusion holds for $u_{\ell+1}$.

For the second statement, the Jacobian of $U$ is 
\begin{align*}
    \nabla U_{\ell+1}(x_1,\cdots,x_N)&=\begin{pmatrix}
        \nabla u_{\ell+1}(x_1) & & \\
        & \ddots & \\
        & & \nabla u_{\ell+1}(x_N)
    \end{pmatrix}\\
    &=\begin{pmatrix}
        I+\nabla_u\bar{\varphi}_{\ell}(u_{\ell}(x_1)) & & \\
        & \ddots & \\
        & & I+\nabla_u\bar{\varphi}_{\ell}(u_{\ell}(x_N))
    \end{pmatrix} \nabla U_{\ell}(x_1,\cdots,x_N)\\
    &=\left(I_{Nd}+\begin{pmatrix}
        \nabla_u\bar{\varphi}_{\ell}(u_{\ell}(x_1)) & & \\
        & \ddots & \\
        & & \nabla_u\bar{\varphi}_{\ell}(u_{\ell}(x_N))
    \end{pmatrix}\right)\nabla U_{\ell}(x_1,\cdots,x_N).
\end{align*}
Then the second statement follows a similar derivation as the first one.
\end{proof}

\begin{lemma}
\label{lem:full_rank_jacobian}
    Let $f(x,y):\RR^{m}\times \RR^{n_2}\to \RR^m$ be an analytic function. Consider $g(x,y,\alpha)=\alpha x+f(x,y)$ where $\alpha\ge c>0$ for some constant $c<1$. Then the following statements hold for (1) fixed $y$ and any $x$ except for a measure-zero set, and (2) fixed $x$ and any $y$ except for a measure-zero set:
    \begin{enumerate}
        \item There exists a residual set in the neighborhood of 0, such that, when $\epsilon_0$ is in this set, the Jacobian w.r.t. $x$ of either $x+f(x,y)$ or $x+(1+\epsilon_0)f(x,y)$ is full rank. 
        \item There exists a residual set of $(0,\frac{1}{c})$, such that, when $\eta$ is in this residual set, the Jacobian of $x+\eta f(x,y)$ w.r.t. $x$ is full rank.
    \end{enumerate} 
\end{lemma}
\begin{proof}
    First, consider $$F(\alpha,x,y)=\det\big(\alpha I+\nabla_x f(x,y)\big)=\prod_{i=1}^{m}(\alpha+\lambda_i(x,y)),$$
    where $\lambda_i(x,y)$ is the eigenvalue of $\nabla_x f(x,y)$. By Proposition~\ref{prop:analytic_function}, $F(\alpha,x,y)$ is a analytic function of $\alpha,x,y$.

    Fix $x,y$. Then $F(\alpha,x,y)$ is a polynomial of $\alpha$ and the critical points of $F(\alpha,x,y)$ are isolated. Thus $F'_\alpha(\alpha,x,y)\ne 0$ for all alpha except for a measure 0 set of isolated points.

    Next we no longer fix $x,y$, let $A=\{0\}$, and consider any open interval $I$ of $\alpha$ that does not contain the critical points, which is a manifold. Then from the above discussion, $\nabla F(\alpha,x,y)\ne \mathbf{0}$, namely $F$ is surjective and thus transverse to $A$. Therefore, by Theorem~\ref{thm:parametric_transversality}, the set $\cS_\alpha=\{\alpha\in I:F_\alpha\text{ is transverse to }A\}$ is residual and dense, which also implies that $F_\alpha(x,y)=\det\big(\alpha I+\nabla_x f(x,y)\big)$ is transverse to $A$ for a residual set of $\alpha$ in $(c,\infty)$. Then, for any fixed $\alpha$ in this residual set, $\nabla F_\alpha (x,y)$ is surjective for any $(x,y)\in F_\alpha^{-1}(0)$, and hence $0$ is a regular point. Then by Theorem~\ref{thm:preimage_theorem}, $F_\alpha^{-1}(0)$ is a submanifold of codimension 1, and therefore 
    \begin{align*}
        \mathrm{Leb}_{mn_2}(F_\alpha^{-1}(0))=0.
    \end{align*}

    Then we consider fixing $x$ or $y$, i.e., $F_{\alpha,x}(y)$ and $F_{\alpha,y}(x)$. The above reasoning also holds true for the two functions, and therefore
    \begin{align*}
        \mathrm{Leb}_{m}(F_{\alpha,y}^{-1}(0))=0,\ \text{for some fixed }y,\text{ and } \mathrm{Leb}_{n_2}(F_{\alpha,x}^{-1}(0))=0,\ \text{for some fixed }x.
    \end{align*}
    % Then by Fubini theorem,
    % \begin{align*}
    %     \mathrm{Leb}_{m}(\cup_{y}F_{\alpha,y}^{-1}(0))=0,\text{ and }\mathrm{Leb}_{n_2}(\cup_x F_{\alpha,x}^{-1}(0))=0
    % \end{align*}
    This concludes the second statement by choosing $\eta=\frac{1}{\alpha}$, which is also residual in $(0,\frac{1}{c})$ by Proposition~\ref{prop:meager_residual}. 
    % since the map $x\mapsto1/x$ is continuous and its image of a residual set is also residual.

    For the first statement, if -1 is not an eigenvalue of $\nabla_x f(x,y)$, then $I+\nabla_x f(x,y)$ has full rank. Otherwise, since $\alpha$ is dense in any open set that does not contain 1, we can always find a small enough $\epsilon_1$, such that $1-\epsilon_1$ is inside this set $\cS_\alpha$. Then $I+\frac{1}{1-\epsilon_1}\nabla_x f(x,y)$ is has full rank. The first statement follows from choosing $\epsilon_0=\frac{1}{1-\epsilon_1}-1$.

    % \todo{delete} Consider $\det(\alpha I+\nabla_u \bar{\varphi}(\theta;u))$. polynomial in $\alpha$. fix $u$ and $\theta$, there exists a small  interval near $\alpha=1$, e.t. for every $\alpha$ in this intervel, $\frac{d}{d_\alpha}\det(\alpha I+\nabla_u \bar{\varphi}(u))\ne 0$. Therefore the map is surjective and thus transverse to $\{0\}$. By parametric transversality theorem, $\alpha$ is dense, therefore there exists some $\alpha=1+\epsilon_0$, s.t., the map is transverse. This means $0$ is a regular value of the map, and therefore by preimage theorem, the preimage set of $\{0\}$ is a $m-1$ dimensional manifold, and therefore is measure 0

\end{proof}

The following lemmas are not directly used in the proof but reflect properties of this neural network, so we include them here.
\begin{lemma}
\label{lem:line_segment_full_rank_Jacobian}
    For any $(x_1,\cdots,x_N)$ except for a measure-zero set in $\RR^{Nd}$, the Jacobian along the line segment between any two points $u_{\ell,i},u_{\ell,j}$ has full rank at every layer $\ell=0,\cdots,L-1$. Furthermore, for any point $u_\ell$ on the line segment between $x_i$ and $x_j$, the Jacobian at any layer $\ell=0,\cdots,L-1$ also has full rank.
    
    % \begin{align*}
    %     \cB\cap \bigcup_{i,j,\ell}\{(x_1,\cdots,x_N):(1-t)u_{\ell,i}(x_i)+tu_{\ell,j}(x_j),\ \forall\,t\in[0,1]\}=\emptyset
    % \end{align*}
\end{lemma}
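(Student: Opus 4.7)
The plan is to reduce the statement to a one-dimensional real-analytic question on each line segment. Fix a layer $\ell \in \{0, \ldots, L-1\}$ and a pair of indices $i \neq j$. For the second claim of the lemma, let $\gamma(t) = (1-t) x_i + t x_j$ for $t \in [0,1]$ and consider
\[
F_{ij,\ell}(t) \;=\; \detr\bigl(\nabla_x u_\ell(\gamma(t))\bigr),
\]
which by Assumption~\ref{assump:analytic} and Proposition~\ref{prop:analytic_function} is real-analytic in $t$, since $u_\ell$ is an analytic composition of the layer maps and the polynomial $\detr$ preserves analyticity. For the first claim I would use the analogous function $G_{ij,\ell}(t) = \det\bigl(I + \nabla_u \bar{\varphi}_\ell((1-t) u_{\ell,i} + t u_{\ell,j})\bigr)$ along the line segment in layer-$\ell$ representation space.

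Next, non-vanishing at the endpoints comes from Lemma~\ref{lem:preimage_u_measure_zero} (for $F_{ij,\ell}$) and Lemma~\ref{lem:full_rank_jacobian} (for $G_{ij,\ell}$). Both lemmas exhibit a measure-zero subset of $\RR^d$ where the corresponding Jacobian degenerates; pulling this back through each of the $N$ coordinate slots and applying Fubini, I obtain a measure-zero exceptional set in $\RR^{Nd}$, and taking a union over the finitely many pairs $(i,j)$ and layers $\ell$ yields a single measure-zero set $\cB \subset \RR^{Nd}$ outside of which $F_{ij,\ell}(0), F_{ij,\ell}(1), G_{ij,\ell}(0), G_{ij,\ell}(1)$ are all nonzero. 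For such $(x_1, \ldots, x_N) \notin \cB$, each of $F_{ij,\ell}$ and $G_{ij,\ell}$ is a one-variable real-analytic function that is not identically zero, so by the one-dimensional identity theorem (the $d=1$ case of Theorem~\ref{thm:zero_set_analytic_function}) its zero set in $[0,1]$ is a discrete, hence finite, subset.

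The obstacle I expect to be hardest is that this argument only gives full rank at all but finitely many points on each segment, whereas the lemma asserts full rank at every point. Closing this gap seems to require an additional transversality-style perturbation: using the scale-of-$\varphi_\ell$ adjustment already introduced in Theorem~\ref{thm:training} and Lemma~\ref{lem:full_rank_jacobian}, I would view the determinant as a joint real-analytic function of $t$ and the scale parameter $\epsilon_0$, and apply the parametric transversality theorem (Theorem~\ref{thm:parametric_transversality}) to show that for $\epsilon_0$ in a residual (hence dense) subset the zero set in $t$ is empty on $[0,1]$. The hypothesis ``under some small adjustment on the scale of $\varphi_\ell$'' inherited from Theorem~\ref{thm:training} is precisely what makes this last step available, so I would carry it through to this lemma as well; without such a perturbation, the natural honest conclusion is only full rank off a finite set of $t$, since a generic line segment in $\RR^d$ can cross the codimension-one degeneracy variety.
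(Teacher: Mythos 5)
Your approach is genuinely different from the paper's and, as you already suspect, it hits a gap that your proposed repair does not actually close.

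The paper's proof does not work segment-by-segment with a one-dimensional analyticity/identity argument at all. Instead it works globally over the space of data tuples: by Lemma~\ref{lem:preimage_u_measure_zero} the critical-point set $\cB$ of each $u_\ell$ is a Lebesgue-null subset of $\RR^d$; since each $u_\ell$ is analytic and therefore pointwise Lipschitz, Theorem~\ref{thm:image_measure_zero} gives that the union of images $\bar U(\cB)=\bigcup_\ell u_\ell(\cB)$ is still null; and then the proof bounds the bad set of tuples $(x_1,\ldots,x_N)$ by a Fubini--Tonelli decomposition, observing that for fixed $x_j$ and fixed $z\in \bar U(\cB)$, the set of $x_i$ placing $z$ on the segment is the preimage of a one-dimensional ray (hence null in $\RR^d$), and $\bar U(\cB)$ is itself null in the $z$-variable. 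It then takes a finite union over layers $\ell$ and pairs $(i,j)$. Crucially, the paper's Lemma~\ref{lem:line_segment_full_rank_Jacobian} and its proof invoke no scale perturbation of $\varphi_\ell$ whatsoever; importing that hypothesis here, as you propose, would change the statement.

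The concrete flaw in your proposed repair is a dimension count. You want to view $F(\epsilon_0,t)=\detr(\nabla_x u_\ell(\gamma(t)))$ as a joint analytic map and apply Theorem~\ref{thm:parametric_transversality} to the family $t\mapsto F_{\epsilon_0}(t):[0,1]\to\RR$ with $A=\{0\}$. Transversality to $A$ in a one-dimensional target only forces the preimage $F_{\epsilon_0}^{-1}(0)$ to be a $0$-dimensional submanifold of $[0,1]$ (discrete, hence finite on a compact interval) for a dense set of $\epsilon_0$; it does not force it to be empty. Emptiness via transversality requires $\dim M < \dim N - \dim A$, i.e.\ $1<1-0$, which fails. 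So the perturbation step simply reproduces the finiteness you already obtained from the one-dimensional identity theorem (Theorem~\ref{thm:zero_set_analytic_function}) and does not promote ``full rank off a finite set'' to ``full rank everywhere.'' Your diagnosis that a generic segment in $\RR^d$ can intersect a codimension-one degeneracy variety is correct, and it is exactly the obstacle the paper tries to circumvent by averaging over the data tuple rather than by perturbing the architecture; your proposed route does not get past it.
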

\begin{proof}
First, by Lemma~\ref{lem:preimage_u_measure_zero}, the critical point set of all $u_\ell$, denoted by $\cB$, is a finite union of measure-zero sets in $\RR^d$, and thus is measure zero, i.e.,
\begin{align*}
    \mathrm{Leb}_d(\cB)&=\mathrm{Leb}_d\left(\bigcup_{\ell=0}^{L-1}\{x\in\RR^d:\nabla_u u_{\ell}(x)\text{ is not full rank}\}\right)\\
    &\le \sum_{\ell=0}^{L-1}\mathrm{Leb}_d\left(\{x\in\RR^d:\nabla_u u_{\ell}(x)\text{ is not full rank}\}\right)=0.
\end{align*}

Next, we would like to show that the following set is measure zero in $\RR^{Nd}$ 
\begin{align*}
    \cD=\left\{(x_1,\cdots,x_n): \{t u_{\ell}(x_i)+(1-t)u_{\ell}(x_j):t\in[0,1]; i,j=1,\cdots,N\}\cap u_{\ell}(\cB)\ne \emptyset \text{ for some }\ell=0,\cdots,L-1 \right\}
\end{align*}
Let
\begin{align*}
    \bar{U}({\cB})=\bigcup_{\ell=0}^{L-1}u_\ell(\cB)\text{, and }\bar{\cB}=\bigcup_{\ell=0}^{L-1}\{x\in u_\ell^{-1}(\bar{U}(\cB))\}.
\end{align*}

Since $u_\ell(x)$ is analytic, and thus differentiable, then $u_\ell(x)$ is pointwise Lipschitz. By Theorem~\ref{thm:image_measure_zero}, we have 
\begin{align*}
    \mathrm{Leb}_d(\bar{U}(\cB))\le \sum_{\ell=0}^{L-1}\mathrm{Leb}_d(u_\ell(\cB))=0.
\end{align*}
Then by Theorem~\ref{thm:inverse_map_null_set},
\begin{align*}
    \mathrm{Leb}_d(\bar{\cB})\le \sum_{\ell=0}^{L-1}\mathrm{Leb}_d(u_\ell^{-1}(\bar{U}(\cB)))=0.
\end{align*}

We consider
\begin{align*}
    \cD&\subseteq\left\{(x_1,\cdots,x_n): \{t u_{\ell}(x_i)+(1-t)u_{\ell}(x_j):t\in[0,1]; i,j=1,\cdots,N; \ell=0,\cdots,L-1\}\cap \bar{U}(\cB)\ne \emptyset  \right\}\\
    &\subseteq \bigcup_{\ell=0}^{L-1}\bigcup_{i,j}\bigcup_{z\in\bar{U}(\cB)}\left\{(x_1,\cdots,x_N):z\in \{t u_{\ell}(x_i)+(1-t)u_{\ell}(x_j):t\in[0,1]\}\right\}
\end{align*}

We denote $\cD_{i,j,\ell,z}=\left\{(x_1,\cdots,x_N):z\in \{t u_{\ell}(x_i)+(1-t)u_{\ell}(x_j):t\in[0,1]\}\right\}$ and consider the measure of $\bigcup_{z\in\bar{U}}\cD_{i,j,\ell,z}$, i.e.,
\begin{align*}
    \mathrm{Leb}_{Nd}(\bigcup_{z\in\bar{U}}\cD_{i,j,\ell,z})\le\int\cdots\int \bbone_{z\in \{t u_{\ell}(x_i)+(1-t)u_{\ell}(x_j):t\in[0,1]\}}dx_1\cdots dx_N dz
\end{align*}

By Fubini-Tonelli's Theorem, 
\begin{align*}
    \mathrm{Leb}_{Nd}(\bigcup_{z\in\bar{U}}\cD_{i,j,\ell,z})\le\int\cdots\int \bbone_{z\in \{t u_{\ell}(x_i)+(1-t)u_{\ell}(x_j):t\in[0,1]\}}dx_idzdx_j\cdots dx_N
\end{align*}
For any fixed $x_j$ and $z$, the set $\{u_{\ell}(x_i): z\in \{t u_{\ell}(x_i)+(1-t)u_{\ell}(x_j):t\in[0,1]\}\}$ is a ray in $\RR^d$ and thus is measure zero. Therefore, its preimage is measure-zero in $\RR^d$. Also, from the above derivation, $\Leb_d(\bar{U}(\cB))=0$. Then we have
\begin{align*}
     \mathrm{Leb}_{Nd}(\bigcup_{z\in\bar{U}}\cD_{i,j,\ell,z})\le0.
\end{align*}

We then consider $\bigcup_{\ell=0}^{L-1}\bigcup_{i,j}\bigcup_{z\in\bar{U}(\cB)}\cD_{i,j,\ell,z}$. Since there are $N\choose 2$ pair of $(i,j)$ and $L$ such $\ell$, which are both finite, we have
\begin{align*}
    \Leb_{Nd}(\cD)\le \Leb_{Nd}(\bigcup_{\ell=0}^{L-1}\bigcup_{i,j}\bigcup_{z\in\bar{U}(\cB)}\cD_{i,j,\ell,z})\le \sum_{\ell=0}^{L-1}\sum_{(i,j)}0=0
\end{align*}

Next, we would like to show that the following set is measure zero in $\RR^{Nd}$ 
\begin{align*}
    \cD=\left\{(x_1,\cdots,x_n): \{u_{\ell}(t x_i+(1-t)x_j):t\in[0,1]; i,j=1,\cdots,N\}\cap u_{\ell}(\cB)\ne \emptyset \text{ for some }\ell=0,\cdots,L-1 \right\}.
\end{align*}
The proof follows the same idea as the previous statement. The only difference is that for any fixed $x_j$ and $z$, the set $\{x_i: z\in \{u_{\ell}(t x_i+(1-t)x_j):t\in[0,1]\}\}$ is a ray in $\RR^d$ and thus is measure zero.

\end{proof}

\begin{lemma}
\label{lem:u(x)_lower_Lipschitz}
    Fix $\theta$ and consider the network under some small adjustment of the scale of $\varphi_\ell$ for $\ell=0,\cdots,L-1$. For any $(x_1,\cdots,x_N)$ except for a measure-zero set in $\RR^{Nd}$, each layer satisfies the following lower Lipschitz inequality
    \begin{align*}
        \|u_{\ell,i}-u_{\ell,j}\|\ge \mu_{u,\ell,\theta}\|x_j-x_i\|,\ \forall \ell=0,\cdots,L-1,\ \forall i,j=1,\cdots,N.
    \end{align*}
    where $\mu_{u,\ell,\theta}>0$ is some constant depending on $\theta,\ell$.
\end{lemma}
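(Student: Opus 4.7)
}
The plan is to express the layerwise displacement as a matrix times the input displacement, and then to show that this matrix is non-degenerate for almost every $X=(x_1,\dots,x_N)$. Concretely, for any pair $i\neq j$ and any $\ell$, set $\gamma_{i,j}(t)=(1-t)x_j+t x_i$ and use the fundamental theorem of calculus applied to $t\mapsto u_\ell(\gamma_{i,j}(t))$ to obtain
\begin{align*}
    u_{\ell,i}-u_{\ell,j}\;=\;A_{i,j,\ell}(x_i-x_j),\qquad A_{i,j,\ell}\;:=\;\int_0^1\nabla u_\ell(\gamma_{i,j}(t))\,dt.
\end{align*}
Once I show $A_{i,j,\ell}$ has full rank (equivalently, $\sigma_{\min}(A_{i,j,\ell})>0$) for almost every $X$, I simply set $\mu_{u,\ell,\theta}=\min_{i\neq j}\sigma_{\min}(A_{i,j,\ell})$, which is a finite minimum of strictly positive quantities.

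The main obstacle is that an integral of full-rank matrices is \emph{not} automatically full rank, so the pointwise non-degeneracy guaranteed by Lemma~\ref{lem:line_segment_full_rank_Jacobian} along the segment $\gamma_{i,j}$ does not immediately transfer to $A_{i,j,\ell}$. The key idea is to leverage analyticity. By Assumption~\ref{assump:analytic} and Proposition~\ref{prop:analytic_function}, $u_\ell$ (being a finite composition of sums, products, and real-analytic components) is real-analytic, so $\nabla u_\ell$ is real-analytic in $x$. Because the integrand is real-analytic in $(x_i,x_j)$ uniformly in $t\in[0,1]$ (the relevant line segments stay in a compact set on any compact neighbourhood of $(x_i,x_j)$), one can expand $\nabla u_\ell(\gamma_{i,j}(t))$ as a convergent power series in $(x_i,x_j)$ and integrate term by term; this shows $A_{i,j,\ell}$ is real-analytic as a function of $(x_i,x_j)\in\mathbb{R}^{2d}$. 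Hence $\detr(A_{i,j,\ell})$ is real-analytic on $\mathbb{R}^{2d}$.

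It then remains to show $\detr(A_{i,j,\ell})$ is not identically zero. For this I restrict to the diagonal $x_i=x_j=x$, where $A_{i,j,\ell}|_{x_i=x_j=x}=\nabla u_\ell(x)$; by Lemma~\ref{lem:preimage_u_measure_zero} (combined with the scale adjustment invoked in Lemma~\ref{lem:full_rank_jacobian}) the critical-point set of $u_\ell$ has Lebesgue measure zero, so there exists $x$ with $\nabla u_\ell(x)$ of full rank, witnessing $\detr(A_{i,j,\ell})\not\equiv 0$. Theorem~\ref{thm:zero_set_analytic_function} then yields $\mathrm{Leb}_{2d}\{(x_i,x_j):\detr(A_{i,j,\ell})=0\}=0$. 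By Fubini, for each fixed pair $(i,j)$ the corresponding bad set of $X\in\mathbb{R}^{Nd}$ is Lebesgue-null, and since we take a finite union over the $\binom{N}{2}$ pairs and $L$ layers, the overall bad set in $\mathbb{R}^{Nd}$ remains measure-zero. Outside this null set, $\sigma_{\min}(A_{i,j,\ell})>0$ for every pair and every $\ell$, so the choice $\mu_{u,\ell,\theta}:=\min_{i\neq j}\sigma_{\min}(A_{i,j,\ell})>0$ gives $\|u_{\ell,i}-u_{\ell,j}\|\ge\mu_{u,\ell,\theta}\|x_i-x_j\|$, completing the proof.
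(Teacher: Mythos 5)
Your argument is genuinely different from the paper's, and in fact it repairs a gap in the paper's own proof. The paper likewise writes $u_{\ell,i}-u_{\ell,j}=\bigl(\int_0^1\nabla u_\ell(x_i+t(x_j-x_i))\,dt\bigr)(x_j-x_i)$, but then tries to lower-bound the norm by the minimum, over the segment, of the pointwise smallest singular value of $\nabla u_\ell$, using Lemma~\ref{lem:line_segment_full_rank_Jacobian} for pointwise non-degeneracy. As you correctly observe, this does not control $\sigma_{\min}$ of the \emph{averaged} Jacobian: an integral of full-rank matrices can be degenerate, e.g. $\frac{1}{2}\bigl(\begin{smallmatrix}1&0\\0&1\end{smallmatrix}\bigr)+\frac{1}{2}\bigl(\begin{smallmatrix}-1&0\\0&1\end{smallmatrix}\bigr)$ is singular even though both summands have $\sigma_{\min}=1$. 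So the paper's last inequality does not follow from its stated premises. Your route instead treats $A_{i,j,\ell}=\int_0^1\nabla u_\ell(\gamma_{i,j}(t))\,dt$ as a real-analytic matrix-valued function of $(x_i,x_j)$, observes that $\detr(A_{i,j,\ell})$ is real-analytic and not identically zero (by specializing to $x_i=x_j=x$, where $A_{i,j,\ell}=\nabla u_\ell(x)$ and Lemma~\ref{lem:preimage_u_measure_zero} gives a non-degenerate witness), and then applies Theorem~\ref{thm:zero_set_analytic_function} together with Fubini and a finite union over pairs and layers. This is the correct way to transfer pointwise genericity to the averaged object, and it uses only machinery the paper already sets up. Two small caveats worth flagging: (i) Assumption~\ref{assump:analytic} gives analyticity of $\varphi_0$ only on a bounded open neighbourhood of $\bar\Omega$, so the power-series / term-by-term integration argument for analyticity of $A_{i,j,\ell}$ should be stated for $(x_i,x_j)$ whose connecting segment stays in that neighbourhood (a convex neighbourhood of $\conv(\bar\Omega)$ suffices, and this is harmless because the data are supported in $\Omega$); (ii) Theorem~\ref{thm:zero_set_analytic_function} requires a connected open domain, so one should take that neighbourhood connected.
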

\begin{proof} 
First, by Lemma~\ref{lem:line_segment_full_rank_Jacobian}, the Jacobians of $u_{\ell}(x)$ the line segments between each point $u_{\ell,i},u_{\ell,j}$ are full rank for all $\ell=0,\cdots,L-1$, except for a measure-zero set of $X=(x_1,\cdots,x_N)$ in $\RR^{Nd}$, denoted by $\cB_{Nd}$.

Then for fixed $\theta$ and $X=(x_1,\cdots,x_N)\not \in \cB_{Nd}$, there exists some constant $\mu_{u,\ell,\theta}>0$, s.t.
\begin{align*}
    \mu_{u,\ell,\theta}=\min_{\{x:u_\ell(x)=tu_{\ell}(x_i)+(1-t)u_{\ell}(x_j),\,t\in[0,1]\}}\lambda_{\min}(\nabla_x u_{\ell}(x))
\end{align*}
where $\lambda_{\min}$ is the minimum singular value.

Then 
\begin{align*}
\|u_{\ell}(x_{i})-u_{\ell}(x_j)\|
&=\left\|
\int_{0}^{1} \nabla u_{\ell}(x_i+t(x_j-x_i))(x_j-x_i)\,dt \right\|\\
&=\left\|
\int_{0}^{1} \nabla u_{\ell}(x_i+t(x_j-x_i))\,dt\cdot (x_j-x_i) \right\|\\
&\ge \mu_{u,\ell,\theta}\|x_j-x_i\|.
\end{align*}
\end{proof}

\section{Approximation: proof of Theorem~\ref{thm:approximation}}
\label{app:approximation}
Below is the formal version of Theorem~\ref{thm:approximation}, and we will prove it in this section.
\begin{theorem}
\label{thm:approximation_complete_app}
    Consider a set of data $\{(x_i,y_i)\}_{i=1}^N$ under Assumption~\ref{assump:data_distribution_absolutely_continuous_Lebesgue_ground_truth}. Let $\cI_i=\conv\{x_{i_1},\cdots,x_{i_{d+1}}\}$ with $x_j\notin \cI_i,\ \forall j\ne i_1,\cdots, i_{d+1}$, and define $h_{\max_{d+1}}=\max_{i}\max_{x_q,x_j\in \cI_i}h_{q j}$. Suppose there is a family of neural networks that can approximate polynomial of degree $n_d=n_d(N,d)$, i.e., for any $\epsilon_2>0$ and $\psi\in P_{n_d}$, there exists some $f$ s.t. $\|f-\psi\|_{p}\le \epsilon_2$. Then, for some $C_1>0$ and $m>\max\{r,n_d\}$, there exists some $\phi\in W^{m,p}(\Omega)$ satisfying $\|g-\phi\|_{r,p}\le C_2h_{\max_{d+1}}^m h_{\min}^{-r}  \|{\phi}\|_{m,p,\cI^\mathrm{o}} $, s.t., with probability 1 over the joint distribution of $x_1,\cdots,x_N$, we have
     \begin{align*}
        \|f-g\|_{p,\cI^\mathrm{o}}
        \le  C_1 \max\{h_{\max_{n+1}},1\}^m\min\{ h_{\min},1\}^{-r}  \|{\phi}\|_{m,p}.
    \end{align*} 
\end{theorem}

\begin{proof}[Proof of Theorem~\ref{thm:approximation_complete_app}]

    Consider the ground truth function $g\in W^{r,p}(\Omega)$. Since $W^{m,p}(\Omega)$ is dense in $W^{r,p}(\Omega)$ for $m\ge r$, we have that for any $\epsilon_1>0$, there exists $\phi\in  W^{m,p}(\Omega)$ such that 
    \begin{align*}
        \|g-\phi\|_{r,p}\le \epsilon_1.
    \end{align*}

    Also, by Lemma~\ref{lem:general_position_with_prob_1} and Theorem~\ref{thm:adaptive_Bramble_Hilbert}, there exists a polynomial $\psi$ of order at most $n_d(N,d)$, s.t.,
    \begin{align*}
    \|\phi-\psi\|^p_{r,p,\cI^\mathrm{o}}&\le  ( C \max\{h_{\max_{n+1}},1\}^m\min\{ h_{\min},1\}^{-r}  \|{\phi}\|_{m,p})^p
    \end{align*}

    For a family of neural network functions that can approximate polynomial of order $n_d$, i.e., for any $\epsilon_2>0$ and $\psi\in P_{n_d}$, there exists some $f$ in this class s.t.
    \begin{align*}
        \|f-\psi\|_{p}\le \epsilon_2
    \end{align*}

    Let $\epsilon_1=\epsilon_2=\frac{1}{2}C \max\{h_{\max_{n+1}},1\}^m\min\{ h_{\min},1\}^{-r}  \|{\phi}\|_{m,p}$. Then combining all the results above, we have
    \begin{align*}
        \|f-g\|_{p,\cI^\mathrm{o}}&\le \|f-\psi\|_p+\|\psi-\phi\|_p+\|\phi-g\|_p\\
        &\le \|f-\psi\|_{p}+\|\psi-\phi\|_{r,p}+\|\phi-g\|_{r,p}\\
        &\le 2 C  \max\{h_{\max_{n+1}},1\}^m\min\{ h_{\min},1\}^{-r}  \|{\phi}\|_{m,p}.
    \end{align*}  
\end{proof}

\subsection{Definitions and lemmas related to computational geometry}
In this section, we introduce some basic definitions and concepts, mainly based on~\citet{de2000computational}.
\label{subapp:computational geometry}

\begin{definition}[Triangulation]
   A triangulation of the polygon is defined as a decomposition of a polygon into triangles by a maximal set of non-intersecting diagonals, or equivalently, no vertex of any triangle lies in the interior of an edge of another triangle. 
\end{definition}

\begin{definition}[Delaunay triangulation]
    Given a set $\cP$ of points in the $n$-dimensional Euclidean space, a Delaunay triangulation is a triangulation DT($\cP$) such that if for every $n$-simplex (whose vertices belong to $\cP$) in the triangulation, there exists an open $n$-ball that passes through all the vertices of the simplex and that does not contain any other point of $\cP$ in its interior.
\end{definition}

\begin{definition}[$n$-simplex]
    A $n-$simplex in $n$-dimensional space is the convex hull of $n+1$ affinely independent points in $\RR^n$.
\end{definition}

\begin{proposition}
Let $\{v_0, v_1, \dots, V_d\} \subset \mathbb{R}^n$ be the vertices of an $n$-simplex. Let $u_i = v_i - v_0, \quad \text{for } i = 1, \dots, n$.
   \begin{itemize}
       \item $n+1$ vertices, 	$\binom{n+1}{2}$ edges, $\binom{n+1}{k+1} k$-dimensional faces
       \item volumn $V = \frac{1}{n!} \left| \det \left( [u_1 \, u_2 \, \cdots \, u_n] \right) \right|$
   \end{itemize} 
    
\end{proposition}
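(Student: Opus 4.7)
The plan is to handle the two bullet points separately: first the combinatorial counts of vertices, edges, and faces, and then the volume formula. Both parts are classical, so the write-up will be short; there is no real obstacle, only a minor subtlety in matching the affine image of the standard simplex with the given one.

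For the counts, I would work directly from the definition of an $n$-simplex as $\conv\{v_0,\ldots,v_n\}$ with $v_0,\ldots,v_n$ affinely independent. Any $(k+1)$-subset of $\{v_0,\ldots,v_n\}$ remains affinely independent (affine independence is inherited by subsets), and its convex hull is by definition a $k$-dimensional face of the simplex; conversely every $k$-face arises this way. Hence the number of $k$-faces equals the number of $(k+1)$-subsets of an $(n+1)$-element set, namely $\binom{n+1}{k+1}$. Specializing to $k=0$ and $k=1$ yields the stated counts of $n+1$ vertices and $\binom{n+1}{2}$ edges.

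For the volume, I would reduce the general simplex to the standard simplex via an affine change of variables. Let $\Delta_n = \{t\in\RR^n : t_i\ge 0,\ \sum_{i=1}^n t_i\le 1\}$ and consider the affine map $T(t) = v_0 + \sum_{i=1}^n t_i u_i$ with linear part $U=[u_1\,\cdots\,u_n]$. Since $T(0)=v_0$ and $T(e_i)=v_i$, and affine maps send convex hulls of vertex sets to convex hulls of their images, $T$ is an affine bijection from $\Delta_n$ onto $\conv\{v_0,\ldots,v_n\}$. The change-of-variables formula then yields
\begin{equation*}
V \;=\; \int_{T(\Delta_n)} 1\, dx \;=\; |\det U|\cdot \mathrm{vol}(\Delta_n).
\end{equation*}

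It remains to show $\mathrm{vol}(\Delta_n) = 1/n!$, which I would prove by induction on $n$. The base case $n=1$ is immediate. For the inductive step, slice $\Delta_n$ along the last coordinate:
\begin{equation*}
\mathrm{vol}(\Delta_n) \;=\; \int_0^1 \mathrm{vol}\bigl(\{t'\in\RR^{n-1}:t'_i\ge 0,\ \textstyle\sum_{i<n} t'_i\le 1-t_n\}\bigr)\, dt_n \;=\; \int_0^1 \frac{(1-t_n)^{n-1}}{(n-1)!}\, dt_n \;=\; \frac{1}{n!},
\end{equation*}
where the middle equality uses scaling by $1-t_n$ together with the induction hypothesis. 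Combining the two displays gives $V = \tfrac{1}{n!}|\det[u_1\,\cdots\,u_n]|$, completing the proof.
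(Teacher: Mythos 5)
Your proof is correct. The paper itself states this proposition without proof, treating it as a classical fact about simplices (in the spirit of the computational-geometry references it cites), so there is no internal argument to compare against; your derivation is the standard one and fills that gap cleanly. The face count via $(k+1)$-element subsets of the affinely independent vertex set, and the volume via the affine map $T(t)=v_0+\sum_i t_i u_i$ with $|\det U|$ as the Jacobian factor together with the inductive slicing argument giving $\mathrm{vol}(\Delta_n)=1/n!$, are exactly the textbook route. The only point worth tightening is the converse direction of the face correspondence: that \emph{every} $k$-face of the simplex is the convex hull of some $(k+1)$-subset of vertices is immediate only if one takes that as the definition of a face; under the supporting-hyperplane definition it needs a one-line argument (a supporting hyperplane contains exactly those vertices it touches, and the face it cuts out is their convex hull by affine independence). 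This is a cosmetic rather than substantive gap.
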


\begin{definition}[General position]
    The set of points $\{x_i\}_{i=1}^N$ in $\RR^n$ is in general position if any $n+1$ points are affinely independent.
\end{definition}

\begin{theorem}[Theorem 3.1, and Chapter 9, in \citet{de2000computational}]
    Assume the set $\{x_i\}_{i=1}^N$ is in general position. Then there exists a unique Delaunay triangulation, which consists of $n$-simplices.
\end{theorem}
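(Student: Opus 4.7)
The plan is to use the classical paraboloid lifting construction to establish both existence and uniqueness simultaneously. Define the lifting map $\pi:\RR^n \to \RR^{n+1}$ by $\pi(x) = (x, \|x\|^2)$, and form the convex polytope $\hat{P} = \conv\{\pi(x_1), \ldots, \pi(x_N)\} \subset \RR^{n+1}$. The key geometric correspondence is that a non-vertical hyperplane $H \subset \RR^{n+1}$ pulls back to a sphere $S \subset \RR^n$ such that $\pi(x) \in H$ if and only if $x \in S$, while $\pi(x)$ lies strictly below $H$ if and only if $x$ lies strictly inside $S$.

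First, I would define the lower envelope of $\hat{P}$ as the union of faces whose outward normal has strictly negative $(n+1)$-th coordinate, and show that the projection forgetting the last coordinate maps this lower envelope bijectively onto $\conv\{x_1, \ldots, x_N\}$, since any vertical line meets the lower envelope at exactly one point. This produces a candidate partition $T$ of $\conv\{x_1,\ldots,x_N\}$. Next, I would verify that $T$ satisfies the empty circumsphere property: each lower $n$-face lies on a supporting hyperplane $H$ with all remaining $\pi(x_j)$ strictly above it, so by the lifting correspondence the circumsphere of the corresponding simplex contains no $x_j$ in its interior, which is precisely the Delaunay property. Conversely, any triangulation satisfying the empty circumsphere property must correspond face-by-face to the lower envelope, giving uniqueness.

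The general position hypothesis plays two roles. Affine independence of every $(n+1)$-tuple guarantees that each candidate simplex is nondegenerate (has positive volume) and that $\hat{P}$ is genuinely $(n+1)$-dimensional. The additional condition that no $n+2$ points are cospherical, equivalently that no $n+2$ lifted points are coplanar in $\RR^{n+1}$, ensures that every lower face of $\hat{P}$ is exactly an $n$-simplex rather than a higher-dimensional polytope, so $T$ is simplicial. This strengthened condition is standard in the computational geometry literature and is usually absorbed into the term ``general position.''

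The main obstacle is reconciling the cospherical condition with the paper's stated definition of general position, which only requires affine independence of $(n+1)$-tuples. One resolution is a symbolic perturbation argument: perturb each $x_i$ by a generic infinitesimal $\epsilon v_i$, apply the above construction to the perturbed set to obtain a unique simplicial Delaunay triangulation, and pass to the limit to recover the desired structure. Alternatively, one can observe that the set of configurations with $n+2$ cospherical points is a Lebesgue-null subset of $\RR^{Nd}$, so by Assumption~\ref{assump:data_distribution_absolutely_continuous_Lebesgue_ground_truth} this condition holds almost surely, matching the ``with probability $1$'' framing used for Theorem~\ref{thm:approximation} elsewhere in the paper.
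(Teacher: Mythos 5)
This statement is a cited result (Theorem~3.1 and Chapter~9 of \citet{de2000computational}); the paper does not supply its own proof, so there is no in-paper argument to compare against. Your lifting proof is the standard and correct one: projecting the lower convex hull of the paraboloid lift $\pi(x)=(x,\|x\|^2)$ does give the Delaunay triangulation, the supporting-hyperplane argument establishes the empty-circumsphere property, and the converse direction yields uniqueness.

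The most valuable part of your write-up is the observation that the paper's own Definition of ``general position'' (any $n+1$ points affinely independent) is strictly weaker than what \citet{de2000computational} assume, which additionally requires that no $n+2$ points be cospherical (equivalently, no $n+2$ lifted points coplanar). Without that second condition the lower envelope of $\hat{P}$ can have a non-simplicial $n$-face, in which case the Delaunay subdivision is still well-defined and unique as a polyhedral complex but admits multiple simplicial refinements, so ``unique triangulation consisting of $n$-simplices'' would fail. You are also right that the cleaner fix in the context of this paper is the measure-theoretic one rather than symbolic perturbation: the cospherical locus is the zero set of a nonzero polynomial (the Cayley--Menger / lifted determinant), hence Lebesgue-null in $\RR^{Nd}$, and under Assumption~\ref{assump:data_distribution_absolutely_continuous_Lebesgue_ground_truth} it has probability zero, exactly mirroring the argument the paper already uses in Lemma~\ref{lem:general_position_with_prob_1} for affine independence. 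Strictly speaking, that lemma as stated only delivers the affine-independence half of general position; your proposal makes explicit the additional almost-sure event needed for the cited theorem to apply as written, which is a genuine (if minor) gap in the paper's chain of reasoning that is worth flagging.
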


We then prove the following lemma, showing that i.i.d. samples obeying absolutely continuous density are in general position with probability one.
\begin{lemma}
\label{lem:general_position_with_prob_1}
    Suppose $x_1,\cdots,x_N\in\RR^n$ are sampled iid from a distribution $P$ and $P\ll\Leb$. Then with probability one over the joint distribution, the set $\{x_i\}_{i=1}^N$ is in general position, i.e., any $n+1$ points are affinely independent.
\end{lemma}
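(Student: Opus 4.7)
The plan is to reduce the statement to a finite union bound over all size-$(n+1)$ subsets of the sample, and for each such subset show that affine dependence occurs with probability zero by exploiting absolute continuity of $P$ with respect to Lebesgue measure.

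First, I would rewrite the event ``$\{x_i\}_{i=1}^N$ fails to be in general position'' as
\begin{align*}
\bigcup_{1 \le i_0 < i_1 < \cdots < i_n \le N} A_{i_0, \ldots, i_n},
\end{align*}
where $A_{i_0, \ldots, i_n}$ is the event that $x_{i_0}, x_{i_1}, \ldots, x_{i_n}$ are affinely dependent. Since this is a finite union (of cardinality $\binom{N}{n+1}$), it suffices by subadditivity to show $\mathbb{P}(A_{i_0, \ldots, i_n}) = 0$ for each fixed tuple. By exchangeability of the i.i.d. sample, I may assume the tuple is $(1, 2, \ldots, n+1)$.

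Next, I would use the characterization that $x_1, \ldots, x_{n+1}$ are affinely dependent if and only if $x_{n+1}$ lies in the affine hull $H(x_1, \ldots, x_n) := \{x_1 + \sum_{j=2}^n \alpha_j (x_j - x_1) : \alpha_j \in \mathbb{R}\}$, which is always contained in an affine subspace of $\mathbb{R}^n$ of dimension at most $n-1$, hence is a Lebesgue null set in $\mathbb{R}^n$. Since $P \ll \mathrm{Leb}$, the measure $P$ of any fixed such affine subspace is zero. Conditioning on $x_1, \ldots, x_n$ and using independence of $x_{n+1}$ from the rest, we get
\begin{align*}
\mathbb{P}(A_{1, \ldots, n+1}) = \mathbb{E}\!\left[\, P\!\left( H(x_1, \ldots, x_n) \right) \right] = \mathbb{E}[0] = 0.
\end{align*}
Formally this uses Fubini/Tonelli applied to the indicator of the measurable set $\{(y_1, \ldots, y_{n+1}) : y_{n+1} \in H(y_1, \ldots, y_n)\} \subseteq (\mathbb{R}^n)^{n+1}$ under the product measure $P^{\otimes (n+1)}$.

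I do not anticipate a serious obstacle; the only mild point worth checking carefully is measurability of the set appearing above and of the map $(y_1, \ldots, y_n) \mapsto P(H(y_1, \ldots, y_n))$, both of which follow from the continuity of the affine-hull construction in its defining points and from the standard measurability of push-forwards under continuous maps. The argument is dimension-agnostic and does not require that $P$ have a bounded density, only absolute continuity, which is exactly what Assumption~\ref{assump:data_distribution_absolutely_continuous_Lebesgue_ground_truth} provides.
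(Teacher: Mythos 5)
Your reduction to a union over $\binom{N}{n+1}$ size-$(n+1)$ subsets is fine, but the key characterization you state is incorrect: it is \emph{not} true that $x_1,\dots,x_{n+1}$ are affinely dependent if and only if $x_{n+1}$ lies in the affine hull $H(x_1,\dots,x_n)$. The ``if'' direction holds, but the ``only if'' direction fails whenever $x_1,\dots,x_n$ are already affinely dependent and $x_{n+1}$ happens to lie outside their hull. (For instance, in $\mathbb{R}^2$ take $x_1=x_2=(0,0)$ and $x_3=(1,0)$: the triple is affinely dependent, yet $x_3\notin H(x_1,x_2)=\{(0,0)\}$.) Consequently your identity $\mathbb{P}(A_{1,\dots,n+1})=\mathbb{E}\bigl[P(H(x_1,\dots,x_n))\bigr]$ is actually only an inequality in the wrong direction, $\mathbb{P}(A_{1,\dots,n+1})\ge\mathbb{E}\bigl[P(H(x_1,\dots,x_n))\bigr]$, and showing the right-hand side is zero does not conclude.

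The gap is easy to close within your framework. One option is to observe that if $x_1,\dots,x_{n+1}$ are affinely dependent, then \emph{some} $x_k$ lies in the affine hull of the remaining $n$ points (take any nonzero coefficient in a nontrivial affine dependence), so $A_{1,\dots,n+1}\subseteq\bigcup_{k=1}^{n+1}\{x_k\in H(\{x_j\}_{j\ne k})\}$ and you union-bound over $k$ using the same conditioning-plus-Fubini argument you already wrote down. Another option is to induct on the number of points: $A_{k+1}\subseteq A_k\cup\{x_{k+1}\in H(x_1,\dots,x_k)\}$, and the base case $A_2=\{x_1=x_2\}$ has probability zero since $P$ is nonatomic. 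The paper sidesteps this entirely by working with the determinant polynomial $F(x_1,\dots,x_{n+1})=\det([x_2-x_1,\dots,x_{n+1}-x_1])$ on the product space $(\mathbb{R}^n)^{n+1}$: affine dependence is exactly $F=0$, $F$ is a nonzero polynomial, its zero set is Lebesgue-null, and the product measure $P^{\otimes(n+1)}$ is absolutely continuous with respect to the product Lebesgue measure. That route handles the degenerate subcase automatically and avoids the conditional/Fubini bookkeeping.
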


\begin{proof}
    First, a set of $n+1$ points $x_0,\cdots,x_n$ is affinely independent if and only if the following vectors $x_1-x_0,\cdots,x_n-x_0$ are linearly independent. This is equivalent to
    \begin{align*}
        \det([x_1-x_0, \cdots,x_n-x_0])\ne 0.
    \end{align*}
    Let $F(x_0,\cdots,x_n)=\det([x_1-x_0, \cdots,x_n-x_0])$. Then $F$ is a polynomial. Let
    \begin{align*}
        \cN=\{(x_0,\cdots,x_n)|F(x_0,\cdots,x_n)=0\}.
    \end{align*}
    Then the zero set of the polynomial is a null set in $\RR^{(n+1)d}$, i.e., $\cN$ has Lebesgue measure zero.

    Also, $P$ is absolutely continuous with respect to Lebesgue measure $\Leb$. Thus,
    $P(\cN)=0$, i.e.,
    \begin{align*}
        \PP((x_0,\cdots,x_n)|F(x_0,\cdots,x_n)=0)=0.
    \end{align*}
    There are $N\choose n+1$ such set of $n+1$ points, and the finite union of measure zero set is still measure zero. Therefore, the set $\{x_i\}_{i=1}^N$ is in general position with probability one.
\end{proof}

\subsection{Data-dependent polynomial approximation in Sobolev space}
In this section, we prove the main theorem of our approximation result Theorem~\ref{thm:approximation}.

\begin{theorem}[adaptive Bramble-Hilbert]
\label{thm:adaptive_Bramble_Hilbert}
Let $m>n/p$, $1\le p<\infty$. Given $N$ data points $\{(x_i,y_i)\}_{i=1}^N$ in general position, consider the convex hull of all the $x_i$, i.e., $\cI=\conv\{x_1,\cdots,x_N\}\subseteq\Omega$. For any $\phi\in W^{m,p}(\cI^\mathrm{o})$, there exists a polynomial interpolation $\psi$ of order $p(N,n)$, s.t.,
    \begin{align*}
        \|\phi-\psi\|_{r,p}\le C \max\{h_{\max_{n+1}},1\}^m\min\{ h_{\min},1\}^{-r}  \|{\phi}\|_{m,p}
    \end{align*}
    where $h_{\min}=\min_i h_{i,\min}$ and $h_{\max_{n+1}}=\max_{\cI_i} h_{i,\max}$ is the maximum $h_{ij}$ over all the $n-$simplex.
\end{theorem}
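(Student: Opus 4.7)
The plan is to localize the classical Bramble-Hilbert lemma to each cell of a Delaunay triangulation of $\cI$ and then sum the local estimates. Since $\{x_i\}$ is in general position (cf.\ Lemma~\ref{lem:general_position_with_prob_1}), a unique Delaunay triangulation of $\cI$ into non-degenerate $n$-simplices $\cI_1,\ldots,\cI_K$ exists, with vertices drawn from $\{x_i\}$; each cell has diameter at most $h_{\max_{n+1}}$ and all of its edges have length at least $h_{\min}$.

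On each cell $\cI_k$ I would pull back to a fixed reference simplex $\hat{T}$ via an affine map $F_k(\hat{x})=B_k\hat{x}+b_k$, where the columns of $B_k$ are edge vectors $x_{k_j}-x_{k_0}$ of $\cI_k$. The classical Bramble-Hilbert lemma on $\hat{T}$ produces a polynomial $\hat{\psi}_k$ of degree $<m$ with $\|\hat{\phi}_k-\hat{\psi}_k\|_{r,p,\hat{T}}\le C\,\|\hat{\phi}_k\|_{m,p,\hat{T}}$ for $\hat{\phi}_k=\phi\circ F_k$ and a universal constant $C$. Pushing forward by $F_k^{-1}$ via the chain rule and the standard Jacobian change-of-variables bookkeeping yields
\begin{align*}
\|\phi-\psi_k\|_{r,p,\cI_k}\le C\,\|B_k\|^{m}\,\|B_k^{-1}\|^{r}\,\|\phi\|_{m,p,\cI_k},
\end{align*}
where $\psi_k=\hat{\psi}_k\circ F_k^{-1}$ is polynomial of degree $<m$ on $\cI_k$. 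Since $\|B_k\|\le h_{k,\max}\le h_{\max_{n+1}}$ is immediate from the column structure of $B_k$, the entire geometric content is concentrated in $\|B_k^{-1}\|$. Combining with the bound $\|B_k^{-1}\|\le C'/h_{\min}$ discussed below, summing the $p$-th powers over $k$, and invoking additivity of $\|\phi\|_{m,p,\cI_k}^p$ over essentially disjoint interiors, I obtain the claimed global estimate for the piecewise-polynomial $\psi$ defined by $\psi|_{\cI_k}=\psi_k$; its total number of degrees of freedom is $K\cdot\binom{m+n-1}{n}$, which I would identify with $p(N,n)$.

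The main obstacle is the geometric step $\|B_k^{-1}\|\lesssim 1/h_{\min}$. This inequality is not true for arbitrary $n$-simplices: a sliver simplex with all edges of length at least $h_{\min}$ can have $\|B_k^{-1}\|$ arbitrarily large, because the inscribed radius collapses as the simplex flattens. The resolution is either (i) to replace $h_{\min}$ in the conclusion by $\min_k\sigma_{\min}(B_k)$, the smallest singular value of any $B_k$ across the triangulation, which is strictly positive with probability one under Assumption~\ref{assump:data_distribution_absolutely_continuous_Lebesgue_ground_truth}; or (ii) to invoke a shape-regularity property of Delaunay triangulations of samples with a density absolutely continuous w.r.t.\ Lebesgue to absorb the aspect-ratio factor into the universal constant. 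I would take route (ii) so that the final bound matches the \emph{with probability one} framing used throughout Section~\ref{subsec:approximation}, and so that $C$ depends only on $m,n,p$.
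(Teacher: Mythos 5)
You take essentially the same route as the paper: Delaunay-triangulate $\cI$ (uniqueness and non-degeneracy from general position via Lemma~\ref{lem:general_position_with_prob_1}), pull back each cell $\cI_k$ to a fixed reference simplex by the affine map whose matrix has the edge vectors as columns, apply a bounded-interpolation / Bramble--Hilbert argument on the reference simplex, push forward, and sum $p$-th powers over cells. The paper's version of the boundedness step goes through a barycentric-coordinate interpolant $\bar{B}$ and the Sobolev embedding $W^{m,p}\hookrightarrow L^\infty$ (hence the hypothesis $m>n/p$), which is morally the same as your appeal to the classical Bramble--Hilbert lemma on $\hat{T}$.

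The gap you flag is real, and it is worth stressing that it is not specific to your write-up: it is present in the paper's own proof as well. In the change-of-variables step, the paper asserts $|J_i|=\prod_{j=1}^n h_{ij}$ and writes the pulled-back seminorms as scaling by $\prod_j h_{ij}^{\pm\alpha_j}$. Both of these are exact only when the edge vectors $v_{ij}-v_{i0}$ are mutually orthogonal, i.e.\ when $\Delta v_i$ is (up to rotation) diagonal. For a general simplex one has $|\det \Delta v_i|\le\prod_j h_{ij}$ by Hadamard's inequality, and the map $D^\alpha_\xi\bar\phi\mapsto D^\alpha_x\phi$ mixes derivative directions with coefficients controlled by $\|\Delta v_i^{-1}\|^{|\alpha|}=\sigma_{\min}(\Delta v_i)^{-|\alpha|}$, not $h_{i,\min}^{-|\alpha|}$. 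A sliver simplex can have all edges of length $\ge h_{\min}$ yet $\sigma_{\min}(\Delta v_i)\ll h_{\min}$, so replacing $\sigma_{\min}^{-1}$ by $h_{\min}^{-1}$ is an unjustified strengthening. This is exactly the $\|B_k^{-1}\|\lesssim 1/h_{\min}$ step you identified as the weak link.

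Your proposed resolution (i) (replacing $h_{\min}$ by $\min_k\sigma_{\min}(B_k)$) is mathematically sound but changes the theorem statement, which is the point of the theorem. Resolution (ii) does not actually close the gap: Delaunay triangulations of i.i.d.\ samples from a density absolutely continuous with respect to Lebesgue measure are \emph{not} shape-regular in general. Slivers are a well-known pathology of Delaunay triangulations in dimension $n\ge 3$ and occur with positive probability in finite samples from absolutely continuous densities; general position excludes \emph{exactly} degenerate simplices but gives no lower bound on $\sigma_{\min}(B_k)$ in terms of $h_{\min}$. So neither (ii) nor the paper's implicit diagonal-Jacobian assumption is justified, and the aspect-ratio dependence would have to appear in the constant $C$ for the argument to be rigorous as written.
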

The proof is an adaptation of \citet{watkins1979generalization}.
\begin{proof}
    For any $n-$simplex $\cI_i$, define $\Delta v_i=(v_{i1}-v_{i0},v_{i2}-v_{i0},\cdots,v_{in}-v_{i0})\in\RR^{n\times n}$, and $h_{ij}=\|v_{ij}-v_{i0}\|$. Also, $\cI=\bigcup_i \cI_i$.

    For $x\in \cI_i$, define an affine map $x=v_0+\Delta v_i\,\xi$, where $\xi\in \cD=\{\xi\in\RR^n| \xi\in[0,1]^n, \sum_{i=1}^n \xi_i\le1\}$. Then this map is a one-to-one mapping of $\cD$ onto $\cI_i$. Let the vertices of $\cD$ be $\bar{v}_{ij}$ for $j=0,\cdots, n$. Then for any function $\phi$ on $\cI$, define $\bar{\phi}(\xi)=\phi(x)=\phi(v_0+\Delta v_i\,\xi)$. Then
    \begin{align*}
        \|D^\alpha \phi(x)\|_{0,p,\cI_i^\mathrm{o}}&=\prod_{j=1}^n h_{ij}^{-\alpha_i} | J_i|^{1/p} \|D^\alpha \bar{\phi}(\xi)\|_{0,p,\cD^\mathrm{o}}
        % \\
        % &=\prod_{j=1}^n h_{ij}^{-(\alpha_i-1/p)} \|D^\alpha \bar{\phi}(\xi)\|_{0,p,\cD^\mathrm{o}}
    \end{align*}
    where $|J_i|$ is the determinant of the Jacobian of the affine map. Then, we have
    \begin{align}
        \label{eqn:Omega<cI_h_bound}
        \| \bar{\phi}(\xi)\|_{m,p,\cD^\mathrm{o}}&=\left( \sum_{|\alpha|\le m}\|D^\alpha \bar{\phi}\|_{0,p,\cD^\mathrm{o}}^p \right)^{1/p}\notag\\
        &=\left( \sum_{|\alpha|\le m}\left(\prod_{j=1}^n h_{ij}^{\alpha_j}\,| J_i|^{-1/p}\right)^p\|D^\alpha \phi(x)\|_{0,p,\cI_i^\mathrm{o}}^p \right)^{1/p}\notag\\ 
        &\le \max\{h_{i,\max}^{m},1\}|J_i|^{-n/p}  \|{\phi}(x)\|_{m,p,\cI_i^\mathrm{o}}
        % h_{i,\min}^{-n/p}  \|{\phi}(x)\|_{m,p,\cI_i}\todo{h_{i,\max}^{m-n/p}  \|{\phi}(x)\|_{m,p,\cI_i}\ or\ h_{i,\max}^{m}h_{i,\min}^{-n/p}  \|{\phi}(x)\|_{m,p,\cI_i}\ or\ h_{i,\max}^{m}|J_i|^{1/p}  \|{\phi}(x)\|_{m,p,\cI_i}}
    \end{align}  
    and similarly,
    \begin{align}
    \label{eqn:cI<Omega_h_bound}
    \| \phi(x)\|_{m,p,\cI_i^\mathrm{o}}\le \max\{h_{i,\min}^{-m},1\}|J_i|^{n/p}  \|\bar{\phi}(\xi)\|_{m,p,\cD^\mathrm{o}}
        % \| \phi(x)\|_{m,p,\cI_i}\le h_{i,\min}^{-m+n/p}  \|\bar{\phi}(\xi)\|_{m,p,\cD}
    \end{align}
    where $h_{i,\min}=\min_j h_{ij}$, and $h_{i,\max}=\max_j h_{ij}$.

    For any function $\phi\in C(\cI)$, we can define a unique polynomial interpolant $\psi=B\phi$, where $\psi\in C(\cI)$ is a polynomial with degree $\le p(N,n)$ for some $p(N,n)\ge m-1$ such that $p_{m-1}=Bp_{m-1}$ for any polynomial $p_{m-1}$ of order at most $m-1$, and the operator $B$ is linear (see for example \citet{saniee2008simple}). The two functions $\phi,\psi$ are equal at the vertices of the $n$-simplex, i.e., $\phi(v_{ij})=\psi(v_{ij})$, for all $j=0,\cdots, n$.

    % This $\psi$ can be polynomial of order $b$. 
    Let $\bar{\phi}(\xi)=\phi(x)$ and $\bar{\psi}(\xi)=\psi(x)$. We can define the linear operator $\bar{B}$, s.t., $$(\phi-B\phi)(x)=(\bar{\phi}-\bar{B}\bar{\phi})(\xi).$$

    Then, we can find a set of barycentric coordinate functions $\bar{\phi}_0(\xi),\cdots,\bar{\phi}_n(\xi)$, which is the basis function of a natural coordinate system for $n-$simplex, s.t.
    \begin{align*}
        \bar{\phi}_s(\bar{v}_{ij})=\delta_{sj}\text{ and }\bar{\psi}(\xi)=\sum_{j=0}^n \bar{\phi}_j(\xi)\bar{\phi}(\bar{v}_{ij}).
    \end{align*}

    Then 
    \begin{align*}
        \|\bar{B}\bar{\phi}\|_{r,p,\cD^\mathrm{o}}=\|\bar{\psi}\|_{r,p}&\le \max_j \|\bar{\phi}(\bar{v}_{ij})\|\sum_{s=0}^n\|\bar{\phi}_s\|_{r,p,\cD^\mathrm{o}}\\
        &\le \|\bar{\phi}\|_{\infty}\sum_{s=0}^n\|\bar{\phi}_s\|_{r,p,\cD^\mathrm{o}}\\
        &\le C'\|\bar{\phi}\|_{m,p,\cD^\mathrm{o}} \sum_{s=0}^n\|\bar{\phi}_s\|_{r,p,\cD^\mathrm{o}}=C''\|\bar{\phi}\|_{m,p,\cD^\mathrm{o}} 
        % here $ \|\bar{\phi}(\bar{v}_{ij})\|$ is a vector and l2 norm
    \end{align*}
where the last inequality follows from the Sobolev embedding theorem, and $C',C''>0$.
% (ref: Adams \& Fournier, Sobolev Spaces, 2nd ed., Sections 4.12, 7.53;Leoni, A First Course in Sobolev Spaces, Chapters 8 and 9 \todo{}). 
This shows that $\bar{B}$ is a bounded operator from $W^{m,p}(\cD^\mathrm{o})$ to $W^{r,p}(\cD^\mathrm{o})$, and so is $I-\bar{B}$, i.e.,
\begin{align}
\label{eqn:I-barB_bound}
    \|(I-\bar{B})\bar{\phi}\|_{r,p,\cD^\mathrm{o}}\le C'''\|\bar{\phi}\|_{m,p,\cD^\mathrm{o}} 
\end{align}
for some universal constant $C'''>0$.

    Then 
    \begin{align*}
         \|\phi-\psi\|_{r,p,\cI_i^\mathrm{o}}=\|\phi-B\phi\|_{r,p,\cI_i^\mathrm{o}}&\le \max\{h_{i,\min}^{-r},1\}|J_i|^{n/p} \|\bar{\phi}-\bar{B}\bar{\phi}\|_{r,p,\cD^\mathrm{o}}\\
        &=\max\{h_{i,\min}^{-r},1\}|J_i|^{n/p} \|(I-\bar{B})\bar{\phi}\|_{r,p,\cD^\mathrm{o}}\\
        &\le C''' \max\{h_{i,\min}^{-r},1\}|J_i|^{n/p}  \|\bar{\phi}\|_{m,p,\cD^\mathrm{o}}\\
        &\le C''' \max\{h_{i,\max}^m,1\} \max\{h_{i,\min}^{-r},1\}  \|{\phi}\|_{m,p,\cI_i^\mathrm{o}}
    \end{align*}
    where the first inequality follows from~\eqref{eqn:cI<Omega_h_bound},  the second inequality follows from \eqref{eqn:I-barB_bound}, the last inequality follows from~\eqref{eqn:Omega<cI_h_bound}.   

    Then we have for any $\phi\in W^{m,p}(\cI^\mathrm{o})$,
    \begin{align*}
        \|\phi-\psi\|_{r,p}&\le C \max\{h_{\max_{n+1}}^m,1\}\max\{ h_{\min}^{-r},1\}  \|{\phi}\|_{m,p}\\
        &=C \max\{h_{\max_{n+1}},1\}^m\min\{ h_{\min},1\}^{-r}  \|{\phi}\|_{m,p}
    \end{align*}
    where $h_{\min}=\min_i h_{i,\min}$ and $h_{\max_{n+1}}=\max_{\cI_i} h_{i,\max}$ is the maximum $h_{ij}$ over all the $n-$simplex, and $C>0$ is some universal constant.
\end{proof}

\section{Additional lemmas}
\begin{lemma}
\label{lem:sobolev_dense_lp}
    Let $\Omega$ be a bounded open set. Then the Sobolev space $W^{m,p}(\Omega)$ is dense in $L^p(\Omega)$. 
\end{lemma}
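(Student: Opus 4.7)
The plan is to leverage the classical density of $C_c^\infty(\Omega)$ in $L^p(\Omega)$ together with the trivial inclusion $C_c^\infty(\Omega) \subseteq W^{m,p}(\Omega)$. Since $W^{m,p}(\Omega) \subseteq L^p(\Omega)$ by definition, density will follow once I exhibit, for every $f \in L^p(\Omega)$ and every $\epsilon > 0$, an element of $W^{m,p}(\Omega)$ within $\epsilon$ of $f$ in the $L^p$-norm. I will produce such an element that is in fact smooth with compact support, which is even stronger.

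First, I would invoke the standard mollification argument: given $f \in L^p(\Omega)$ and $\epsilon > 0$, extend $f$ by zero to all of $\RR^d$ (valid since $1 \le p < \infty$, the range implicit from Assumption~\ref{assump:data_distribution_absolutely_continuous_Lebesgue_ground_truth}), restrict to a compact subset $K \Subset \Omega$ chosen so that $\|f - f\bbone_K\|_p < \epsilon/2$ (possible because $\Omega$ is bounded and $f \in L^p$, so the $L^p$-mass near $\partial\Omega$ is small), and convolve with a standard mollifier $\rho_\delta$ for $\delta$ small enough that $\mathrm{supp}(\rho_\delta * (f\bbone_K)) \subset \Omega$. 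Continuity of translation in $L^p$ (again requiring $p<\infty$) yields $\|f\bbone_K - \rho_\delta * (f\bbone_K)\|_p < \epsilon/2$ for $\delta$ sufficiently small, producing $\phi_\epsilon \in C_c^\infty(\Omega)$ with $\|f - \phi_\epsilon\|_p < \epsilon$.

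Second, I would verify $C_c^\infty(\Omega) \subseteq W^{m,p}(\Omega)$. For $\phi \in C_c^\infty(\Omega)$, each weak derivative $D^\alpha \phi$ coincides with the classical derivative, is continuous, and is supported in $\mathrm{supp}(\phi)$, which is compact in $\Omega$; hence $D^\alpha \phi$ is bounded and has compact support of finite Lebesgue measure, so $D^\alpha\phi \in L^p(\Omega)$ for every $|\alpha|\le m$. Combining the two steps gives $\phi_\epsilon \in W^{m,p}(\Omega)$ with $\|f - \phi_\epsilon\|_p < \epsilon$, which is exactly the density claim.

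There is no genuine obstacle here; the lemma is entirely classical and the only subtlety worth flagging is the restriction $p < \infty$, which is ensured by Assumption~\ref{assump:data_distribution_absolutely_continuous_Lebesgue_ground_truth}. For $p = \infty$ the result would fail since $C_c^\infty(\Omega)$ is not dense in $L^\infty(\Omega)$, so the argument cannot be pushed further without changing the notion of convergence.
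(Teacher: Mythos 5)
Your proof is correct and follows essentially the same route as the paper, namely passing through the density of smooth functions in $L^p(\Omega)$ and the inclusion of those functions in $W^{m,p}(\Omega)$. You are in fact more careful than the paper's one-line argument: you work with $C_c^\infty(\Omega)$, which genuinely sits inside $W^{m,p}(\Omega)$ because compactly supported smooth functions and all their derivatives are bounded on a set of finite measure, whereas the paper invokes $C^\infty(\Omega)$ (a smooth function on a bounded open set need not lie in $W^{m,p}(\Omega)$ if it blows up near $\partial\Omega$) and adds the logically superfluous remark that it is dense in $W^{m,p}(\Omega)$. Your explicit cutoff-plus-mollification step and the remark on $p<\infty$ are both correct and fill in details the paper leaves implicit.
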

\begin{proof}
    By definition, $W^{m,p}(\Omega)\subseteq L^p(\Omega)$. Also, we have $C^\infty(\Omega)$ is dense in $L^p(\Omega)$, and in $W^{m,p}(\Omega)$. Then $W^{m,p}(\Omega)$ is dense in $L^p(\Omega)$.
\end{proof}

\begin{lemma}
\label{lem:NN_is_in_sobolev_space}
    Under Assumption~\ref{assump:analytic}, the neural network $f(x)\in W^{r,p}(\Omega)$ for any $1\le r,p<\infty$.
\end{lemma}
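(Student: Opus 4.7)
The plan is to show that $f(x)$ is real-analytic on some bounded open neighborhood of $\bar{\Omega}$, conclude that $f$ and all of its partial derivatives are continuous and hence bounded on the compact set $\bar{\Omega}$, and then use $|\Omega|<\infty$ to transfer boundedness into membership in $L^p(\Omega)$ for every $|\alpha|\le r$. Since this must hold for all $r,p$, the main verification is just that each derivative $D^\alpha f$ lies in $L^p(\Omega)$.

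First, I would apply Corollary~\ref{cor:analytic_with_NN} to conclude that $\tau_\epsilon$ is real-analytic on all of $\RR^m$, so $\bar\varphi_\ell(\theta;u)=\varphi_\ell(\theta;\tau_\epsilon(u))$ is a composition of real-analytic maps and hence real-analytic by Proposition~\ref{prop:analytic_function}. Then the layer recursion $u_{\ell+1,i}=u_{\ell,i}+\bar\varphi_\ell(\theta;u_{\ell,i})$ shows, by induction on $\ell$ (using that sums and compositions of analytic maps are analytic), that $x\mapsto u_{\ell}(x)$ is real-analytic on the bounded open neighborhood $U$ of $\bar{\Omega}$ on which $\varphi_0$ is real-analytic. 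Composing once more with $\bar\varphi_L$, the map $f(x)=\bar\varphi_L(\theta;u_L(x))$ is real-analytic on $U$.

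By Proposition~\ref{prop:analytic_function}, $f\in C^\infty(U)$, so every partial derivative $D^\alpha f$ with $|\alpha|\le r$ is continuous on $U\supseteq\bar{\Omega}$. Since $\bar{\Omega}$ is compact (it is closed and bounded in $\RR^d$, the boundedness being implicit in the phrase ``bounded open neighborhood of $\bar{\Omega}$'' in Assumption~\ref{assump:analytic}), each $D^\alpha f$ attains a finite maximum $M_\alpha$ on $\bar{\Omega}$. Combined with Assumption~\ref{assump:data_distribution_absolutely_continuous_Lebesgue_ground_truth}, which gives $|\Omega|<\infty$, this yields
\begin{align*}
\|D^\alpha f\|_{L^p(\Omega)}^p \;=\; \int_\Omega |D^\alpha f(x)|^p\,dx \;\le\; M_\alpha^p\,|\Omega| \;<\;\infty,
\end{align*}
so $D^\alpha f\in L^p(\Omega)$ for every multi-index with $|\alpha|\le r$ and every $1\le p<\infty$. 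Summing the $p$-th powers over $|\alpha|\le r$ gives $\|f\|_{r,p,\Omega}<\infty$, i.e., $f\in W^{r,p}(\Omega)$, which completes the argument. There is no real obstacle here; the only subtlety is making explicit that Assumption~\ref{assump:analytic} supplies a \emph{bounded} neighborhood of $\bar{\Omega}$ on which the whole composition is analytic, so that the compactness step is legitimate.
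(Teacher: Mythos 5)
Your proposal is correct and follows essentially the same route as the paper's proof: real-analyticity (from Assumption~\ref{assump:analytic} together with the closure properties in Proposition~\ref{prop:analytic_function}) gives $f\in C^\infty$ on an open neighborhood of $\bar{\Omega}$, continuity of each $D^\alpha f$ on the compact set $\bar{\Omega}$ gives a finite supremum, and $|\Omega|<\infty$ converts this into a finite $L^p$ norm for each multi-index. You spell out the layer-by-layer induction and make explicit that boundedness of $\bar{\Omega}$ (hence compactness) comes from the ``bounded open neighbourhood'' clause of Assumption~\ref{assump:analytic}, a point the paper leaves implicit; otherwise the two arguments are the same.
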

\begin{proof}
    By Assumption~\ref{assump:analytic} and Proposition~\ref{prop:analytic_function}, the neural network function $f(x)$ is analytic on some open neighbourhood $\Omega_1$ of $\bar{\Omega}$. Then $f(x)\in C^\infty(\Omega_1)$, and therefore $$\sup_{x\in\bar{\Omega}}\|D^\alpha f(x)\|_{\ell_p}<\infty$$ for any non-negative $\alpha=(\alpha_1,\cdots,\alpha_d)$, and any $1\le p<\infty$. By $|\Omega|<\infty$, we have the $L^p(\Omega)$ norm of any derivative
    \begin{align*}
        \|D^\alpha f(x)\|_p^p\le |\Omega|\sup_{x\in\bar{\Omega}}\|D^\alpha f(x)\|_{\ell_p}^p<\infty.
    \end{align*}
    Therefore, $f(x)\in W^{r,p}(\Omega)$ for any $1\le r,p<\infty$.
\end{proof}
\section{Extended experiments}
\label{appendix-experiments}
In our experiments, we first introduce the data uniformity strategy---iteratively selecting data points that maximize the distances among all previously chosen points, and present visualizations comparing the selected uniform data, randomly selected data of the same size, and the full dataset. The purpose of comparing the selected uniform data with randomly selected data of the same size is to isolate the effect of the minimum distance $h_{\min}$ while keeping the dataset size constant. We then conduct supervised fine-tuning with two complementary objectives: $\ell^2$ loss optimized via SGD for theoretical validation, and cross-entropy loss optimized via Adam for practical evaluation. This setting enables us to systematically assess both performance and training efficiency across varying source datasets, optimization strategies, dataset scales, and model sizes, thereby demonstrating the robustness of our method. Specifically, we deploy our approach into state-of-the-art data distillation and data selection baselines, including TeaMs-RL \citep{guteams}, WizardLM \citep{xu2024wizardlm}, Zcore \citep{griffin2024zero}, and LESS \citep{xia2024less}. Comparative experiments show that our method not only improves training efficiency but also achieves superior or comparable performance relative to these strong baselines. 
The experiments validate our theoretical results by showing that training with the selected uniform data, which implies a larger $h_{\min}$, accelerates convergence (see Corollary~\ref{cor:h_min_speed_of_convergence}) while maintaining comparable performance to using the full dataset (see Theorem~\ref{thm:approximation} and the discussions below). %Detailed experimental settings are provided in Appendix~\ref{appendix:experiment-settings}.

\subsection{Data selection and visualization}
\label{appendix:data-selection-visualization}

\paragraph{Data Selection via Distance Maximization.} To construct compact yet representative instruction-tuning datasets, we apply a greedy uniformity selection strategy based on data distance. Specifically, we represent each instruction–input–output triple as an average Word2Vec embedding trained on the full corpus. Starting from a randomly selected seed, we iteratively add the data point that maximizes the minimum distance to all previously selected points, ensuring maximal coverage of the data space. We apply this strategy to four baselines: TeaMs-RL \citep{guteams}, WizardLM \citep{xu2024wizardlm}, Zcore \citep{griffin2024zero}, and LESS \citep{xia2024less}. The TeaMs-RL and WizardLM datasets are both distilled from ChatGPT-3.5/4, using different strategies to generate diverse instructional data. From the 9k TeaMs-RL dataset, we select a 4.5k maximized distance subset and compare it to a 4.5k random sample, which is composed of half randomly sampled data points and half selected to minimize pairwise distances from the 9k full dataset\footnote{This choice of random data is made because the dataset size is relatively small, and we aim to highlight the effect of different minimum distances between data points $h_{\min}$.}; from the 20k WizardLM dataset, we select a 10k maximized distance subset comparing it against a 10k random sample, and the random subset consists of the first 10k samples from the original 20k dataset, which is already randomly shuffled.  This approach allows us to fairly evaluate the benefits of uniformity-oriented sampling across different dataset scales. Beyond instruction-tuning datasets, we also apply the same strategy to the California Housing dataset \citep{pace1997sparse} and the LESS dataset \citep{xia2024less}, thereby enabling direct benchmarking against baselines such as Zcore \citep{griffin2024zero} and LESS \citep{xia2024less}.

\paragraph{Data Visualization via PCA Projection.} To qualitatively assess the distributional coverage of different data selection methods, we project high-dimensional sentence embeddings into two dimensions using PCA \citep{mackiewicz1993principal}. For consistency, PCA is fit on the full dataset (either TeaMs-RL 9k or WizardLM 20k) and applied to both the full set and its corresponding uniform and random subsets (the uniform data is selected based on maximizing pairwise distance). The resulting 2D coordinates are normalized using MinMax scaling to [0,1] and plotted for visualization. As shown in Figures \ref{fig:20k-full-10k-diverse-10k-random-wizard} and \ref{fig:9k-full-4.5kdiverse-4.5k-random-9k}, the uniform subsets from TeaMs-RL \citep{guteams} and WizardLM \citep{xu2024wizardlm} (4.5k for TeaMs-RL, 10k for WizardLM) exhibit broader and more dispersed coverage across the embedding space compared to the random subsets and the full dataset, which cluster more tightly and redundantly. These patterns highlight the ability of uniformity-based sampling to retain semantic coverage with reduced sample sizes, supporting its utility for efficient training. A similar strategy is employed in Zcore \citep{griffin2024zero} and LESS \citep{xia2024less}. 
% Further details on the visualization are provided in Appendix~\ref{appendix-experiments:all-steps}.

\begin{figure}[bt]
    \centering
    \subcaptionbox{9k full data points.}{
    \includegraphics[width=0.32\linewidth]{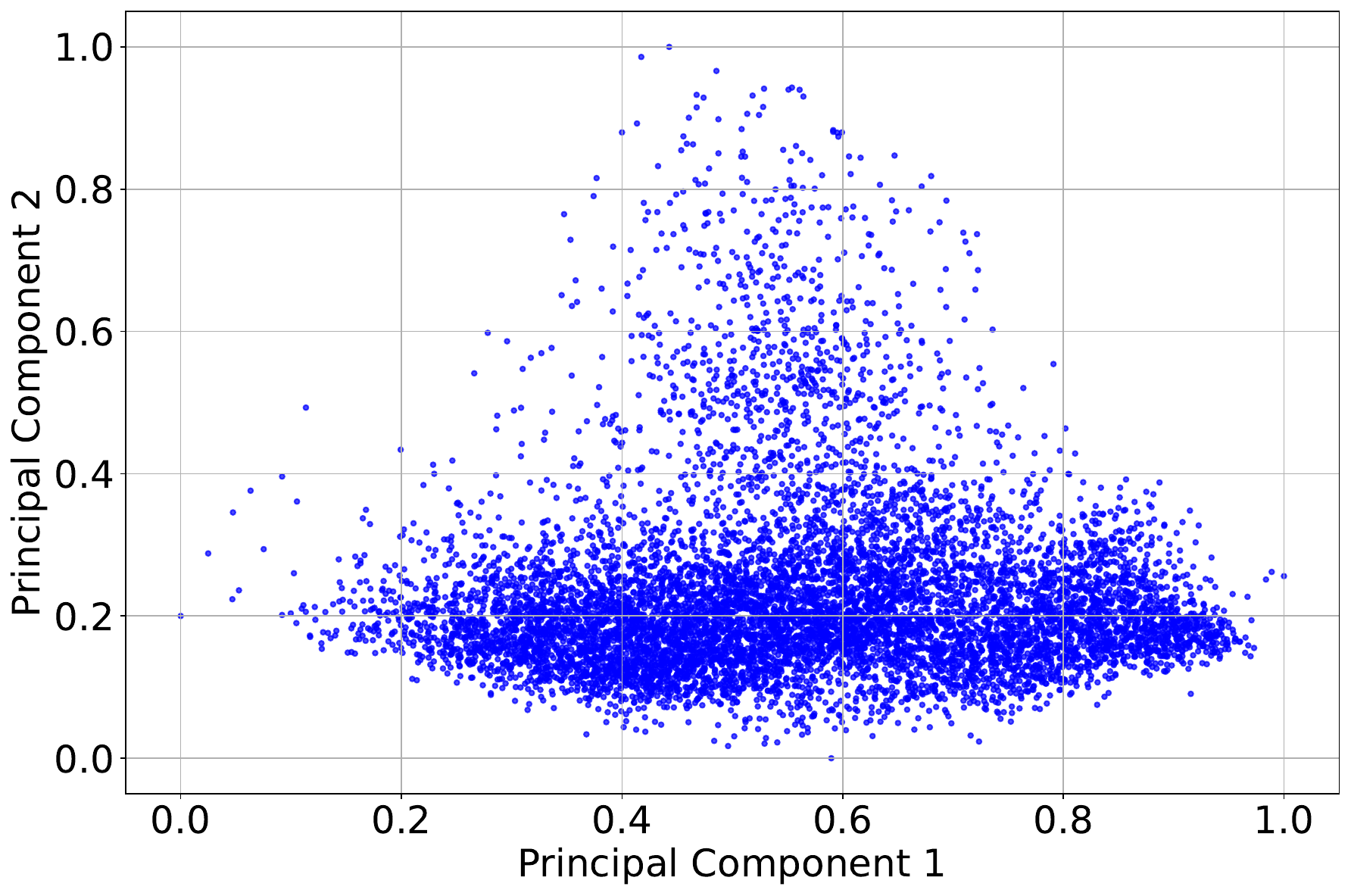}}
    \subcaptionbox{4.5k uniform data points.}{
    \includegraphics[width=0.32\linewidth]{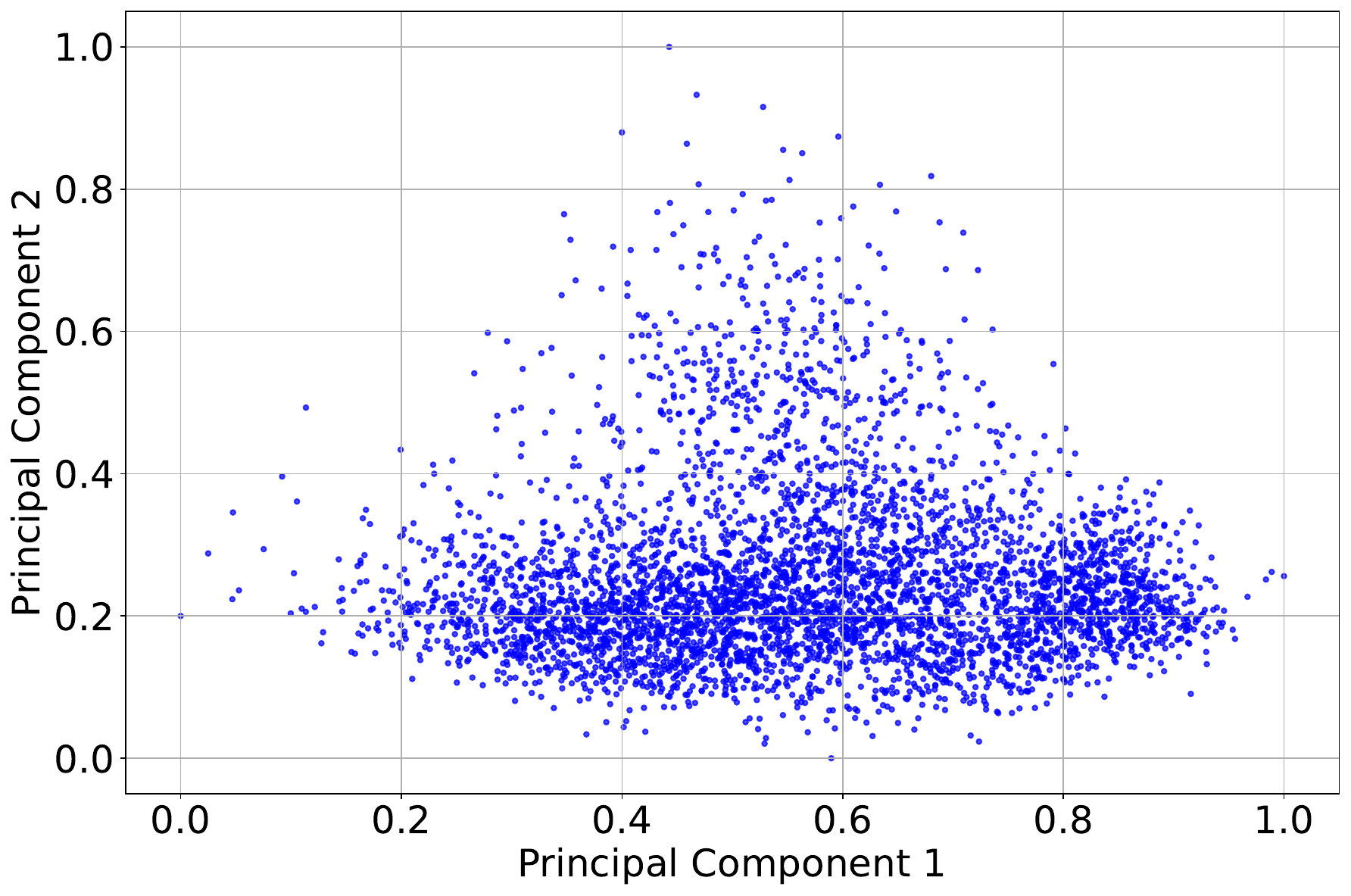}}
    \subcaptionbox{4.5k random data points.}{
    \includegraphics[width=0.32\linewidth]{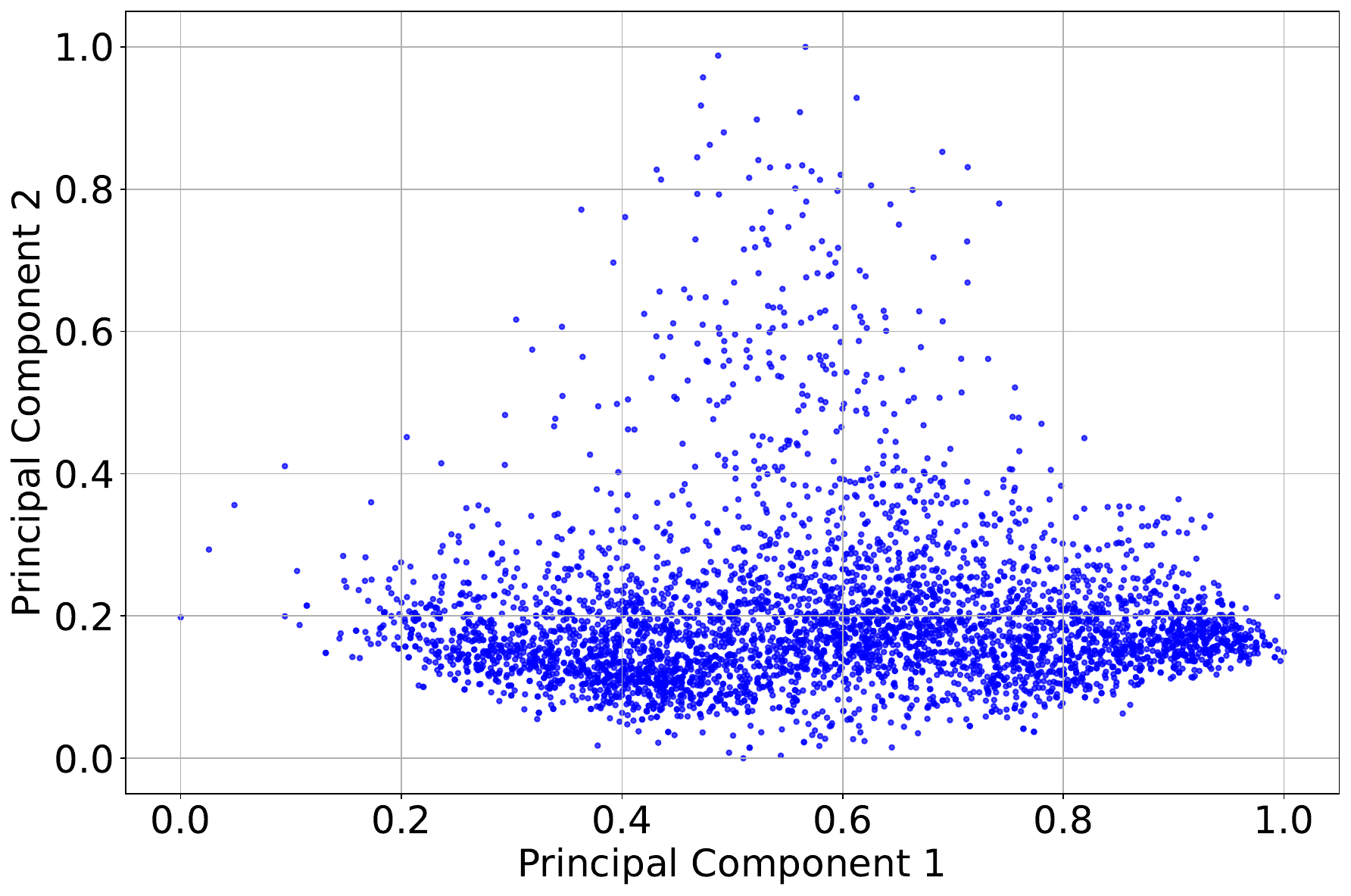}}
    \caption{Visualization of data point distributions for datasets selected using different methods from TeaMs-RL \citep{guteams}.}
    \label{fig:9k-full-4.5kdiverse-4.5k-random-9k}
\end{figure}

\begin{comment}
    \begin{figure}[bt]
    \centering
    \subcaptionbox{20k full data points.}{
    \includegraphics[width=0.32\linewidth]{files/figures/PCA_figs/word2vec_greedy/full_pca_20k.pdf}}
     \subcaptionbox{10k uniform data points.}{
    \includegraphics[width=0.32\linewidth]{files/figures/PCA_figs/word2vec_greedy/subset_pca_10k_diverse.pdf}}
    \subcaptionbox{10k random data points.}{
    \includegraphics[width=0.32\linewidth]{files/figures/PCA_figs/word2vec_greedy/subset_pca_10k_random.pdf}}
    \caption{Visualization of data point distributions for datasets selected using different methods from WizardLM \citep{xu2024wizardlm}.}
    \label{fig:20k-full-10k-diverse-10k-random-wizard}
\end{figure}
\end{comment}

\subsection{$\ell^2$-SGD training experiments}
\label{appendix-subsec:l2-sgd-training-exp}

% \begin{figure}[bt]
%     \centering    
%      \subcaptionbox{Total training time.}{
%     \includegraphics[width=0.32\linewidth]{files/figures/l2_sgd/california-house/training_time_compare.pdf}}
%     \subcaptionbox{Training loss.}{
%     \includegraphics[width=0.32\linewidth]{files/figures/l2_sgd/california-house/train_mse_compare.pdf}}
%     \subcaptionbox{Validation loss.}{
%     \includegraphics[width=0.32\linewidth]{files/figures/l2_sgd/california-house/val_mae_compare.pdf}}
%     \caption{L2-SGD training on the California Housing dataset \citep{pace1997sparse}. Comparison of training efficiency and performance across data selection methods shows that uniform subsets (ours) achieve faster, smoother, and more stable convergence with reduced dataset size, while maintaining comparable or superior training and validation performance relative to Zcore \citep{griffin2024zero}. Moreover, uniform subsets mitigate the gradient instabilities occasionally observed in random subsets and full datasets.}
%     \label{fig:16k-full-8k-diverse-8k-random-house}
% \end{figure}

We present training dynamics under $\ell^2$ loss with SGD to validate our theoretical insights. While $\ell^2$-(S)GD provides a useful controlled setting for analysis, it is generally less efficient in practice compared to cross-entropy loss optimized with Adam, which is a widely used approach for training LLMs \citep{gunel2021supervised, mao2023cross}. 

% Due to the limited practical use and lack of precedent for $\ell^2$-SGD in LLM training, we do not evaluate generalization performance in this section. For broader comparisons involving cross-entropy and Adam, see Section~\ref{subsec:cross-entropy-loss-adam-training-exp}.

\paragraph{Training Comparison with Uniform Subsets.} Figure~\ref{fig:9k-full-4.5kdiverse-4.5k-random-loss} shows the training loss curves of LLaMA-1-7B \citep{touvron2023llama}\footnote{\url{https://huggingface.co/huggyllama/llama-7b}} using $\ell^2$ loss with SGD \citep{bottou2010large, cortes2012l2} on the TeaMs-RL dataset under three data configurations: the full dataset (9k), a 4.5k subset selected by maximizing pairwise distance, and a 4.5k randomly sampled subset. The figure includes both unsmoothed (left) and smoothed (right) loss curves, where unsmoothed curves reflect raw step-wise fluctuations in loss, while smoothed curves (e.g., via moving average) highlight overall training trends more clearly. In both plots, we zoom in the early and middle stage of the training for clearer comparison (see full training in Appendix \ref{appendix-l2-sgd:large-scale-dataset-10k}), and observe that the model trained on the uniform subset consistently converges faster than the one trained on the random subset across training steps. Moreover, despite using only half the number of samples, the uniform subset achieves convergence behavior comparable to or better than the full dataset. This validates our theory (see Section~\ref{sec:theory}) and suggests that uniformity-aware data selection can substantially enhance training efficiency and convergence, even under limited data budgets. More experiments are provided in Appendix \ref{appendix-l2-sgd:large-scale-dataset-10k}.

\begin{figure}[h]
    \centering
    \subcaptionbox{Unsmoothened loss curves.}{
    \includegraphics[width=0.48\linewidth]{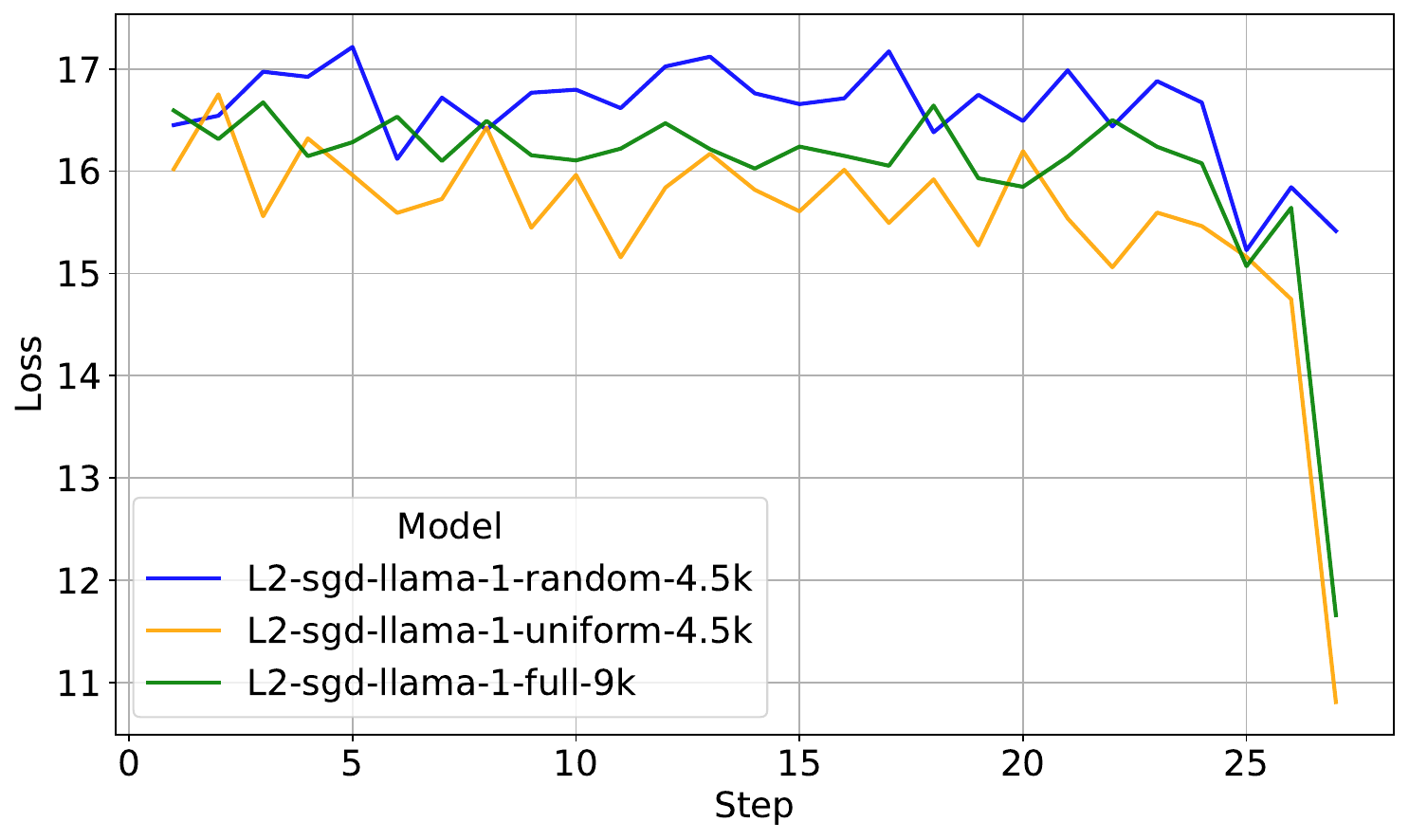}}
    \subcaptionbox{Smoothened loss curves.}{
    \includegraphics[width=0.48\linewidth]{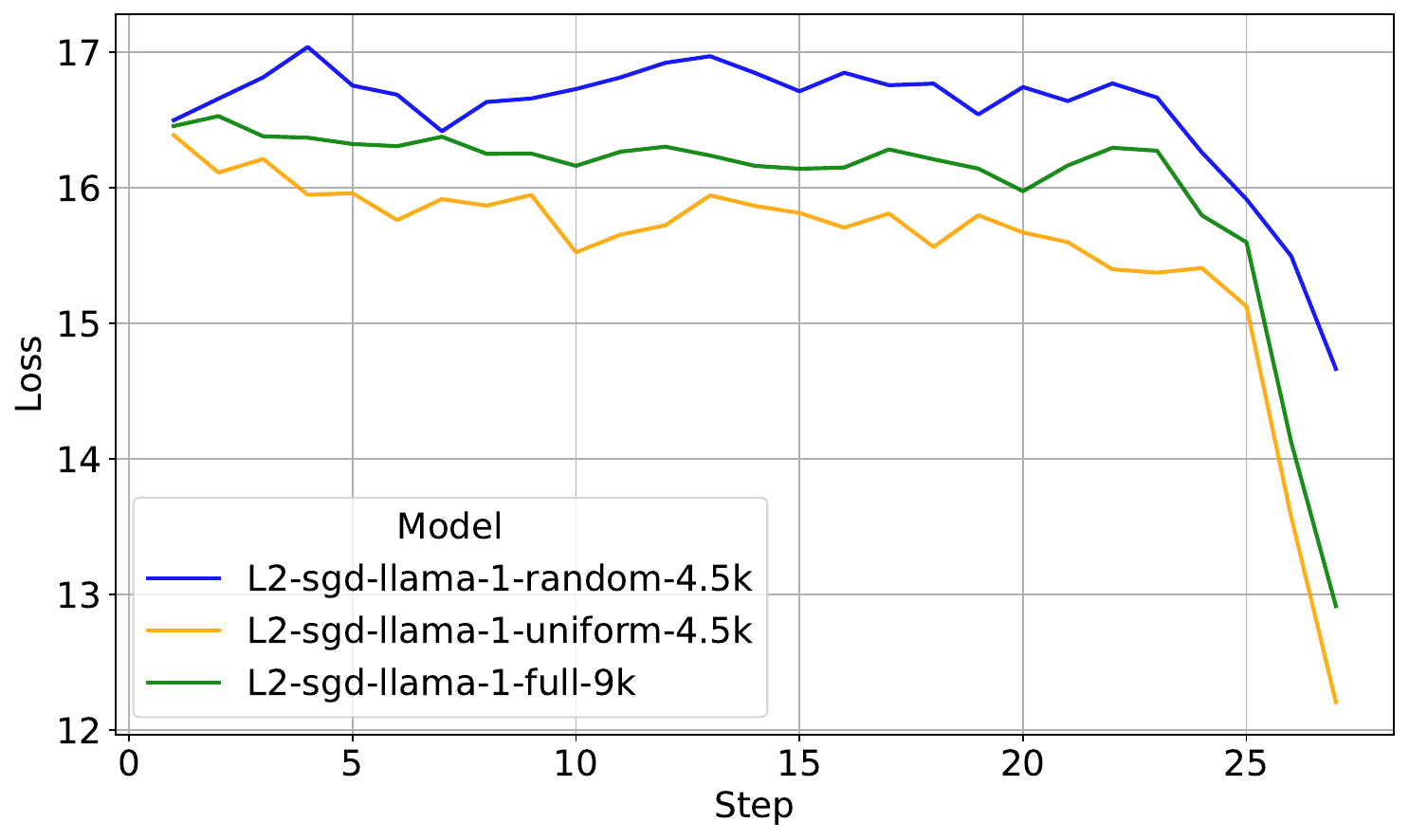}}
    \caption{$\ell^2$ SGD Training Experiments: Training loss comparison of TeaMs-RL data point distributions for different dataset sizes: 4.5k uniform (maximized distance) dataset, 4.5k random dataset, 9k full dataset.}
    \label{fig:9k-full-4.5kdiverse-4.5k-random-loss}
\end{figure}

\subsection{Cross entropy loss adam training experiments}
\label{appendix-subsec:cross-entropy-loss-adam-training-exp}

% \begin{figure}[tb!]
%     \centering    
%     \subcaptionbox{Training time comparison..}{
%     \includegraphics[width=0.32\linewidth]{files/figures/cross_entropy_adam/less/plot_step_time_llama1_7b_loss_less.pdf}}
%      \subcaptionbox{Performance comparison.}{
%     \includegraphics[width=0.32\linewidth]{files/figures/cross_entropy_adam/less/less_mmlu_llama1_7b_14k_over_only.pdf}}
%     \subcaptionbox{Training loss.}{
%     \includegraphics[width=0.32\linewidth]{files/figures/cross_entropy_adam/less/less_loss_llama1_7b_14k_three_seeds_mean_var.pdf}}    
%     \caption{Cross-entropy training with AdamW on the LESS dataset \citep{xia2024less}. We compare uniform subsets (ours), random subsets, and the full dataset using LLaMA-1-7B. (a) Uniform subsets significantly reduce the time required to reach a target loss threshold. (b) Accuracy on MMLU demonstrates that uniform subsets achieve comparable or superior performance relative to random and full datasets. (c) Training loss curves show that uniform subsets converge faster and more stably, highlighting the efficiency and robustness of uniformity-aware data selection.}
%     \label{fig:14k-full-7k-less}
% \end{figure}

Beyond the l2-SGD, we conduct the experiments suing cross-entropy loss with Adam, the standard optimization setup for LLMs \citep{gunel2021supervised, mao2023cross}, we observe consistent improvements. As shown in Figure~\ref{fig:14k-full-7k-less} presents training dynamics of LLaMA-1-7B using cross-entropy loss with AdamW on the LESS dataset \citep{xia2024less}, comparing uniform subsets (ours), random subsets, and the full dataset. Panel (a) shows that the 7k uniform subset reaches the loss threshold more than 2$\times$ faster than the 14k full dataset and substantially faster than the 7k random subset, highlighting improved training efficiency. Panel (b) reports MMLU accuracy, where the uniform subset achieves performance comparable to or exceeding both random and full datasets despite using fewer samples. Panel (c) illustrates training loss trajectories across seeds, demonstrating that uniform subsets yield smoother and more stable convergence relative to baselines. Together, these results indicate that uniformity-aware data selection not only reduces training time but also improves robustness and generalization, validating its effectiveness for efficient large-scale model training.

\paragraph{Cross-Entropy Loss and Adam Optimization with maximized-distance Subsets.} Figures \ref{fig:9k-full-4dot5k-diverse-4dot5k-random-cross-entropy-adam} (a) and \ref{fig:9k-full-4dot5k-diverse-4dot5k-random-cross-entropy-adam} (b) compare training loss curves under cross-entropy loss \citep{mao2023cross} using the Adam optimizer \citep{kingma2014adam} for three TeaMs-RL datasets: the full 9k dataset, a 4.5k maximized-distance subset selected via maximized distance, and a 4.5k random subset. Figure \ref{fig:9k-full-4dot5k-diverse-4dot5k-random-cross-entropy-adam} (a) presents raw training loss, while Figure \ref{fig:9k-full-4dot5k-diverse-4dot5k-random-cross-entropy-adam} (b) shows smoothed loss for clarity. In both cases, the model trained on the maximized-distance based subset consistently converges faster and achieves significantly lower loss than the one trained on the random subset, despite both using the same number of samples. Notably, the maximized-distance 4.5k subset also outperforms the full 9k dataset in both convergence speed and final loss, highlighting the efficiency gains from data uniformity and well-distributed data. These results reaffirm that maximized-distance data selection not only improves optimization but also reduces training redundancy under standard cross-entropy objectives with modern optimizers. Notably, models trained on uniform data exhibit lower initial loss compared to those trained on random or full datasets. This may be attributed to the higher diversity and broader coverage of the uniform subset, which is more aligned with the base model’s existing knowledge, likely due to its pretraining on large-scale corpora exceeding 1T tokens \citep{touvron2023llama}.

\paragraph{Scalability of Uniform Data Benefits with Larger Datasets.} Figures \ref{fig:20k-full-10k-diverse-10k-random-cross-entropy-adam} (a) and \ref{fig:20k-full-10k-diverse-10k-random-cross-entropy-adam} (b) show the training dynamics of models trained on WizardLM data using cross-entropy loss with the Adam optimizer across three dataset settings: the full 20k dataset, a 10k maximized-distance subset selected by maximizing distance, and a 10k random subset. In both plots, the maximized-distance subset consistently outperforms the random counterpart throughout the training process, achieving lower and more stable loss values. Notably, the maximized-distance 10k subset even surpasses the full 20k dataset regarding convergence, suggesting that data uniformity can compensate for and even outperform brute-force data scaling. This pattern, observed across both TeaMs-RL and WizardLM data, highlights the robustness and scalability of uniform-based sampling strategies, offering a compelling approach for reducing computational costs without sacrificing learning quality. The complete training experiments across all steps are provided in Appendix~\ref{appendix-experiments:all-steps}.

 \begin{figure}[tb!]
    \centering
    \subcaptionbox{Unsmoothened loss curves.}{
    \includegraphics[width=0.48\linewidth]{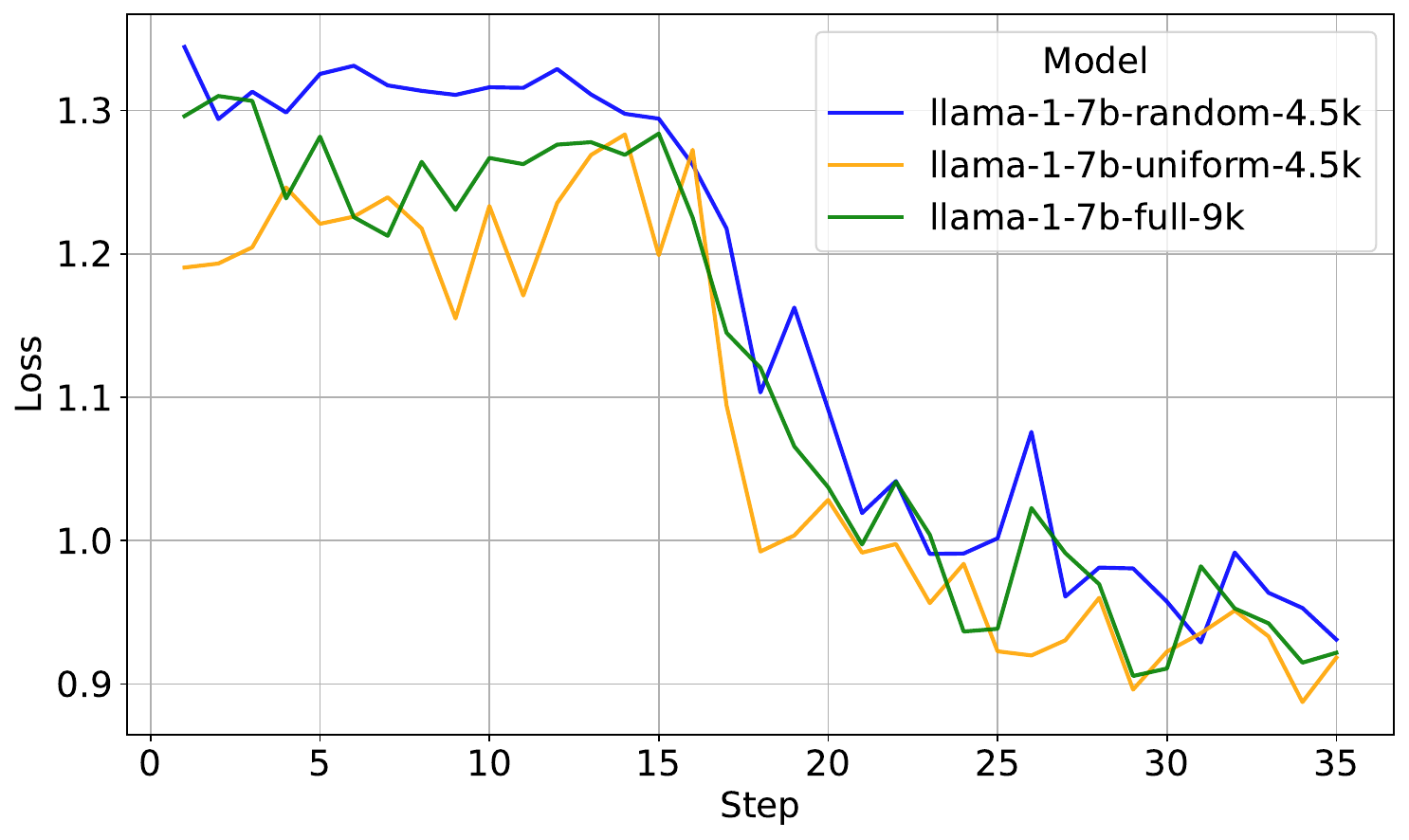}}
     \subcaptionbox{Smoothened loss curves.}{
    \includegraphics[width=0.48\linewidth]{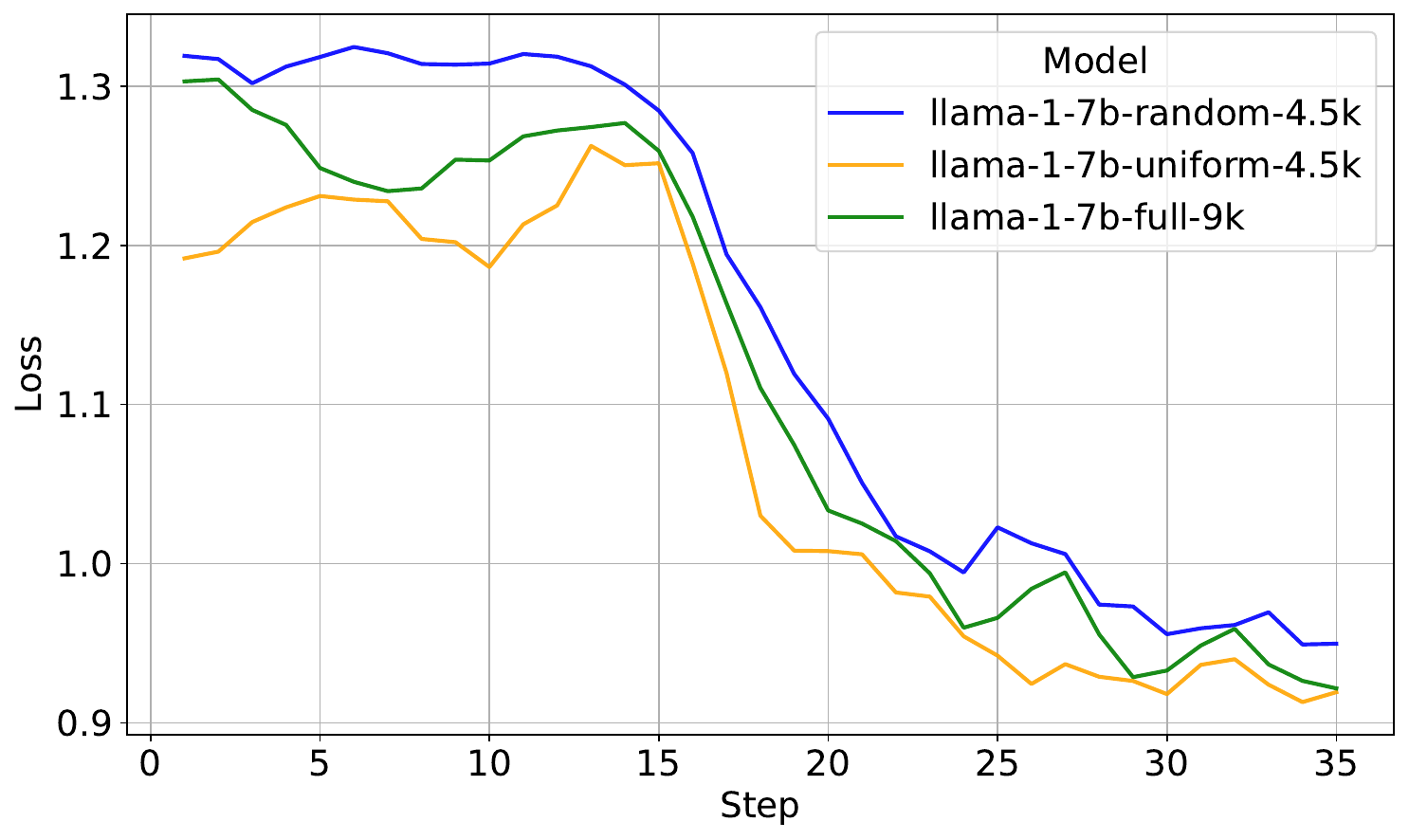}}
    \caption{Training loss comparison of TeaMs-RL data point distributions for different dataset sizes using cross entropy loss and Adam: 4.5k uniform (maximized distance) dataset, 4.5k random dataset, 9k full dataset.}
    \label{fig:9k-full-4dot5k-diverse-4dot5k-random-cross-entropy-adam}
\end{figure}

\begin{figure}[tb!]
    \centering
    \subcaptionbox{Unsmoothened loss curves.}{
    \includegraphics[width=0.48\linewidth]{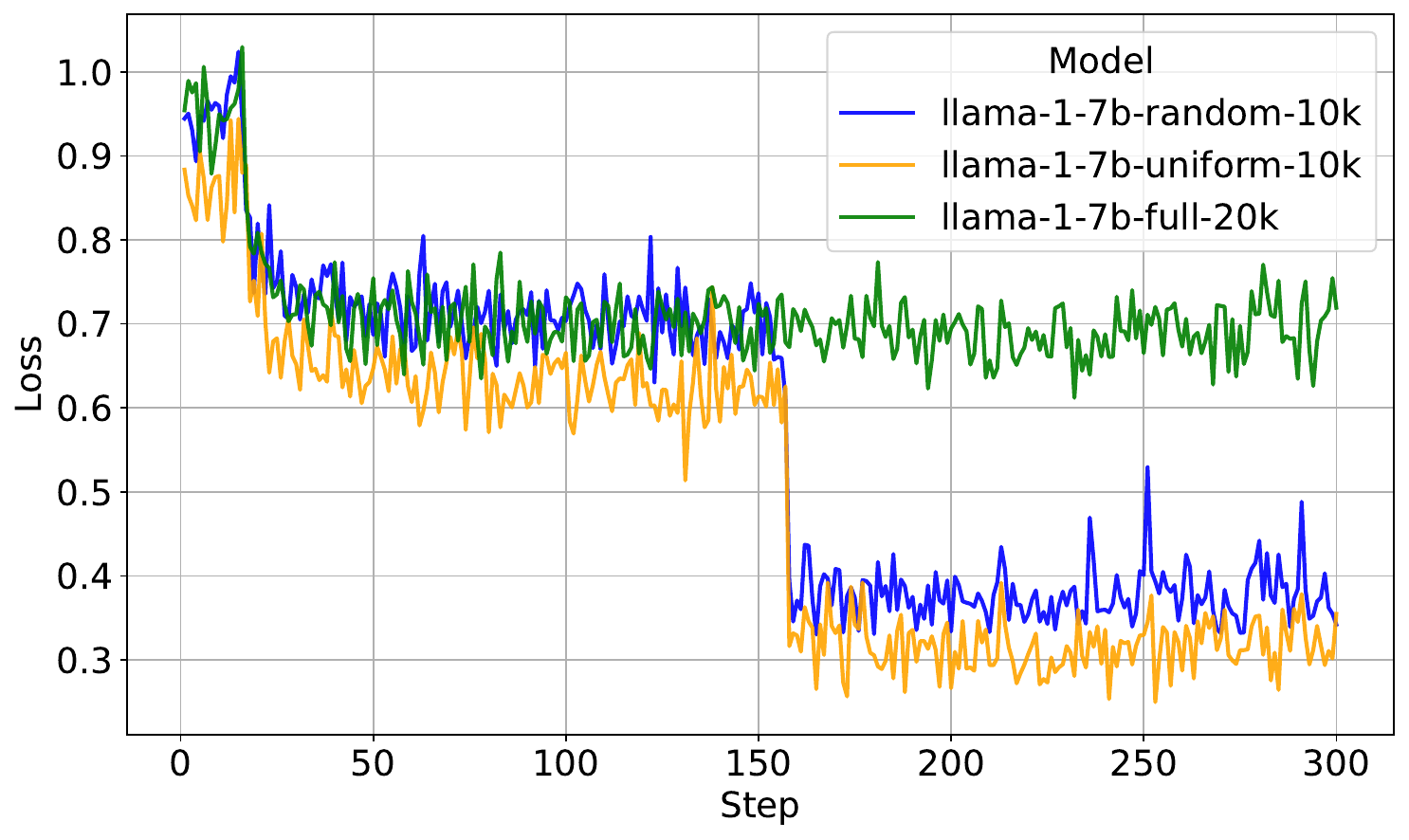}}
     \subcaptionbox{Smoothened loss curves.}{
    \includegraphics[width=0.48\linewidth]{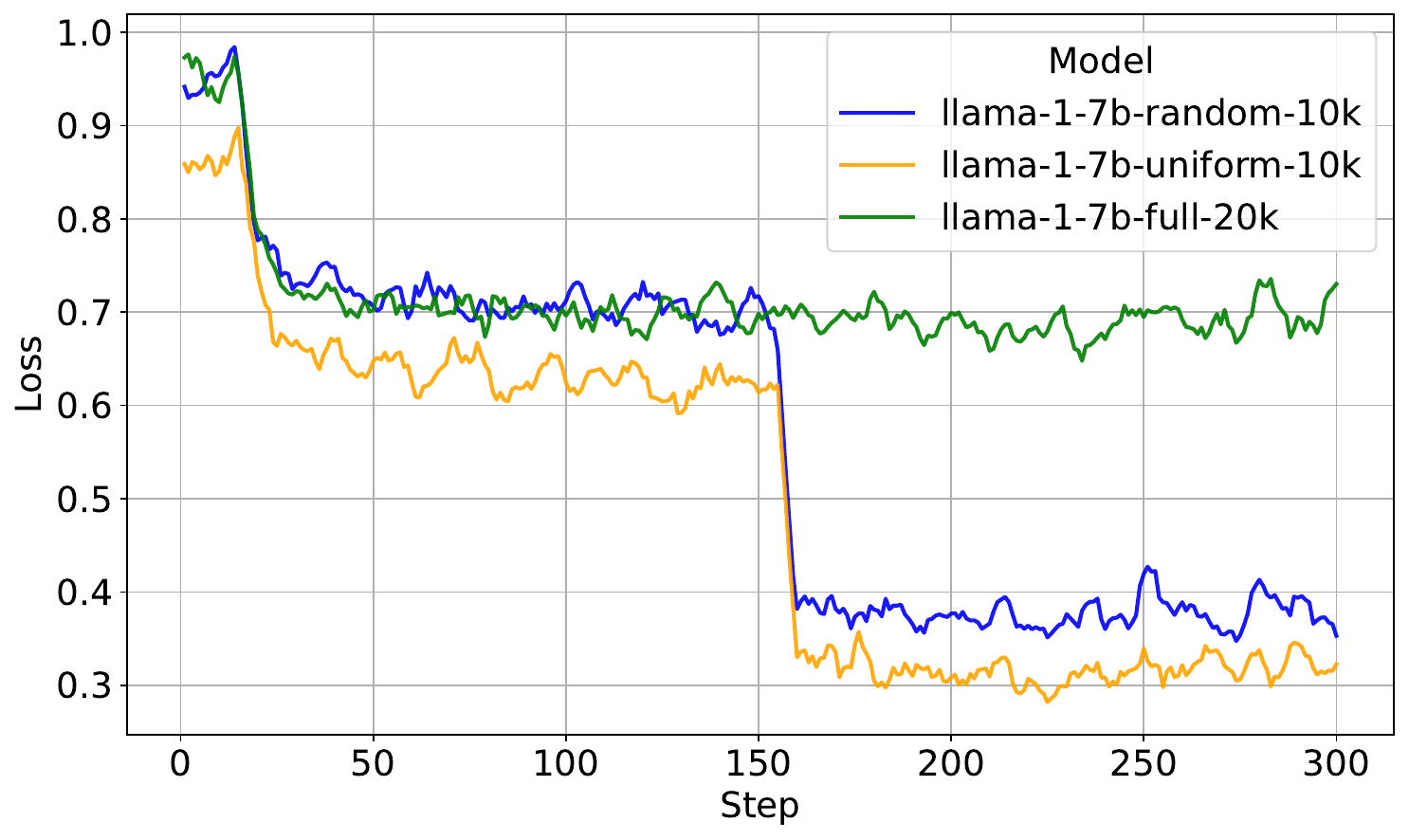}}
    \caption{Training loss comparison of WizardLM data point distributions for different dataset sizes using cross entropy and adam: 10k uniform (maximized distance) dataset, 10k random dataset, 20k full dataset.}
    \label{fig:20k-full-10k-diverse-10k-random-cross-entropy-adam}
\end{figure}

\paragraph{Training on large-scale models.}
 Figures \ref{fig:9k-full-4dot5k-diverse-4dot5k-random-cross-entropy-adam-llama-1-13b} through \ref{fig:20k-full-10k-diverse-10k-random-cross-entropy-adam-llama-1-13b} present training loss comparisons for the LLaMA-1 13B model \citep{touvron2023llama}\footnote{\url{https://huggingface.co/huggyllama/llama-13b}} using full datasets, random subsets, and uniformity-selected subsets across TeaMs-RL (4.5k/9k) and WizardLM (10k/20k) data. In all settings, the models trained on uniformity-maximized subsets exhibit consistently lower or comparable loss compared to those trained on full datasets, and significantly outperform the randomly sampled subsets. This trend holds both with and without smoothing applied to the loss curves. Particularly in Figures \ref{fig:20k-full-10k-diverse-10k-random-cross-entropy-adam-llama-1-13b} (a) and \ref{fig:20k-full-10k-diverse-10k-random-cross-entropy-adam-llama-1-13b} (b), the 10k uniform subset achieves better final loss than even the full 20k dataset, reinforcing the efficiency of data uniformity in training large-scale models.  These results demonstrate that uniformity-aware sampling remains effective even at larger model scales, offering substantial gains in training efficiency without compromising performance. The complete training experiments across all steps are provided in Appendix~\ref{appendix-experiments:all-steps}.

\begin{figure}[ht!]
    \centering
    \subcaptionbox{Unsmoothened loss curves.}{
    \includegraphics[width=0.48\linewidth]{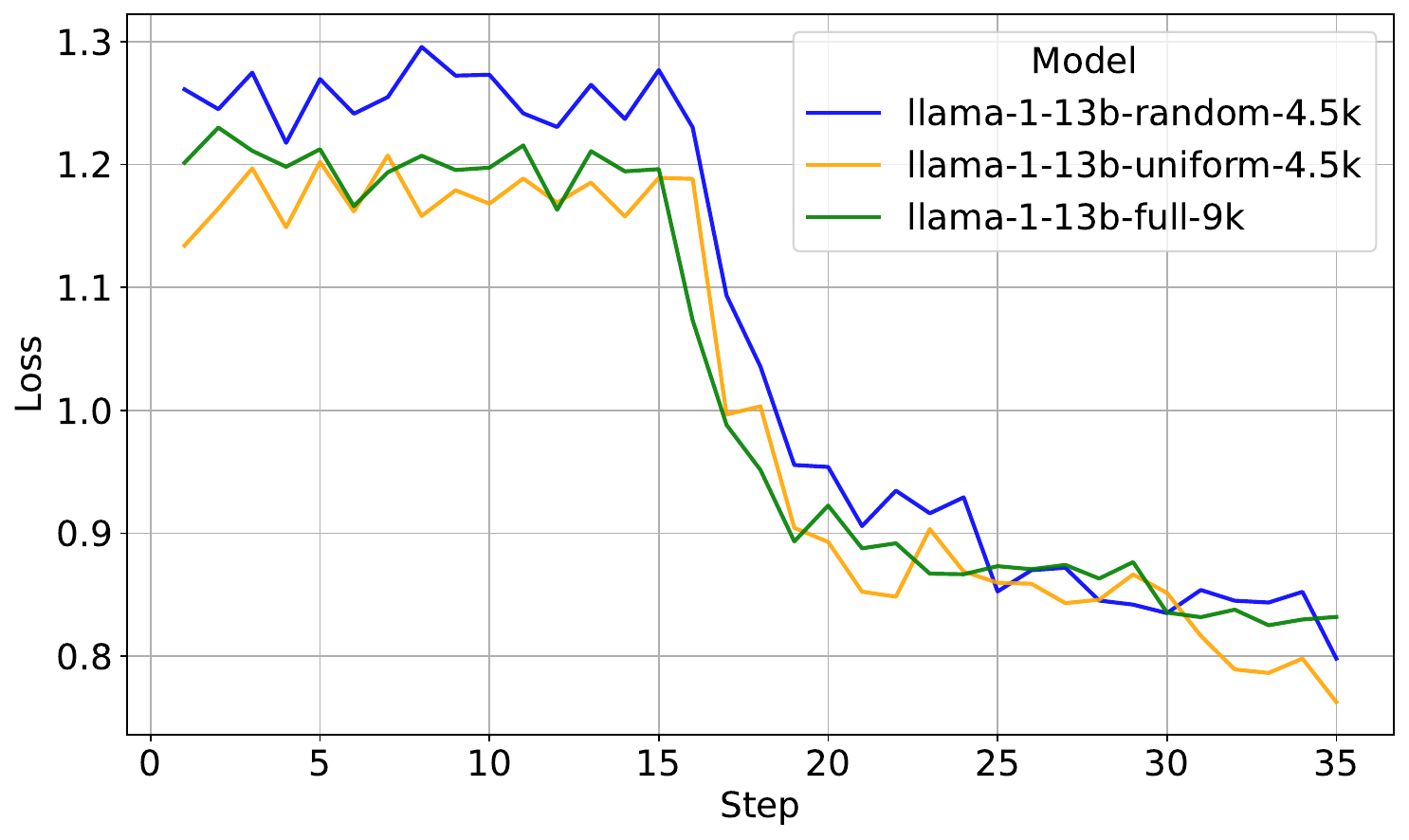}}
     \subcaptionbox{Smoothened loss curves.}{
    \includegraphics[width=0.48\linewidth]{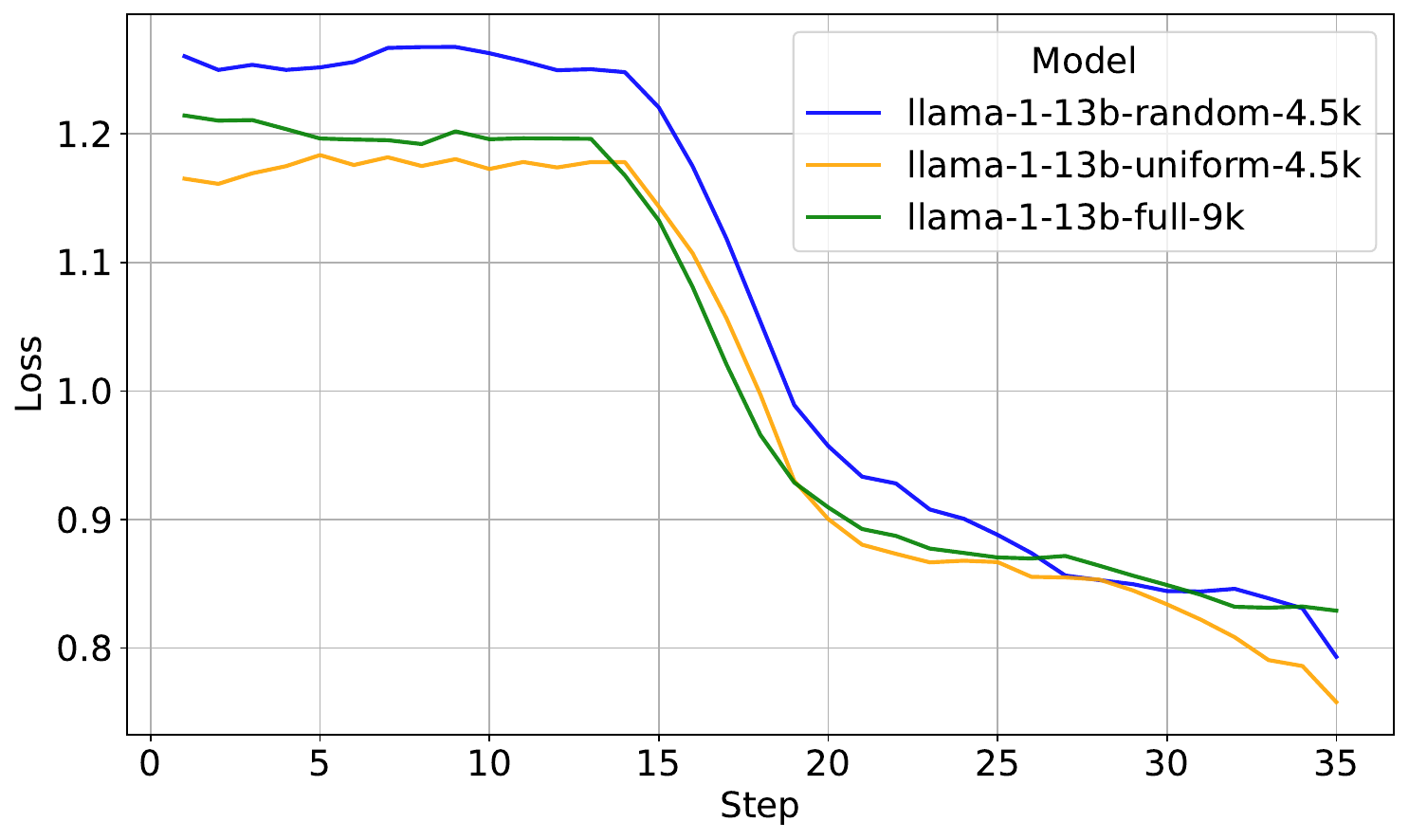}}
    \caption{Training loss comparison of TeaMs-RL data point distributions for different dataset sizes using cross entropy loss and Adam on llama-1-13b models: 4.5k uniform (maximized distance) dataset, 4.5k random dataset, 9k full dataset.}
    \label{fig:9k-full-4dot5k-diverse-4dot5k-random-cross-entropy-adam-llama-1-13b}
\end{figure}

\begin{figure}[ht!]
    \centering
    \subcaptionbox{Unsmoothened loss curves.}{
    \includegraphics[width=0.48\linewidth]{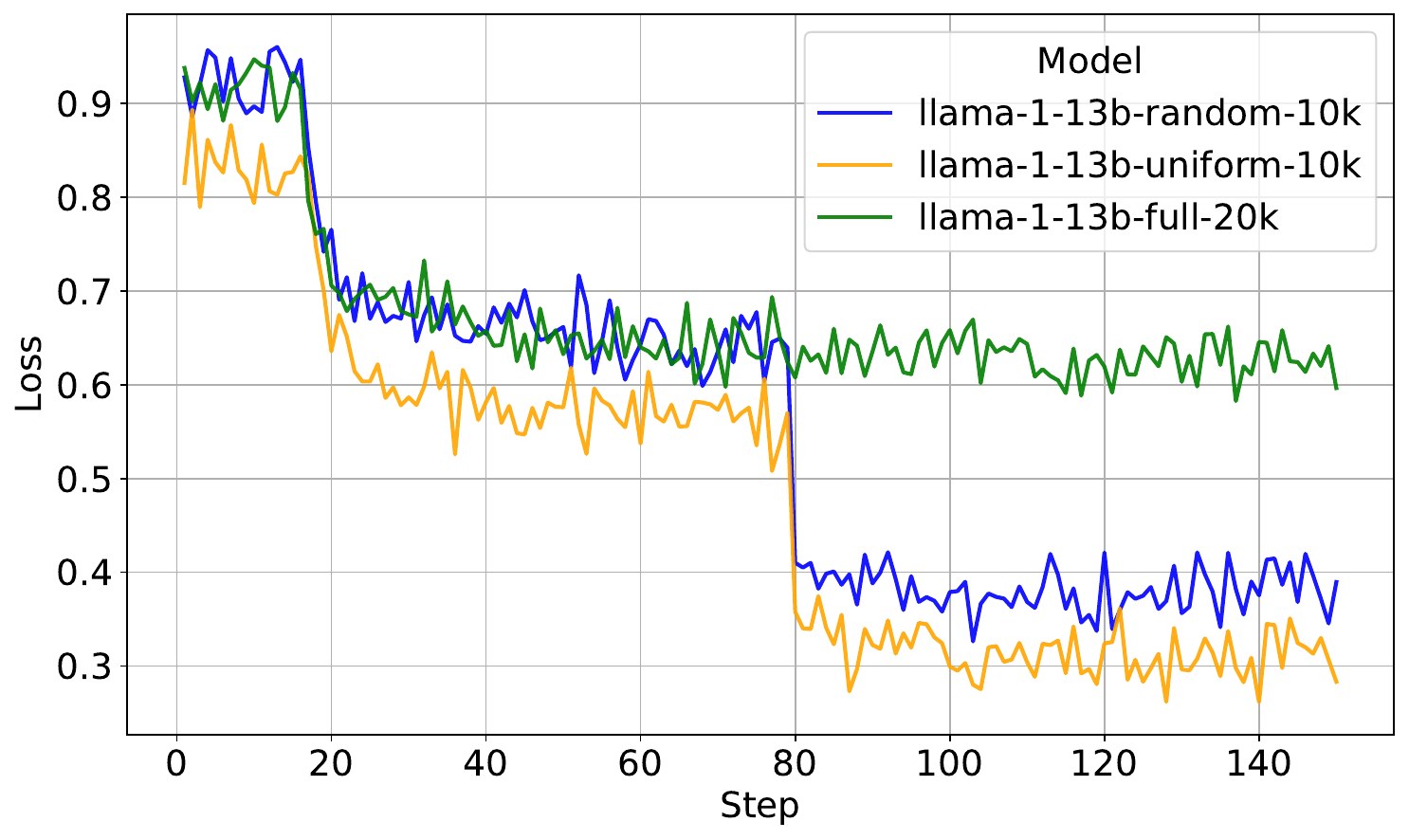}}
     \subcaptionbox{Smoothened loss curves.}{
    \includegraphics[width=0.48\linewidth]{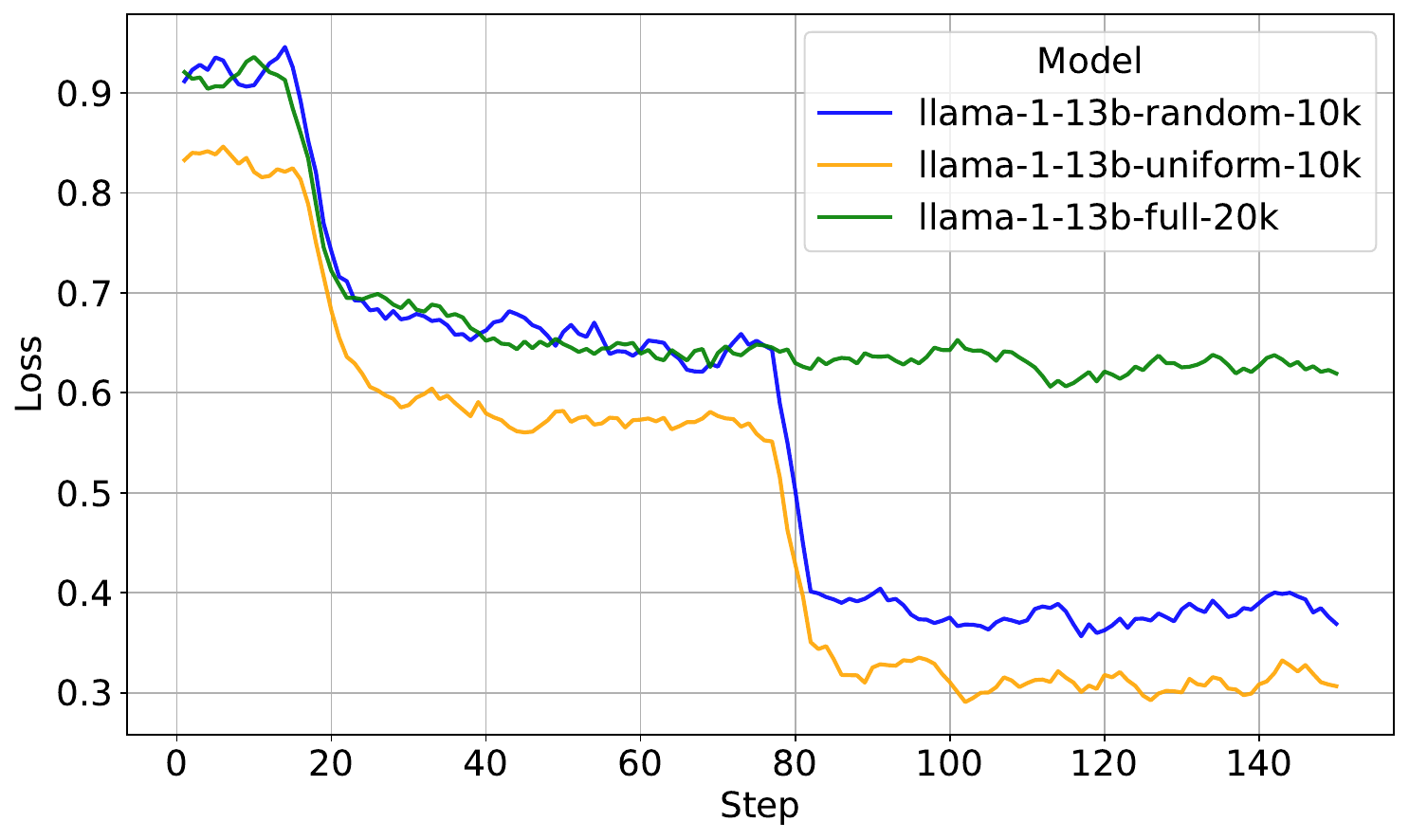}}
    \caption{Training loss comparison of WizardLM data point distributions for different dataset sizes using cross entropy and adam on llama-1-13b models: 10k uniform (maximized distance) dataset, 10k random dataset, 20k full dataset.}
    \label{fig:20k-full-10k-diverse-10k-random-cross-entropy-adam-llama-1-13b}
\end{figure}

\subsection{Performance and training time comparison}

\paragraph{Evaluation Performance with Diverse Subsets.} Figures \ref{fig:ARC-and-MC-performance-on-Llama-1-13b-using-TeaMs-RL-diverse-datasets-4.5k} and \ref{fig:ARC-and-MC-performance-on-Llama-1-13b-using-WizardLM-diverse-datasets-10k} present evaluation results of LLaMA-1-13B on the ARC Challenge \citep{clark2018think} and TruthfulQA MC benchmarks \citep{lin2021truthfulqa}, using datasets from TeaMs-RL and WizardLM, respectively. In both settings, models trained on data selected via maximum pairwise distance (orange bars) consistently outperform those trained on random subsets (green) and the original baselines (blue). Notably, the 4.5k and 10k uniform subsets achieve performance comparable to or slightly better than the full datasets (red) while using significantly fewer samples. This is particularly evident on the ARC Challenge task (e.g., Figure \ref{fig:training-time-on-Llama-1-7b-using-wizardlm-10k}), where the uniform subsets closely match or exceed the accuracy of the full datasets. These results further validate the effectiveness of uniformity-based data selection in improving model generalization and sample efficiency for LLM training. Note that models trained on different datasets may reach different loss levels at the same training steps; therefore, we select checkpoints with approximately equal loss values to ensure a fair performance comparison, e.g., 0.8733 for the TeaMs-RL dataset and 0.7058 for the WizardLM dataset.
\begin{figure}[tb!]
    \centering
    \includegraphics[width=0.7\linewidth]{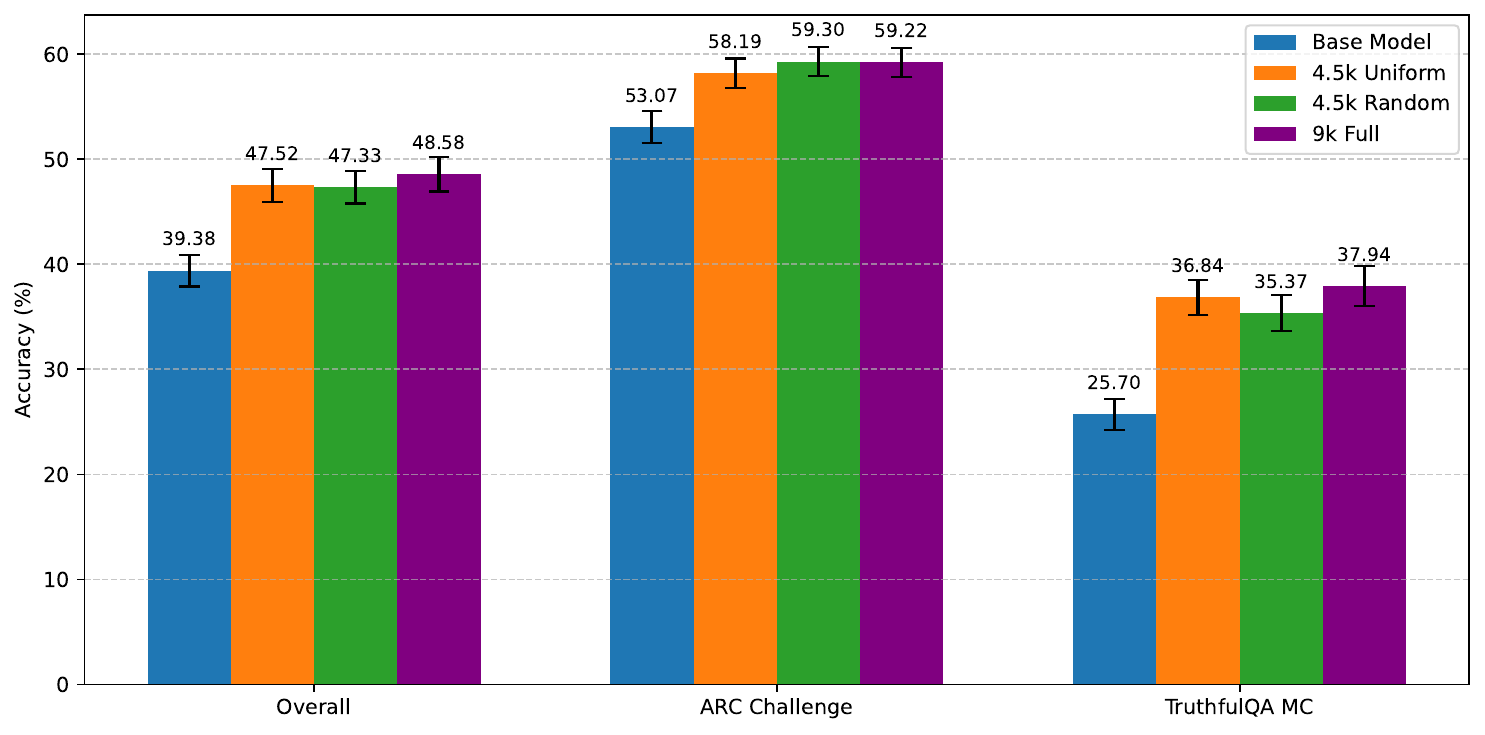}
    \caption{ARC and MC performance on Llama-1-13b using TeaMs-RL diverse datasets.}
    \label{fig:ARC-and-MC-performance-on-Llama-1-13b-using-TeaMs-RL-diverse-datasets-4.5k}
\end{figure}

\begin{figure}[tb!]
    \centering
    \includegraphics[width=0.7\linewidth]{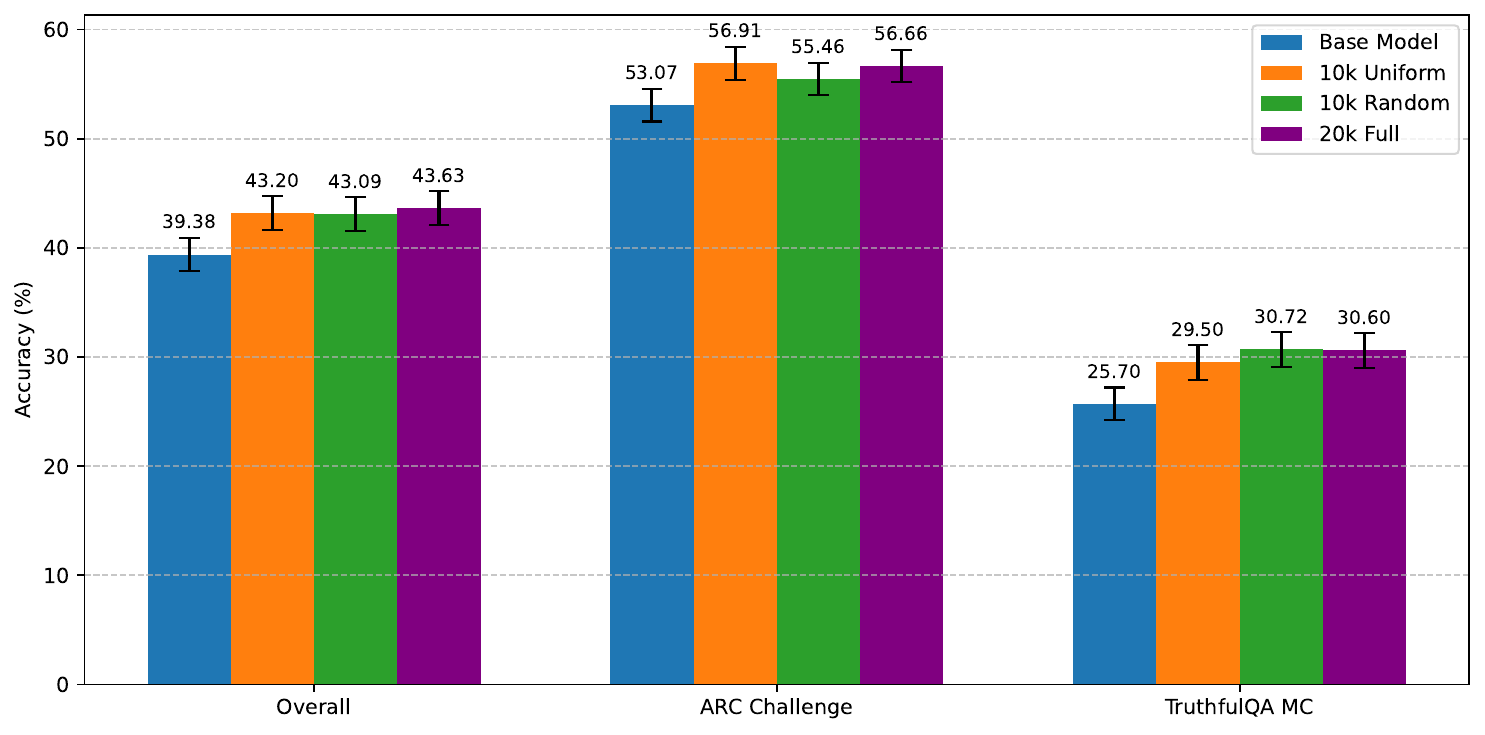}
    \caption{ARC and MC performance on Llama-1-13b using WizardLM diverse datasets.}
    \label{fig:ARC-and-MC-performance-on-Llama-1-13b-using-WizardLM-diverse-datasets-10k}
\end{figure}

\paragraph{Time to Threshold: Faster Convergence with Uniform Data Selection.}

As shown in Figures\ref{fig:training-time-on-Llama-1-7b-using-wizardlm-10k} (a) and (b), we compare the convergence efficiency of three data selection strategies—Uniform, Random, and Full, under different loss thresholds using LLaMA-1-7B and LLaMA-1-13B. All times are measured based on wall-clock time using a single A100 GPU. For LLaMA-1-7B, Uniform reaches the 0.34 and 0.65 loss thresholds in 116.1 and 21.3 minutes, respectively, outperforming Random (197.4, 111.7) and Full (231.3, 148.8). Similarly, LLaMA-1-13B trained on the 10k Uniform subset reaches the 0.31 and 0.6 thresholds in 148.2 and 46.6 minutes, significantly faster than Random (281.2, 139.6) and 20k Full (561.6, 280.0).  These results highlight that Uniform selection not only reduces the dataset size but also accelerates training, indicating a more efficient optimization path and improved data quality.

\begin{figure}[tb!]
    \centering
    \subcaptionbox{ LLaMA-1-7B.}{\includegraphics[width=0.48\linewidth]{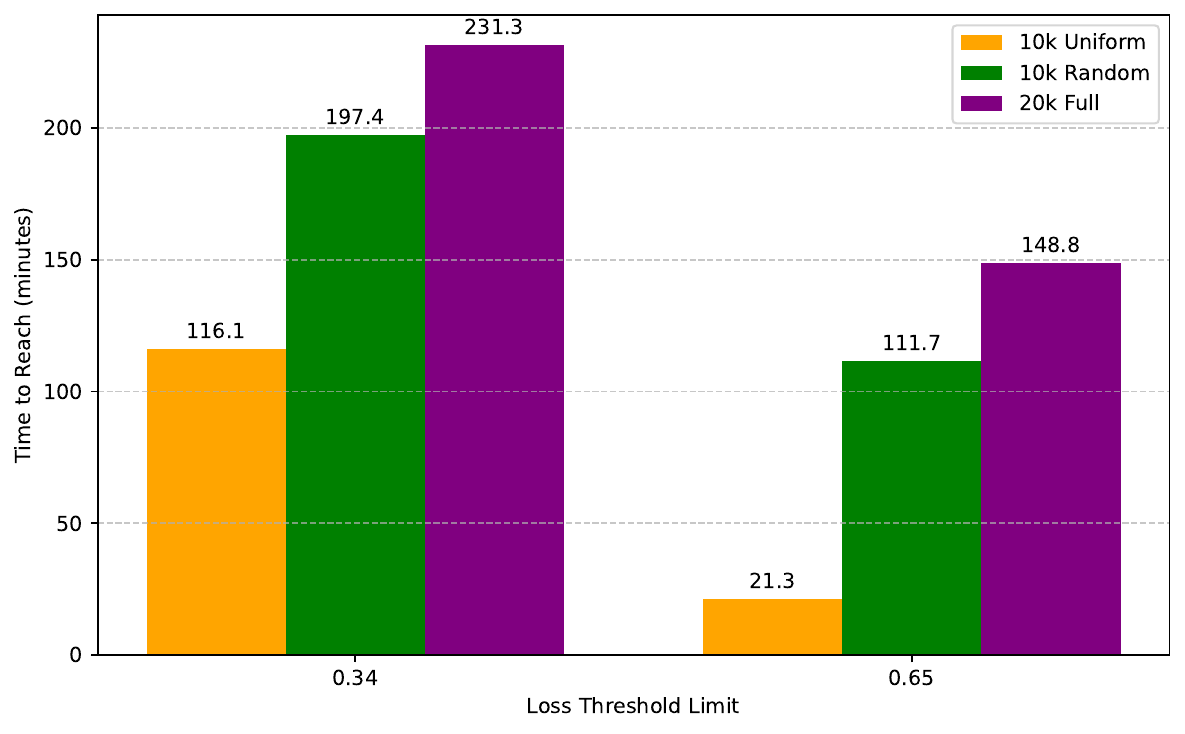}}
    \subcaptionbox{ LLaMA-1-13B.}{\includegraphics[width=0.48\linewidth]{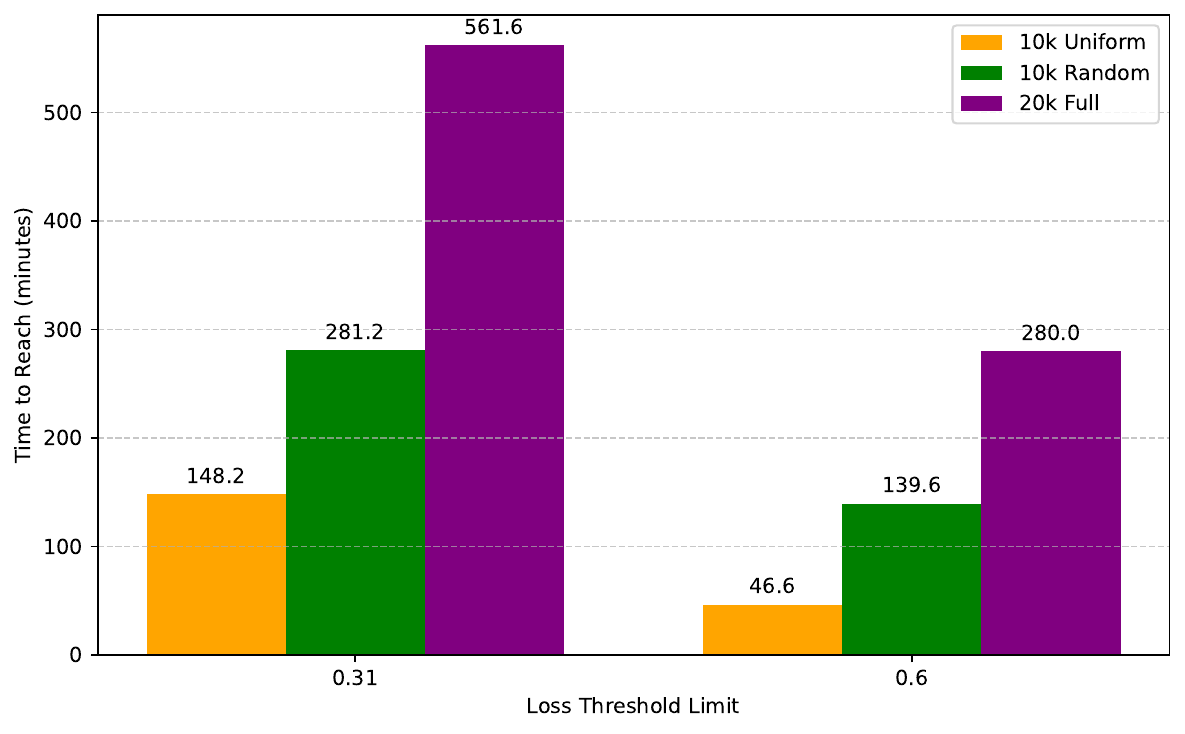}}
    \caption{\textbf{Training time to reach loss thresholds using a 3-step moving average, measured on a single A100 GPU.} 
    For both (a) LLaMA-1-7B (0.34, 0.65) and (b) LLaMA-1-13B (0.31, 0.6), we compare the convergence speed of Uniform, Random, and Full data selection strategies. Uniform sampling consistently achieves faster convergence, indicating more efficient training.}    
    \label{fig:training-time-on-Llama-1-7b-using-wizardlm-10k}
\end{figure}

\section{Supplementary experiments} %  using $\ell^2$-SGD
\label{appendix-l2-sgd:large-scale-dataset-10k}

\subsection{Additional experiments using $\ell^2$-SGD}
\subsection{Effectiveness of maximized-distance based sampling on smaller scale datasets}

Figure~\ref{fig:9k-full-4.5kdiverse-4.5k-random-loss-all-curves} illustrates the $\ell^2$ SGD training loss curves for LLaMA-1-7B across different data selection strategies on the TeaMs-RL dataset. We compare a 4.5k \textbf{uniform subset}, a 4.5k \textbf{random subset}, and the \textbf{full 9k dataset}. Subfigure (a) shows the raw loss trajectories, while (b) applies smoothing to better visualize convergence patterns. The 4.5k uniform subset achieves convergence comparable to or faster than the full dataset and consistently outperforms the random subset, particularly in the early stages of training. However, since this dataset size is relatively small, the advantage of data uniformity is not very pronounced in the final stage of training. When the dataset size increases, uniform data selection becomes more effective in the later phase (see Figure~\ref{fig:20k-full-10k-diverse-10k-random-all-curves}).

\begin{figure}[h]
    \centering
    \subcaptionbox{without smoothness loss curves.}{
    \includegraphics[width=0.48\linewidth]{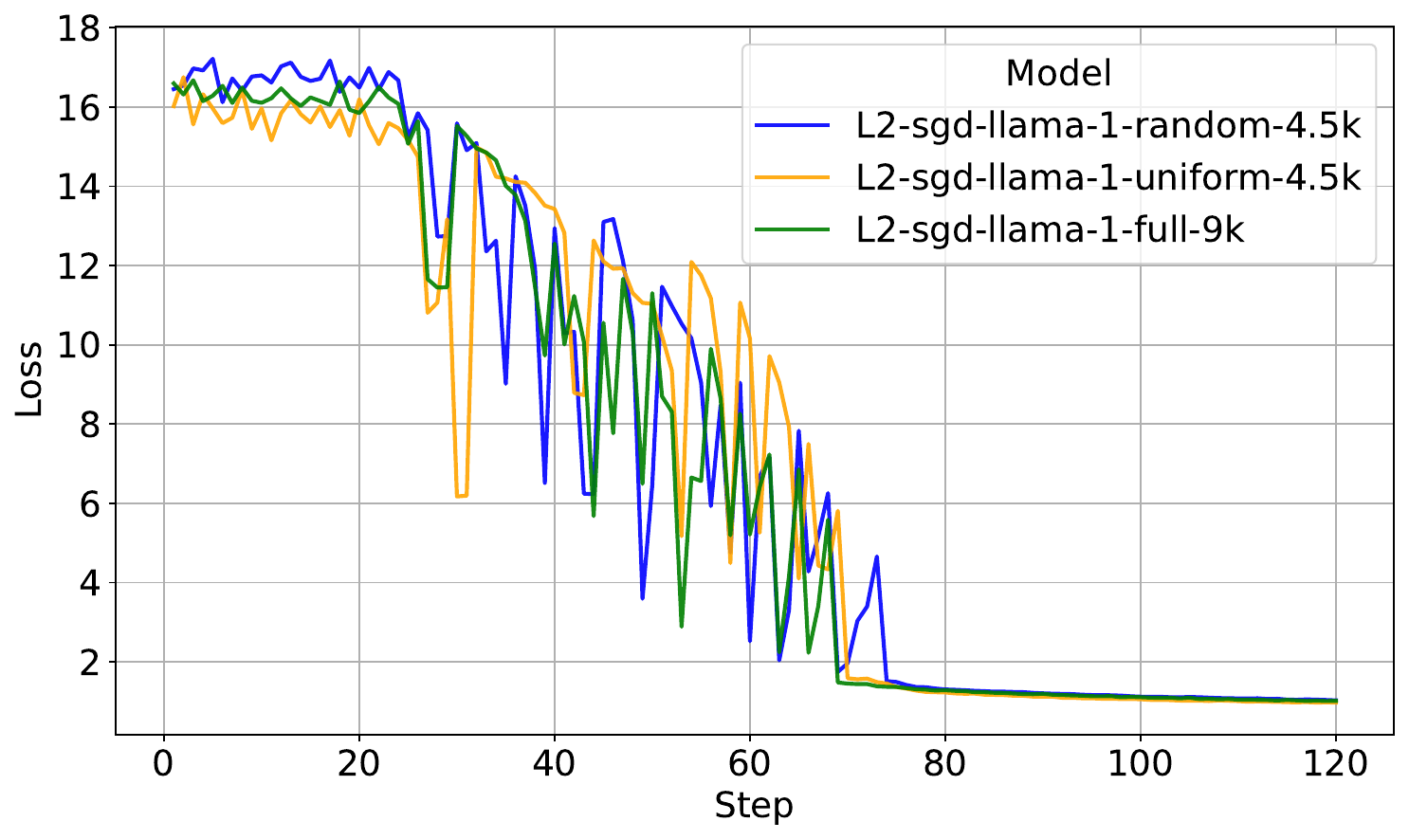}}
    \subcaptionbox{with smoothness loss curves.}{
    \includegraphics[width=0.48\linewidth]{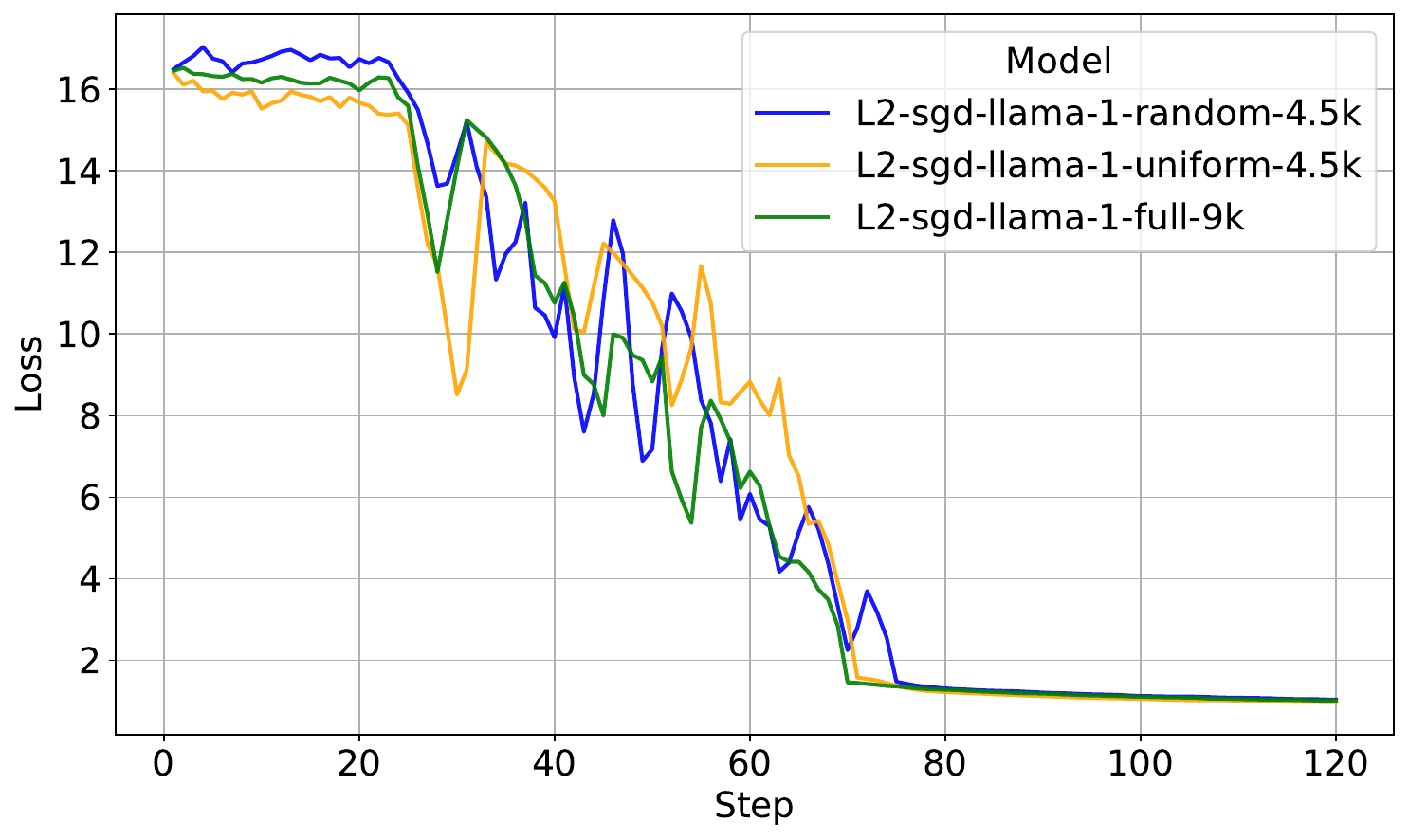}}
    \caption{$\ell^2$ SGD Training Experiments---All Curves: Training loss comparison of TeaMs-RL data point distributions for different dataset sizes: 4.5k uniform (maximized distance) dataset, 4.5k random dataset, 9k full dataset.}
    \label{fig:9k-full-4.5kdiverse-4.5k-random-loss-all-curves}
\end{figure}

\subsection{Effectiveness of maximized-distance based sampling on larger scale datasets} Figures \ref{fig:20k-full-10k-diverse-10k-random} and \ref{fig:20k-full-10k-diverse-10k-random-all-curves} show the $\ell^2$-SGD training loss curves of llama-1-7b models trained on the WizardLM dataset under three different data selection settings: the full 20k dataset, a 10k maximized-distance subset selected via maximization distance, and a 10k random subset. In both the raw (left) and smoothed (right) plots, the maximized-distance 10k subset outperforms the random 10k subset across training steps, achieving faster convergence and more stable convergence. Notably, despite being half the size, the maximized-distance subset often matches or even surpasses the performance of the full dataset in early and end training regarding the convergence. This result indicates the stability and efficiency of data uniformity sampling strategies, particularly in large-scale training tasks. It highlights the finding that uniform data offers stronger learning signals than merely increasing sample count without regard to data uniformity.

\begin{figure}[ht!]
    \centering
    \subcaptionbox{without smoothness loss curves.}{
    \includegraphics[width=0.48\linewidth]{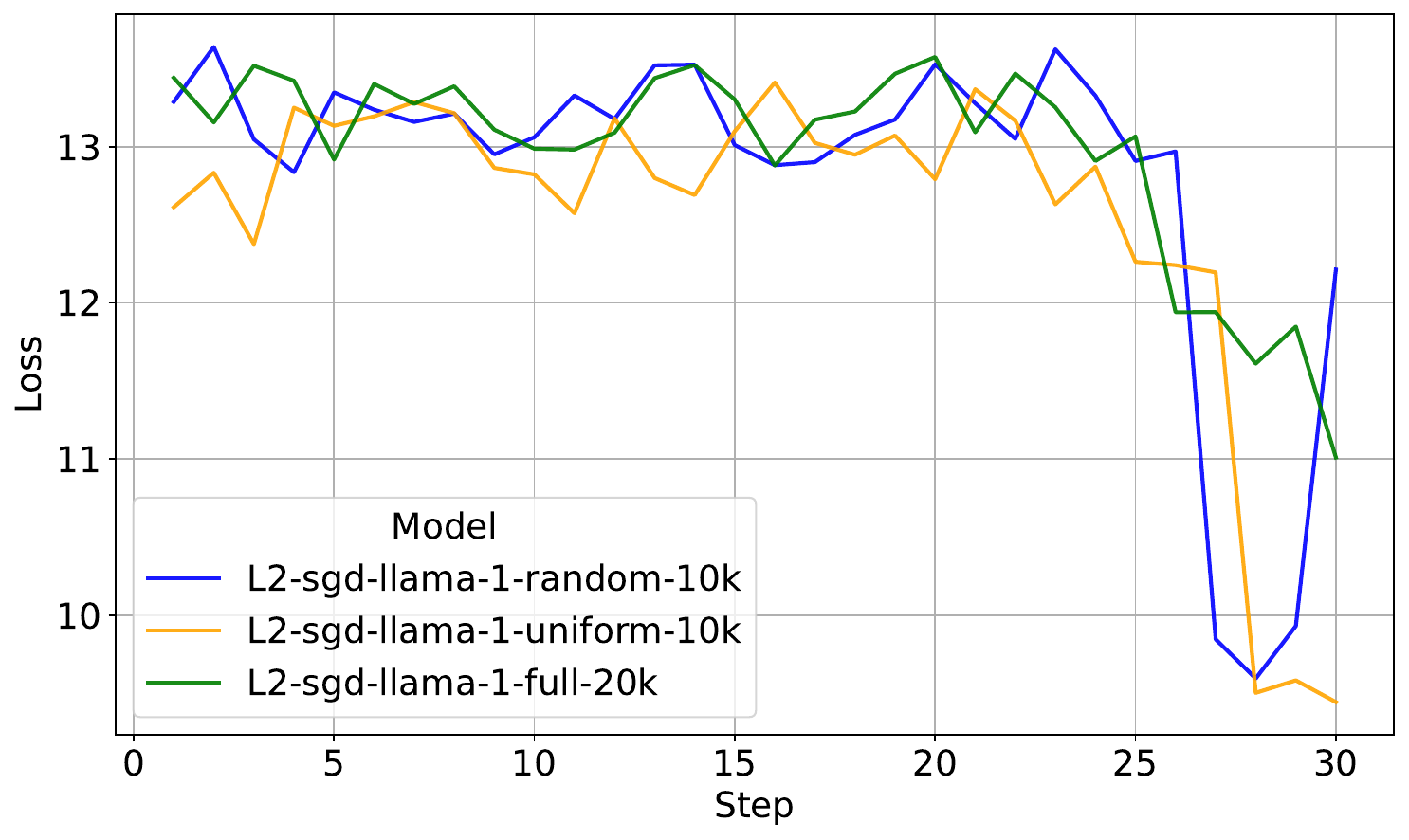}}
     \subcaptionbox{with smoothness loss curves.}{
    \includegraphics[width=0.48\linewidth]{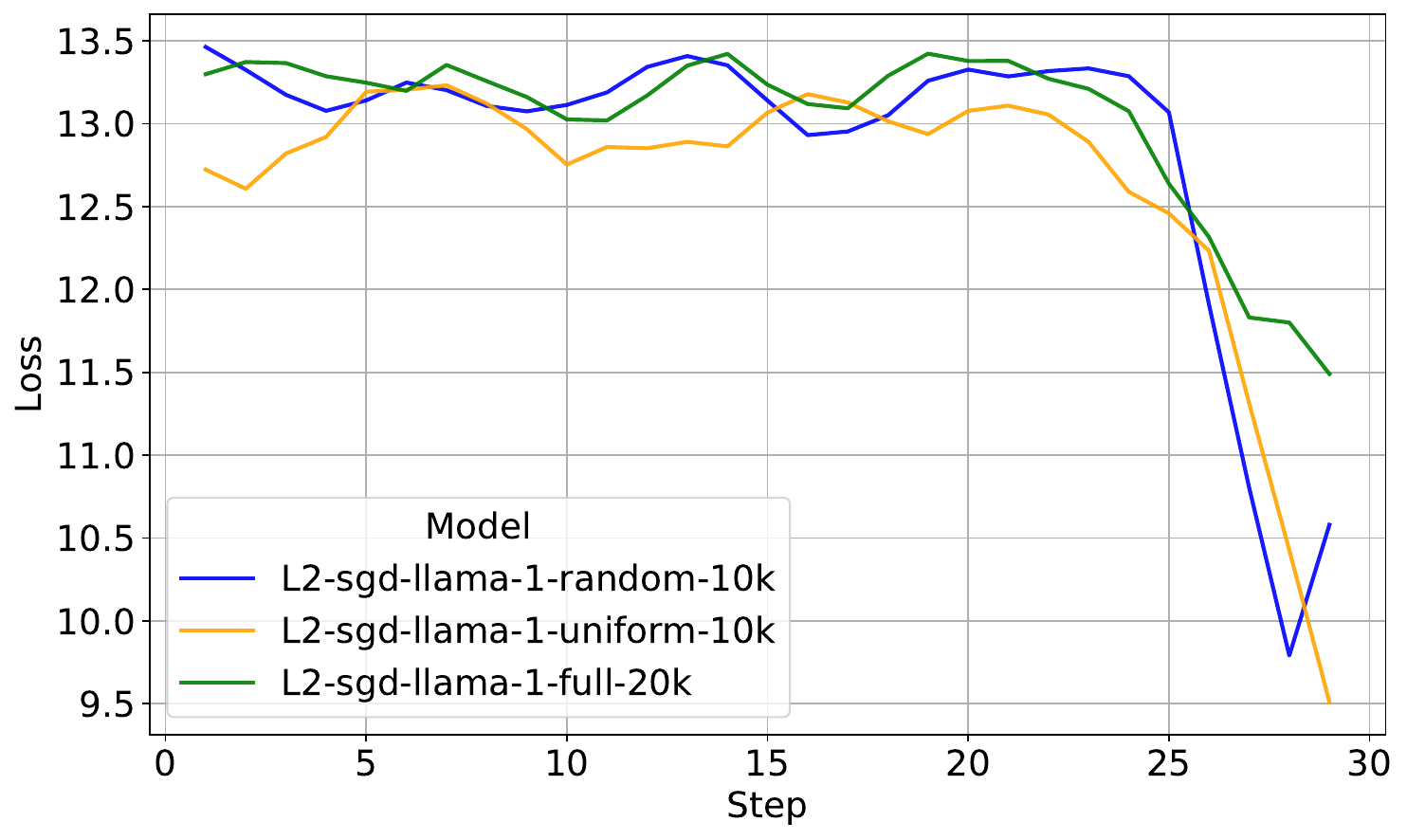}}
    \caption{$\ell^2$ SGD Training Experiments: Training loss comparison of WizardLM data point distributions for different dataset sizes: 10k uniform (maximized distance) dataset, 10k random dataset, 20k full dataset.}
    \label{fig:20k-full-10k-diverse-10k-random}
\end{figure}

\subsection{Full step training: 7b and 13b models}
As shown in Figure~\ref{fig:20k-full-10k-diverse-10k-random-cross-entropy-adam-all-steps} and Figure \ref{fig:20k-full-10k-diverse-10k-random-cross-entropy-adam-llama-1-13b-all-steps}, models generally converge with sufficient training time, but our method achieves faster convergence.

\begin{figure}[ht!]
    \centering
    \subcaptionbox{without smoothness loss curves.}{
    \includegraphics[width=0.48\linewidth]{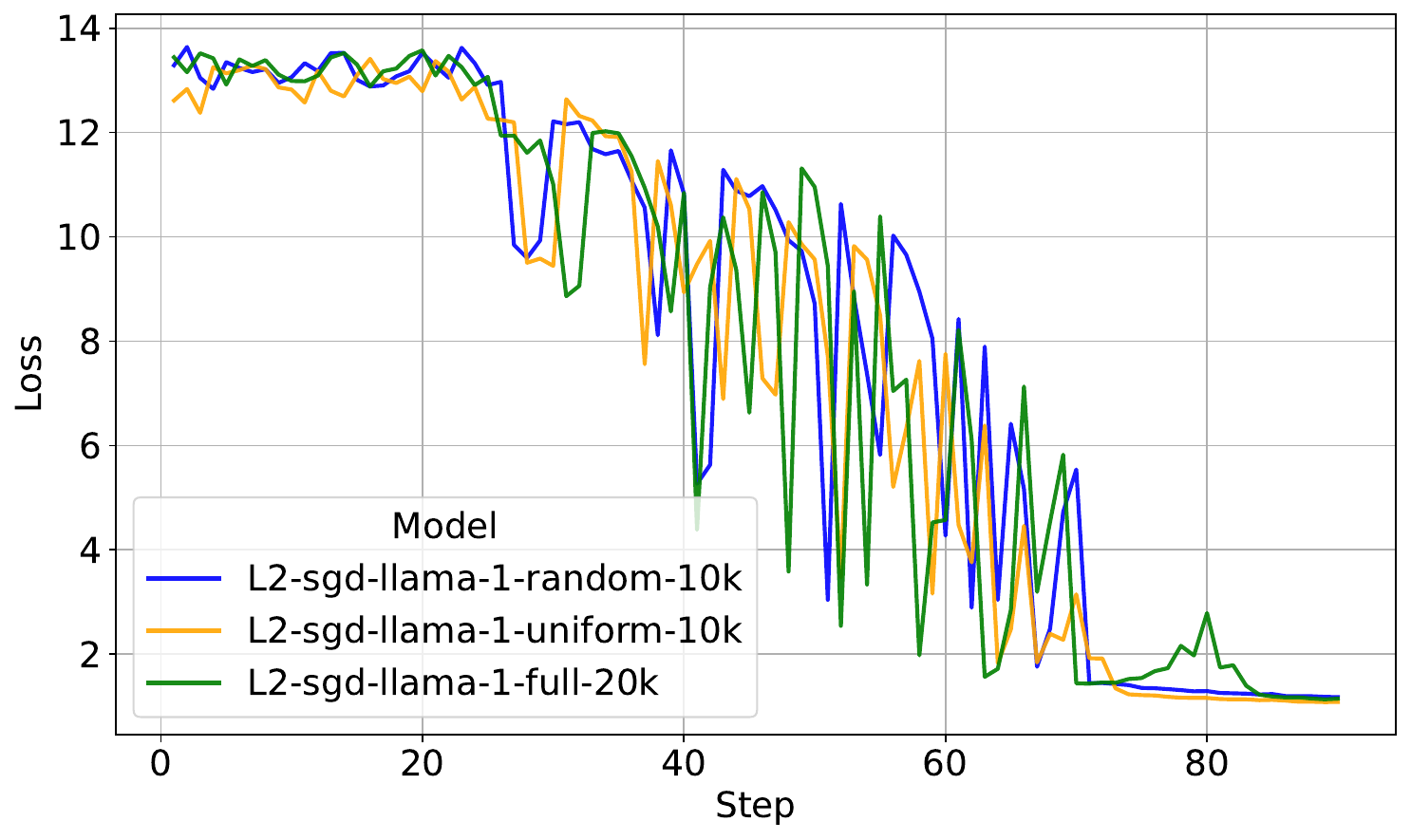}}
     \subcaptionbox{with smoothness loss curves.}{
    \includegraphics[width=0.48\linewidth]{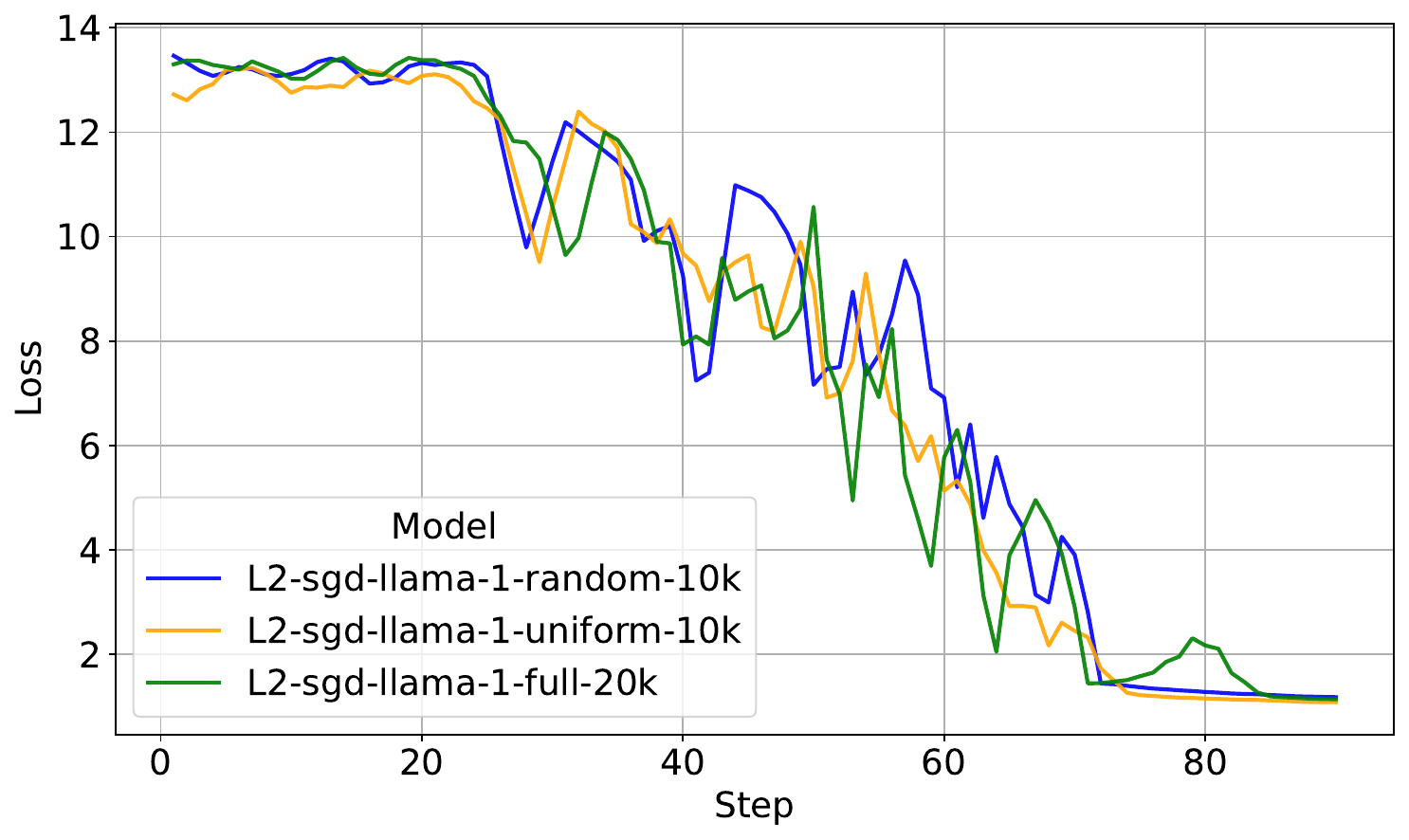}}
    \caption{$\ell^2$ SGD Training Experiments---All Curves: Training loss comparison of WizardLM data point distributions for different dataset sizes: 10k uniform (maximized distance) dataset, 10k random dataset, 20k full dataset.}
    \label{fig:20k-full-10k-diverse-10k-random-all-curves}
\end{figure}

% \subsection{Full Step Training: 13b models}
\label{appendix-experiments:all-steps}

\begin{figure}[tb!]
    \centering
    \subcaptionbox{Unsmoothened loss curves.}{
    \includegraphics[width=0.48\linewidth]{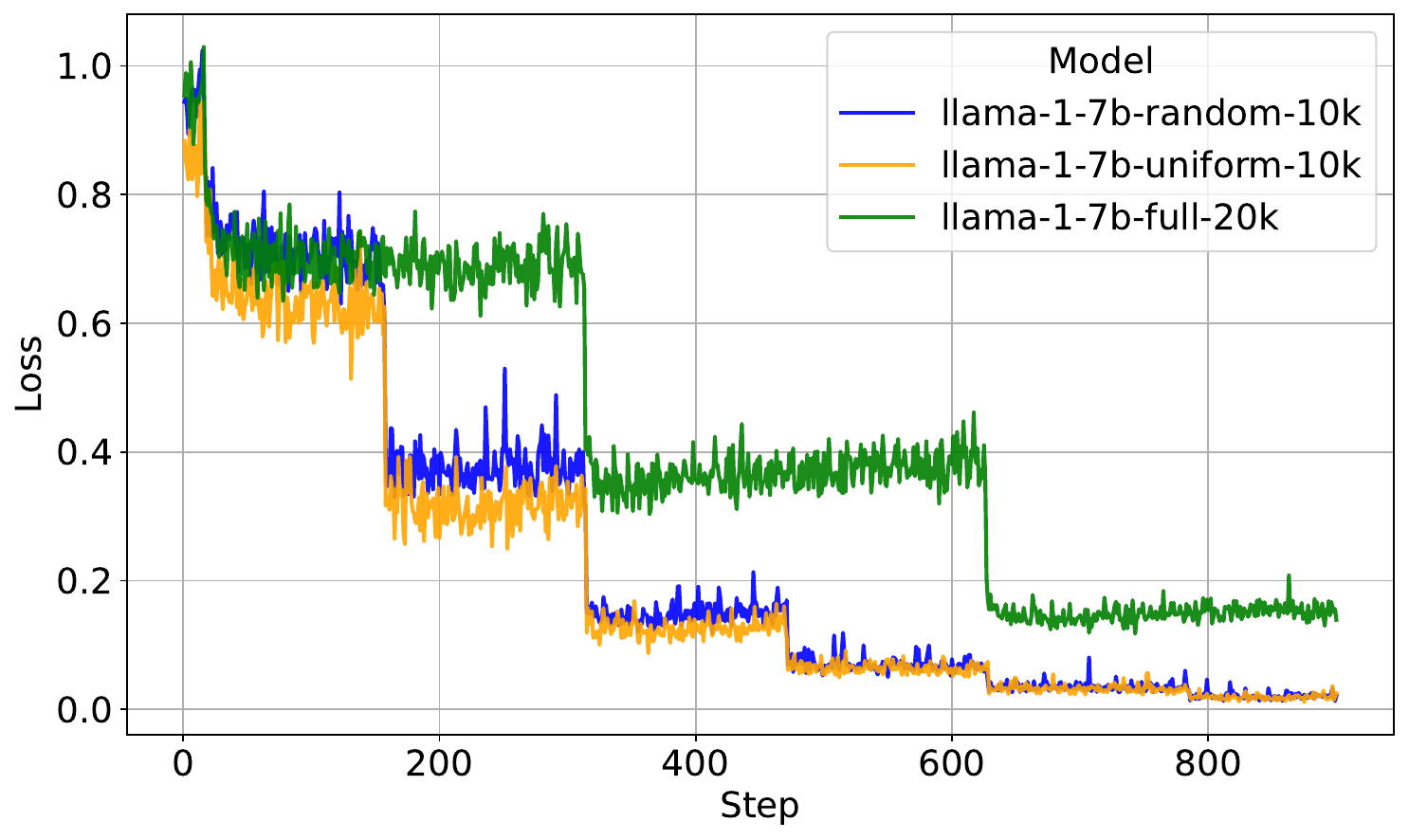}}
     \subcaptionbox{Smoothened loss curves.}{
    \includegraphics[width=0.48\linewidth]{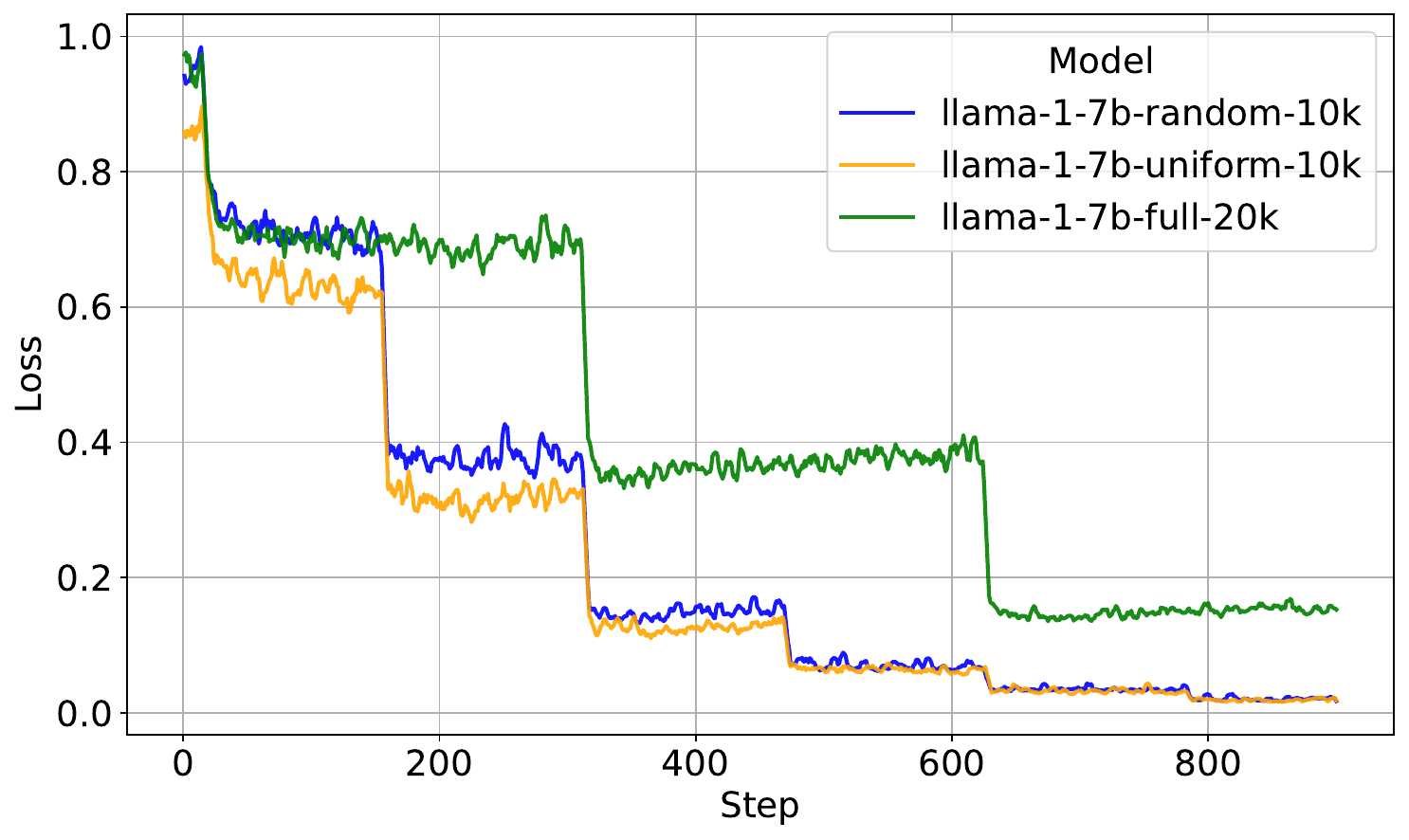}}
    \caption{Full Steps: Training loss comparison of WizardLM data point distributions for different dataset sizes using cross entropy and adam: 10k uniform (maximized distance) dataset, 10k random dataset, 20k full dataset.}
    \label{fig:20k-full-10k-diverse-10k-random-cross-entropy-adam-all-steps}
\end{figure}

\begin{figure}[ht!]
    \centering
    \subcaptionbox{Unsmoothened loss curves}{
    % \includegraphics[width=0.48\linewidth]{files/figures/cross_entropy_adam/word2vec_greedy/wizard_loss_llama_1_13b_10k_random_diverse_constant_all_steps.pdf}}
    %  \subcaptionbox{Smoothened loss curves.}{
    % \includegraphics[width=0.48\linewidth]{files/figures/cross_entropy_adam/word2vec_greedy/wizard_loss_llama_1_13b_10k_random_diverse_constant_all_steps_smooth.pdf}}
    \includegraphics[width=0.48\linewidth]{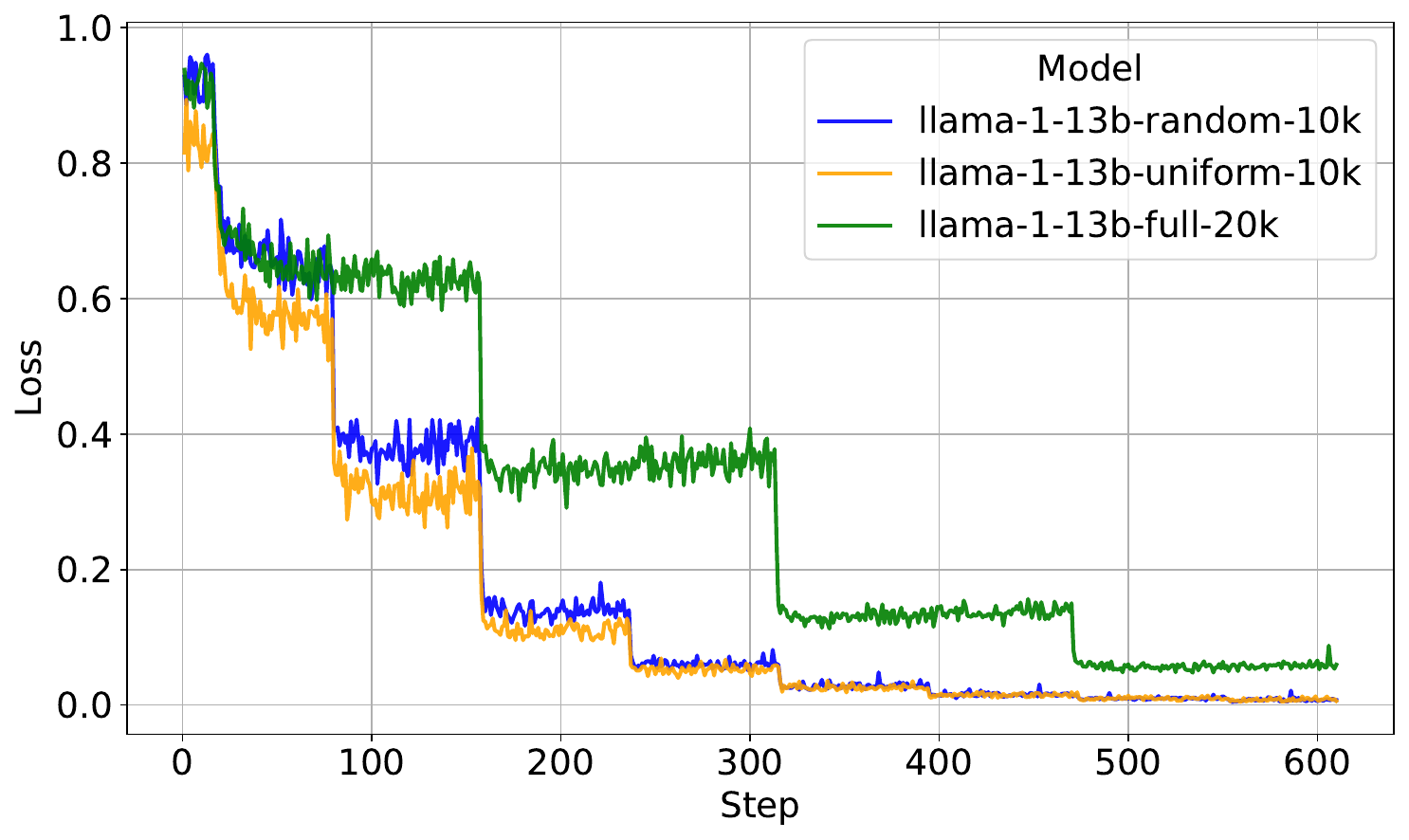}}
     \subcaptionbox{Smoothened loss curves.}{
    \includegraphics[width=0.48\linewidth]{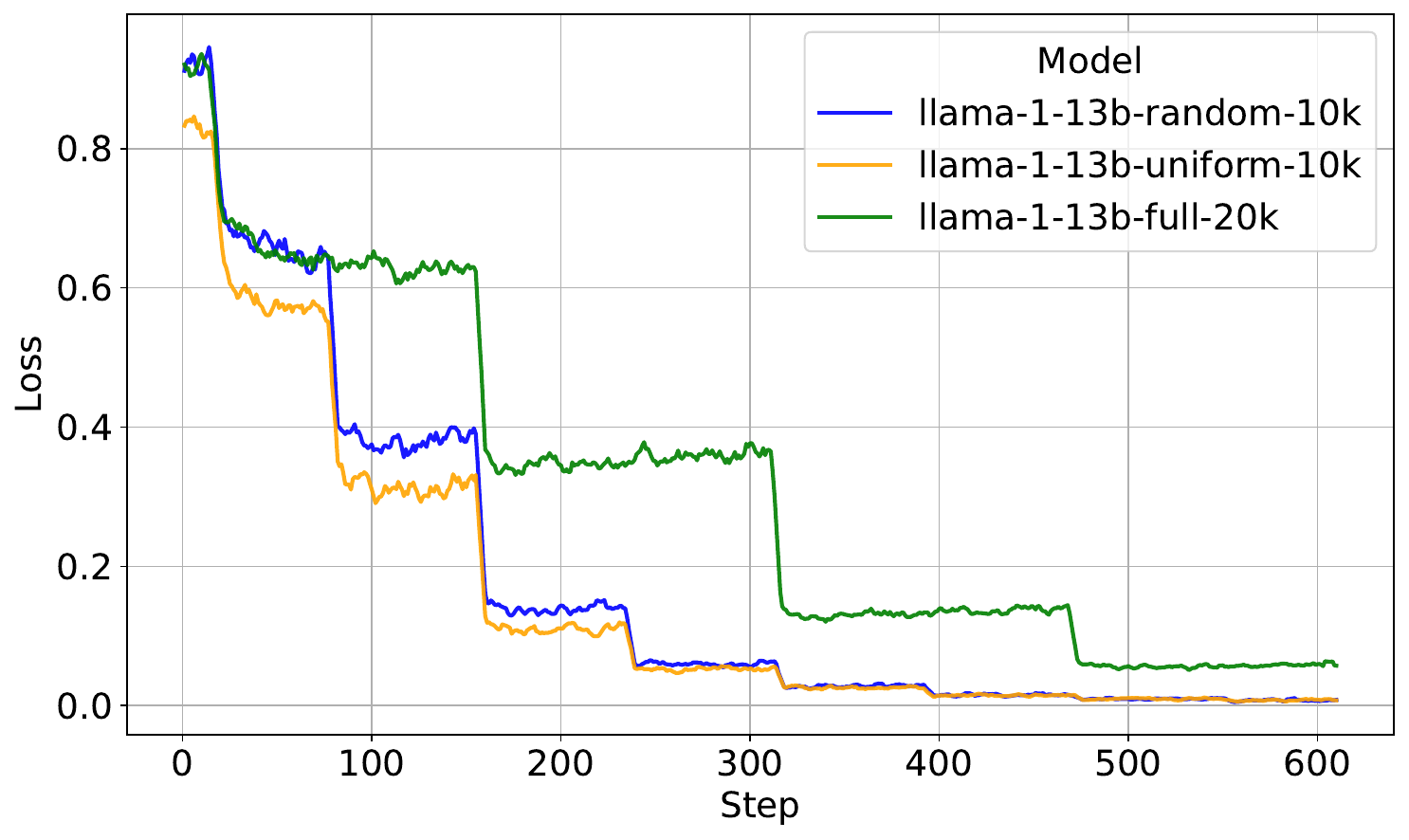}}
    \caption{Full Steps: Training loss comparison of WizardLM data point distributions for different dataset sizes using cross entropy and adam on llama-1-13b models: 10k uniform (maximized distance) dataset, 10k random dataset, 20k full dataset.}
    \label{fig:20k-full-10k-diverse-10k-random-cross-entropy-adam-llama-1-13b-all-steps}
\end{figure}

\section{Experiment settings}
\label{appendix:experiment-settings}

We summarize the key configurations \footnote{\url{https://anonymous.4open.science/r/data-uniformity-1A5C}} used in our experiments to ensure reproducibility and clarity. Table~\ref{append-tab:cross-entropy-loss-adam-llama-1-13b-arc-teams-rl-10k-wizardLM} details the training hyperparameters. Based on these and building on Llama-X\footnote{\url{https://github.com/AetherCortex/Llama-X}}, we train LLaMA-1-7B (one A100 GPU) and 13B (two A100 GPUs) models on datasets of varying sizes: WizardLM\footnote{\url{https://huggingface.co/datasets/WizardLMTeam/WizardLM_evol_instruct_V2_196k}} (10k uniform datasets, 10k random datasets, and 20k full datasets) and TeaMs-RL\footnote{\url{https://github.com/SafeRL-Lab/TeaMs-RL}} (4.5k uniform datasets, 4.5k random datasets, and 9k full datasets). The uniform subsets from TeaMs-RL and WizardLM are constructed using Word2Vec embeddings and a greedy selection strategy that maximizes uniformity. Specifically, one data point is randomly selected to initialize the set, and subsequent points are chosen to maximize the minimum cosine distance to all previously selected points. For TeaMs-RL, the random subset is composed of half randomly sampled data points and half selected to minimize pairwise distances from the 9k full dataset. In the case of WizardLM, the random subset consists of the first 10k samples from the original 20k dataset.

Table~\ref{append-tab:deepspeed-config-adam} shows the DeepSpeed configuration using cross-entropy loss with the Adam optimizer. Table~\ref{append-tab:deepspeed-config-sgd} presents the DeepSpeed configuration used for $\ell^2$-regularized SGD training with Zero Stage 3 optimization and CPU offloading. Finally, we leverage lm-evaluation-harness \citep{evalharness} for our evaluation. Table~\ref{append-tab:evaluation-settings} outlines the evaluation setup, including tasks, few-shot settings, and batch size settings used in the LM Evaluation Harness. These configurations collectively support our experiments across diverse datasets and model scales.

\begin{table}[!htbp]
 \renewcommand{\arraystretch}{1.2}
  \centering
  % \caption{Key parameters used in the experiments: l2 and cross entropy loss with SGD and adam training, llama-1-7b/13b models, using WizardLM 10k and 20k datasets, TeaMs-RL 4.5k and 9k.}
  \caption{Key experimental settings used in our study, including training with $\ell^2$ loss and cross-entropy loss using SGD and Adam optimizers.}
\label{append-tab:cross-entropy-loss-adam-llama-1-13b-arc-teams-rl-10k-wizardLM}
  \begin{threeparttable}
    \begin{tabular}{cc|cc}
    \toprule
    Parameters & value & Parameters & value \\
    \midrule
    GPUs & [1, 2]  &       model max length & 512    \\  
per device train batch size & 64         & per device eval batch size & 1  \\ 
           gradient accumulation steps  & 1         & evaluation strategy  & no  \\
           learning rate &   2e-5  & warmup steps & 2  \\
           logging steps &  1   & lr scheduler type & constant  \\
           gradient checkpointing &    True   & fp16  & True  \\      
    \bottomrule
    \end{tabular}    
    \end{threeparttable}
\end{table}

\begin{table}[!htbp]
  \renewcommand{\arraystretch}{1.2}
  \centering
  \caption{Key DeepSpeed configuration parameters used in our experiments (Zero Stage 3 with CPU offloading, FP16, AdamW optimizer).}
  \label{append-tab:deepspeed-config-adam}
  \begin{threeparttable}
    \begin{tabular}{cc|cc}
    \toprule
    Parameters & Value & Parameters & Value \\
    \midrule
    zero optimization stage & 3 & overlap communication & True \\
    offload optimizer device & cpu & pin memory (optimizer) & True \\
    offload param device & cpu & pin memory (param) & True \\
    contiguous gradients & True & sub group size & 0 \\
    reduce bucket size & auto & prefetch bucket size & auto \\
    param persistence threshold & auto & max live parameters & 0 \\
    max reuse distance & 0 & gather 16bit weights on save & True \\
    % \midrule
    fp16 enabled & True & auto cast & False \\
    loss scale & 0 & initial scale power & 32 \\
    loss scale window & 1000 & hysteresis & 2 \\
    % min loss scale & 1 &  & \\
    % \midrule
    optimizer type & AdamW & learning rate & 2e-5 \\
    beta1 & 0.9 & beta2 & 0.999 \\
    epsilon & 1e-8 & weight decay & 0 \\
    % \midrule
    train batch size & auto & micro batch size per GPU & auto \\
    gradient accumulation steps & auto & wall clock breakdown & False \\
    \bottomrule
    \end{tabular}
  \end{threeparttable}
\end{table}

\begin{table}[!htbp]
  \renewcommand{\arraystretch}{1.2}
  \centering
  \caption{Key DeepSpeed configuration parameters used in the experiments (Zero Stage 3 with CPU offloading, FP16, SGD optimizer).}
  \label{append-tab:deepspeed-config-sgd}
  \begin{threeparttable}
    \begin{tabular}{cc|cc}
    \toprule
    Parameters & Value & Parameters & Value \\
    \midrule
    zero optimization stage & 3 & overlap communication & True \\
    offload optimizer device & cpu & pin memory (optimizer) & True \\
    offload param device & cpu & pin memory (param) & True \\
    contiguous gradients & True & sub group size & 0 \\
    reduce bucket size & auto & prefetch bucket size & auto \\
    param persistence threshold & auto & max live parameters & 0 \\
    max reuse distance & 0 & gather 16bit weights on save & True \\
    allow untested optimizer & True &  min loss scale & 1\\
    % \midrule
    fp16 enabled & True & auto cast & False \\
    loss scale & 0 & initial scale power & 32 \\
    loss scale window & 1000 & hysteresis & 2 \\
    % min loss scale & 1 & & \\
    % \midrule
    optimizer type & SGD & learning rate & 2e-5 \\
    momentum & 0.0 & weight decay & 0 \\
    % \midrule
    train batch size & auto & micro batch size per GPU & auto \\
    gradient accumulation steps & auto & wall clock breakdown & False \\
    \bottomrule
    \end{tabular}
  \end{threeparttable}
\end{table}

\begin{table}[!htbp]
  \renewcommand{\arraystretch}{1.2}
  \centering
  \caption{Evaluation settings for baseline models using the LM Evaluation Harness.}
  \label{append-tab:evaluation-settings}
  \begin{threeparttable}
    \begin{tabular}{cc|cc}
    \toprule
    Parameter & Value & Parameter & Value \\
    \midrule
    Model type & hf-causal-experimental &  Batch size & 2 \\
    Few-shot for \texttt{arc\_challenge} & 25 & Few-shot for \texttt{truthfulqa\_mc} & 0 \\  
    \bottomrule
    \end{tabular}
  \end{threeparttable}
\end{table}

\end{document}